\def\formatforarxiv{}
\newcommand{\crisistitle}{Identity Crisis: Memorization and Generalization under Extreme Overparameterization}
\newif\ifwithappendix
\def\eqref#1{equation~\ref{#1}}
\def\1{\bm{1}}
\DeclareMathAlphabet{\mathsfit}{\encodingdefault}{\sfdefault}{m}{sl}
\SetMathAlphabet{\mathsfit}{bold}{\encodingdefault}{\sfdefault}{bx}{n}
\newcommand\myshade{90}
\definecolor{mylinkcolor}{RGB}{0,110,184}
\definecolor{mycitecolor}{RGB}{0,181,190}
\definecolor{myurlcolor}{RGB}{0,181,190}
\newtheorem{lemma}{Lemma}
\newtheorem{theorem}{Theorem}
\newcommand{\RR}{\mathbb{R}}
\newcommand{\EE}{\mathbb{E}}
\newcommand{\patch}{\mathsf{P}}
\DeclareMathOperator*{\relu}{ReLU}
\newcommand{\mnist}{MNIST}
\title{\crisistitle}
\author{Chiyuan Zhang \& Samy Bengio\\
Google Research, Brain Team\\
Mountain View, CA 94043, USA\\
\texttt{\{chiyuan,bengio\}@google.com}\\
\And
Moritz Hardt\\
University of California, Berkeley\\
Berkeley, CA 94720, USA\\
\texttt{hardt@berkeley.edu}\\
\And
Michael C. Mozer\\
Google Research, Brain Team\\
Mountain View, CA 94043, USA\\
\texttt{mcmozer@google.com}\\
\And
Yoram Singer\\
Princeton University\\
Princeton, NJ 08544, USA\\
\texttt{y.s@princeton.edu}
}
\begin{document}

\maketitle

\begin{abstract}
We study the interplay between memorization and generalization of
overparameterized networks in the extreme case of a single training example and an identity-mapping task. We examine fully-connected and convolutional networks (FCN and CNN), both linear and nonlinear, initialized randomly and then trained to minimize the reconstruction error. The trained networks stereotypically take one of two forms: the constant function (\emph{memorization}) and the identity function (\emph{generalization}).
We formally characterize generalization in single-layer FCNs and CNNs.
We show empirically that different architectures exhibit strikingly different inductive biases.
For example, CNNs of up to 10 layers are able to generalize
from a single example, whereas FCNs cannot learn the identity function reliably from 60k examples. Deeper CNNs often fail, but nonetheless do astonishing work to memorize the training output: because CNN biases are location invariant, the model must progressively grow an output pattern from the image boundaries via the coordination of many layers. Our work helps to quantify and visualize the sensitivity of inductive biases to architectural choices such as depth, kernel width, and number of channels.
\end{abstract}

\section{Introduction}
The remarkable empirical success of deep neural networks is often attributed to the availability of large data sets for training. However, sample size does not provide a comprehensive rationale since complex models often outperform simple ones on a given data set, even when the model size exceeds the number of training examples.

What form of inductive bias
leads to better generalization
performance from highly overparameterized models? Numerous theoretical and empirical studies of inductive bias in deep learning have been conducted in recent years~\citep{Dziugaite2016-bi,kawaguchi2017generalization,
bartlett2017spectrally, neyshabur2017exploring, liang2017fisher, Neyshabur2018-zn, Arora2018-ou, Zhou2019-ii} but these postmortem analyses do not identify the root source of the bias.

One cult belief among researchers is that gradient-based optimization methods  provide an implicit bias toward simple solutions~\citep{Neyshabur2014-ti,soudry2018implicit,Shah2018-re,arora2019implicit}. However, when a network is sufficiently large (e.g., the number of hidden units in each layer is polynomial in the input dimension and the number of training examples), then under some mild assumptions, gradient methods are guaranteed to fit the training set perfectly~\citep{Allen-Zhu2018-eq, Du2018-xp, Du2018-kc, Zou2018-lp}. These results do not distinguish a model trained on a data distribution with strong statistical regularities from one trained on the same inputs but with randomly shuffled labels. Although the former model might achieve good generalization, the latter can only memorize the training labels. Consequently, these analyses do not tell the whole story on the question of inductive bias.

Another line of research characterizes sufficient conditions on the input and label distribution that guarantee generalization from a trained network. These conditions range from linear separability~\citep{Brutzkus2018-tn} to compact structures~\citep{Li2018-kn}.  While very promising, this direction has thus far identified only structures that can be solved by linear or nearest neighbor classifiers over the original input space. The fact that in many applications deep neural networks significantly outperform these simpler models reveals a gap in our understanding of deep neural networks.

As a formal understanding of inductive bias in deep networks has been elusive, we conduct a novel exploration in a highly restrictive setting that admits visualization and quantification of inductive bias, allowing us to compare variations in architecture, optimization procedure, initialization scheme, and hyperparameters. The particular task we investigate is learning an \emph{identity mapping} in a regression setting. The identity mapping is interesting for four reasons. First, it imposes a structural regularity between the input and output, the type of regularity that could in principle lead to systematic generalization~\citep{he2016identity,hardt2016identity}. Second, it requires that every input feature is transmitted to the output and thus provides a sensitive indicator of whether a model succeeds in passing activations (and gradients) between inputs and outputs. Third, conditional image generation is a popular task in the literature \citep[e.g.,][]{condgan,superresolution}; an identity mapping is the simplest form of such a generative process. Fourth, and perhaps most importantly, it admits detailed analysis and visualization of model behaviors and hidden representations.

\begin{figure}
\centering
\includegraphics[width=.9\linewidth]{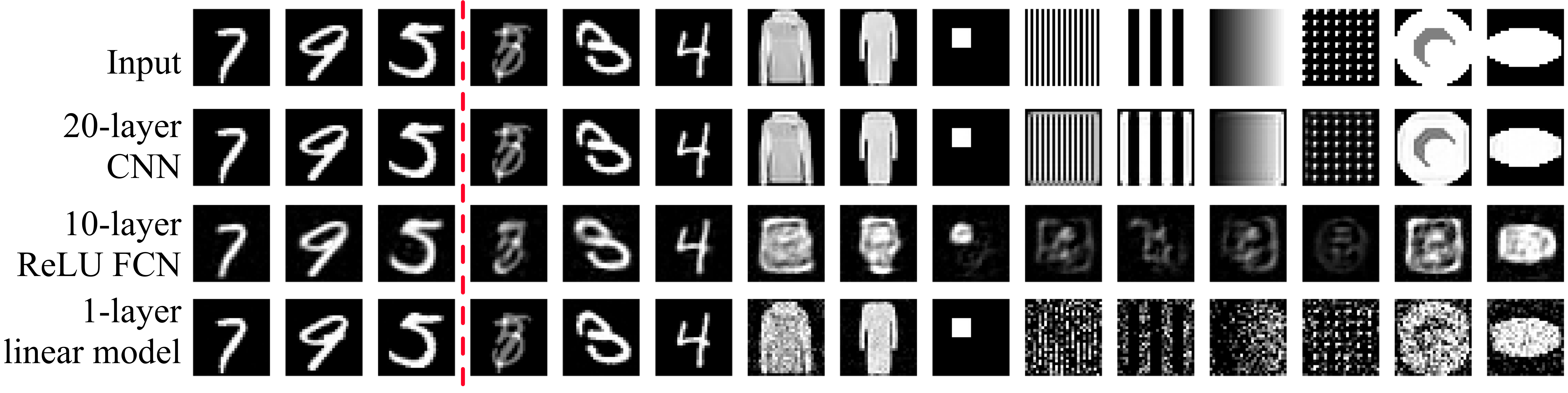}
\caption{Predictions of three architectures trained on the \emph{identity mapping} task with 60k MNIST examples. The red dashed line separates 3 training examples from the test examples.}
  \label{fig:mnist-60k-examples}
\end{figure}
Consider networks trained on the identity task with 60k MNIST digits. Although only digit images are presented during the training, one might expect the strong regularity of the task to lead to good generalization to images other than digits. Figure~\ref{fig:mnist-60k-examples} compares three different architectures. The top row shows various input patterns, and the next three rows are outputs from a 20-layer convolutional net (CNN), a 10-layer fully connected net (FCN) with rectified-linear unit (ReLU) activation functions, and a 1-layer FCN.
The 1-layer FCN amounts to a convex optimization problem with infinitely many solutions, however gradient decent converges to a unique closed-form solution. All nets perform well on the training
set (first three columns) and transfer well to novel digits and digit blends (columns 4--6). Yet, outside of the hull of hand-printed digits, only the CNN discovers a reasonably good approximation to the identity function.

Figure~\ref{fig:mnist-60k-examples} reflects architecture-specific inductive bias that persists even with 60k training examples. Despite this persistence, a model's intrinsic bias is more likely to be revealed with a smaller training set. In this paper, we push this argument to the limit by studying learning with a \emph{single} training example. Although models are free to reveal their natural proclivities in this maximally overparameterized regime, our initial intuition was that a single example would be uninteresting as models would be algebraically equivalent to the constant function (e.g., via biases on output units). Further, it seemed inconceivable that inductive biases would be sufficiently strong to learn a mapping close to the identity. Unexpectedly, our experiments show that model behavior is subtle and architecture dependent. In a broad set of experiments, we highlight model characteristics---including depth, initialization, and hyperparameters---that determine where a model lands on the continuum between \emph{memorization} (learning a constant function) and \emph{generalization} (learning the identity function). The simplicity of the training scenario permits rich characterization of inductive biases.

\section{Related work}

The consequences of overparameterized models in deep learning have been extensively studied in recently years, on the optimization landscape and convergence of SGD \citep{Allen-Zhu2018-eq,Du2018-xp,Du2018-kc,Bassily2018-yx,Zou2018-lp,Oymak2018-qp}, as well as the generalization guarantees under stronger structural assumptions of the data \citep{Li2018-kn,Brutzkus2018-tn,Allen-Zhu2018-kj}. Another line of related work is the study of the implicit regularization effects of SGD on training overparameterized models \citep{Neyshabur2014-ti,zhang2016understanding,soudry2018implicit,Shah2018-re,arora2019implicit}.

The traits of memorization in learning are also explicitly studied from various perspectives such as prioritizing learning of simple patterns \citep{arpit2017closer} or perfect interpolation of the training set \citep{Belkin2018-rf,feldman2019does}. More recently, coincidentally with the writing of this paper, \citet{radhakrishnan2018downsampling} reported on the effects of the downsampling operator in convolutional auto-encoders on image memorization. Their empirical framework is similar to ours, fitting CNNs to the autoregression problem with few training examples. We focus on investigating the general inductive bias in the extreme overparameterization case, and study a broader range of network types without enforcing a bottleneck in the architectures.

\section{Experiments}
We explore a progression of models: linear convex models, linear non-convex models (with multiple linear layers), fully-connected multilayered architectures with nonlinearities, and finally the case of greatest practical importance, fully convolutional networks. In all architectures we study, we ensure that there is a simple realization of the identity function (see Appendix~\ref{app:encode-linear-function}). We train networks by minimizing the \emph{mean squared error} using standard gradient descent.

\subsection{Fully connected networks}
\label{sec:mlp}

\begin{figure}
                              \centering
  \includegraphics[width=.8\linewidth]{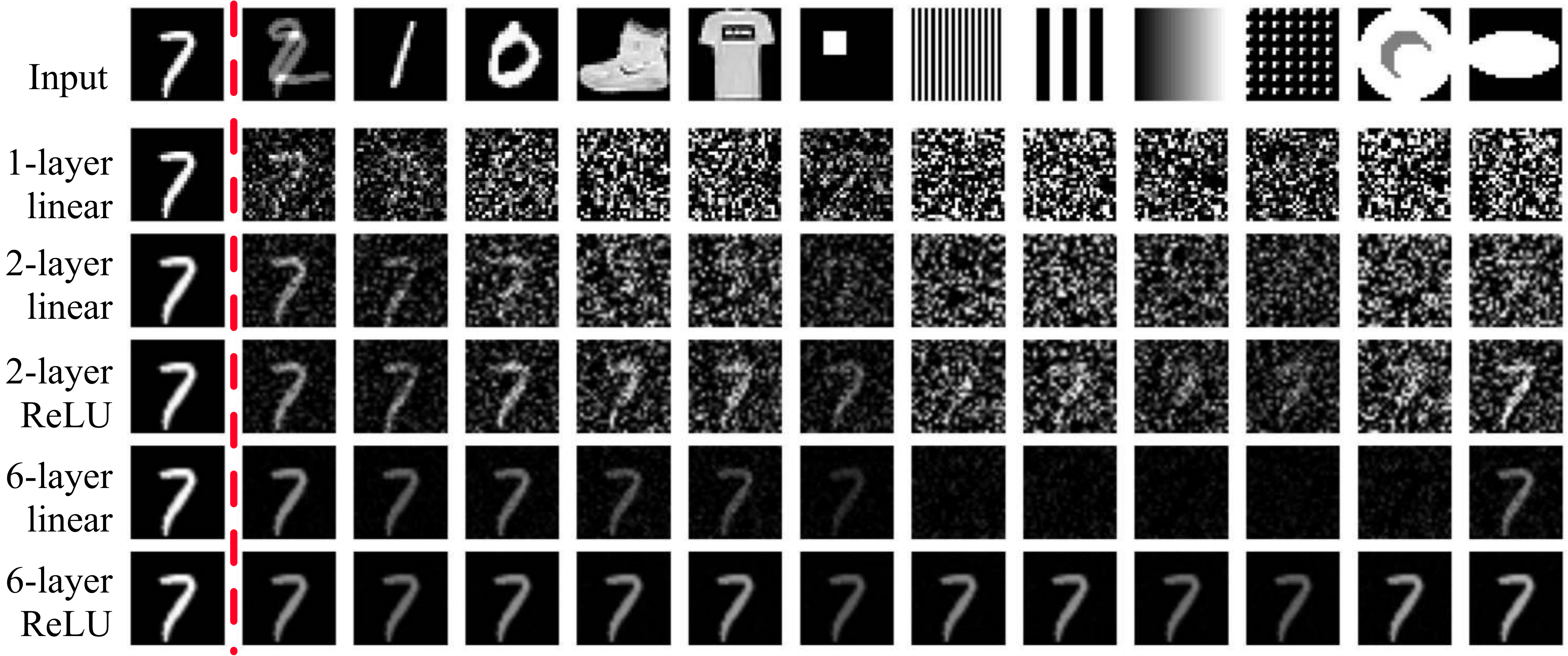}
  \caption{\textbf{Visualization of the outputs of fully connected networks trained on a single example.} The first row shows the single training example (7) and a set of evaluation images consisting of a linear combination of two digits, random digits from MNIST test set, random images from  Fashion MNIST, and some algorithmically generated image patterns. Each row below indicates an architecture and the output from that architecture for a given input.}
  \label{fig:mnist-n1-fcn}
\end{figure}

Figure~\ref{fig:mnist-n1-fcn} shows examples of predictions from multi-layer fully connected networks. Infinitely many solutions exist for all models under this extreme over-parameterization, and the figure shows that all the models fit the training example perfectly. However, on new test examples, contrasting behaviors are observed between shallow and deep networks. In particular, deeper models bias toward predicting a constant output, whereas shallower networks tend to predict random white noises on unseen inputs. The random predictions can be characterized as follows  (proof in Appendix~\ref{app:convex-solution-proof}).

\begin{theorem}\label{prop:convex-linear-case}
  A one-layer fully connected network, when trained with gradient descent on a single training example $\hat{x}$, converges to a solution that makes the following prediction on a test example $x$:
  \begin{equation}
    f(x) = \Pi_{\parallel}(x) + \mathbf{R}\Pi_{\perp}(x),
  \end{equation}
  where $x=\Pi_{\parallel}(x) + \Pi_{\perp}(x)$ decomposes $x$ into orthogonal components that are parallel and perpendicular to the training example $\hat{x}$, respective. $\mathbf{R}$ is a random matrix from the network initialization, independent of the training data.
\end{theorem}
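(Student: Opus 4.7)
The plan is to exploit the rank-one structure of the gradient update. Writing the network as $f(x) = W x$ with a square weight matrix $W \in \R^{d \times d}$ and loss $L(W) = \tfrac{1}{2}\|W\hat{x} - \hat{x}\|^2$, the gradient is
\begin{equation}
\nabla_W L(W) = (W\hat{x} - \hat{x})\,\hat{x}^\top,
\end{equation}
which is always an outer product with $\hat{x}^\top$ on the right. Hence every gradient descent (or gradient flow) iterate lies in the affine subspace $\{W_0 + u\hat{x}^\top : u \in \R^d\}$, where $W_0$ is the random initialization.

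First I would parameterize the trajectory as $W_t = W_0 + u_t\,\hat{x}^\top$ and substitute into the GD recursion, obtaining a one-dimensional linear recursion for the coefficient vector $u_t$ governed by the scalar $\|\hat{x}\|^2$. Under a standard step-size condition $0 < \eta < 2/\|\hat{x}\|^2$ (or trivially under gradient flow), this recursion is a contraction with factor $|1 - \eta\|\hat{x}\|^2|$ and converges to the unique $W_\infty$ in this affine subspace that interpolates the training example, i.e., satisfies $W_\infty \hat{x} = \hat{x}$. Next I would solve the interpolation equation in closed form: writing $W_\infty = W_0 + u_\infty \hat{x}^\top$ and enforcing $W_\infty \hat{x} = \hat{x}$ yields $u_\infty = (\hat{x} - W_0 \hat{x})/\|\hat{x}\|^2$, so
\begin{equation}
W_\infty \;=\; W_0 \;+\; \frac{(\hat{x} - W_0 \hat{x})\,\hat{x}^\top}{\|\hat{x}\|^2}.
\end{equation}

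Finally I would apply $W_\infty$ to an arbitrary test input $x$, use the orthogonal decomposition $x = \Pi_\parallel(x) + \Pi_\perp(x)$ with $\Pi_\parallel(x) = \hat{x}\hat{x}^\top x/\|\hat{x}\|^2$, and simplify. The term $\hat{x}\hat{x}^\top x/\|\hat{x}\|^2$ on the right-hand side collapses to $\Pi_\parallel(x)$, so $W_\infty x = W_0 x + \Pi_\parallel(x) - W_0 \Pi_\parallel(x) = \Pi_\parallel(x) + W_0 \Pi_\perp(x)$. Identifying $\mathbf{R} := W_0$ gives the claimed identity, and independence of $\mathbf{R}$ from the training data is immediate since $W_0$ is drawn before $\hat{x}$ is observed.

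I expect the only nontrivial step to be the convergence argument at the first stage: one must verify that GD indeed reaches the interpolating solution within the affine subspace, rather than diverging or oscillating. After the $u_t$ reparameterization this reduces to a scalar contraction and is routine, but it is the step that turns the informal phrase ``converges to a solution'' in the theorem into a precise statement; everything downstream is pure linear algebra.
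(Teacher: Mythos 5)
Your proposal is correct and follows essentially the same route as the paper's proof in Appendix~\ref{app:convex-solution-proof}: the rank-one gradient $(W\hat{x}-\hat{x})\hat{x}^\top$ confines the trajectory to the affine set $\{W^0+u\hat{x}^\top\}$, the interpolation condition pins down $u$, and applying the limit to $x$ yields $\Pi_{\parallel}(x)+W^0\Pi_{\perp}(x)$ with $\mathbf{R}=W^0$. The only difference is that you make the convergence step explicit via the scalar contraction $|1-\eta\|\hat{x}\|_2^2|$, where the paper simply appeals to standard convex-optimization convergence.
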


For test examples similar to the training example---i.e., where $\Pi_{\parallel}(x)$ dominates $\Pi_{\perp}(x)$---the outputs resemble the training output; on the other hand, for test examples that are not highly correlated to the training example, $\Pi_{\perp}(x)$ dominates and the outputs looks like white noise due to the random projection by $\mathbf{R}$. The behavior can be empirically verified from the second row in Figure~\ref{fig:mnist-n1-fcn}. Specifically, the first test example is a mixture of the training and an unseen test example, and the corresponding output is a mixture of white noise and the training output. For the remaining test examples, the outputs appear random.

Although Theorem~\ref{prop:convex-linear-case} characterizes only the 1-layer linear case, the empirical results in Figure~\ref{fig:mnist-n1-fcn} suggest that shallow (2 layer) networks tend to have this inductive bias.
However, this inductive bias does not miraculously  obtain good generalization: the trained model fails to learn either the identity or the constant function. Specifically, it predicts well in the vicinity (measured by correlations) of the training example $\hat{x}$, but further away its predictions are random. In particular, when the test example $x$ is orthogonal to $\hat{x}$, the prediction is completely random.
In deeper (6 layer) nets, the deviations from the identity function take on a quite different characteristic form.

Interestingly,
deeper \emph{linear} networks behave more like deeper ReLU networks, with a strong bias towards a constant function that maps any input to the single training output. A multilayer \emph{linear} network with no
hidden-layer bottleneck has essentially the same representational power as a 1-layer linear network, but gradient descent produces different learning dynamics that alter the inductive biases. See Appendix~\ref{app:fcn} for more results and analysis on FCNs.

\subsection{Convolutional networks}
\label{sec:convnets}

\begin{figure}\centering
                                  \includegraphics[width=.75\linewidth]{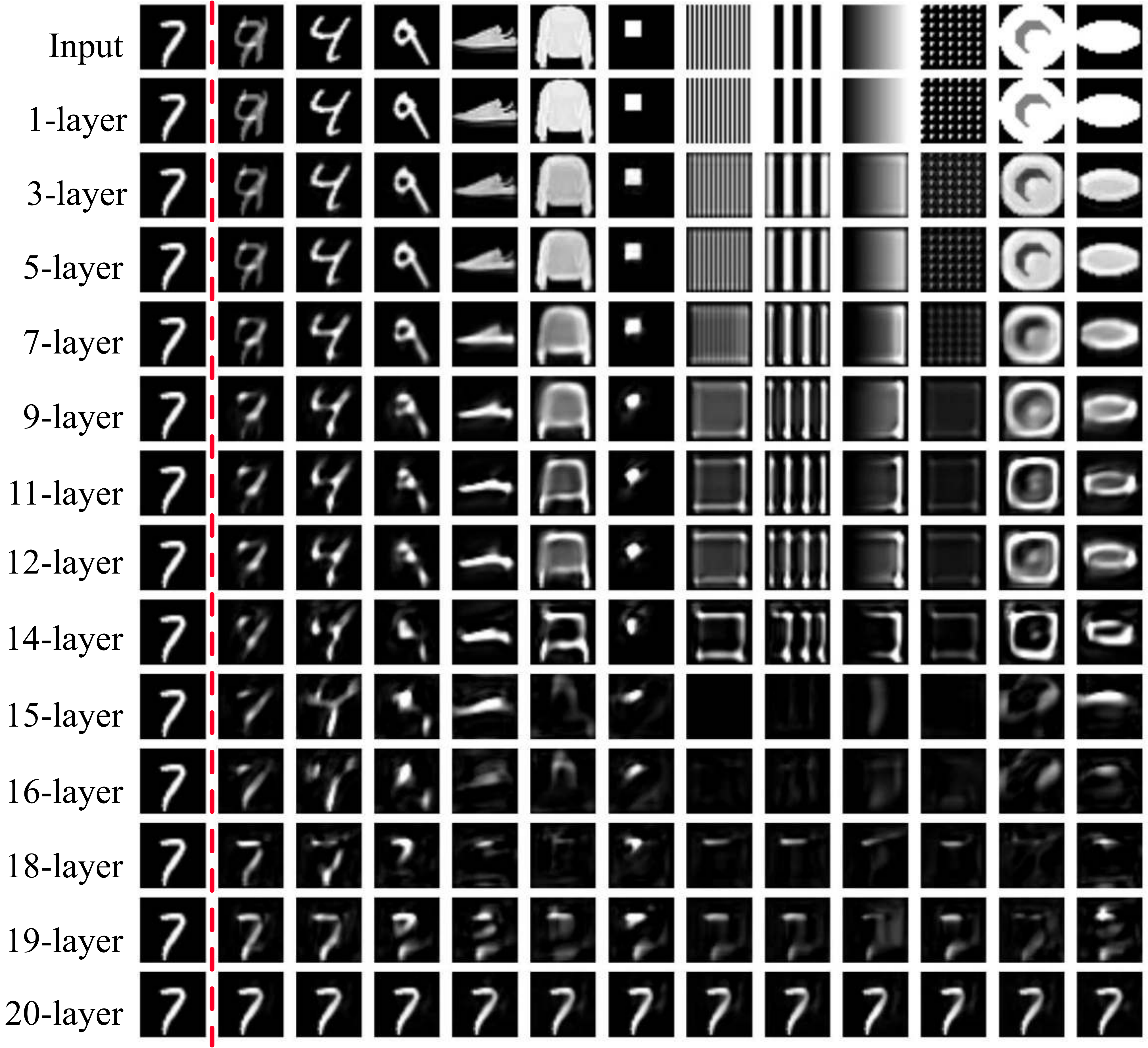}   \caption{\textbf{Visualization of predictions from CNNs trained on a single example.} The first row shows the single training example (7) and a set of test inputs. Each row below shows the output of a CNN whose depth is indicated to the left of the row. The hidden layers of the CNN consist of $5\times 5$ convolution filters organized as 128 channels.}

  \label{fig:mnist-n1-conv}
\end{figure}

\begin{figure}
\centering
  \includegraphics[width=.85\linewidth]{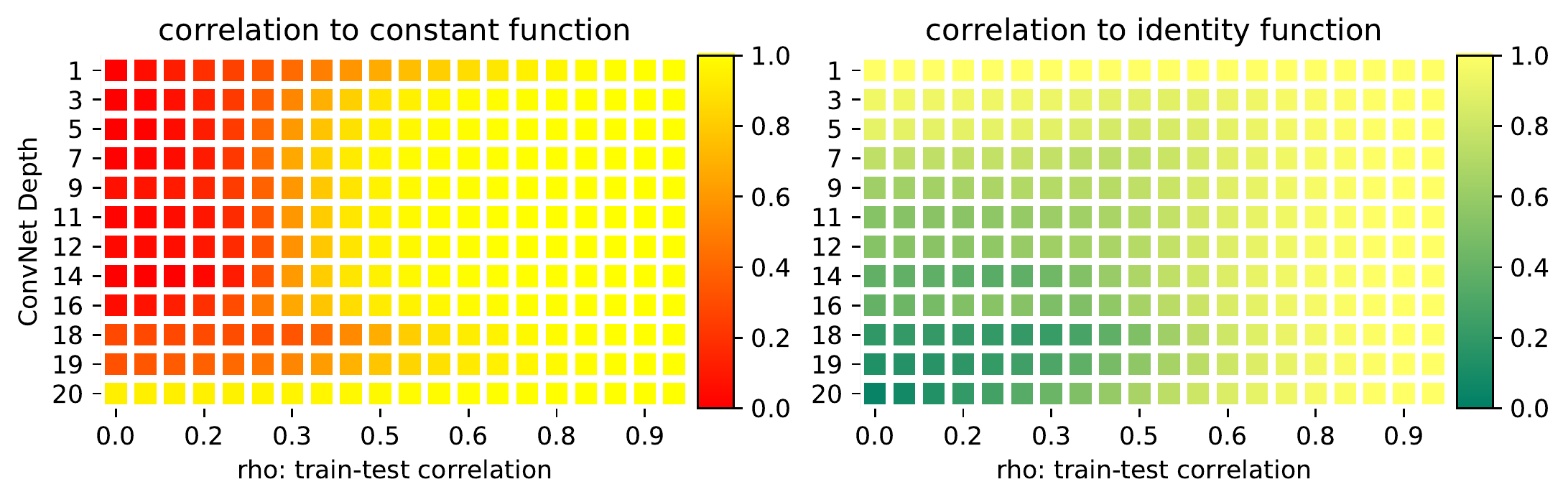}   \caption{\textbf{Predictions of CNNs on test examples at different angles to the training image.}
   The horizontal axis shows the train-test correlation, while the vertical axis indicate the number of hidden layers for the CNNs being evaluated. The heatmap shows the similarity (measured in correlation) between the model prediction and the reference function (the constant or the identity function).}
  \label{fig:mnist-conv-eooi-heatmap}
\end{figure}

We next study the inductive bias of convolutional neural networks with ReLU activation functions. Figure~\ref{fig:mnist-n1-conv} shows predictions on various test patterns obtained by training CNNs of varying depths. Compared to FCNs, CNNs have strong structural constraints that limit the receptive field of each neuron to a spatially local neighborhood and the weights are tied and being used across the spatial array. These two constraints match the structure of the identity target function. (See Appendix~\ref{app:convnet-id} for an example of constructing the identity function with CNNs.)
Similar to the fully connected case, for one-layer CNN, we can bound the error as follows (proof in Appendix~\ref{app:proof-one-layer-conv}).

\begin{theorem}
  \label{prop:one-layer-conv}
  A one-layer convolutional neural network can learn the identity map from a single training example with the mean squared error over all output pixels bounded as
  \begin{align}
  \text{MSE}
  \leq \mathcal{\tilde{O}}\left( \frac{m(\nicefrac{m}{C}-r)}{C} \right)
  \label{eq:prop-2}
\end{align}
  where   $m$ is the number of network parameters, $C$ is the number of channels in the image, and $r\leq \nicefrac{m}{C}$ is the rank of the subspace formed by the span of the local input patches.
\end{theorem}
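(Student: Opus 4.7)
The plan is to reduce the one-layer CNN to a convex linear system in the filter weights, identify the learned filter as the orthogonal projection of the true identity filter onto the row space of the training example's patch matrix, and then convert filter-space distance into function-space MSE via a patchwise Cauchy--Schwarz bound. Concretely, for kernel size $k$ and $C$ channels, vectorizing the local patches of $\hat x$ gives a matrix $A\in\mathbb{R}^{HW\times k^2C}$ (with $k^2C = m/C$ columns), and for each output channel $c'$ the reconstruction loss becomes $\|Aw_{c'}-b_{c'}\|^2$, where $w_{c'}\in\mathbb{R}^{k^2C}$ is the vectorized filter and $b_{c'}$ is the $c'$-th channel of $\hat x$. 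The identity filter $w^{\mathrm{id}}_{c'}$ (an indicator at the center spatial position of channel $c'$) already achieves zero loss, so gradient descent from near-zero initialization converges to the minimum-norm solution $\hat w_{c'} = P_{\mathrm{row}(A)}w^{\mathrm{id}}_{c'}$, and the deviation $\Delta_{c'} := \hat w_{c'}-w^{\mathrm{id}}_{c'} = -P_{\mathrm{null}(A)}w^{\mathrm{id}}_{c'}$ lies entirely in $\mathrm{null}(A)$.

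Next I would bound the aggregate filter error. Assembling the $w^{\mathrm{id}}_{c'}$ into columns of $U\in\mathbb{R}^{k^2C\times C}$, one has $\|\Delta\|_F^2 = \Tr(U^\top P_{\mathrm{null}(A)}U) = \Tr(P_{\mathrm{null}(A)}UU^\top)$. Since $UU^\top$ and $P_{\mathrm{null}(A)}$ are orthogonal projectors of ranks $C$ and $k^2C - r = m/C - r$ respectively, a rank/operator-norm (or von Neumann) trace bound gives $\|\Delta\|_F^2 \le \min(C,\,m/C-r)$. To pass from the filter to the function, I would apply Cauchy--Schwarz pointwise: at each output location $(i,j)$ and channel $c'$, $(\langle\Delta_{c'},P_{i,j}(x)\rangle)^2 \le \|\Delta_{c'}\|^2\|P_{i,j}(x)\|^2 \le \|\Delta_{c'}\|^2\cdot k^2C$ for a bounded test input $x$. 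Summing over $c'$ and $(i,j)$ and averaging over the $HW$ output locations then yields $\mathrm{MSE} \le k^2C\cdot\|\Delta\|_F^2 \le (m/C)(m/C-r) = m(m/C-r)/C$, exactly the claim, with the $\widetilde{\mathcal{O}}$ absorbing the constant from bounding $\|P_{i,j}(x)\|^2 \le k^2C$ under the paper's pixel-magnitude convention.

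The main obstacle is the implicit-bias step: showing that gradient descent actually converges to the \emph{filter-space} minimum-norm solution rather than some other interpolating filter. For an unbiased linear one-layer CNN this follows from the fact that the loss is an ordinary least squares objective in the vectorized filter coordinates and the GD iterates stay in the column span of $A^\top$; care is required, however, that the norm minimized is taken in filter coordinates rather than in the much larger ``unrolled'' matrix coordinates, since the two yield different minimum-norm projections and therefore different $\hat w$'s. A secondary delicate point is that the $\min(C,\,m/C-r)$ bound is informative only when $r$ is within $C$ of the full patch-space dimension $m/C$; outside that regime the trivial $\|\Delta\|_F^2 \le C$ bound is tighter and must be substituted in, but the $\widetilde{\mathcal{O}}$ formulation of the theorem makes this swap transparent.
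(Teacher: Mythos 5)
Your reduction is, at its core, the same as the paper's: write the one-layer CNN as per-pixel least squares over vectorized patches, note that the identity filter (the paper's matrix $\mathsf{\Lambda}$) interpolates the training image exactly, so the learned filter can deviate from it only inside the null space of the patch matrix, and then convert filter error into prediction error by Cauchy--Schwarz with $\|P_{ij}(x)\|^2\le B^2K_HK_WC$; your nullity bookkeeping ($\mathfrak{n}=m/C-r$) and the identity $k^2C=m/C$ reproduce the stated rate, and your underparameterized remark matches the paper's Case 2 (exact recovery, zero error). The one substantive divergence is how the learned solution is characterized. You assume (near-)zero initialization, so gradient descent returns the minimum-norm interpolant $P_{\mathrm{row}(A)}w^{\mathrm{id}}_{c'}$, the deviation is exactly $-P_{\mathrm{null}(A)}w^{\mathrm{id}}_{c'}$, and the bound is deterministic (and in fact slightly tighter: your $\|\Delta\|_F^2\le\min(C,\,m/C-r)$ versus the paper's cruder $\mathfrak{n}C$ for the corresponding term). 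The paper instead keeps the random Gaussian initialization used throughout its experiments: by Lemma~\ref{lem:linear-model-overparameterized} the GD limit is $W^0+X^\top(XX^\top)^{-1}(Y-XW^0)$, so the deviation is $(W^0-\mathsf{\Lambda})^\top\Pi_X^\perp$ and retains a non-vanishing contribution from the null-space component of $W^0$; bounding $\|W^{0\top}\Pi_X^\perp\|$ via a $\chi^2$ tail bound plus a union bound is exactly where the high-probability qualifier and the logarithmic factors hidden in $\tilde{\mathcal{O}}$ come from. So if Theorem~\ref{prop:one-layer-conv} is read as ``there exists a gradient-descent procedure achieving this MSE,'' your argument is complete and arguably cleaner; if it is read in the paper's randomly-initialized setting, your proof is missing precisely that concentration step, since with $W^0\neq 0$ the learned filter is not the minimum-norm solution and the initialization's null-space part does not disappear. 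Your flagged worry about which coordinates the implicit bias minimizes norm in is resolved correctly: the objective is ordinary least squares in filter coordinates and the iterates stay in the span of the patch rows.
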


The error grows with $m$, the number of parameters in the network. For example, learning CNNs with
larger receptive field sizes will be harder. Even though the bound seems to decrease with more
(input and output) channels in the image, note that the number of channels $C$ also
contributes to the number of parameters ($m=K_HK_WC^2$), so there is a trade-off. Unlike typical
generalization bound that decays with number of i.i.d. training examples, we have only one training
example here, and the key quantity that reduces the bound is the rank $r$ of the subspace formed by
the local image patches. The size of the training image implicitly affects bounds as larger image
generates more image patches. Note the rank $r$ also heavily depends on the contents of the training
image. For example, simply padding the image with zeros on all boundaries will not reduce the error
bound. With enough linearly independent image patches, the subspace becomes full rank
$r=\nicefrac{m}{C}$, and learning of the global identity map is guaranteed.

The theorem guarantees only the one-layer case. Empirically---as shown in Figure~\ref{fig:mnist-n1-conv}---CNNs with depth up-to-5 layers learn a fairly accurate approximation to the identity function, with the exception of a few artifacts at the boundaries.
For a quantitative evaluation, we measure the performance by calculating the correlation (See Appendix~\ref{app:correlation-vs-mse} for the results in MSE.)
to two reference functions: the identity function and the constant function that maps every input to the training point $\hat{x}$.
To examine how a model's response varies with similarity to the training image,  we generate test images having correlation $\rho\in[0,1]$ to the training image by: (1) sampling an image with random pixels, (2) adding $\alpha \hat{x}$ to the image, picking $\alpha$ such that the correlation with $\hat{x}$ is $\rho$; (3) renormalizing the image to be of the same norm as $\hat{x}$. For $\rho=0$, the test images are orthogonal to $\hat{x}$, whereas for $\rho=1$, the test images equal $\hat{x}$. The results for CNNs of different depths are shown in Figure~\ref{fig:mnist-conv-eooi-heatmap}.
The quantitative findings are consistent with the visualizations: shallow CNNs are able to learn the identity function from only one training example; very deep CNNs bias towards the constant function; and CNNs of intermediate depth correlate well with neither the identity nor the constant function. However, unlike FCNs that produce white-noise-like predictions, from Figure~\ref{fig:mnist-n1-conv} CNNs of intermediate depth behave like edge detectors.

\begin{SCfigure}
\centering
  \includegraphics[width=.5\linewidth]{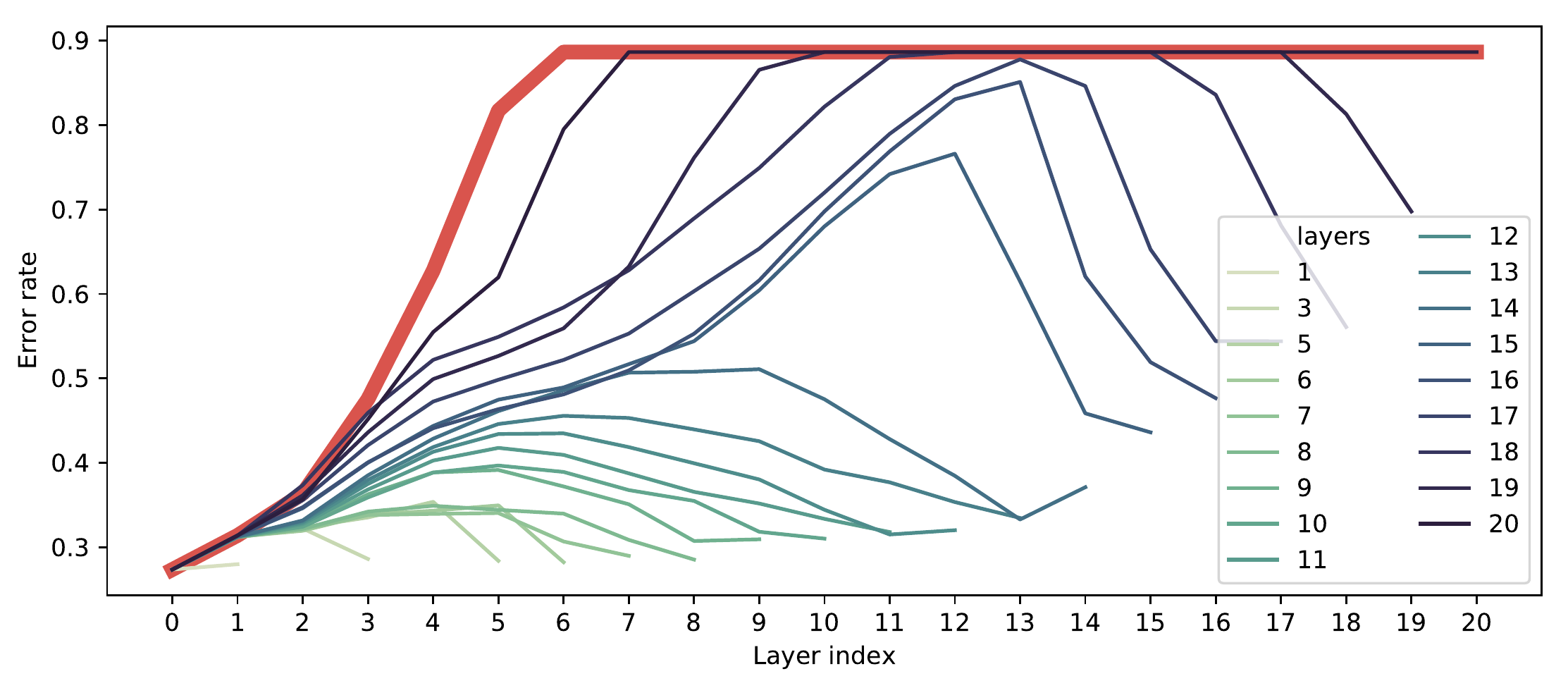}
  \caption{\textbf{Illustration of the collapse of predictive power as function of layer's depth.} Error rate is measured by using the representations computed at each layer to a simple averaging based classifier on the MNIST test set. The error rate at each layer is plotted for a number of trained CNNs of different depth. The thick red line shows the curve of an \emph{untrained} 20-layer CNN for reference.}
  \label{fig:ipc-conv}
\end{SCfigure}

To evaluate how much information is lost in the intermediate layers, we use the following
simple criterion to assess the representation in each layer. We feed each image in
the MNIST dataset through a network trained on our single example. We collect the
representations at a given layer and perform a simple similarity-weighted classification.
For each example $x_i$ from the \mnist{} test set, we predict its class as a weighted average of the (one
hot) label vector of each example $x_j$ from the \mnist{} training set, where the weight is the inner-product of $x_i$ and $x_j$.

This metric does not quantify how much information is preserved as the image representation propagates through the layers, because the representation could still be maintained yet not captured by the simple correlation-weighted classifier. It nonetheless provides a simple metric for exploring the (identity) mapping: using the input image as the baseline, if a layer represents the identity function, then the representation at that layer would obtain a similar error rate when using the input representation; on the other hand, if a layer degenerates into the constant function, then the corresponding representation would have error rate close
to random guessing of the label. The results are plotted in~Figure~\ref{fig:ipc-conv}. The error curve for a randomly initialized 20-layer CNN is shown as reference: at random initialization, the smoothing effect renders the representations beyond the sixth layer unuseful for the averaging-based classifier. After training, the concave nonmonotonicity in the curves indicates loss and then recovery of the information present in the input. Trained networks try to recover the washed out intermediate layer representations as means to link the input and the output layer. However, if the depth is too large, the network tries to infer input-output relations using partial information, resulting in models that behave like edge detectors. Finally, for the case of 20 layers, the curve shows that the bottom few layers do get small improvements in error rate comparing to random initialization, but the big gap between the input and output layer drives the network to learn the constant function instead. On a first sight, this deems to underscore a vanishing gradient problem, but
Figure~\ref{fig:mnist-60k-examples} reveals that given a sufficient number of training examples, a 20-layer CNN \emph{can} still learn the identity map. See also Appendix~\ref{app:weight-distance} for further discussions on vanishing gradients.
Since CNNs preserve spatial structure, we can also visualize information loss in the intermediate layers. The visualization results, described in Appendix~\ref{app:convnets-details},
are consistent with the aforementioned observations.

\subsection{Robustness to changes in input scale}
\label{sec:robustness-of-inductive-bias}

Whether a relatively shallow CNN learns the identity function from a single training example or a relatively deep CNN learns the constant function, both outcomes reflect an inductive bias
because the training objective never explicitly mandates the model to learn one structure or another. The spatial structure of CNNs enables additional analyses of the encoding induced by the learned function.
In particular, we
examined how changing the size of the input by scaling the dimensions of the spatial map affects the model predictions. Figure~\ref{fig:conv-l5-input-sizes} depicts the predictions of a 5-layer CNN trained on a $28\times28$ image and tested on $7\times 7$ and $112\times 112$ images. Although the learned identity map generally holds up against a larger-than-trained input, the identity map is disturbed on smaller-than-trained inputs.
See Appendix~\ref{app:conv-l5-input-sizes-full} for a comprehensive description of the results.
Note that CNNs are capable of encoding the identity function for arbitrary input and filter sizes (Appendix~\ref{app:convnet-id}).

Figure~\ref{fig:conv-l20-input-sizes} shows the predictions on the rescaled input patterns of Figure~\ref{fig:conv-l5-input-sizes} by a 20-layer CNN that has learned the constant function on a $28\times 28$ image. The learned constant map holds up over a smaller range of input sizes than the learned identity map in a 5-layer CNN. It is nonetheless interesting to see smooth changes as the input size increases to reveal the network's own notion of ``7'', clearly defined with respect to the corners of the input map (Figure~\ref{fig:conv-l20-input-sizes-alllayers} in Appendix~\ref{app:robustness-of-inductive-bias} reveals interesting details on how the patterns progressively grow from image boundaries in intermediate layers). We performed additional experiments to directly feed test images to the upper subnet, which reveals more about the generative process by which the net synthesizes a constant output (Appendix~\ref{app:robustness-of-inductive-bias}).

\begin{figure}\centering
    \includegraphics[width=.8\linewidth]{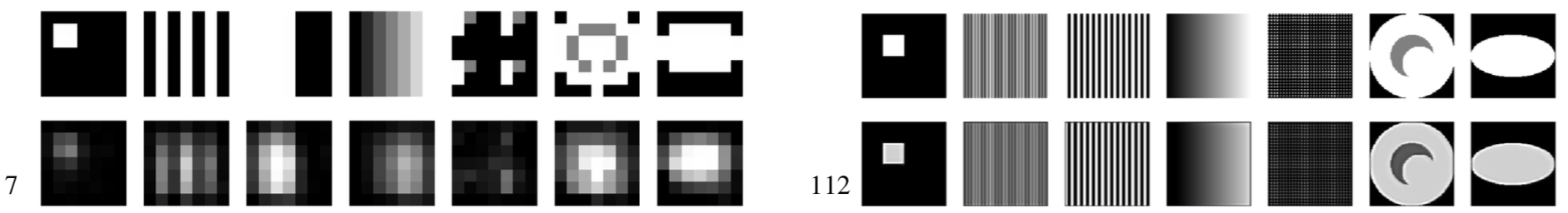}
    \caption{\textbf{Visualization of a 5-layer CNN on test images of different sizes}. The two subfigures show the results on $7\times 7$ inputs and $112\times 112$ inputs, respectively.}
  \label{fig:conv-l5-input-sizes}
\end{figure}

\begin{figure}\centering
  \includegraphics[width=.8\linewidth]{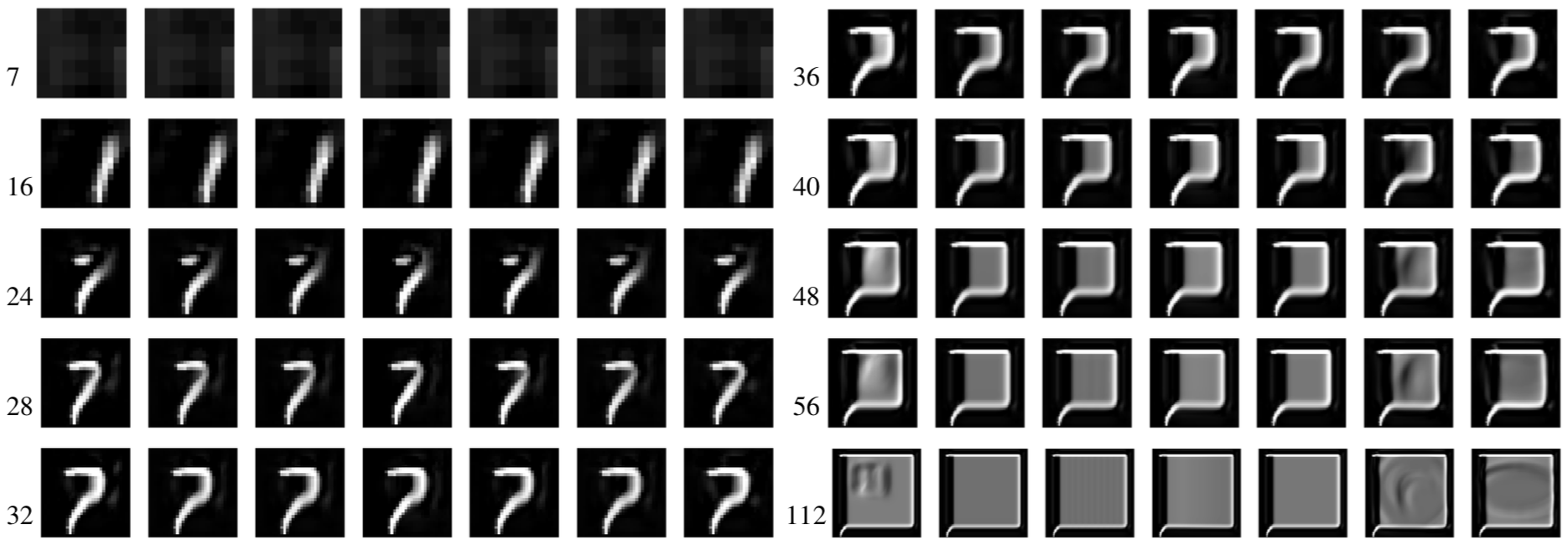}
  \caption{\textbf{Visualization of the predictions of a 20-layer CNN on test images of different sizes (indicated by the number on each row).} The input patterns are the same as in Figure~\ref{fig:conv-l5-input-sizes} (constructed in different resolutions), which are not shown for brevity.}
  \label{fig:conv-l20-input-sizes}
\end{figure}

\subsection{Varying other factors during training}
\label{sec:conv-other-factors}

We studied a variety of common hyperparameters, including image dimensions, convolutional filter dimensions, number of kernels, weight initialization scheme, and the choice of gradient-based optimizer.  With highly overparameterized networks, training converges to zero error for a wide range of hyperparameters. Within the models that all perform optimally on the training set, we now address how the particular hyperparameter settings affect the inductive bias. We briefly present some of the most interesting observations here; please refer to Appendix~\ref{app:conv-other-factors} for the full results and analyses.

\begin{SCfigure}
\centering
  \includegraphics[width=.6\textwidth]{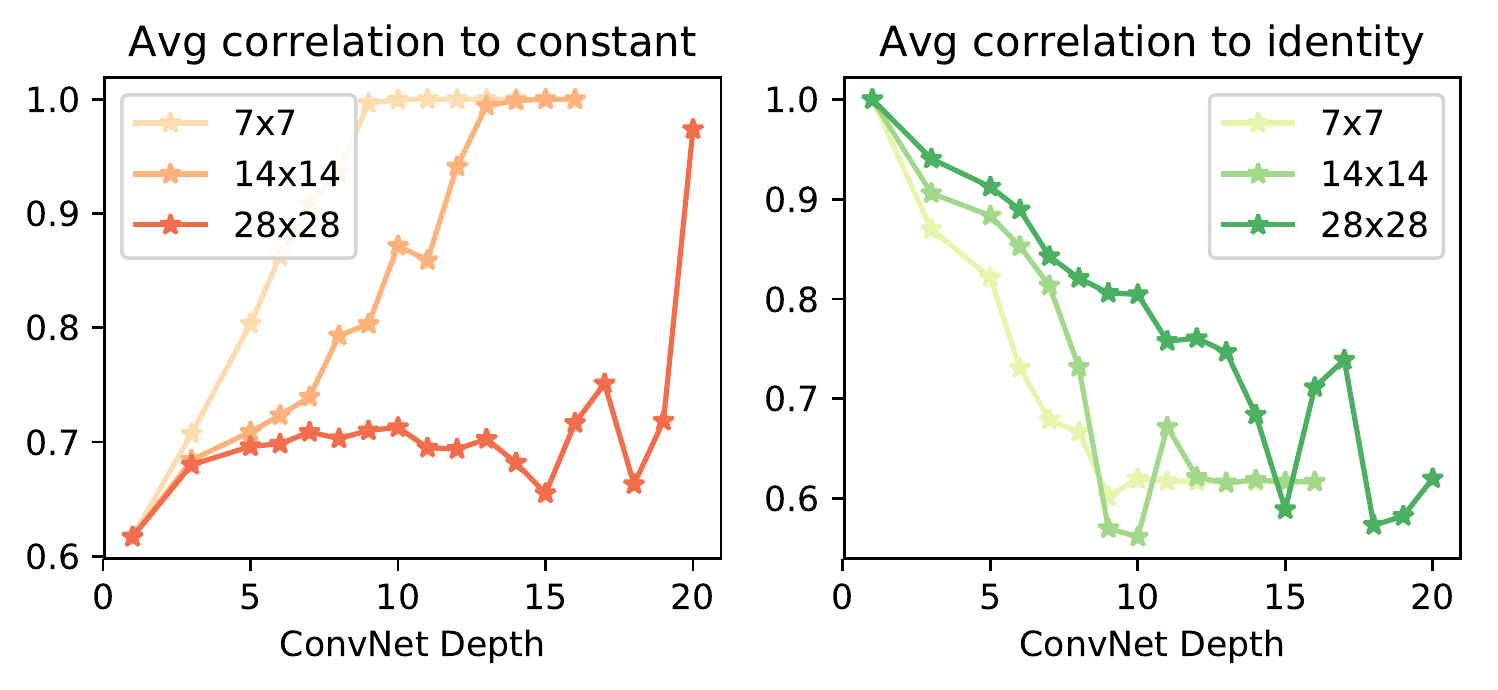}
  \caption{\textbf{Comparing bias towards constant and identity when trained with different image sizes.} The x-axis is the depth of the CNNs, while the y-axis is the mean correlation (average of each row from the heatmaps like in Figure~\ref{fig:mnist-conv-eooi-heatmap}). Each curve corresponds to training with a different image size.}
  \label{fig:conv-imgsize-correlation}
\end{SCfigure}

\paragraph{Size of training image.}
Figure~\ref{fig:conv-imgsize-correlation} shows, for varying training-image size,
the mean correlation to the constant and the identity function at different depths within the network.
The training examples are resized versions of the same image. The bias toward the constant function
at a given depth increases with smaller training images.
This finding makes sense considering that smaller images provide
fewer pixel-to-pixel mapping constraints, which is aligned with Theorem~\ref{prop:one-layer-conv}.

\paragraph{Size of convolution filters.}
Figure~\ref{fig:conv-kernel-size} illustrates the inductive bias with
convolution filter size varying from $5\times 5$ to $57\times 57$.
Predictions become blurrier as the
filter size grows. With extremely large filters that cover the entire
input array, CNNs exhibit a strong bias towards the constant function. Because the
training inputs are of size $28\times 28$, a $29\times 29$ filter size
allows each neuron to see at least half of the spatial domain of the
previous layer, assuming large boundary padding of the inputs. With $57\times 57$ filters centered at any location within the
image, each neuron sees the entire previous layer.
This is also consistent with Theorem~\ref{prop:one-layer-conv}, in which the error bound deteriorates as the filter sizes increases.
Note even with a large filter, CNNs do \emph{not} perform the same elementary
computation as FCNs because the (shared) convolution filter is repeatedly applied throughout the spatial domain.

\begin{figure}\centering
\includegraphics[width=.85\linewidth]{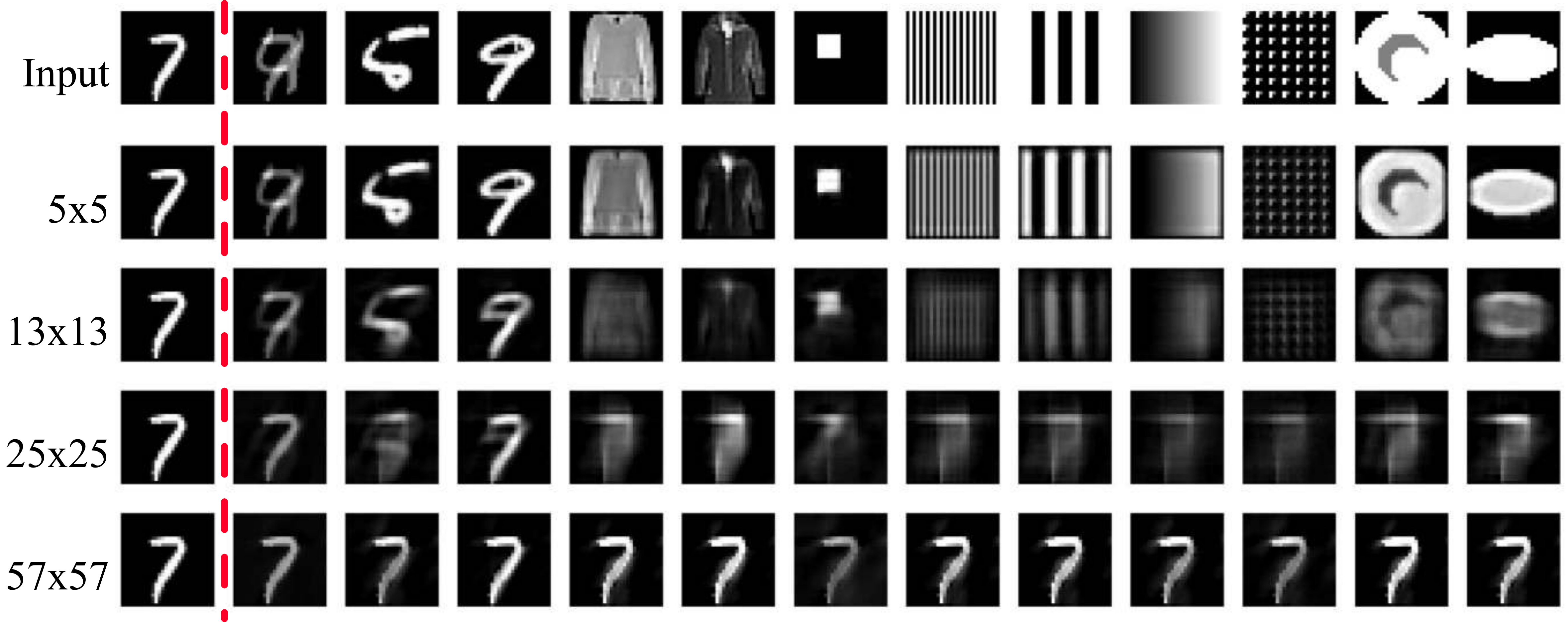}
  \caption{\textbf{Visualizing the predictions from 5-layer CNNs with various filter sizes.} The first row shows the single training example (7) and a set of test inputs. Each row below shows the output of a CNN whose filter size is indicated by the number on the left. See Appendix~\ref{app:conv-other-factors} for more results.}
  \label{fig:conv-kernel-size}
\end{figure}

\begin{figure}\centering
                          \hspace*{-10mm}\begin{tabular}{cc}
  \includegraphics[width=.54\linewidth]{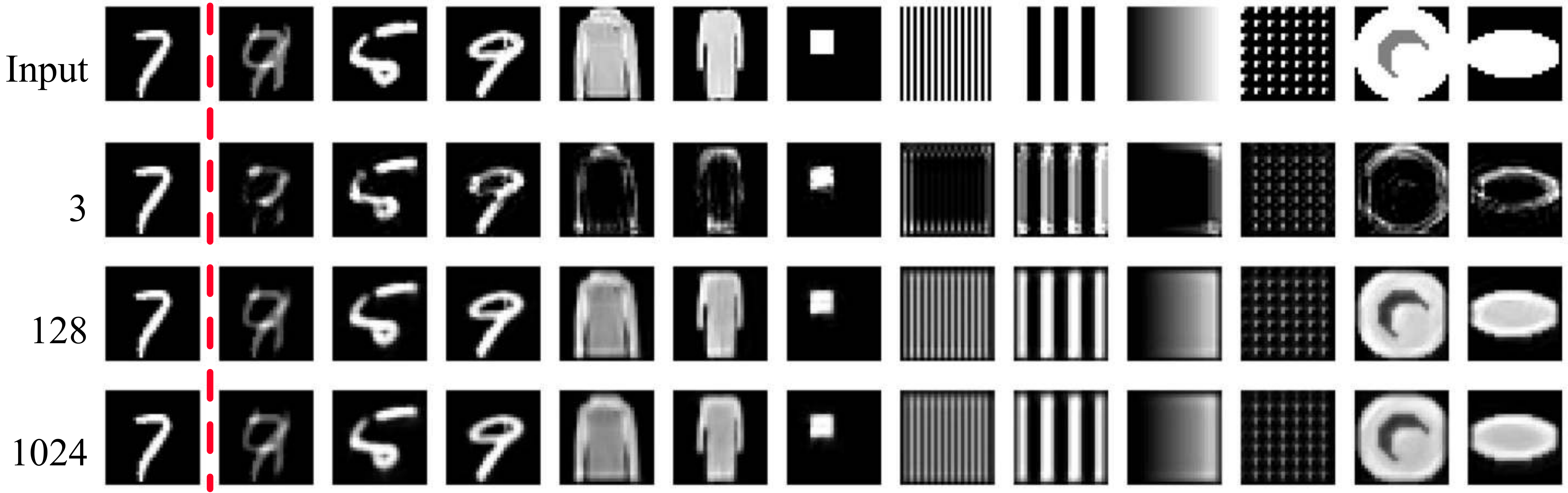} &
  \hspace*{-3mm} \includegraphics[width=.54\linewidth]{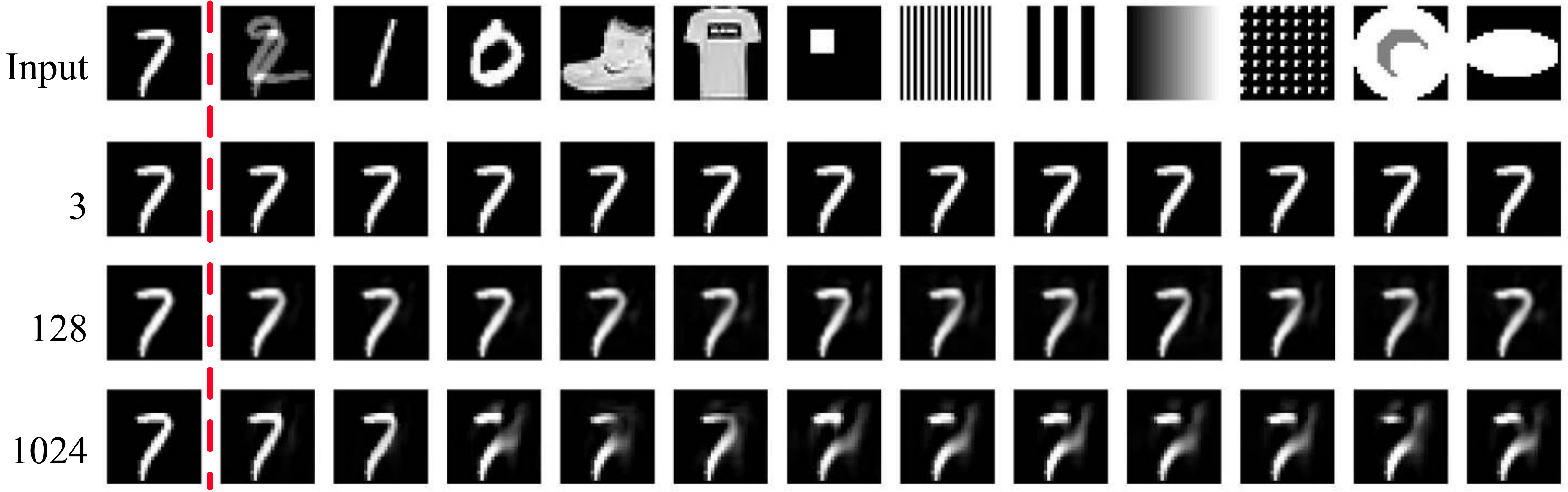} \\
  (a) 5-layer & (b) 20-layer \\
  \end{tabular}
\caption{\textbf{Visualization of the predictions from CNNs for various number of hidden channels.} The first row shows a single training example (7) and a set of test inputs. Each row below
shows the output of CNNs differing in the number of channels. This number is indicated on the left of
each row; for intermediate layers, the number specifies both the input and output channel count.}
  \label{fig:conv-channel}
\end{figure}

\paragraph{Number of channels.}
Appendix~\ref{app:convnet-id} shows that in principle two channels
suffice to encode the identity function for gray-scale inputs via a straightforward construction.
However, in practice the outcome of training may be quite different depending on the
channel count, as shown in Figure~\ref{fig:conv-channel}(a). On the one hand, the
aggressively overparameterized network with 1024 channels ($\sim$25M
parameters per middle convolution layer) does not seem to suffer from
overfitting. On the other hand, the results with three channels often lose
content in the image center. The problem is not underfitting as the
network reconstructs the training image (first column) correctly.

A potential reason why 3-channel CNNs do not learn the identity map
well might be poor initialization when there are very few
channels~\citep{Frankle2019-nb}. This issue is demonstrated
in~Figure~\ref{fig:app-conv-channel-init}
of~Appendix~\ref{app:conv-other-factors}. Our study of training with different
initialization schemes indeed confirms that the random initial conditions have
a big impact on the inductive bias (Appendix~\ref{app:conv-other-factors}).

Figure~\ref{fig:conv-channel}(b) shows the case for 20-layer CNNs. Surprisingly,
having an order of magnitude more feature channels in every layer does not seem
to help much at making the information flow through layers, as the network still
learns to ignore the inputs and construct a constant output.

\section{Conclusions}

We presented an systematic study of the extreme case of overparameterization
when learning from a single example. We investigated the interplay between
memorization and generalization in deep neural networks. By restricting the
learning task to the identity function, we sidestepped issues such as the
underlying optimal Bayes error of the problem and the approximation error of
the hypothesis classes. This choice also facilitated rich visualization and
intuitive interpretation of the trained models. Under this setup, we
investigated gradient-based learning procedures with explicit
memorization-generalization characterization.  Our results indicate that
different architectures exhibit vastly different inductive bias towards
memorization and generalization.

For future work, we plan to extend the study to other domains and neural
network architectures, like natural language processing and recurrent neural
networks, and aim for more qualitative relationship between the inductive bias
and various architecture configurations.

\subsubsection*{Acknowledgments}
We would like to thank Kunal Talwar, Hanie Sedghi, and Rong Ge for helpful discussions.

\bibliography{bibfile}

\begin{thebibliography}{38}
\providecommand{\natexlab}[1]{#1}
\providecommand{\url}[1]{\texttt{#1}}
\expandafter\ifx\csname urlstyle\endcsname\relax
  \providecommand{\doi}[1]{doi: #1}\else
  \providecommand{\doi}{doi: \begingroup \urlstyle{rm}\Url}\fi

\bibitem[Allen-Zhu et~al.(2018{\natexlab{a}})Allen-Zhu, Li, and
  Liang]{Allen-Zhu2018-kj}
Zeyuan Allen-Zhu, Yuanzhi Li, and Yingyu Liang.
\newblock Learning and generalization in overparameterized neural networks,
  going beyond two layers.
\newblock \emph{CoRR}, 1811.04918, 2018{\natexlab{a}}.

\bibitem[Allen-Zhu et~al.(2018{\natexlab{b}})Allen-Zhu, Li, and
  Song]{Allen-Zhu2018-eq}
Zeyuan Allen-Zhu, Yuanzhi Li, and Zhao Song.
\newblock A convergence theory for deep learning via {Over-Parameterization}.
\newblock \emph{CoRR}, arXiv:1811.03962, 2018{\natexlab{b}}.

\bibitem[Arora et~al.(2018)Arora, Ge, Neyshabur, and Zhang]{Arora2018-ou}
Sanjeev Arora, Rong Ge, Behnam Neyshabur, and Yi~Zhang.
\newblock Stronger generalization bounds for deep nets via a compression
  approach.
\newblock \emph{CoRR}, arXiv:1802.05296, 2018.

\bibitem[Arora et~al.(2019)Arora, Cohen, Hu, and Luo]{arora2019implicit}
Sanjeev Arora, Nadav Cohen, Wei Hu, and Yuping Luo.
\newblock Implicit regularization in deep matrix factorization.
\newblock \emph{arXiv preprint arXiv:1905.13655}, 2019.

\bibitem[Arpit et~al.(2017)Arpit, Jastrzebski, Ballas, Krueger, Bengio, Kanwal,
  Maharaj, Fischer, Courville, Bengio, et~al.]{arpit2017closer}
Devansh Arpit, Stanis{\l}aw Jastrzebski, Nicolas Ballas, David Krueger,
  Emmanuel Bengio, Maxinder~S Kanwal, Tegan Maharaj, Asja Fischer, Aaron
  Courville, Yoshua Bengio, et~al.
\newblock A closer look at memorization in deep networks.
\newblock In \emph{International Conference on Machine Learning}, 2017.

\bibitem[Bartlett et~al.(2017)Bartlett, Foster, and
  Telgarsky]{bartlett2017spectrally}
Peter~L Bartlett, Dylan~J Foster, and Matus~J Telgarsky.
\newblock Spectrally-normalized margin bounds for neural networks.
\newblock In \emph{Advances in Neural Information Processing Systems}, pp.\
  6240--6249, 2017.

\bibitem[Bassily et~al.(2018)Bassily, Belkin, and Ma]{Bassily2018-yx}
Raef Bassily, Mikhail Belkin, and Siyuan Ma.
\newblock On exponential convergence of {SGD} in non-convex over-parametrized
  learning.
\newblock \emph{CoRR}, arXiv:1811.02564, 2018.

\bibitem[Belkin et~al.(2018)Belkin, Hsu, and Mitra]{Belkin2018-rf}
Mikhail Belkin, Daniel Hsu, and Partha Mitra.
\newblock Overfitting or perfect fitting? risk bounds for classification and
  regression rules that interpolate.
\newblock In \emph{Advances in Neural Information Processing Systems}, 2018.

\bibitem[Brutzkus et~al.(2018)Brutzkus, Globerson, Malach, and
  Shalev-Shwartz]{Brutzkus2018-tn}
Alon Brutzkus, Amir Globerson, Eran Malach, and Shai Shalev-Shwartz.
\newblock {SGD} learns over-parameterized networks that provably generalize on
  linearly separable data.
\newblock In \emph{{ICLR}}, 2018.

\bibitem[Du et~al.(2018{\natexlab{a}})Du, Lee, Li, Wang, and Zhai]{Du2018-xp}
Simon~S Du, Jason~D Lee, Haochuan Li, Liwei Wang, and Xiyu Zhai.
\newblock Gradient descent finds global minima of deep neural networks.
\newblock \emph{CoRR}, arXiv:1811.03804, 2018{\natexlab{a}}.

\bibitem[Du et~al.(2018{\natexlab{b}})Du, Zhai, Poczos, and Singh]{Du2018-kc}
Simon~S Du, Xiyu Zhai, Barnabas Poczos, and Aarti Singh.
\newblock Gradient descent provably optimizes over-parameterized neural
  networks.
\newblock \emph{CoRR}, arXiv:1810.02054, 2018{\natexlab{b}}.

\bibitem[Duchi et~al.(2011)Duchi, Hazan, and Singer]{duchi2011adaptive}
John Duchi, Elad Hazan, and Yoram Singer.
\newblock Adaptive subgradient methods for online learning and stochastic
  optimization.
\newblock \emph{Journal of Machine Learning Research}, 12\penalty0
  (Jul):\penalty0 2121--2159, 2011.

\bibitem[Dziugaite \& Roy(2016)Dziugaite and Roy]{Dziugaite2016-bi}
Gintare~Karolina Dziugaite and Daniel~M Roy.
\newblock Computing nonvacuous generalization bounds for deep (stochastic)
  neural networks with many more parameters than training data.
\newblock In \emph{{UAI}}, 2016.

\bibitem[Feldman(2019)]{feldman2019does}
Vitaly Feldman.
\newblock Does learning require memorization? a short tale about a long tail.
\newblock \emph{arXiv preprint arXiv:1906.05271}, 2019.

\bibitem[Frankle \& Carbin(2019)Frankle and Carbin]{Frankle2019-nb}
Jonathan Frankle and Michael Carbin.
\newblock The lottery ticket hypothesis: Finding sparse, trainable neural
  networks.
\newblock In \emph{{ICLR}}, 2019.

\bibitem[Glorot \& Bengio(2010)Glorot and Bengio]{glorot2010understanding}
Xavier Glorot and Yoshua Bengio.
\newblock Understanding the difficulty of training deep feedforward neural
  networks.
\newblock In \emph{Proceedings of the thirteenth international conference on
  artificial intelligence and statistics}, pp.\  249--256, 2010.

\bibitem[Graves(2013)]{graves2013generating}
Alex Graves.
\newblock Generating sequences with recurrent neural networks.
\newblock \emph{arXiv preprint arXiv:1308.0850}, 2013.

\bibitem[Hardt \& Ma(2017)Hardt and Ma]{hardt2016identity}
Moritz Hardt and Tengyu Ma.
\newblock Identity matters in deep learning.
\newblock 2017.

\bibitem[He et~al.(2015)He, Zhang, Ren, and Sun]{he2015delving}
Kaiming He, Xiangyu Zhang, Shaoqing Ren, and Jian Sun.
\newblock Delving deep into rectifiers: Surpassing human-level performance on
  imagenet classification.
\newblock In \emph{Proceedings of the IEEE international conference on computer
  vision}, pp.\  1026--1034, 2015.

\bibitem[He et~al.(2016)He, Zhang, Ren, and Sun]{he2016identity}
Kaiming He, Xiangyu Zhang, Shaoqing Ren, and Jian Sun.
\newblock Identity mappings in deep residual networks.
\newblock In \emph{European conference on computer vision}, pp.\  630--645.
  Springer, 2016.

\bibitem[Kawaguchi et~al.(2017)Kawaguchi, Kaelbling, and
  Bengio]{kawaguchi2017generalization}
Kenji Kawaguchi, Leslie~Pack Kaelbling, and Yoshua Bengio.
\newblock Generalization in deep learning.
\newblock \emph{CoRR}, arXiv:1710.05468, 2017.

\bibitem[Kingma \& Ba(2014)Kingma and Ba]{kingma2014adam}
Diederik~P Kingma and Jimmy Ba.
\newblock Adam: A method for stochastic optimization.
\newblock \emph{arXiv preprint arXiv:1412.6980}, 2014.

\bibitem[Ledig et~al.(2017)Ledig, Theis, Husz{\'a}r, Caballero, Cunningham,
  Acosta, Aitken, Tejani, Totz, Wang, et~al.]{superresolution}
Christian Ledig, Lucas Theis, Ferenc Husz{\'a}r, Jose Caballero, Andrew
  Cunningham, Alejandro Acosta, Andrew Aitken, Alykhan Tejani, Johannes Totz,
  Zehan Wang, et~al.
\newblock Photo-realistic single image super-resolution using a generative
  adversarial network.
\newblock In \emph{Proceedings of the IEEE conference on computer vision and
  pattern recognition}, pp.\  4681--4690, 2017.

\bibitem[Li \& Liang(2018)Li and Liang]{Li2018-kn}
Yuanzhi Li and Yingyu Liang.
\newblock Learning overparameterized neural networks via stochastic gradient
  descent on structured data.
\newblock In \emph{{NIPS}}, 2018.

\bibitem[Liang et~al.(2017)Liang, Poggio, Rakhlin, and Stokes]{liang2017fisher}
Tengyuan Liang, Tomaso Poggio, Alexander Rakhlin, and James Stokes.
\newblock Fisher-rao metric, geometry, and complexity of neural networks.
\newblock \emph{CoRR}, arXiv:1711.01530, 2017.

\bibitem[Mirza \& Osindero(2014)Mirza and Osindero]{condgan}
Mehdi Mirza and Simon Osindero.
\newblock Conditional generative adversarial nets.
\newblock \emph{arXiv preprint arXiv:1411.1784}, 2014.

\bibitem[Neyshabur et~al.(2014)Neyshabur, Tomioka, and
  Srebro]{Neyshabur2014-ti}
Behnam Neyshabur, Ryota Tomioka, and Nathan Srebro.
\newblock In search of the real inductive bias: On the role of implicit
  regularization in deep learning.
\newblock \emph{CoRR}, arXiv:1412.6614, 2014.

\bibitem[Neyshabur et~al.(2017)Neyshabur, Bhojanapalli, McAllester, and
  Srebro]{neyshabur2017exploring}
Behnam Neyshabur, Srinadh Bhojanapalli, David McAllester, and Nati Srebro.
\newblock Exploring generalization in deep learning.
\newblock In \emph{Advances in Neural Information Processing Systems}, pp.\
  5947--5956, 2017.

\bibitem[Neyshabur et~al.(2018)Neyshabur, Bhojanapalli, and
  Srebro]{Neyshabur2018-zn}
Behnam Neyshabur, Srinadh Bhojanapalli, and Nathan Srebro.
\newblock A {PAC-Bayesian} approach to {Spectrally-Normalized} margin bounds
  for neural networks.
\newblock In \emph{{ICLR}}, 2018.

\bibitem[Oymak \& Soltanolkotabi(2018)Oymak and Soltanolkotabi]{Oymak2018-qp}
Samet Oymak and Mahdi Soltanolkotabi.
\newblock Overparameterized nonlinear learning: Gradient descent takes the
  shortest path?
\newblock \emph{CoRR}, arXiv:1812.10004, 2018.

\bibitem[Radhakrishnan et~al.(2018)Radhakrishnan, Belkin, and
  Uhler]{radhakrishnan2018downsampling}
Adityanarayanan Radhakrishnan, Mikhail Belkin, and Caroline Uhler.
\newblock Downsampling leads to image memorization in convolutional
  autoencoders.
\newblock \emph{CoRR}, arXiv:1810.10333, 2018.

\bibitem[Saxe et~al.(2014)Saxe, McClelland, and Ganguli]{saxe2013exact}
Andrew~M Saxe, James~L McClelland, and Surya Ganguli.
\newblock Exact solutions to the nonlinear dynamics of learning in deep linear
  neural networks.
\newblock In \emph{International Conference on Learning Representations}, 2014.

\bibitem[Shah et~al.(2018)Shah, Kyrillidis, and Sanghavi]{Shah2018-re}
Vatsal Shah, Anastasios Kyrillidis, and Sujay Sanghavi.
\newblock Minimum norm solutions do not always generalize well for
  over-parameterized problems.
\newblock \emph{CoRR}, arXiv:1811.07055, 2018.

\bibitem[Soudry et~al.(2018)Soudry, Hoffer, Nacson, Gunasekar, and
  Srebro]{soudry2018implicit}
Daniel Soudry, Elad Hoffer, Mor~Shpigel Nacson, Suriya Gunasekar, and Nathan
  Srebro.
\newblock The implicit bias of gradient descent on separable data.
\newblock \emph{Journal of Machine Learning Research}, 19\penalty0 (70), 2018.

\bibitem[Wu et~al.(2016)Wu, Schuster, Chen, Le, Norouzi, Macherey, Krikun, Cao,
  Gao, Macherey, et~al.]{wu2016google}
Yonghui Wu, Mike Schuster, Zhifeng Chen, Quoc~V Le, Mohammad Norouzi, Wolfgang
  Macherey, Maxim Krikun, Yuan Cao, Qin Gao, Klaus Macherey, et~al.
\newblock Google's neural machine translation system: Bridging the gap between
  human and machine translation.
\newblock \emph{arXiv preprint arXiv:1609.08144}, 2016.

\bibitem[Zhang et~al.(2017)Zhang, Bengio, Hardt, Recht, and
  Vinyals]{zhang2016understanding}
Chiyuan Zhang, Samy Bengio, Moritz Hardt, Benjamin Recht, and Oriol Vinyals.
\newblock Understanding deep learning requires rethinking generalization.
\newblock In \emph{International Conference on Learning Representations}, 2017.

\bibitem[Zhou et~al.(2019)Zhou, Veitch, Austern, Adams, and
  Orbanz]{Zhou2019-ii}
Wenda Zhou, Victor Veitch, Morgane Austern, Ryan~P Adams, and Peter Orbanz.
\newblock Non-vacuous generalization bounds at the {ImageNet} scale: a
  {PAC-Bayesian} compression approach.
\newblock In \emph{{ICLR}}, 2019.

\bibitem[Zou et~al.(2018)Zou, Cao, Zhou, and Gu]{Zou2018-lp}
Difan Zou, Yuan Cao, Dongruo Zhou, and Quanquan Gu.
\newblock Stochastic gradient descent optimizes over-parameterized deep {ReLU}
  networks.
\newblock \emph{CoRR}, arXiv:1811.08888, 2018.

\end{thebibliography}
\bibliographystyle{iclr2020_conference}

\ifwithappendix
\appendix\newpage\clearpage

\section{Experiment details and hyper-parameters}
We specify the experiment setups and the hyper-parameters here. Unless otherwise specified in each study (e.g. when we explicitly vary the number of convolution channels), all the hyper-parameters are set according to the default values here.

The main study is done with the MNIST dataset. It consists of grayscale images of hand written digits of size $28\times 28$. For training, we randomly sample one digit from the training set (a digit `7') with a fixed random seed. For testing, we use random images from the test set of MNIST and Fashion-MNIST, as well as algorithmically generated structured patterns and random images. The training and test images are all normalized by mapping the pixel values in $\{0,1,\ldots,255\}$ to $[0,1]$, and then standardize with the mean $0.1307$ and standard deviation $0.3081$ originally calculated on the (full) MNIST training set.

The models are trained by minimizing the mean squared error (MSE) loss with a vanilla SGD (base learning rate 0.01 and momentum 0.9). The learning rate is scheduled as stagewise constant that decays with a factor of 0.2 at the 30\%, 60\% and 80\% of the total training steps (2,000,000). No weight decay is applied during training.

For neural network architectures, the rectified linear unit (ReLU) activation is used for both fully connected networks (FCNs) and convolutional networks (CNNs). The input and output dimensions are decided by the data. The hidden dimensions for the FCNs are 2,048 by default. The CNNs use $5\times 5$ kernels with stride 1 and padding 2, so that the geometry does not change after each convolution layer.

\section{Representation of the identity function using deep networks}
\label{app:encode-linear-function}
In this section, we provide explicit constructions on how common types of neural networks can represent the identity function. Those constructions are only proof for that the models in our study have the capacity to represent the target function. There are many different ways to construct the identity map for each network architecture, but we try to provide the most straightforward and explicit constructions. However, during our experiments, even when the SGD learns (approximately) the identity function, there is no evidence suggesting that it is encoding the functions in similar ways as described here. We put some mild constraints (e.g. no ``bottleneck'' in the hidden dimensions) to allow more straightforward realization of the identity function, but this by no means asserts that networks violating those constraints cannot encode the identity function.

\subsection{Linear models}

For a one-layer linear network $f(x) = Wx$, where $W\in\RR^{d\times d}$, setting $W$ to the identity matrix will realize the identity function. For a multi-layer linear network $f(x) = (\prod_\ell W_\ell) x$, we need to require that all the hidden dimensions are not smaller than the input dimension. In this case, a simple concrete construction is to set each $W_\ell$ to an identity matrix.

\subsection{Multi-layer ReLU networks}

The ReLU activation function $\sigma(\cdot)=\max(0, \cdot)$ discards all the negative values. There are many ways one can encode the negative values and recover it after ReLU. We provide a simple approach that uses hidden dimensions twice the input dimension. Consider a ReLU network with one hidden layer $f(x) = W_2\sigma(W_1x)$, where $W_2\in\RR^{d\times 2d}, W_1\in\RR^{2d\times d}$. The idea is to store the positive and negative part of $x$ separately, and then re-construct. This can be achieved by setting
\[
  W_1 = \begin{pmatrix}I_d \\ -I_d\end{pmatrix},\quad W_2 = \begin{pmatrix} I_d & -I_d \end{pmatrix}
\]
where $I_d$ is the $d$-dimensional identity matrix. For the case of more than two layers, we can use the bottom layer to split the positive and negative part, and the top layer to merge them back. All the intermediate layers can be set to $2d$-dimensional identity matrix. Since the bottom layer encode all the responsives in non-negative values, the ReLU in the middle layers will pass through.

\subsection{Convolutional networks}
\label{app:convnet-id}

In particular, we consider 2D convolutional networks for data with the structure of multi-channel images. A mini-batch of data is usually formatted as a four-dimensional tensor of the shape $B\times C\times H\times W$, where $B$ is the batch size, $C$ the number of channels (e.g. RGB or feature channels for intermediate layer representations), $H$ and $W$ are image height and width, respectively. A convolutional layer (ignoring the bias term) is parameterized with another four-dimensional tensor of the shape $\bar{C}\times C\times K_H\times K_W$, where $\bar{C}$ is the number of output feature channels, $K_H$ and $K_W$ are convolutional kernel height and width, respectively. The convolutional kernel is applied at local $K_H\times K_W$ patches of the input tensor, with optional padding and striding.

For one convolution layer to represent the identity function, we can use only the center slice of the kernel tensor and set all the other values to zero. Note it is very rare to use even numbers as kernel size, in which case the ``center'' of the kernel tensor is not well defined. When the kernel size is odd, we can set
\[
  W_{\bar{c}chw} = \begin{cases}
    1 & \bar{c}=c, \;h=\lfloor K_H/2\rfloor, \; w=\lfloor K_W/2 \rfloor \\
    0 & \text{otherwise}
  \end{cases}
\]
By using only the center of the kernel, we essentially simulate a $1\times 1$ convolution, and encode a local identity function for each (multi-channel) pixel.

For multi-layer convolutional networks with ReLU activation functions, the same idea as in multi-layer fully-connected networks can be applied. Specifically, we ask for twice as many channels as the input channels for the hidden layers. At the bottom layer, separately the positive and negative part of the inputs, and reconstruct them at the top layer.

\section{Proof of Theorem~\ref{prop:convex-linear-case}}
\label{app:convex-solution-proof}

Consider the 1-layer linear model $f_W(x) = Wx$, where $x\in \RR^d$ and $W\in\RR^{d\times d}$.
Let $\hat{x}$ be the single training example. The training objective is to minimize the empirical risk $\hat{R}=\nicefrac{1}{2} \|f_W(x) - x\|_2^2$.
The optimization problem is convex and well understood. Due to overparameterization, the solution
of the empirical risk minimization is not unique. However, given randomly initialized weights $W^0$, gradient descent obtains a unique global minimizer.

The gradient of the empirical risk is
\begin{equation}
  \frac{\partial \hat{R}}{\partial W} = (W-I)\hat{x}\hat{x}^\top
\end{equation}
Gradient descent with step sizes $\eta_t$ and initialization weights $W^{0}$ updates weights as
\begin{equation*}
  \begin{aligned}
  W^{T} &= W^{0} - \sum_{t=1}^T \eta_t(W^{t-1}-I)\hat{x}\hat{x}^\top \\
  &\coloneqq W^{0} + u_T \hat{x}^\top
  \end{aligned}
\end{equation*}
where $u_T\in\RR^d$ is a vector decided via the accumulation in the optimization trajectory. Because of the form of the gradient, it is easy to see the solution found by gradient descent will always have such parameterization structure. Moreover, under this parameterization, a unique minimizer exists that solves the equation
\[
  \hat{x} = f_W(\hat{x}) = W^{0}\hat{x} + u\hat{x}^\top\hat{x}
\]
via
\begin{equation}
  \hat{u} = \frac{(I-W^{0})\hat{x}}{\|\hat{x}\|_2^2}
\end{equation}
Therefore, the global minimizer can be written as
\begin{equation*}
  \begin{aligned}
  \hat{f}_W(x) &= (W^{0} + \hat{u}\hat{x}^\top) x \\
   & = {\color{red}\frac{\hat{x}^\top x}{\|\hat{x}\|_2^2} \cdot \hat{x}} + W^{0}\left({\color{blue}x - \frac{\hat{x}^\top x}{\|\hat{x}\|_2^2} \cdot \hat{x}}\right)
  \end{aligned}
\end{equation*}
For the one-layer network case, the optimization problem is convex. Under standard conditions in convex optimization, gradient descent will converge to the global minimizer shown above.

We can easily verify that the red term is exactly the projection of $x$ onto the training example $\hat{x}$, while the blue term is the residual projection onto the orthogonal subspace.

\section{Proof of Theorem~\ref{prop:one-layer-conv}}
\label{app:proof-one-layer-conv}

\begin{lemma}\label{lem:linear-model-overparameterized}
  Consider the linear model $f(x)=W^\top x$, where $x\in\RR^{D}$, $W\in\RR^{D\times d}$.
  Let the training set be $\{(x_1,y_1),\ldots(x_N,y_N)\}$. Assume the model is overparameterized ($N\leq d$) and there is no sample redundancy (rank of the data matrix is $N$).
  Fitting $f(\cdot)$ by optimizing the square loss with gradient descent on the training set converges to $\hat{f}(x)=\hat{W}^\top x$, where
  \begin{equation}
    \hat{W} = W^0 + X^\top (XX^\top)^{-1} (Y - XW^0).
  \end{equation}
  $W^0\in\RR^{D\times d}$ is the random initialization of weights,
  $X=\left(x_1,\ldots,x_N\right)^\top\in\RR^{N\times D}$ and $Y=\left(y_1,\ldots,y_N\right)^\top\in\RR^{N\times d}$ are the matrices formed by the inputs and outputs, respectively.
\end{lemma}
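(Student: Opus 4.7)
The plan is to reduce the problem to a strictly convex quadratic on an affine subspace that is invariant under gradient descent, and then identify the unique minimizer explicitly. Concretely, I would work with the matrix form of the loss $\hat{R}(W) = \tfrac{1}{2}\|XW - Y\|_F^2$, whose gradient is $\nabla \hat{R}(W) = X^\top(XW - Y)$. This immediately shows that every gradient step adds an element of the row space of $X$ to $W$.

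First I would argue by induction that the gradient descent iterates satisfy $W^t = W^0 + X^\top \alpha_t$ for some matrix $\alpha_t \in \RR^{N \times d}$, since $W^{t+1} - W^t = -\eta_t X^\top (XW^t - Y)$ lies in the row space of $X$. Next, under the overparameterization and rank assumptions, $XX^\top \in \RR^{N \times N}$ is invertible, so the equation $XW = Y$ has solutions in the affine subspace $\gA = \{W^0 + X^\top \alpha : \alpha \in \RR^{N \times d}\}$; in fact substituting this parameterization into $XW = Y$ yields the unique $\alpha^\star = (XX^\top)^{-1}(Y - XW^0)$, which corresponds to the $\hat{W}$ in the statement.

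It then remains to show that gradient descent, restricted to $\gA$, converges to $\alpha^\star$. Writing $W = W^0 + X^\top \alpha$, the loss restricted to $\gA$ becomes $\tfrac{1}{2}\|XX^\top \alpha - (Y - XW^0)\|_F^2$, a strictly convex quadratic in $\alpha$ with Hessian $(XX^\top)^2 \succ 0$ by the rank condition. The gradient descent recursion on $W$ induces the standard gradient descent recursion on $\alpha$ with the same step size, so for any step size schedule $\eta_t$ lying in a standard convergence regime (for example, $\eta_t$ constant and smaller than the inverse of the largest eigenvalue of $(XX^\top)^2$), the iterates $\alpha_t$ converge to the unique global minimizer $\alpha^\star$, giving $\hat{W} = W^0 + X^\top (XX^\top)^{-1}(Y - XW^0)$ and hence the stated $\hat f$.

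The only delicate point is the convergence step: one has to verify that the gradient flow on $W$ and the induced flow on $\alpha$ coincide under a consistent step size, which follows because the map $\alpha \mapsto W^0 + X^\top \alpha$ is affine and the update direction $-X^\top(XW - Y) = -X^\top(XX^\top \alpha - (Y - XW^0))$ pulls back to the gradient on $\alpha$ times $X^\top X$-like factors that are handled by the strict convexity of the reduced quadratic. Everything else is routine linear algebra and standard convex optimization.
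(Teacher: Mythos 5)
Your proposal is correct and follows essentially the same route as the paper: the gradient $X^\top(XW-Y)$ confines the gradient-descent iterates to the affine set $\{W^0 + X^\top\alpha\}$, and invertibility of $XX^\top$ (rank $N$) pins down the unique interpolating solution $\hat W = W^0 + X^\top(XX^\top)^{-1}(Y-XW^0)$ within that set, with the paper simply asserting the convergence step that you spell out. One small correction to your convergence argument: the induced update is $\alpha_{t+1}=\alpha_t-\eta_t\bigl(XX^\top\alpha_t-(Y-XW^0)\bigr)$, a linear fixed-point iteration (equivalently gradient descent on the strictly convex quadratic whose Hessian is $XX^\top$, not $(XX^\top)^2$), so the admissible step sizes should be stated in terms of $\lambda_{\max}(XX^\top)$, e.g.\ $\eta_t < 2/\lambda_{\max}(XX^\top)$; the limit point and hence the conclusion are unaffected.
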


\begin{lemma}\label{lem:linear-model}
  With the same notation of Lemma~\ref{lem:linear-model-overparameterized}, assume the model is underparameterized ($N> d$) and there is no feature redundancy (rank of the data matrix is $d$). Fitting $f(\cdot)$ by optimizing the square loss with gradient descent on the training set converges to $\hat{f}(x)=\hat{W}^\top x$ where
  \begin{equation}
    \hat{W} =\left(X^\top X\right)^{-1}X^\top Y,
  \end{equation}
\end{lemma}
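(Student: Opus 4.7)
The plan is to reduce the claim to the classical normal-equation derivation for overdetermined linear regression, followed by a routine gradient-descent-on-strictly-convex-quadratic convergence argument. First I would write the empirical risk in matrix form as $\hat{R}(W) = \tfrac{1}{2}\|XW - Y\|_F^2$ and compute its gradient, $\nabla_W \hat{R}(W) = X^\top(XW - Y)$.

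Next, I would use the full-rank hypothesis on $X$ to conclude that $X^\top X$ is positive definite. This makes $\hat{R}$ a strictly convex quadratic in $W$ with a unique stationary point, obtained by setting the gradient to zero. Solving the normal equation $X^\top X \hat{W} = X^\top Y$ yields exactly the stated closed form $\hat{W} = (X^\top X)^{-1} X^\top Y$.

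Finally, I would invoke the standard convergence result for gradient descent on a strictly convex quadratic: under a suitable step-size schedule (for instance, $\eta_t$ constant and smaller than $2/\lambda_{\max}(X^\top X)$, matching the regime implicit in Lemma~\ref{lem:linear-model-overparameterized}), the iterates $W^t$ converge linearly to $\hat{W}$ from any initialization $W^0$. In contrast to the overparameterized case, the limit does not depend on $W^0$: strict convexity collapses what was an affine subspace of minimizers down to a single point, so there is simply no residual $W^0$-dependence to track.

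There is no real obstacle here; the only piece of bookkeeping worth making explicit is that the objective decouples cleanly across the $d$ output coordinates, with each column of $W$ solving an independent scalar-output OLS problem that shares the same design matrix $X$. This decoupling justifies both the Frobenius-norm rewriting and the immediate transfer of scalar-output convergence guarantees to the matrix-variable setting, and it is the only step where the matrix-valued parameterization requires any care.
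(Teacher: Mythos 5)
Your proposal is correct and follows essentially the same route as the paper: compute the gradient $X^\top(XW-Y)$, use the rank-$d$ assumption to make $X^\top X$ invertible so the empirical risk has a unique minimizer, and solve the normal equations to get $\hat{W}=(X^\top X)^{-1}X^\top Y$. The only difference is that you spell out the standard gradient-descent convergence argument and the column-wise decoupling, which the paper leaves implicit.
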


\begin{proof}[Proof of Theorem~\ref{prop:one-layer-conv}]
Let $\hat{x}\in\RR^{H\times W\times C}$ be the single training image of size $H\times W$, and $C$ color channels. Let $K_H\times K_W$ be the convolution receptive field size, so the weights of convolution filters can be parameterized via a 4-dimensional tensor $\Theta\in\RR^{K_H\times K_W\times C \times C}$. Let $\Xi$ be the collection of 2D coordinates of the local patches from the input image that the convolutional filter is applied to, and $\patch_{ij}(\hat{x})\in\RR^{K_H\times K_W\times C}$ be the local patch of $\hat{x}$ centered at the coordinate $(i,j)\in\Xi$. Note for our case the input and output are of the same shape, so the convolution stride is one, and $|\Xi|=H\times W$.

The empirical risk of fitting $\hat{x}$ with a one-layer convolution model can be written as
\begin{equation}
  \hat{R}= \frac{1}{2}\sum_{(i,j)\in\Xi}\sum_{k=1}^C \left(
  \left\langle \Theta_{:::k}, \patch_{ij}(\hat{x}) \right\rangle - \hat{x}_{ijk}
  \right)^2,
\end{equation}
where $\Theta_{:::k}\in\RR^{K_H\times K_W \times C}$ is the subset of convolution weights corresponding to the $k$-th output channel, and $\hat{x}_{ijk}$ is the pixel value at coordinate $(i,j)$ and channel $k$.

\textsc{Case 1} --- overparameterization: $|\Xi|\leq K_H\times K_W\times C\times C$. Assumme the patches are linearly independent\footnote{
When two patches are identical, or more generally, when there is a patch $(\tilde{i},\tilde{j})\in\Xi$, which can be written as a linear combination of other patches:
\[
  \exists \alpha_{ij}, s.t.\quad \patch_{\tilde{i}\tilde{j}}(\hat{x})
  = \sum_{(i,j)\neq (\tilde{i},\tilde{j})}\alpha_{ij}\patch_{ij}(\hat{x})
\]
Note because each local patch contains the target pixel: $\hat{x}_{ijk}\in \patch_{ij}(\hat{x}),\forall k\in[C]$, so for all $k\in[C]$, the same coefficients $\{\alpha_{ij}\}$ can be used to written $\hat{x}_{\tilde{i}\tilde{j}k}$ as the same linear combination of the other (corresponding) pixels. Therefore, we can find a linearly independent basis for the patches and rewrite the fitting objectives with the basis.}, with slight abuse of notatons, we represent the empirical risk in matrix form with the following matrices\footnote{In particular, $W$ is used for both the ``width'' of the image and the parameters of the linear system. But the meaning should be clear from the context. The multi-dimensional tensors $\Theta_{:::k}$ and $\patch_{ij}(\hat{x})$ are used interchangeably with their flatten column vector counterparts.}:
\begin{align*}
  W &\coloneqq \left( \Theta_{:::1}, \ldots, \Theta_{:::C} \right) \in\RR^{(K_HK_WC)\times C}, \\
  X &\coloneqq \left( \ldots, \patch_{ij}(\hat{x}), \ldots \right)^\top \in \RR^{|\Xi|\times (K_HK_WC)}, \\
  Y &\coloneqq \begin{pmatrix}
    \vdots & \vdots & \vdots \\
    \hat{x}_{ij1} & \ldots & \hat{x}_{ijC} \\
    \vdots & \vdots & \vdots \\
  \end{pmatrix} \in \RR^{|\Xi|\times C},
\end{align*}
and apply Lemma~\ref{lem:linear-model-overparameterized} to obtain $\hat{W}$. For a test example $x$, the prediction of the $(i,j)$-th (multi-channel) pixel is
\begin{align*}
  \hat{W}^\top \patch_{ij}(x)
  &= \left(W^0 + X^{\top}\left(XX^\top\right)^{-1}(Y-XW^0)\right)^\top \patch_{ij}(x)\\
  &= W^{0\top}\left(I - X^\top\left(XX^\top\right)^{-1}X\right)\patch_{ij}(x)
     + Y^\top (XX^\top)^{-1}X\patch_{ij}(x)
\end{align*}
Note we are learning the identity map, the $(i,j)$-th row of the learning target matrix $Y$ is the (multi-channel) pixel at the $(i,j)$-th coordinate of $\hat{x}$. This is exactly the center of the $(i,j)$-th patch $\patch_{ij}(\hat{x})$, i.e. the $(i,j)$-th row of the matrix $X$. As a result, there is a linear projection matrix $\mathsf{\Lambda}\in\RR^{(K_HK_WC)\times C}$ that maps $X$ to $Y$: $X\mathsf{\Lambda} = Y$. So
\[
  \hat{W}^\top P_{ij}(x) =
  W^{0\top}\Pi_X^\perp\patch_{ij}(x) + \mathsf{\Lambda}^\top \Pi_{X}^\parallel\patch_{ij}(x)
\]
where $\Pi_{X}^\parallel=X^\top(XX^\top)^{-1}X$ is the linear operator that projects a vector into the subspace spanned by the rows of $X$, and $\Pi_X^\perp=I - \Pi_X^\parallel$ is the operator projecting into the orthogonal subspace.

To compute the prediction errors, it is suffice to look at the errors at each $(i,j)$-th (multi-channel) pixel separately:
\begin{align*}
  \text{Error}^2_{ij} &= \|\hat{f}(x)_{ij} - \mathsf{\Lambda}^\top \patch_{ij}(x)\|^2 \\
  &= \left\| W^{0\top}\Pi_X^\perp\patch_{ij}(x) + \mathsf{\Lambda}^\top \Pi_{X}^\parallel\patch_{ij}(x) - \mathsf{\Lambda}^\top \patch_{ij}(x) \right\|^2 \\
  &= \left\| (W^{0} - \mathsf{\Lambda})^\top \Pi_X^\perp \patch_{ij}(x) \right\|^2 \\
  &\leq \left(\left(\left\|W^{0\top}\Pi_X^\perp \right\| + \left\|\mathsf{\Lambda}^\top \Pi_X^\perp \right\|\right) \|\patch_{ij}(x)\|\right)^2. \\
\end{align*}
Assume the absolute value of pixel values are bounded by $B$, then
\begin{equation}
\|\patch_{ij}(x)\| \leq B\sqrt{K_HK_WC}.
\label{eq:prop2-bound-component1}
\end{equation}
The first two terms can be bounded according to the rank $\mathsf{r}$ of the subspace projection $\Pi_X^\parallel$. Let the nullity of the projection be $\mathfrak{n}=K_HK_WC-\mathsf{r}$. Note the projection matrix can be decomposed as
\[
  \Pi_X^\perp = \sum_{k=1}^{\mathfrak{n}}v_kv_k^\top
\]
where $\{v_k\}_k$ is a orthonormal basis for the projection subspace.
\begin{align}
  \|\mathsf{\Lambda}^{\top}\Pi_X^\perp\|
  &= \left\|\mathsf{\Lambda}^{\top}\sum_{k=1}^\mathfrak{n}v_kv_k^\top\right\| \nonumber\\
  &= \sqrt{\sum_{k=1}^\mathfrak{n} \|\mathsf{\Lambda}^{\top}v_k\|^2} \nonumber\tag{Orthonormality of $\{v_k\}_k$}\\
  &\leq \sqrt{\mathfrak{n} \|\mathsf{\Lambda}\|^2} \nonumber\tag{sub-multiplicativity of the Frobenius norm}\\
  &= \sqrt{\mathfrak{n} C}.
\label{eq:prop2-bound-component2}
\end{align}
Similarly,
\begin{align*}
  \left\|W^{0\top}\Pi_X^\perp\right\|
  &= \sqrt{\sum_{k=1}^\mathfrak{n} \left\|W^{0\top}v_k\right\|^2}
\end{align*}
Assume the entries of $W^0$ are initialized as i.i.d. Gaussians $\mathcal{N}(0,\sigma^2)$. Since $\{v_k\}_k$ are orthonormal vectors, for each $k$, let $\rho_k=\|W^{0\top}v_k\|^2/\sigma^2$, then $\rho_k$ is distributed according to $\chi^2$ distribution with $C$ degree of freedom. For any $1<\zeta\leq M$, where $M$ is a constant chosen a priori, and for each $k=1,\ldots,\mathfrak{n}$,
\begin{align*}
  \mathsf{P}\left( \rho_k \geq \zeta C \right)
  &\leq e^{-\zeta tC}\EE\left[e^{t\rho_k}\right], \tag{Markov's inequality} \\
  &= e^{-\zeta tC}(1-2t)^{-C/2}, \quad \forall t<\nicefrac{1}{2}, \tag{MGF of $\chi^2$} \\
  &= \exp\left(-C\left(\zeta t + \nicefrac{1}{2}\log(1-2t)\right)\right), \\
  &\leq \exp\left(-\nicefrac{C}{2} (\zeta - 1 + \log(\nicefrac{1}{\zeta})) \right), \tag{optimal at $t^\star=\nicefrac{1}{2}(1-\nicefrac{1}{\zeta})$} \\
  &\leq \exp\left( -\nicefrac{C}{2} (\zeta - 1 + \log(\nicefrac{1}{M})) \right). \\
\end{align*}
By union bound,
\begin{equation}
  \mathsf{P}\left(\exists k\in \{1,\ldots,\mathfrak{n}\}: \rho_k\geq \zeta C\right)
  \leq \mathfrak{n}\exp\left( -\nicefrac{C}{2} (\zeta - 1 + \log(\nicefrac{1}{M})) \right).
\end{equation}
Let $\delta$ equals the right hand side, and solve for $\zeta$, we get for any $\delta \geq \delta_0> 0$, with probability at least $1-\delta$
\begin{equation}
  \forall k: \rho_k < 2\log\left(\frac{\mathfrak{n}}{\delta}\right) + C(1 + \log M)
\end{equation}
where $\delta_0 \geq 2\mathfrak{n}/(C(M-\log M - 1))$ is chosen to satisfy $\zeta\leq M$. We can also choose $\delta_0$ first and set $M$ accordingly.

Putting everyting together, with probability at least $1-\delta$, the mean squared error (averaged over all output pixels)
\begin{align}
  \text{Error}^2 &= \frac{1}{|\Xi|C}\sum_{ij\in\Xi}\text{Error}^2_{ij}, \nonumber\\
  &\leq \left.\left(
  \left(\sqrt{\mathfrak{n}\sigma^2 \left(2\log\left(\frac{\mathfrak{n}}{\delta}\right) + C(1+\log M)\right)} + \sqrt{\mathfrak{n}C}\right) B\sqrt{K_HK_WC}
  \right)^2 \middle/ C\right., \nonumber\\
  &= \mathcal{O}\left( \frac{ {K_HK_WC^2}\mathfrak{n}(1 + \nicefrac{1}{C}\log(\nicefrac{\mathfrak{n}}{\delta})) }{C}  \right).
  \label{eq:prop-2-proof-final-bound}
\end{align}
Note ${K_HK_WC^2}$ is the number of parameters in the convolution net.

\textsc{Case 2} --- underparameterization: $|\Xi| > K_H\times K_W\times C\times C$. Using the same notation above, assuming no redundant features, we apply Lemma~\ref{lem:linear-model} to get the prediction of the $(i,j)$-th (multi-channel) pixel of a test example $x$ as
\begin{align*}
  \hat{W}^\top \patch_{ij}(x)
  &= Y^\top X(X^\top X)^{-1} \patch_{ij}(x) \\
  &= \mathsf{\Lambda}^\top X^\top X(X^\top X)^{-1} \patch_{ij}(x) \\
  &= \mathsf{\Lambda}^\top \patch_{ij}(x)
\end{align*}
Recall the definition of $\mathsf{\Lambda}$, which maps a patch to the corresponding (multi-channel) pixel. In this case, the prediction is exact, and the error is zero. Since in this case $\mathfrak{n}$, this case can be merged with \eqref{eq:prop-2-proof-final-bound}.
\end{proof}

\begin{proof}[Proof of Lemma~\ref{lem:linear-model-overparameterized}]
  The proof is an extension of Theorem~\ref{prop:convex-linear-case} to the case of more than one training examples. Using the notation in the Lemma, the training objective can be written as the matrix form
  \[
  \hat{R} = \frac{1}{2} \left\|XW-Y\right\|_2^2,
  \]
  and the gradient as
  \[
  \frac{\partial \hat{R}}{\partial W} = X^\top (XW-Y).
  \]

  Since the model is overparameterized, and there is no unique minimizer to the empirical risk. However, gradient descent converges to a unique solution. Note the gradient descent step is:
  \begin{eqnarray*}
  W^t &=& W^{t-1} - \eta_t \left.\frac{\partial \hat{R}}{\partial W}\right|_{W=W_{t-1}} \\
    &=& W^{t-1} - \eta_t X^{\top}(XW^{t-1}-Y) \\
    &=& W^0 - \sum_{\tau=1}^{t-1}\eta_{\tau}X^\top (XW^{\tau}-Y) \\
    &\coloneqq& W^0 + X^{\top}U^t,
  \end{eqnarray*}
  where $U^t=-\sum_{\tau=1}^{t-1}\eta_\tau (XW^{\tau}-Y) \in\RR^{N\times d}$ parameterizes the solution at iteration $t$. Since $XX^\top$ is invertible in this case. A unique solution exists under this parameterization, which can be obtained by solving
  \begin{align*}
    &X(W^0 + X^\top \hat{U}) = Y \\
    \Rightarrow \quad & \hat{U} = \left(X X^\top\right)^{-1}(Y - XW^0).
  \end{align*}
  Plug this into the parameterization, we get
  \begin{equation}
    \hat{W} = W^0 + X^\top \left(XX^\top\right)^{-1}(Y - XW^0).
  \end{equation}
\end{proof}

\begin{proof}[Proof of Lemma~\ref{lem:linear-model}]
  Using the same notation as in the proof of Lemma~\ref{lem:linear-model-overparameterized}, since the model is underparameterized, a unique minimizer of the empirical risk exists. Directly solving for the optimality condition $\partial \hat{R}/\partial W=0$, we get
  \begin{equation}
    \hat{W} = \left(X^\top X\right)^{-1}X^\top Y.
  \end{equation}
\end{proof}

\section{Full results of fully connected multi-layer networks}
\label{app:fcn}

In this section, we present the detailed results on fully connected networks that are omitted from Section~\ref{sec:mlp} due to space limit.

\subsection{Fully connected linear networks}

\begin{figure*}\center
  \begin{subfigure}[b]{.48\linewidth}
  \begin{overpic}[width=\linewidth,clip,trim={0 3.25cm 18cm 0}]{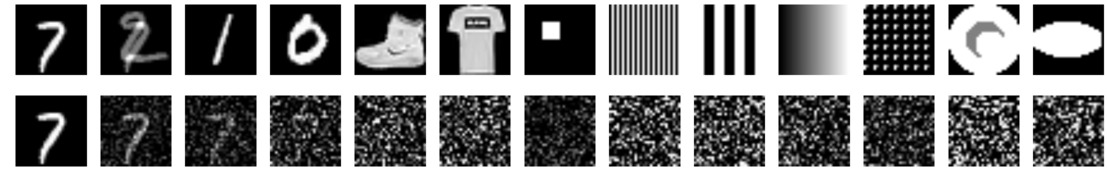}\end{overpic}
  \begin{overpic}[width=\linewidth,clip,trim={0 0 18cm 3.25cm}]{figs/mnist-n1/linear-mlp-Hl1-Hd784}\put(-1,4){\scriptsize 1}\end{overpic}
  \begin{overpic}[width=\linewidth,clip,trim={0 0 18cm 3.25cm}]{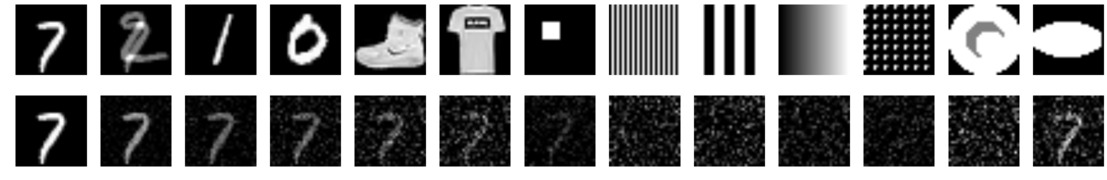}\put(-1,4){\scriptsize 3}\end{overpic}
  \begin{overpic}[width=\linewidth,clip,trim={0 0 18cm 3.25cm}]{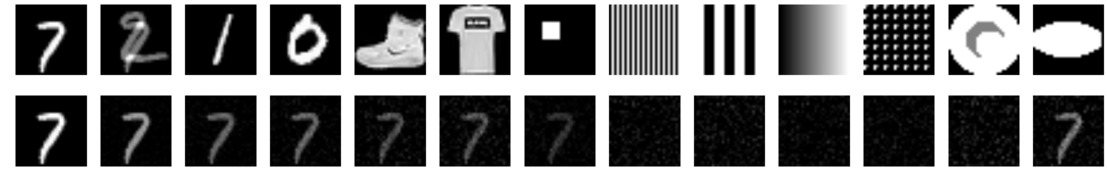}\put(-1,4){\scriptsize 5}\end{overpic}
  \caption{Hidden dimension 784 (= input dimension)}
  \end{subfigure}\hspace{9pt}
  \begin{subfigure}[b]{.48\linewidth}
  \begin{overpic}[width=\linewidth,clip,trim={0 3.25cm 18cm 0}]{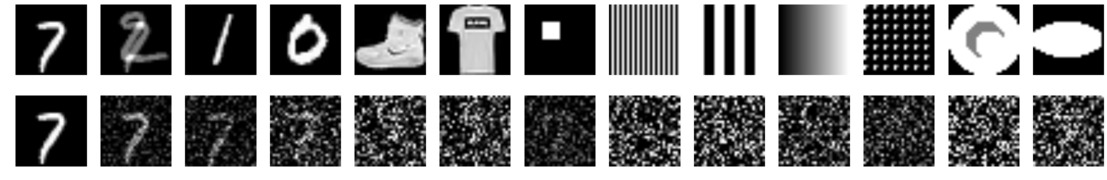}\end{overpic}
  \begin{overpic}[width=\linewidth,clip,trim={0 0 18cm 3.25cm}]{figs/mnist-n1/linear-mlp-Hl1-Hd2048}\end{overpic}
  \begin{overpic}[width=\linewidth,clip,trim={0 0 18cm 3.25cm}]{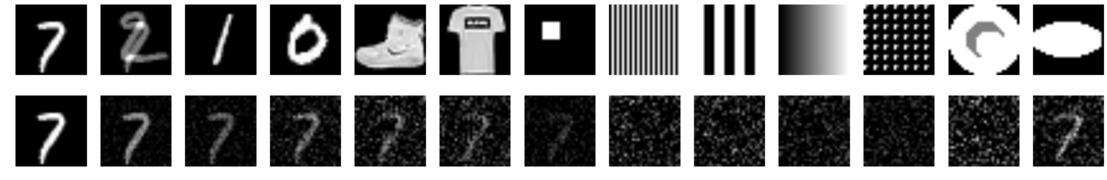}\end{overpic}
  \begin{overpic}[width=\linewidth,clip,trim={0 0 18cm 3.25cm}]{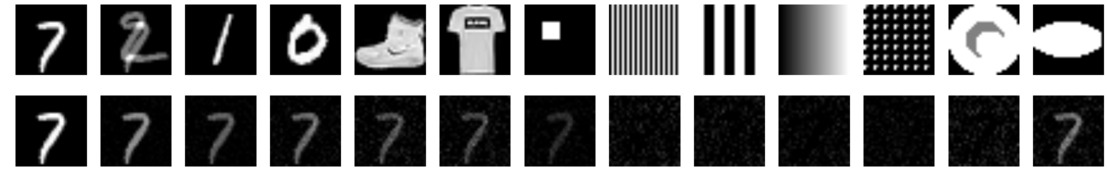}\end{overpic}
  \caption{Hidden dimension 2048}
  \end{subfigure}
  \caption{\textbf{Visualization of predictions from trained multi-layer linear networks.} The first row shows the input images for evaluation, including the single training image ``7'' at the beginning of the row. The remaining rows shows the prediction from a trained linear network with 1, 3, and 5 hidden layers, respectively.}
  \label{fig:mnist-n1-multi-layer-linear}
\end{figure*}

Figure~\ref{fig:mnist-n1-multi-layer-linear} shows the results on multi-layer \emph{linear} networks with various number of hidden layers and hidden units.
The depth of the architecture has a stronger effect on the inductive bias than the width. For example, the network with one hidden layer of dimension 2048 has 3.2M parameters, more than the 2.5M parameters of the network with three hidden layers of dimension 784. But the latter behaves less like the convex case.

\subsection{Two-layer fully connected ReLU networks}
\label{sec:two-layer-relu-networks}

\begin{figure*}\centering
  \begin{subfigure}[b]{.45\linewidth}
  \begin{overpic}[width=\linewidth,clip,trim={0 6.5cm 24cm 0}]{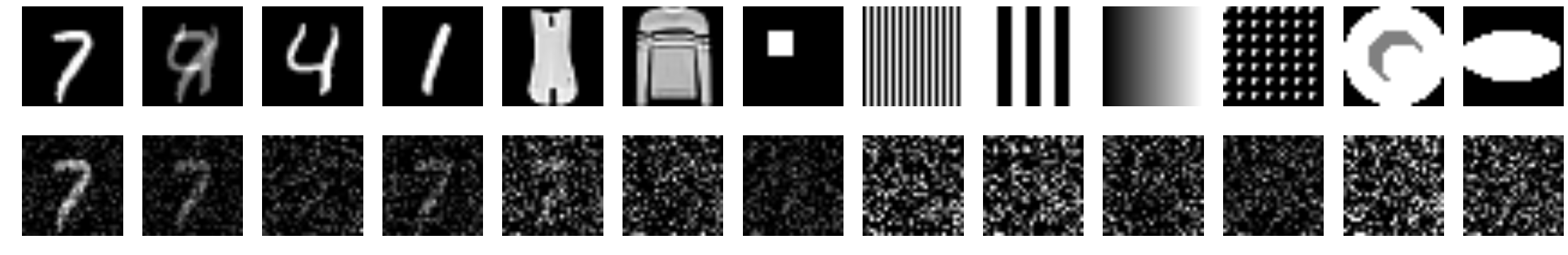}\end{overpic}
  \begin{overpic}[width=\linewidth,clip,trim={0 0 24cm 6.5cm}]{figs/mnist-fatmlp/Hd2048-fixed_top}\put(-7,3){\scriptsize 2,048}\end{overpic}
  \begin{overpic}[width=\linewidth,clip,trim={0 0 24cm 6.5cm}]{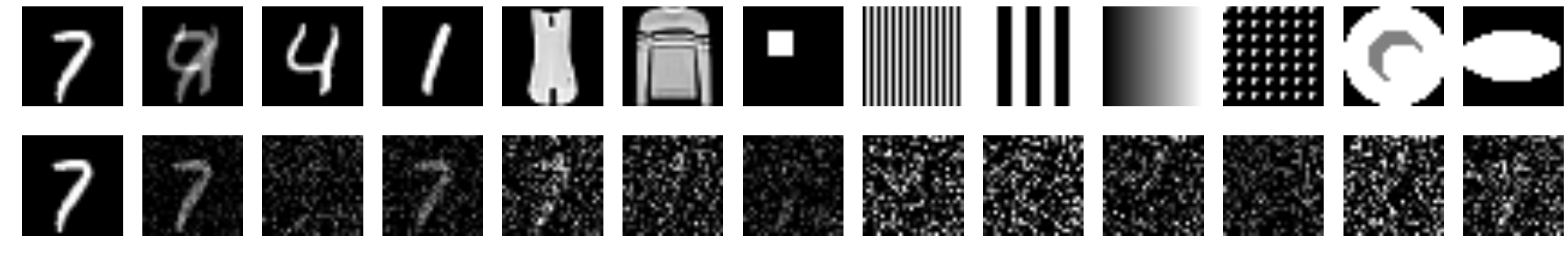}\put(-8.5,3){\scriptsize 16,384}\end{overpic}
  \caption{Training bottom layer only}
  \end{subfigure}
  \hspace{15pt}
  \begin{subfigure}[b]{.45\linewidth}
  \begin{overpic}[width=\linewidth,clip,trim={0 6.5cm 24cm 0}]{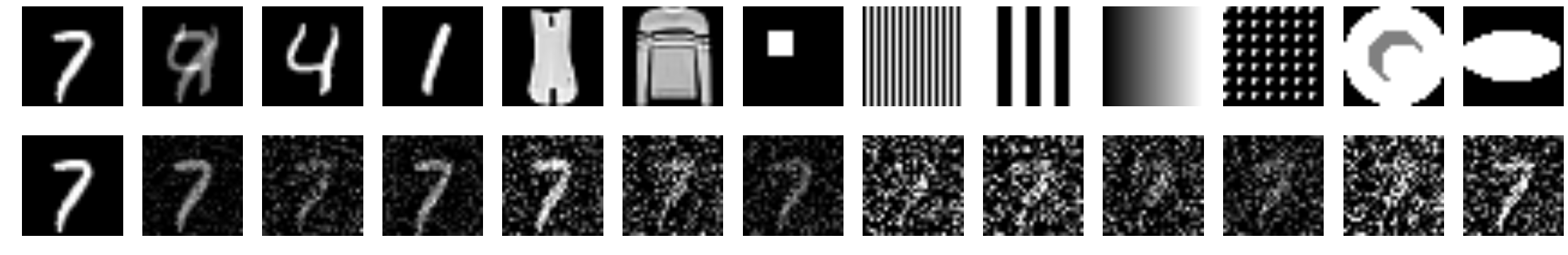}\end{overpic}
  \begin{overpic}[width=\linewidth,clip,trim={0 0 24cm 6.5cm}]{figs/mnist-fatmlp/Hd2048}\end{overpic}
  \begin{overpic}[width=\linewidth,clip,trim={0 0 24cm 6.5cm}]{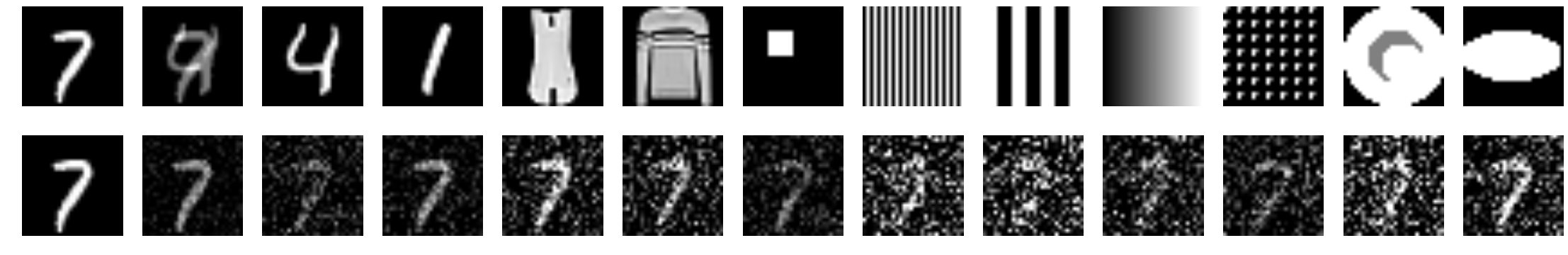}\end{overpic}
  \caption{Training both layers}
  \end{subfigure}
  \caption{\textbf{Visualization of predictions from two-layer ReLU networks.} The first row shows the input images for evaluation, including the single training image ``7'' at the beginning of the row. The remaining rows shows the predictions from trained models with hidden dimension 2,048 and 16,384, repectively.}
  \label{fig:2layer-network}
\end{figure*}

\citet{Li2018-kn} offer a theoretical characterization of learning in
a two-layer ReLU neural network. They show that when the data consists of well separated clusters
(i.e., the cluster diameters are much smaller than the distances between each cluster pair),
training an overparameterized two-layer ReLU network will generalize well. To simplify the analysis, they study a special case where the weights in the top layer are randomly initialized and fixed; only the bottom layer weights are learned.

We study the problem of learning a two-layer ReLU network under our identity-mapping task. Figure~\ref{fig:2layer-network} compares the cases of learning the bottom layer only and learning both layers (left and right panels, respectively). The two cases demonstrate different inductive biases for predictions on unseen test images. When only the first layer is trained, the tendency is toward speckled noise, but when both layers are trained, the tendency is toward a constant output (the image used for training).
Our observation does not contradict the theoretical results of \citet{Li2018-kn}, which assume a well separated and clustered data distribution.

For bottom-layer only training, we can analyze it in a similar way to the one-layer networks.
Let us denote
\begin{equation}
  f_W(x) = \langle\alpha, \relu(z)\rangle,\quad z=Wx
\end{equation}
where $W\in\RR^{m\times d}$ is the learnable weight matrix, and $\alpha\in\RR^m$ is randomly initialized and fixed. Although the trained weights no longer have a closed-form solution, the solution found by gradient descent is always parameterized as
\begin{equation}
  W^{T} = W^{0} + u^{T}\hat{x}^\top
  \label{eq:two-layer-relu-network-solution}
\end{equation}
where $\hat{x}$ is the training example, and $u^{T}\in\RR^{m}$ summarizes the efforts of gradient descent up to time $T$. In particular, the gradient of the empirical risk $\hat{R}$ with respect to each row $W_{:r}$ of the learnable weight is
\begin{equation}
  \frac{\partial \hat{R}}{\partial W_{:r}} = \frac{\partial \hat{R}}{\partial z_r} \frac{\partial z_r}{\partial W_{:r}} = \frac{\partial \hat{R}}{\partial z_r} \hat{x}^\top
\end{equation}
Putting it together, the full gradient is
\begin{equation}
  \frac{\partial \hat{R}}{\partial W} = \frac{\partial\hat{R}}{\partial z}\cdot \hat{x}^\top
\end{equation}
Since the gradient lives in the span of the training example $\hat{x}$, the solution found by gradient descent is always parameterized as \eqref{eq:two-layer-relu-network-solution}.

The same arguments applies to multi-layer neural networks. The prediction on any test example that is orthogonal to $\hat{x}$ will depend only on randomly initialized $W^{0}$ and upper layer weights. When only the bottom layer is trained, the upper layer weights will also be independent from the data, therefore the prediction is completely random. However, when all the layers are jointly trained, the arguments no longer apply. The empirical results presented in the main text that multi-layer networks bias towards the constant function verify this.

In particular, if the test example is orthogonal to $\hat{x}$ (i.e., $\hat{x}^\top x=0$), the prediction depends solely on the randomly initialized values in $W^{0}$ and therefore can be characterized by the distribution used for parameter initialization.

However, when both layers are trained, the upper layer weights are also tuned to make the prediction fit the training output. In particular, the learned weights in the upper layer depend on $W^{0}$. Therefore, the randomness arguments shown above no longer apply even for test examples orthogonal to $\hat{x}$.
As the empirical results show, the behavior is indeed different.

\subsection{Nonlinear multi-layer fully connected networks}
\label{sec:relu-fcn}

\begin{figure}
  \begin{overpic}[width=\linewidth,clip,trim={0 3.25cm 0 0}]{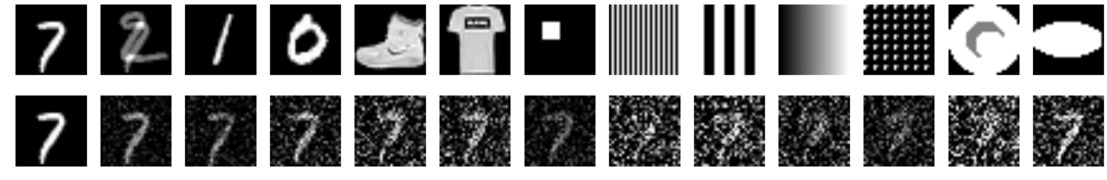}\end{overpic}
  \begin{overpic}[width=\linewidth,clip,trim={0 0 0 3.25cm}]{figs/mnist-n1/mlp-Hl1-Hd2048}\put(-1,2){\small 1}\end{overpic}
  \begin{overpic}[width=\linewidth,clip,trim={0 0 0 3.25cm}]{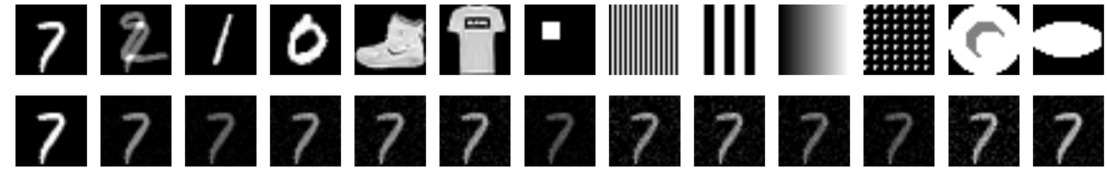}\put(-1,2){\small 3}\end{overpic}
      \begin{overpic}[width=\linewidth,clip,trim={0 0 0 3.25cm}]{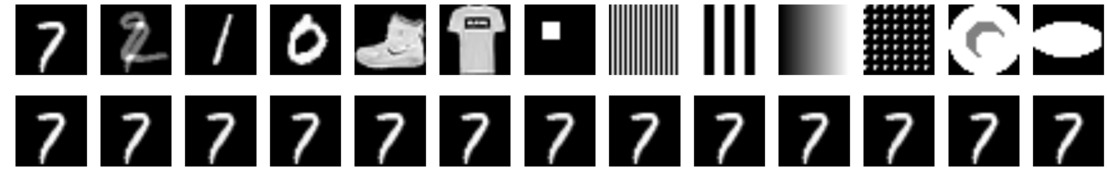}\put(-1,2){\small 9}\end{overpic}
  \caption{\textbf{Visualization of predictions from multi-layer ReLU networks.} The first row shows the input images for evaluation, including the single training image ``7'' at the beginning of the row. The remaining rows show the predictions from trained multi-layer ReLU FCNs with 1, 3, and 9 hidden layers.}
  \label{fig:mnist-n1-mlp}
\end{figure}

In this section, we consider the general case of multilayer fully connected networks (FCNs) with ReLU activation functions.
\begin{figure}
  \begin{subfigure}[b]{.49\linewidth}
  \includegraphics[width=\linewidth]{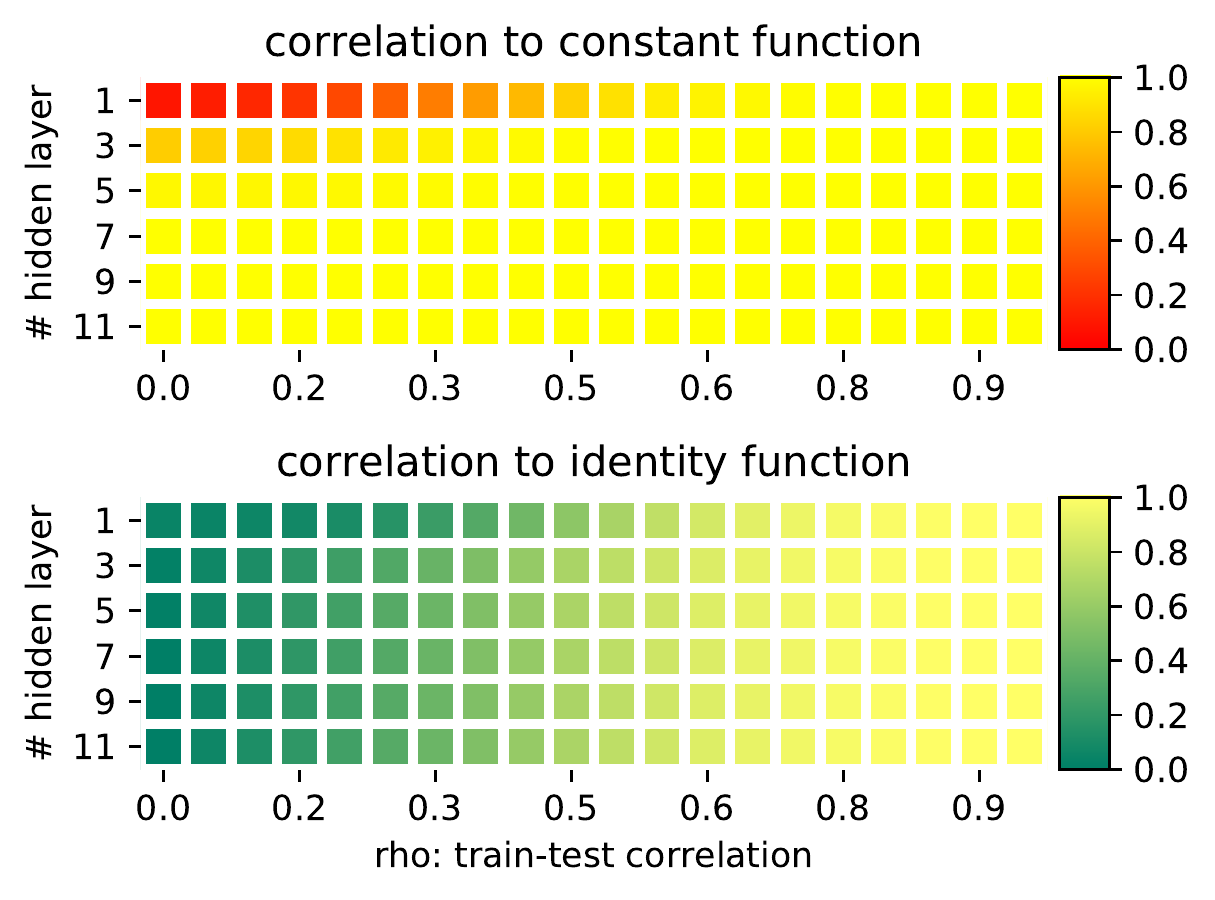}
  \caption{ReLU FCNs}
  \end{subfigure}
  \begin{subfigure}[b]{.49\linewidth}
  \includegraphics[width=\linewidth]{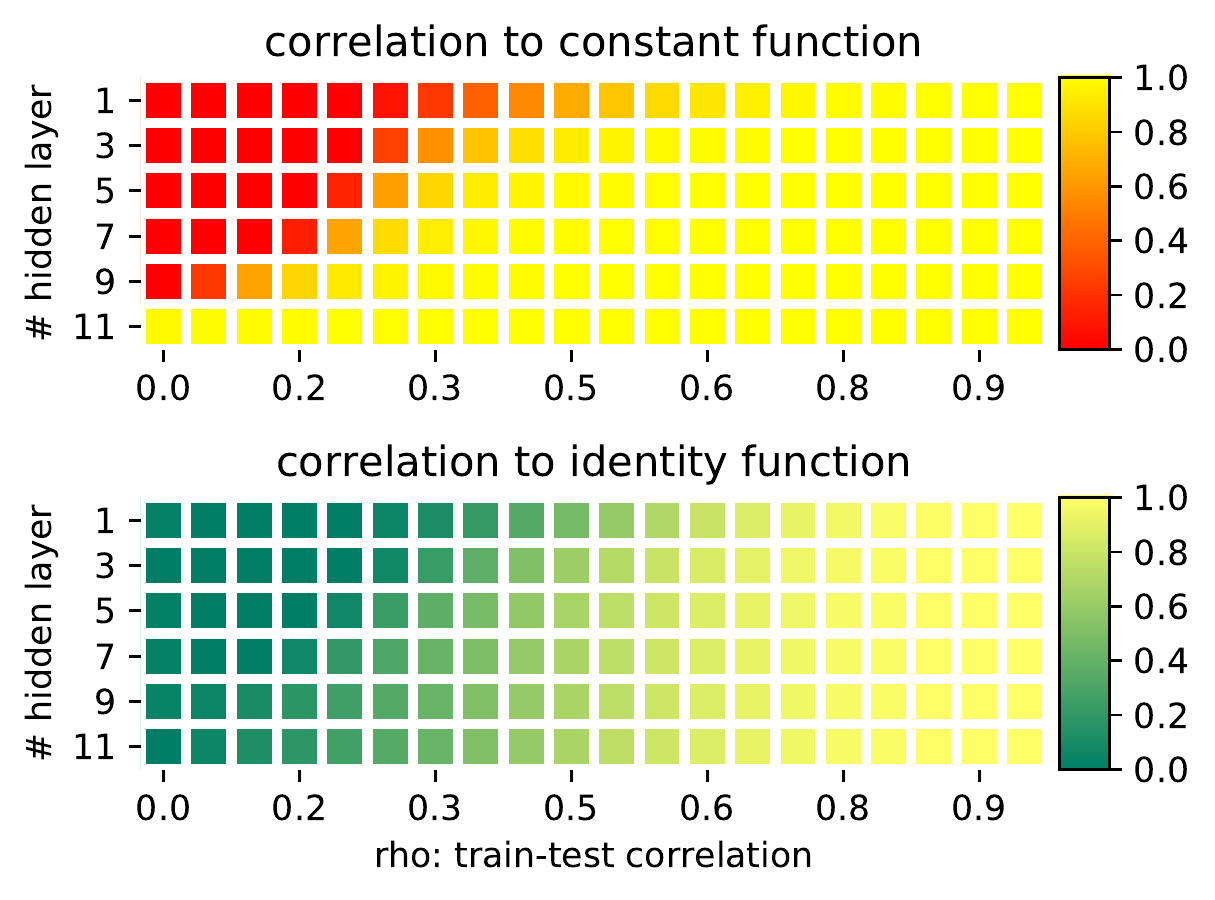}
  \caption{Linear FCNs}
  \end{subfigure}
  \caption{\textbf{Quantitative evaluation of the learned model on randomly generated test samples at various angles (correlation) to the training image.} The horizontal axis shows the train-test correlation, while the vertical axis indicate the number of hidden layers for the FCNs being evaluated. The heatmap shows the similarity (measured in correlation) between the model prediction and the reference function (the constant function or the identity function). (a) shows the results for FCNs with the ReLU activation function; (b) shows the results for linear FCNs.}
  \label{fig:mnist-mlp-eooi-heatmap}
\end{figure}
Figure~\ref{fig:mnist-n1-mlp} visualizes predictions from trained ReLU FCNs with 1, 3, or 9 hidden layers.
The deepest network encodes the constant map with high confidence; the shallowest network shows behavior similar to that of a one-layer linear net. Quantitative evaluations of the behaviors are shown in Figure~\ref{fig:mnist-mlp-eooi-heatmap}, computed in the same way as Figure~\ref{fig:mnist-conv-eooi-heatmap} for CNNs (see Section~\ref{sec:convnets}). The results for linear FCNs are also shown for comparison. The linear and ReLU FCNs behave similarly when measuring the correlation to the identity function: neither of them performs well for test images that are nearly orthogonal to $\hat{x}$. For the correlation to the constant function, ReLU FCNs overfit sooner than linear FCNs when the depth increases. This is consistent with our previous visual inspections: for shallow models, the networks learn neither the constant nor the identity function, as the predictions on nearly orthogonal examples are random.

\section{Visualization of the intermediate layer representations for CNNs}
\label{app:convnets-details}

Unlike the FCNs, CNNs preserve the spatial relation between neurons in the hidden layers, so we can easily visualize the intermediate layers as images in comparison to the inputs and outputs, to gain more insights on how the networks are computing the functions layer-by-layer. In Figure~\ref{fig:conv-hidden-vis-eigv0}, we visualize the intermediate layer representations on some test patterns for CNNs with different depths. In particular, for each example, the outputs from a convolutional layer in an intermediate layer is a three dimensional tensor of shape (\#channel, height, width). To get a compact visualization for multiple channels in each layer, we compute flatten the 3D tensor to a matrix of shape (\#channel, height $\times$ width), compute SVD and visualize the top singular vector as a one-channel image.

\begin{figure*}\centering
  \includegraphics[width=\linewidth]{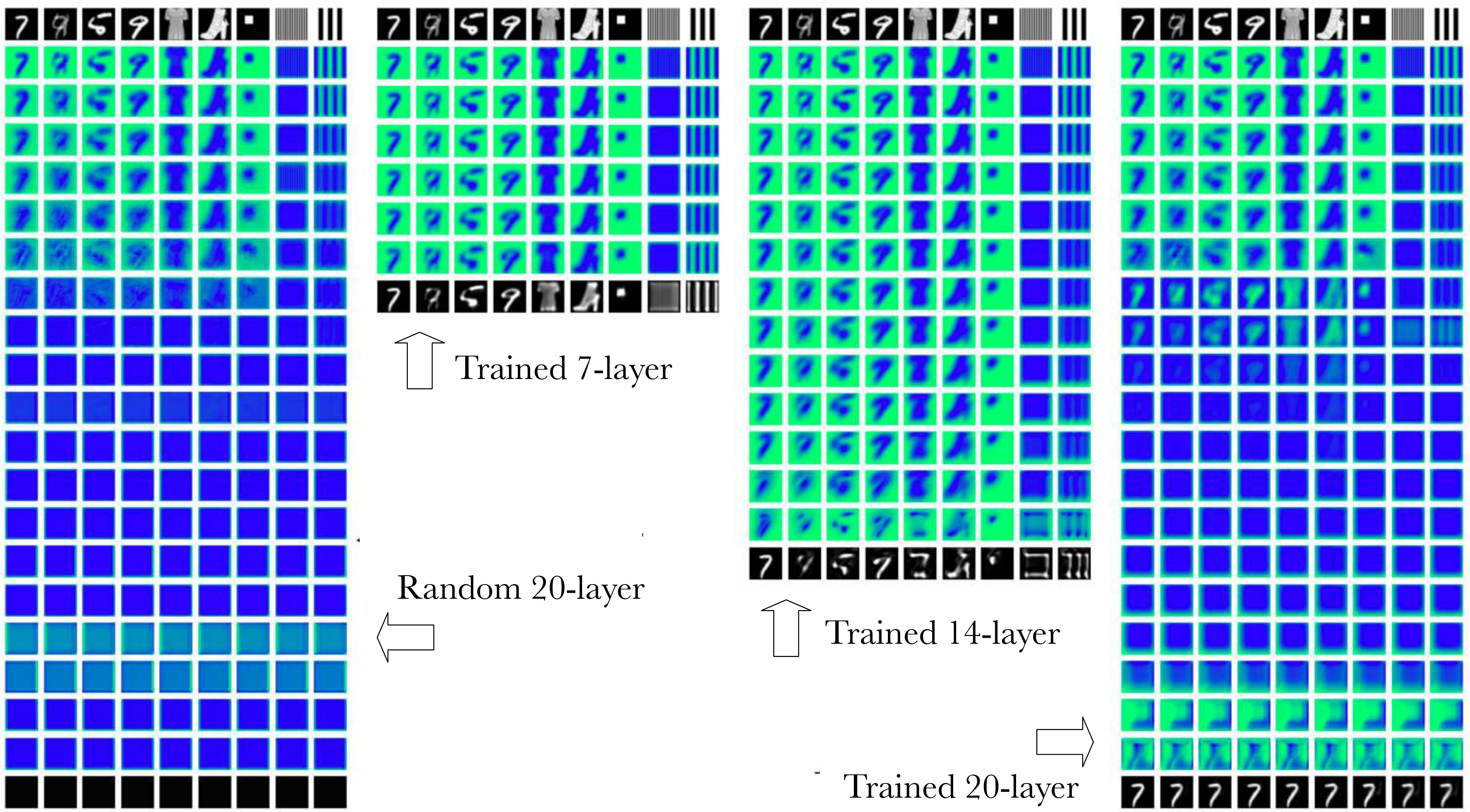}
  \caption{\textbf{Visualization the intermediate layers of CNNs with different number of layers.} The first column shows a randomly initialized 20-layer CNN (random shallower CNNs look similar to the truncation of this). The rest of the columns show the trained CNNs with various number of layers.}
  \label{fig:conv-hidden-vis-eigv0}
\end{figure*}

In the first column, we visualize a 20-layer CNN at random initialization\footnote{Shallower CNNs at random initialization can be well represented by looking at a (top) subset of the visualization.}. As expected, the randomly initialized convolutional layers gradually smooth out the input images. The shape of the input images are (visually) wiped out after around 8 layers of (random) convolution. On the right of the figure, we show several trained CNNs with increasing depths. For a 7-layer CNN, the holistic structure of inputs are still visible all the way to the top at random initialization. After training, the network approximately renders an identity function at the output, and the intermediate activations also become less blurry. Next we show a 14-layer CNN, which fails to learn the identity function. However, it manages to recover meaningful information in the higher layer activations that were (visually) lost in the random initialization. On the other hand, in the last column, the network is so deep that it fails to make connection from the input to the output. Instead, the network start from scratch and constructs the digit `7' from empty and predict everything as `7'. However, note that around layer-8, we see the activations depict slightly more clear structures than the randomly initialized network. This suggests that some efforts have been made during the learning, as opposed to the case that the bottom layers not being learned due to complete gradient vanishing. Please refer to Appendix~\ref{app:weight-distance} for further details related to potential gradient vanishing problems.

\begin{figure}[t]
  \includegraphics[width=.33\linewidth]{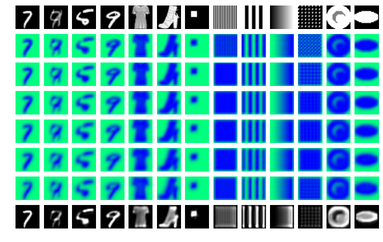}
  \includegraphics[width=.33\linewidth]{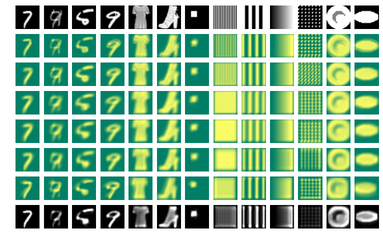}
  \includegraphics[width=.33\linewidth]{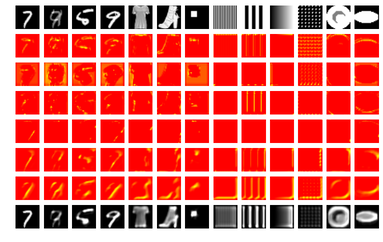}
  \caption{\textbf{Visualizing the intermediate layers of a trained 7-layer CNN.} The three subfigures show for each layer: 1) the top singular vector across the channels; 2) the channel that maximally correlate with the input image; 2) a random channel, respectively.}
  \label{fig:app-conv-vis-L7}
\end{figure}

\begin{figure*}
  \includegraphics[width=.33\linewidth]{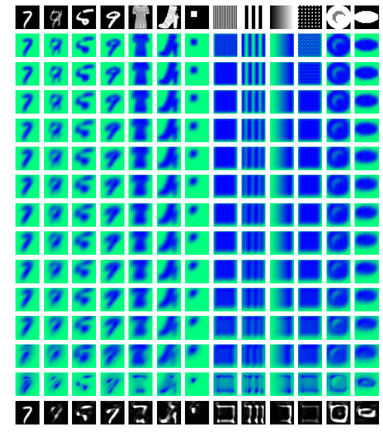}
  \includegraphics[width=.33\linewidth]{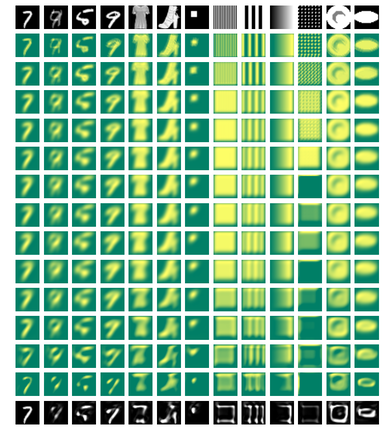}
  \includegraphics[width=.33\linewidth]{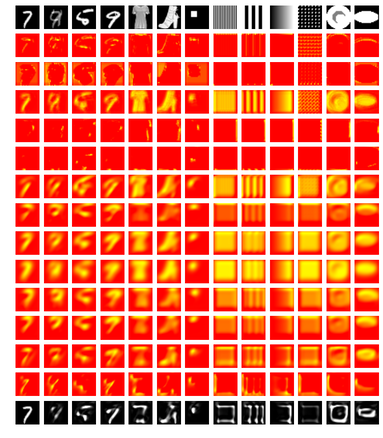}
  \caption{\textbf{Visualizing the intermediate layers of a trained 14-layer CNN.} The three subfigures show for each layer: 1) the top singular vector across the channels; 2) the channel that maximally correlate with the input image; 2) a random channel, respectively.}
  \label{fig:app-conv-vis-L14}
\end{figure*}

\begin{figure*}
  \includegraphics[width=.33\linewidth]{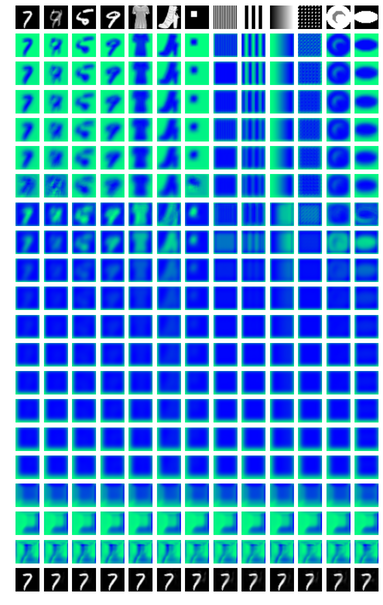}
  \includegraphics[width=.33\linewidth]{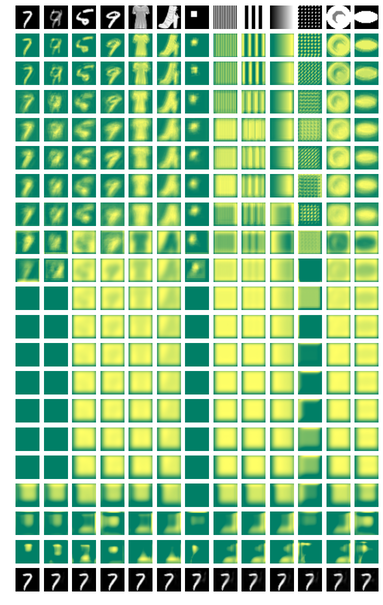}
  \includegraphics[width=.33\linewidth]{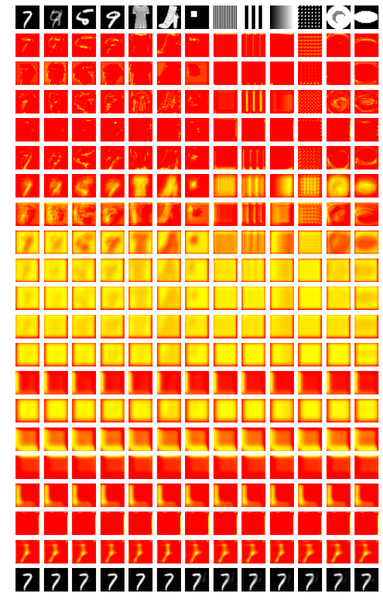}
  \caption{\textbf{Visualizing the intermediate layers of a trained 20-layer CNN.} The three subfigures show for each layer: 1) the top singular vector across the channels; 2) the channel that maximally correlate with the input image; 2) a random channel, respectively.}
  \label{fig:app-conv-vis-L20}
\end{figure*}

Two alternative visualizations to the intermediate multi-channel representations are provided that show the channel that is maximally correlated with the input image, and a random channel (channel 0). Figure~\ref{fig:app-conv-vis-L7}, Figure~\ref{fig:app-conv-vis-L14} and Figure~\ref{fig:app-conv-vis-L20} illustrate a 7-layer CNN, a 14-layer CNN and a 20-layer CNN, respectively.

\section{Measuring the change in weights of layers post training}
\label{app:weight-distance}

\begin{figure}
  \begin{subfigure}{.31\linewidth}
  \includegraphics[width=\linewidth,clip,trim={0 0 0 6.5mm}]{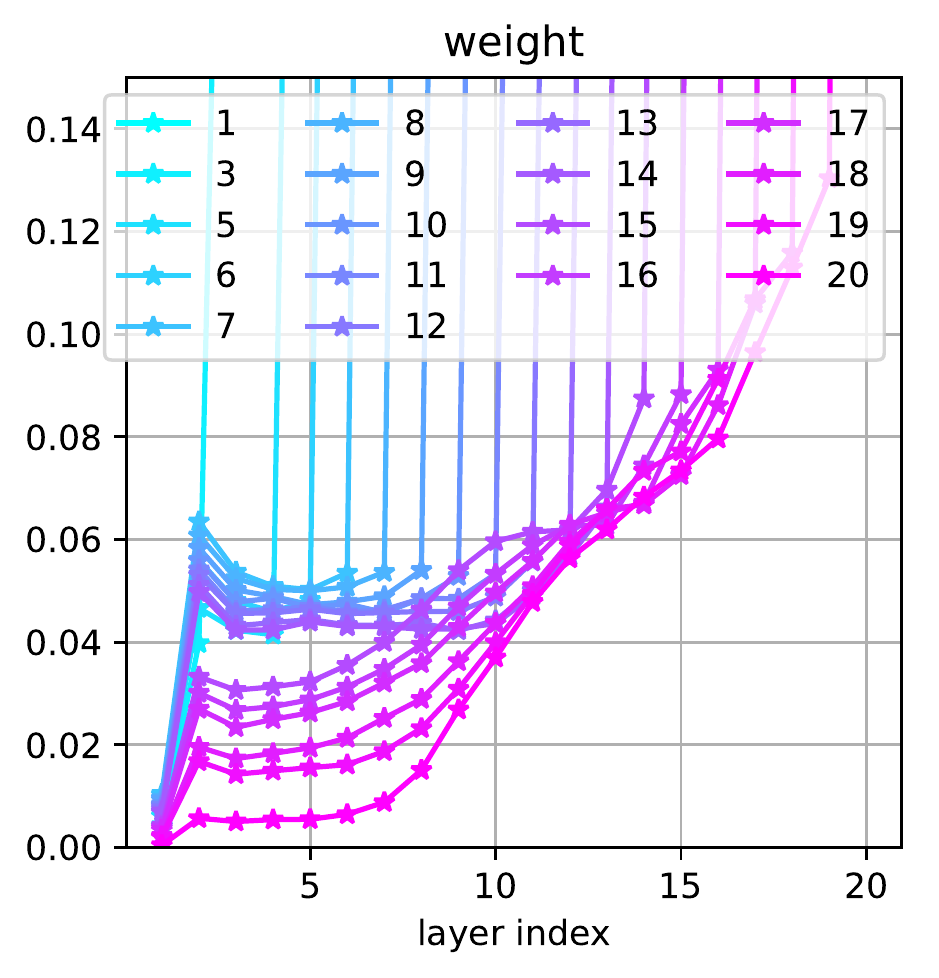}
  \caption{CNNs}
  \end{subfigure}
  \begin{subfigure}{.33\linewidth}
  \includegraphics[width=\linewidth,clip,trim={0 0 0 7mm}]{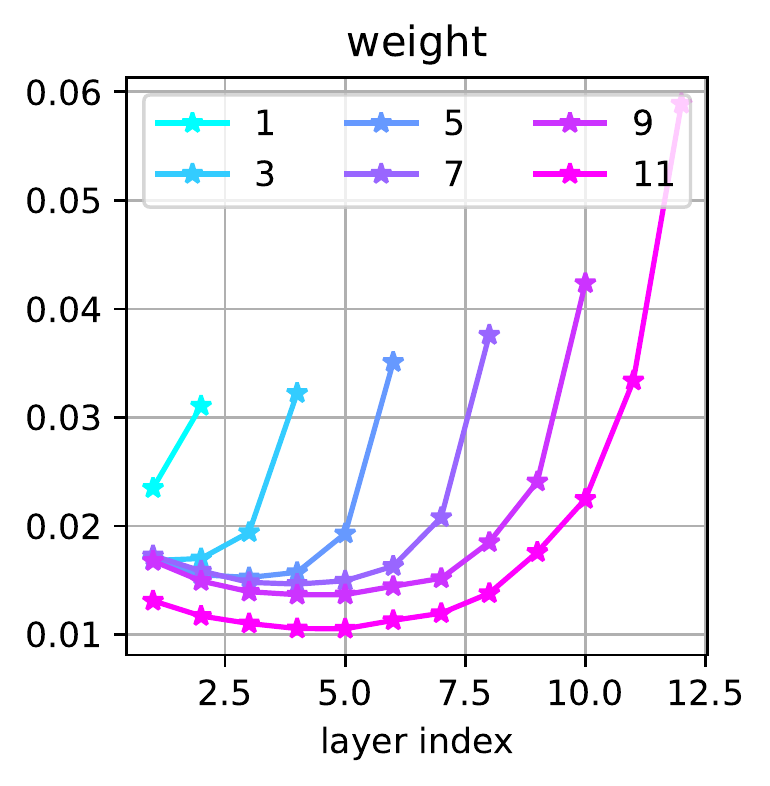}
  \caption{Linear FCNs}
  \end{subfigure}
  \begin{subfigure}{.33\linewidth}
  \includegraphics[width=\linewidth,clip,trim={0 0 0 7mm}]{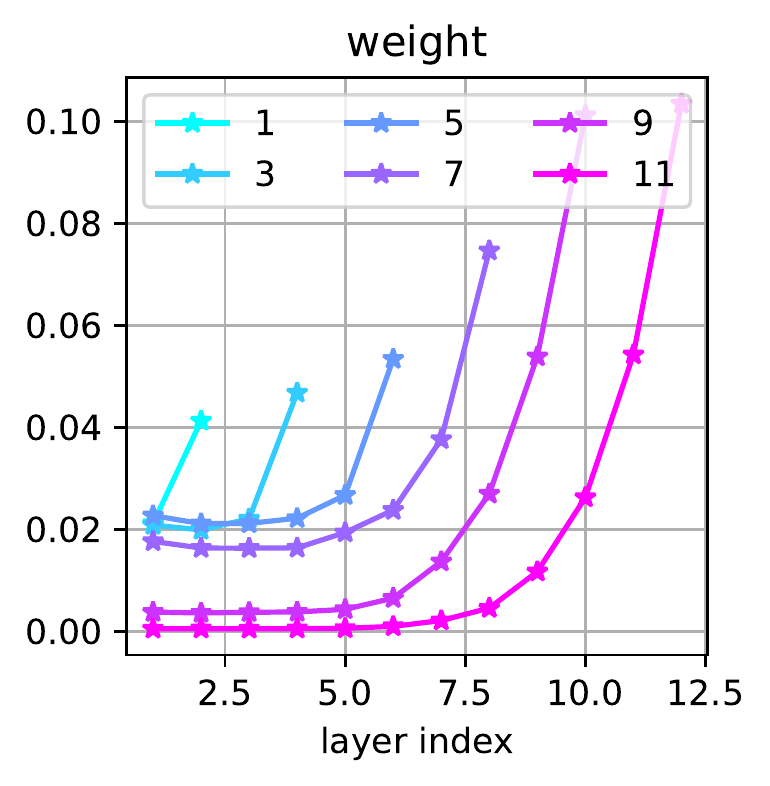}
  \caption{ReLU FCNs}
  \end{subfigure}
  \caption{\textbf{The relative $\ell_2$ distance of the weight tensors before and after training at each layer.} The curves compare models at different depth. Most of the networks have significantly larger distances on the top-most layer. To see a better resolution at the bottom layers, we cut off the top layer in the figures by manually restricting the y axis.}
  \label{fig:app-weight-dist-all}
\end{figure}

In this section, we study the connection between the inductive bias of learning the constant function and the potential gradient vanishing problem. Instead of measuring the norm of gradient during training, we use a simple proxy that directly compute the distance of the weight tensor before and after training. In particular, for each weight tensor $W^0$ at initialization and $W^\star$ after training, we compute the relative $\ell_2$ distance as
\[
  d(W^0, W^\star) \coloneqq \frac{\|W^0-W^\star\|}{\|W^0\|}
\]
The results for CNNs with various depths are plotted in Figure~\ref{fig:app-weight-dist-all}(a). As a general pattern, we do see that as the network architecture gets deeper, the distances at lower layers do become smaller. But they are still non-zero, which is consistent with the visualization in Figure~\ref{fig:conv-hidden-vis-eigv0} showing that even for the 20-layer CNN, where the output layer fits to the constant function, the lower layers does get enough updates to allow them to be visually distinguished from the random initialization.

In Figure~\ref{fig:app-weight-dist-all}(b) and (c), we show the same plots for linear FCNs and FCNs with ReLU activation, respectively. We see that especially for ReLU FCN with 11 hidden layers, the distances for the weight tensors at the lower 5 layers are near zero. However, recall from Figure~\ref{fig:mnist-n1-mlp} in Section~\ref{sec:relu-fcn}, the ReLU FCNs start to bias towards the constant function with only three hidden layers, which are by no means suffering from vanishing gradients as the plots here demonstrate.

\section{Full results on robustness of inductive biases}
\label{app:robustness-of-inductive-bias}
\subsection{Testing on different input image sizes}
\label{app:conv-l5-input-sizes-full}

Figure~\ref{fig:conv-l5-input-sizes} in Section~\ref{sec:robustness-of-inductive-bias} visualize the predictions of a 5-layer CNN on inputs of the extreme sizes of $7\times 7$ and $112\times 112$. Here in Figure~\ref{fig:conv-l5-input-sizes-full} we provide more results on some intermediate image sizes.

\begin{figure}
  \begin{minipage}[t]{.48\linewidth}
  \begin{overpic}[width=\linewidth]{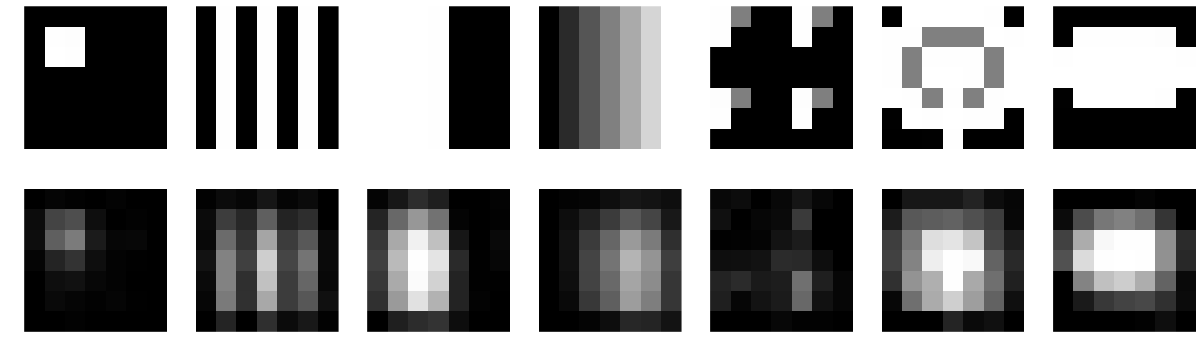}\put(-2,4){\scriptsize 7}\end{overpic}
  \begin{overpic}[width=\linewidth]{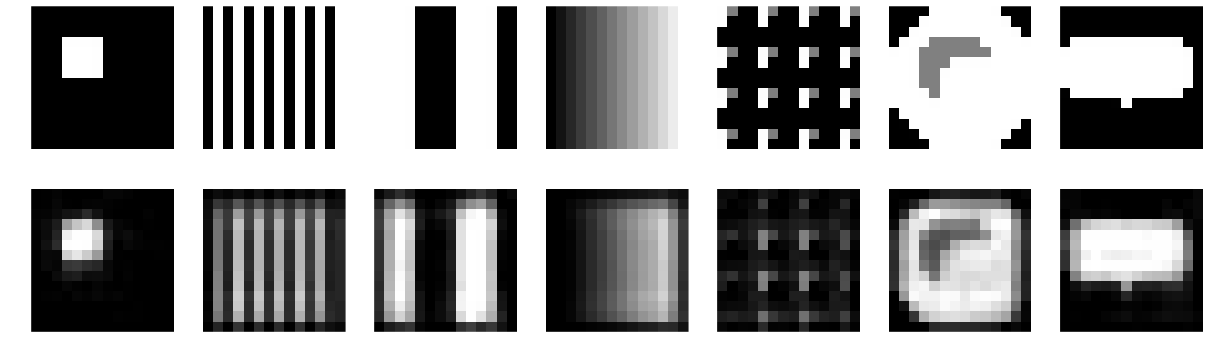}\put(-2,4){\scriptsize 14}\end{overpic}
  \begin{overpic}[width=\linewidth]{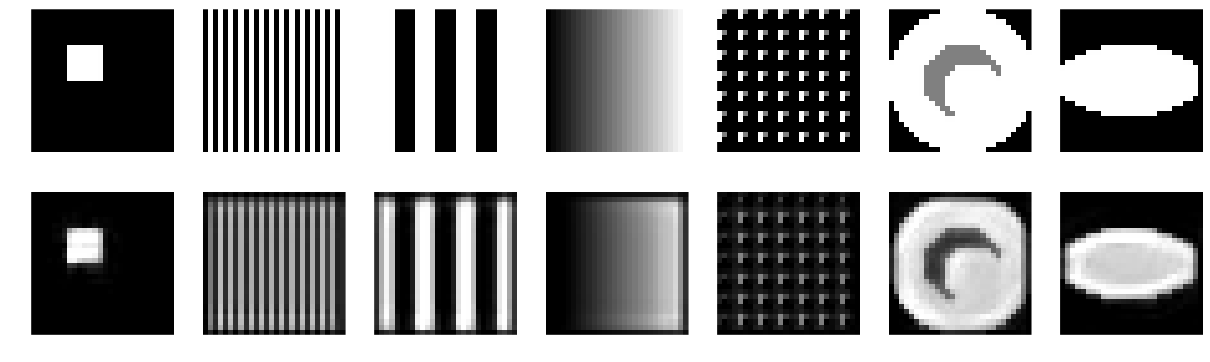}\put(-2,4){\scriptsize 28}\end{overpic}
  \end{minipage}\hfill
  \begin{minipage}[t]{.48\linewidth}
  \begin{overpic}[width=\linewidth]{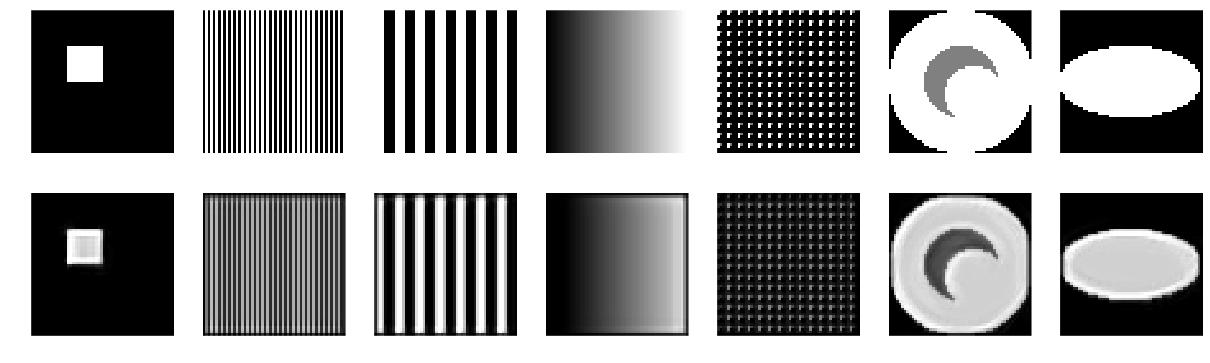}\put(-2,4){\scriptsize 56}\end{overpic}
  \begin{overpic}[width=\linewidth]{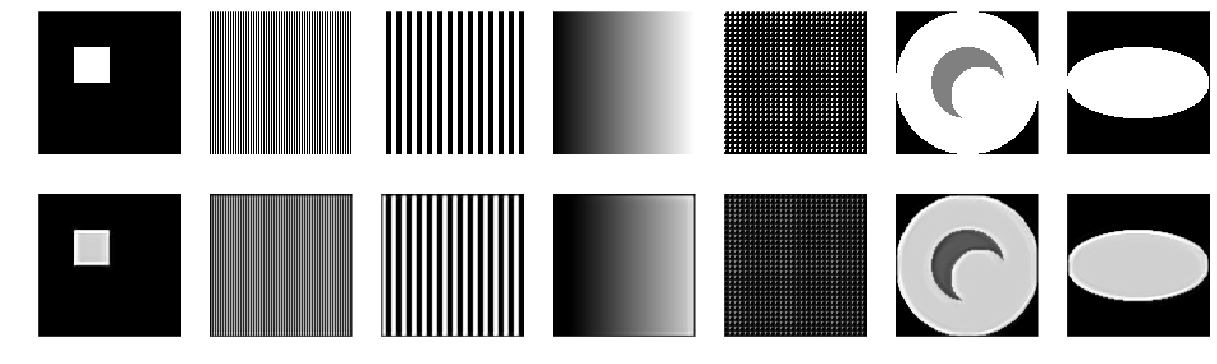}\put(-3,4){\scriptsize 112}\end{overpic}
  \end{minipage}
  \caption{\textbf{Visualization of a 5-layer CNN on test images of different sizes.} Every two rows show the inputs and model predictions. The numbers on the left indicate the input image size (both width and height).}
  \label{fig:conv-l5-input-sizes-full}
\end{figure}

We also test a trained 20-layer CNN on various input sizes (Figure~\ref{fig:conv-l20-input-sizes}). It is interesting that the constant map also holds robustly across a wide range of input sizes. Especially from the prediction on large input images, we found hints on how the CNNs actually compute the constant function. It seems the artifacts from the outside boundary due to convolution padding are used as cues to ``grow'' a pattern inwards to generate the digit ``7''. The visualizations of the intermediate layer representatiosn in Figure~\ref{fig:conv-l20-input-sizes-alllayers} is consistent with our hypothesis: the CNN take the last several layers to realize this construction. This is very clever, because in CNNs, the same filter and bias is applied to all the spatial locations. Without relying on the artifacts on the boundaries, it would be very challenging to get a sense of the spatial location in order to construct an image with a holistic structure (e.g. the digit ``7'' in our case).

\begin{figure}
  \begin{subfigure}[b]{.245\linewidth}
    \includegraphics[width=\linewidth]{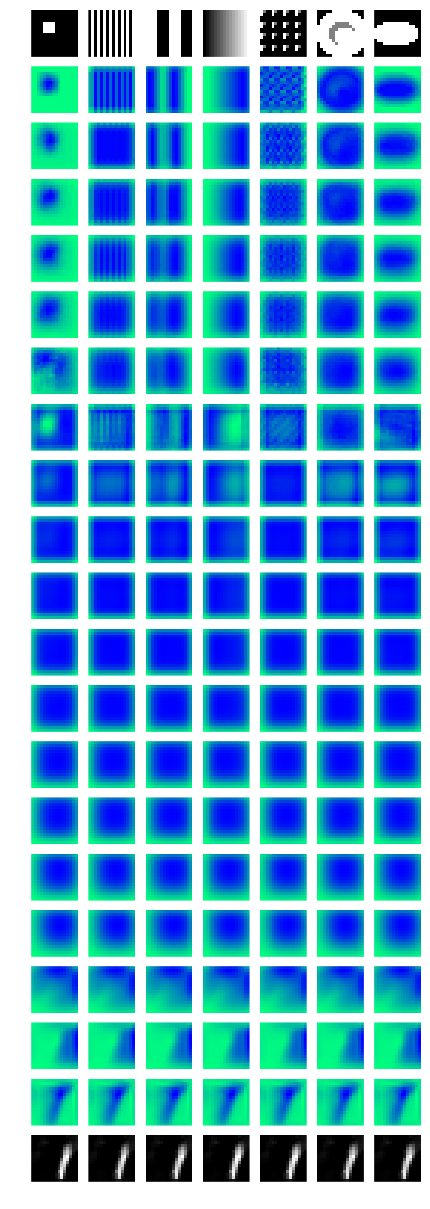}
    \caption{$16\times 16$}
  \end{subfigure}
  \begin{subfigure}[b]{.245\linewidth}
    \includegraphics[width=\linewidth]{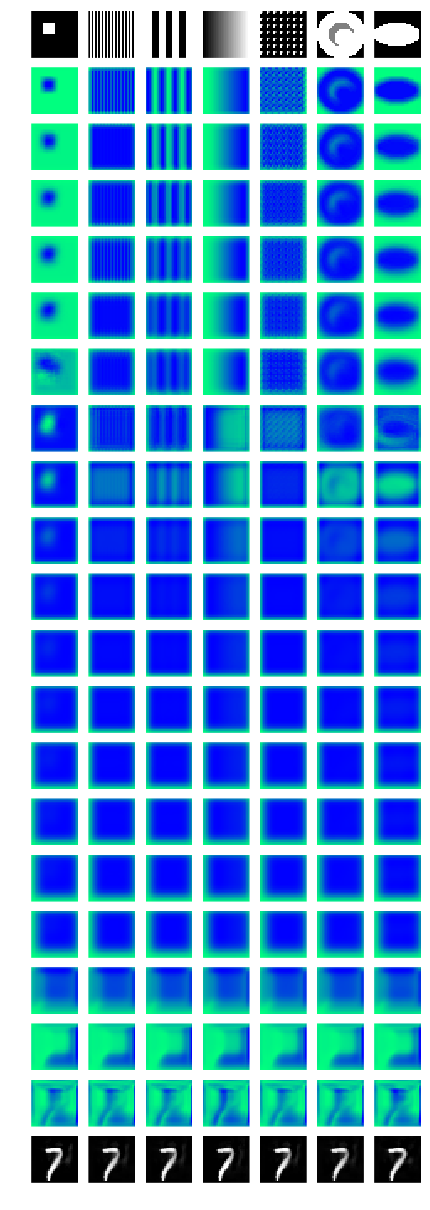}
    \caption{$28\times 28$}
  \end{subfigure}
  \begin{subfigure}[b]{.245\linewidth}
    \includegraphics[width=\linewidth]{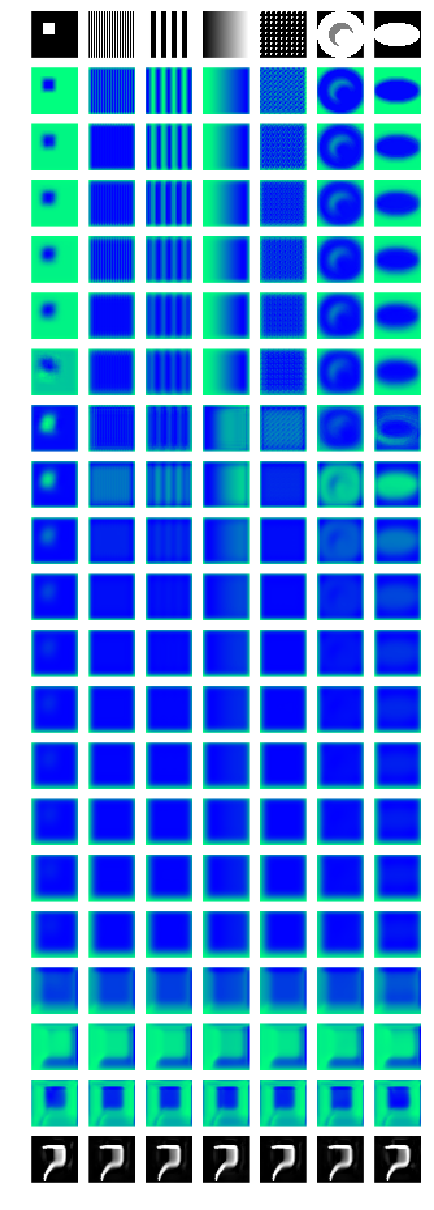}
    \caption{$36\times 36$}
  \end{subfigure}
  \begin{subfigure}[b]{.245\linewidth}
    \includegraphics[width=\linewidth]{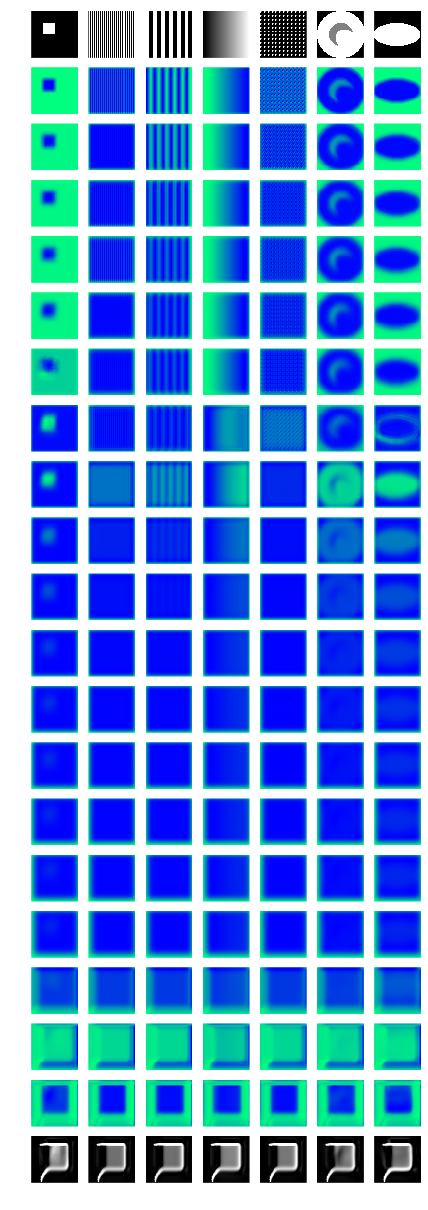}
    \caption{$48\times 48$}
  \end{subfigure}
  \caption{\textbf{Visualization of intermediate representations when testing on images of different sizes from training images for a 20-layer trained CNN}. The CNN is trained on a $28\times 28$ image of the digit ``7''.}
  \label{fig:conv-l20-input-sizes-alllayers}
\end{figure}

\subsection{The upper subnetworks}
In the visualization of intermediate layers (Figure~\ref{fig:conv-hidden-vis-eigv0}), the intermediate layers actually represent the ``lower'' subnetwork from the inputs. Here we investigate the ``upper'' subnetwork. Thanks again to the spatial structure of CNNs, we can skip the lower layers and feed the test patterns directly to the intermediate layers and still get interpretable visualizations\footnote{Specifically, the intermediate layers expect inputs with multiple channels, so we repeat the grayscale inputs across channels to match the expected input shape.}. Figure~\ref{fig:conv-upper-subnetwork} shows the results for the top-one layer from CNNs with various depths. A clear distinction can be found at 15-layer CNN, which according to Figure~\ref{fig:mnist-n1-conv} is where the networks start to bias away from edge detector and towards the constant function.

\begin{figure}\centering
  \begin{minipage}[t]{.8\linewidth}
\begin{overpic}[width=\linewidth,clip,trim={0 3.1cm 0 0}]{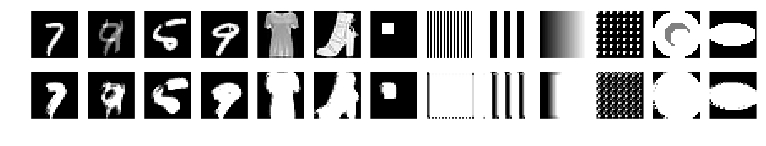}\end{overpic}
\begin{overpic}[width=\linewidth,clip,trim={0 0.8cm 0 2.2cm}]{figs/upper-network/conv-L5-Ch128-K5-top1}\put(-1,2){\scriptsize 5}\end{overpic}
\begin{overpic}[width=\linewidth,clip,trim={0 0.8cm 0 2.2cm}]{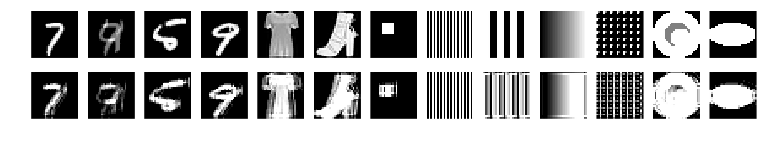}\put(-1,2){\scriptsize 7}\end{overpic}
\begin{overpic}[width=\linewidth,clip,trim={0 0.8cm 0 2.2cm}]{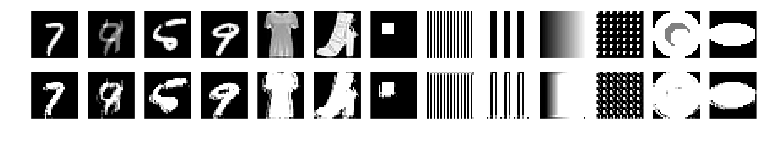}\put(-1,2){\scriptsize 9}\end{overpic}
\begin{overpic}[width=\linewidth,clip,trim={0 0.8cm 0 2.2cm}]{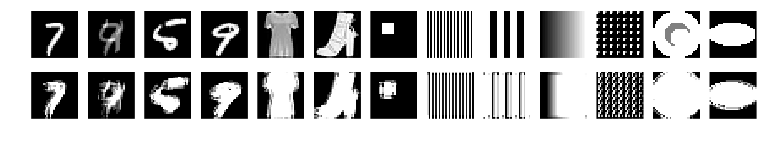}\put(-1,2){\scriptsize 11}\end{overpic}
\begin{overpic}[width=\linewidth,clip,trim={0 0.8cm 0 2.2cm}]{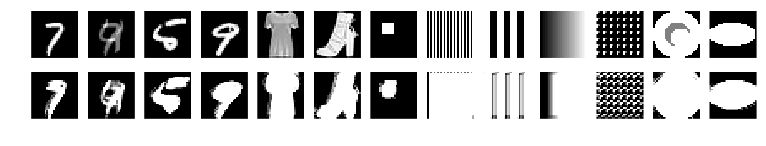}\put(-1,2){\scriptsize 14}\end{overpic}
\begin{overpic}[width=\linewidth,clip,trim={0 0.8cm 0 2.2cm}]{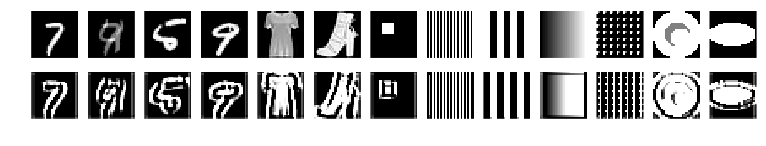}\put(-1,2){\scriptsize 15}\end{overpic}
\begin{overpic}[width=\linewidth,clip,trim={0 0.8cm 0 2.2cm}]{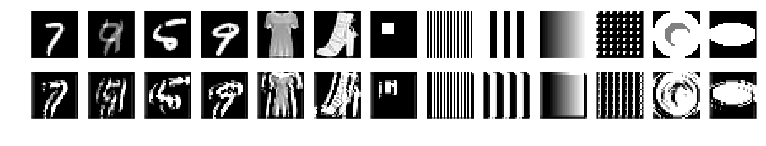}\put(-1,2){\scriptsize 16}\end{overpic}
\begin{overpic}[width=\linewidth,clip,trim={0 0.8cm 0 2.2cm}]{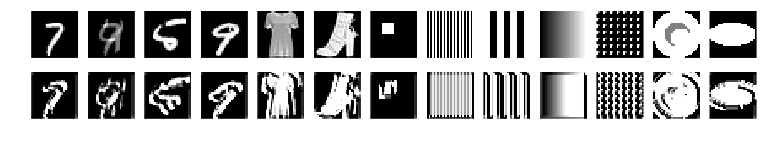}\put(-1,2){\scriptsize 18}\end{overpic}
\begin{overpic}[width=\linewidth,clip,trim={0 0.8cm 0 2.2cm}]{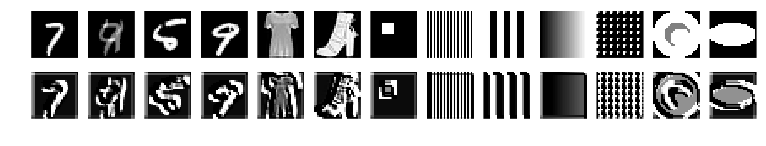}\put(-1,2){\scriptsize 20}\end{overpic}
\end{minipage}
  \caption{\textbf{Visualizing only the final layer in trained networks.} The first row are the input images, which are directly fed into the final layer of trained networks (skipping the bottom layers). The remaining rows shows the predictions from the top layers of CNNs, with the numbers on the left indicating their (original) depth.}
  \label{fig:conv-upper-subnetwork}
\end{figure}

\begin{figure}\centering
  \begin{minipage}[b]{.48\linewidth}
\begin{overpic}[width=\linewidth,clip,trim={0 3.1cm 0 0}]{figs/upper-network/conv-L5-Ch128-K5-top1}\put(-1,-2){\scriptsize\rotatebox{90}{input}}\end{overpic}
  \vskip17pt
\begin{overpic}[width=\linewidth,clip,trim={0 0.8cm 0 2.2cm}]{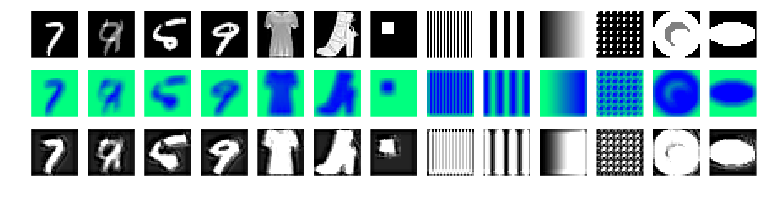}\put(-1,2){\scriptsize 5}\end{overpic}
\begin{overpic}[width=\linewidth,clip,trim={0 0.8cm 0 2.2cm}]{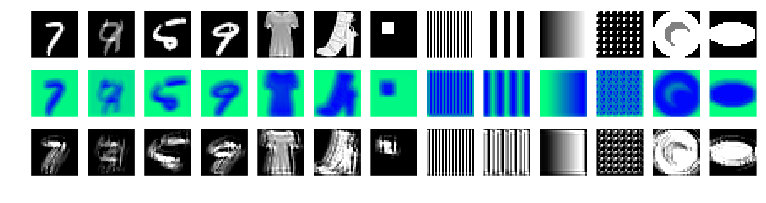}\put(-1,2){\scriptsize 7}\end{overpic}
\begin{overpic}[width=\linewidth,clip,trim={0 0.8cm 0 2.2cm}]{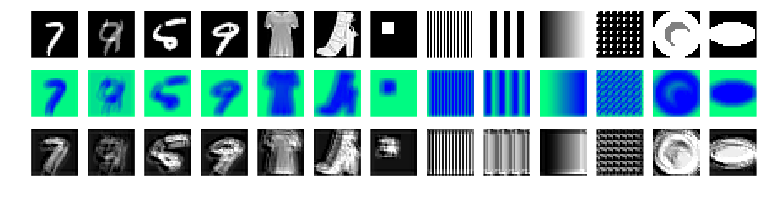}\put(-1,2){\scriptsize 9}\end{overpic}
\begin{overpic}[width=\linewidth,clip,trim={0 0.8cm 0 2.2cm}]{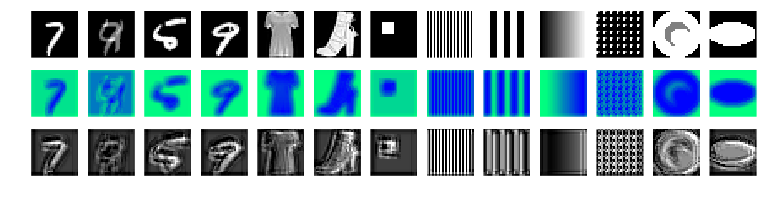}\put(-1,2){\scriptsize 11}\end{overpic}
\end{minipage}\hfill
  \begin{minipage}[b]{.48\linewidth}
\begin{overpic}[width=\linewidth,clip,trim={0 0.8cm 0 2.2cm}]{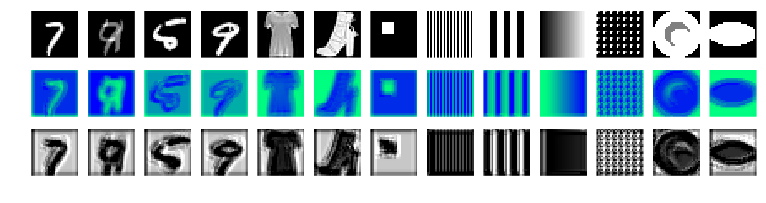}\put(-1,2){\scriptsize 14}\end{overpic}
\begin{overpic}[width=\linewidth,clip,trim={0 0.8cm 0 2.2cm}]{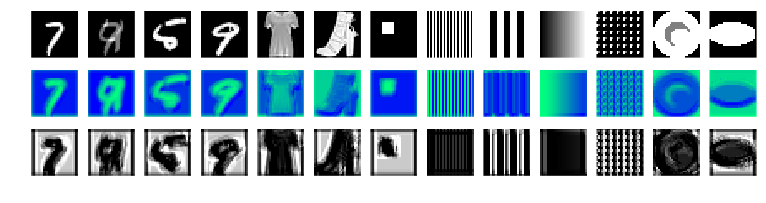}\put(-1,2){\scriptsize 15}\end{overpic}
\begin{overpic}[width=\linewidth,clip,trim={0 0.8cm 0 2.2cm}]{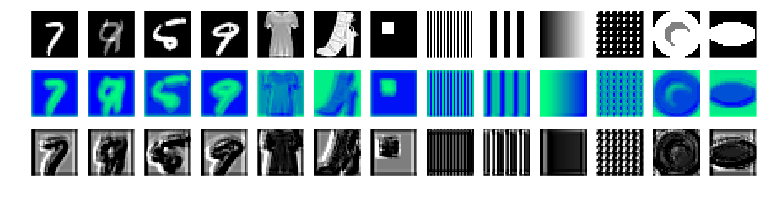}\put(-1,2){\scriptsize 16}\end{overpic}
\begin{overpic}[width=\linewidth,clip,trim={0 0.8cm 0 2.2cm}]{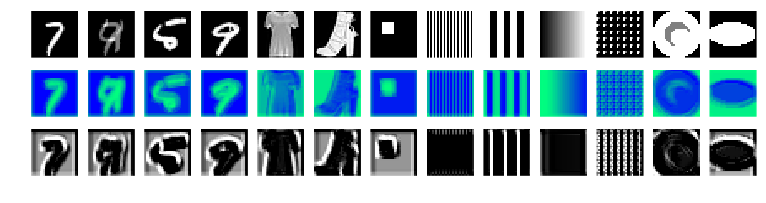}\put(-1,2){\scriptsize 18}\end{overpic}
\begin{overpic}[width=\linewidth,clip,trim={0 0.8cm 0 2.2cm}]{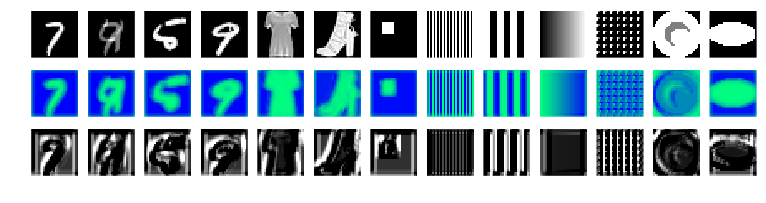}\put(-1,2){\scriptsize 20}\end{overpic}
  \end{minipage}
  \caption{\textbf{Visualizing only the top two layers in trained networks.} The first row are the input images, which are directly fed into the top two layer of trained networks (skipping the bottom layers). The remaining rows shows the predictions from the top two layers of CNNs, with the numbers on the left indicating their (original) depth. More specifically, each of the two top layers occupies one row. The colorful rows are the visualizations (as the top singular vector across channels) of the outputs of the second to the last layer from each network. The grayscale rows are the outputs of the final layer from each network.}
  \label{fig:app-conv-upper-subnetwork}
\end{figure}

\begin{figure}\centering
  \begin{minipage}[t]{.49\linewidth}
  \begin{overpic}[width=\linewidth,clip,trim={0 3.1cm 0 0}]{figs/upper-network/conv-L20-Ch128-K5-top1}\put(-1,-2){\scriptsize\rotatebox{90}{input}}\end{overpic}\vskip9pt
  \begin{overpic}[width=\linewidth,clip,trim={0 0.8cm 0 2.2cm}]{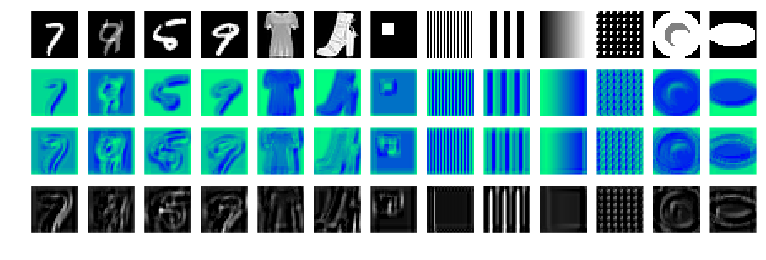}\put(-1,2){\scriptsize \rotatebox{90}{top 3}}\end{overpic}
  \begin{overpic}[width=\linewidth,clip,trim={0 0.8cm 0 2.2cm}]{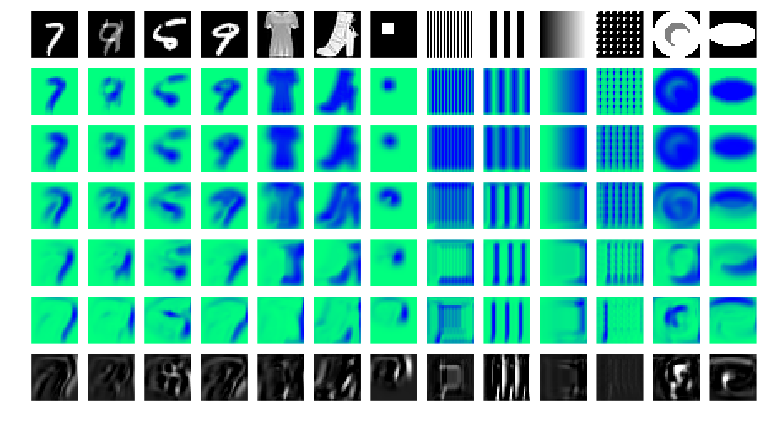}\put(-1,2){\scriptsize \rotatebox{90}{top 6}}\end{overpic}
  \end{minipage}\hfill
  \begin{minipage}[t]{.49\linewidth}
  \begin{overpic}[width=\linewidth,clip,trim={0 3.1cm 0 0}]{figs/upper-network/conv-L20-Ch128-K5-top1}\put(-1,-2){\scriptsize\rotatebox{90}{input}}\end{overpic}\vskip9pt
  \begin{overpic}[width=\linewidth,clip,trim={0 0.8cm 0 2.2cm}]{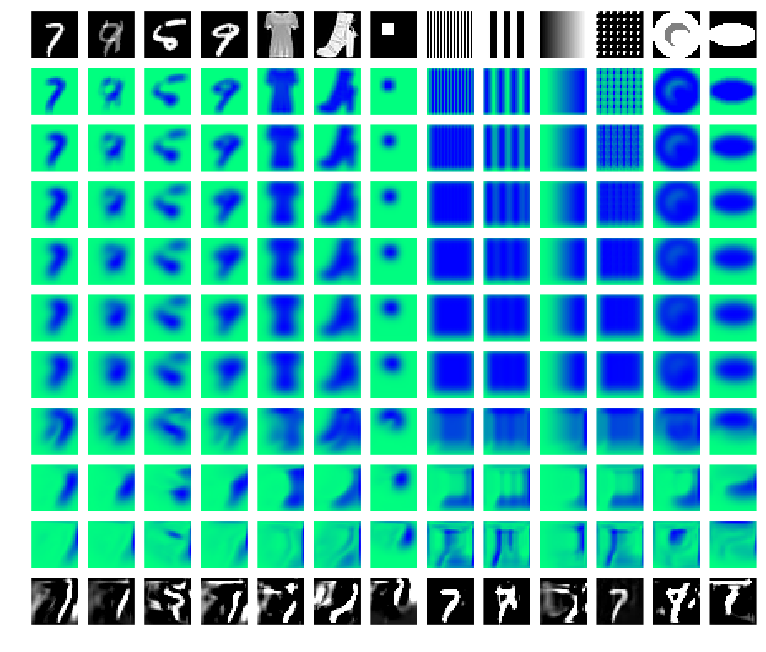}\put(-1,2){\scriptsize \rotatebox{90}{top 10}}\end{overpic}
  \end{minipage}
    \caption{\textbf{Visualzing the top 3 layers, 6 layers and 10 layers of a 20-layer CNN.} Visualizations formatted in the same way as Figure~\ref{fig:app-conv-upper-subnetwork}.}
    \label{fig:app-conv-upper-subnetwork-L20}
\end{figure}

The predictions from the final two layers of each network are visualized in Figure~\ref{fig:app-conv-upper-subnetwork}. Figure~\ref{fig:app-conv-upper-subnetwork-L20} focuses on the 20-layer CNN that learns the constant map, and visualize the upper 3 layers, 6 layers and 10 layers, respectively. In particular, the last visualization shows that the 20-layer CNN is already starting to construct the digit ``7'' from nowhere when using only the upper half of the model.

\section{Full results on varying different factors when training CNNs}
\label{app:conv-other-factors}

The study on the effects of various hyperparameters on the inductive bias of trained CNNs is briefly presented in Section~\ref{sec:conv-other-factors}. The full results are shown here.

\begin{figure}[t]\centering
\includegraphics[width=\linewidth]{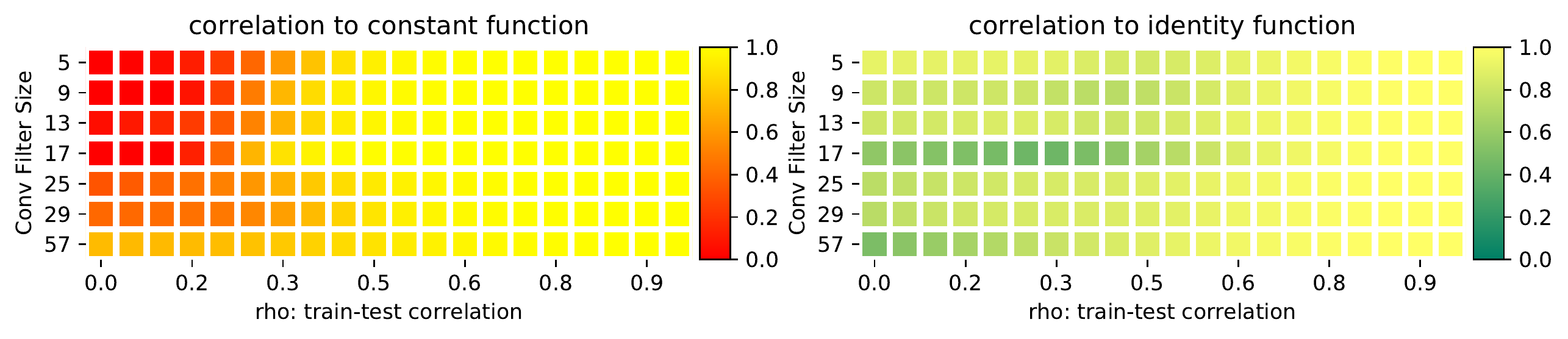}
\caption{\textbf{Inductive bias of a 5-layer CNN with varying convolutional filter size.} The heatmap is arranged similarly as Figure~\ref{fig:mnist-conv-eooi-heatmap}, except that the rows correspond to CNNs filter sizes.}
\label{fig:app-conv-kernel-size-heatmap}
\end{figure}

\begin{figure}\centering
  \begin{minipage}[t]{.7\linewidth}
\begin{overpic}[width=\linewidth,clip,trim={0 6.6cm 0 0}]{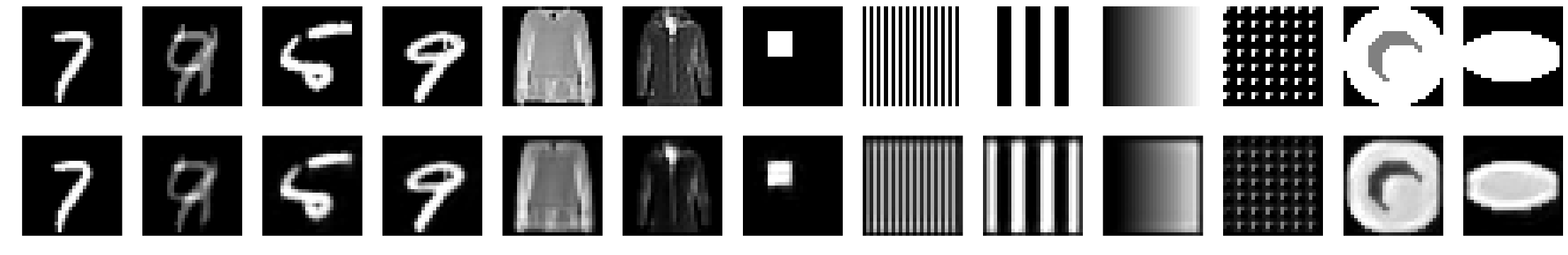}\end{overpic}
\begin{overpic}[width=\linewidth,clip,trim={0 0.8cm 0 5.6cm}]{figs/conv-bigk/K5}\put(-2,2){\scriptsize 5}\end{overpic}
\begin{overpic}[width=\linewidth,clip,trim={0 0.8cm 0 5.6cm}]{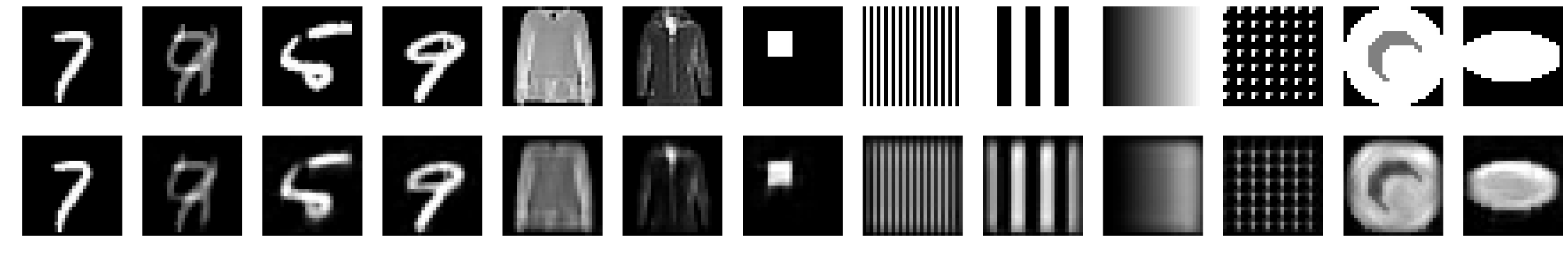}\put(-2,2){\scriptsize 7}\end{overpic}
\begin{overpic}[width=\linewidth,clip,trim={0 0.8cm 0 5.6cm}]{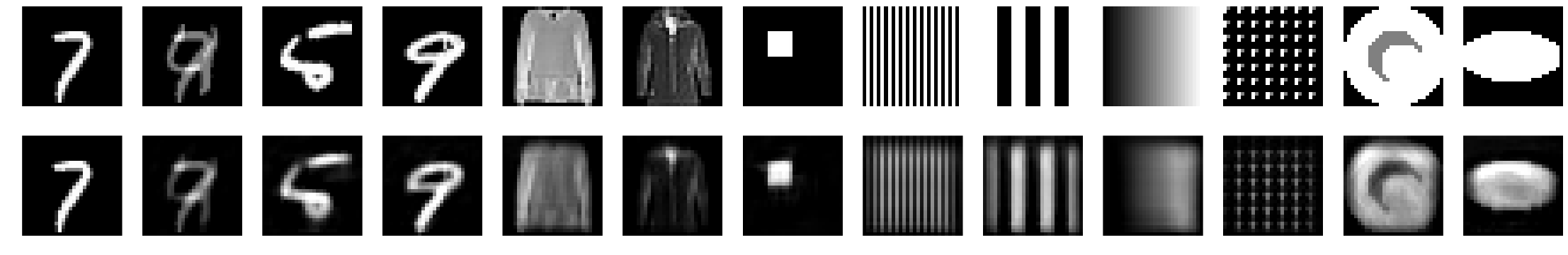}\put(-2,2){\scriptsize 9}\end{overpic}
\begin{overpic}[width=\linewidth,clip,trim={0 0.8cm 0 5.6cm}]{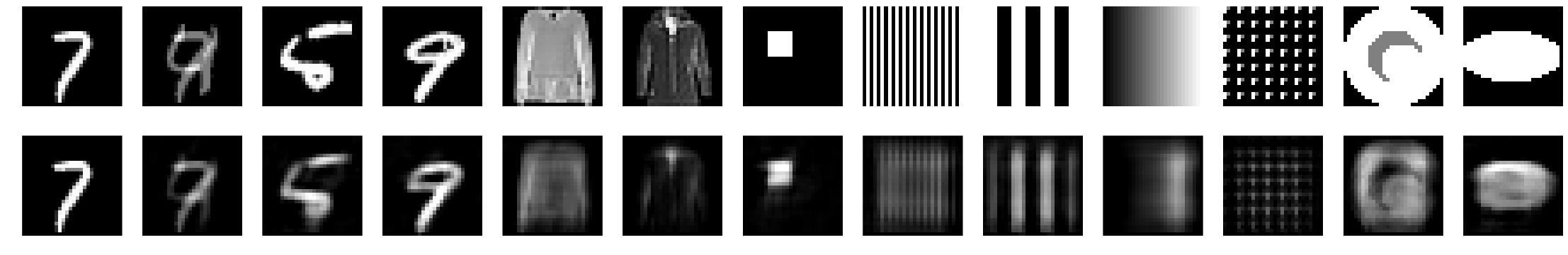}\put(-2,2){\scriptsize 11}\end{overpic}
\begin{overpic}[width=\linewidth,clip,trim={0 0.8cm 0 5.6cm}]{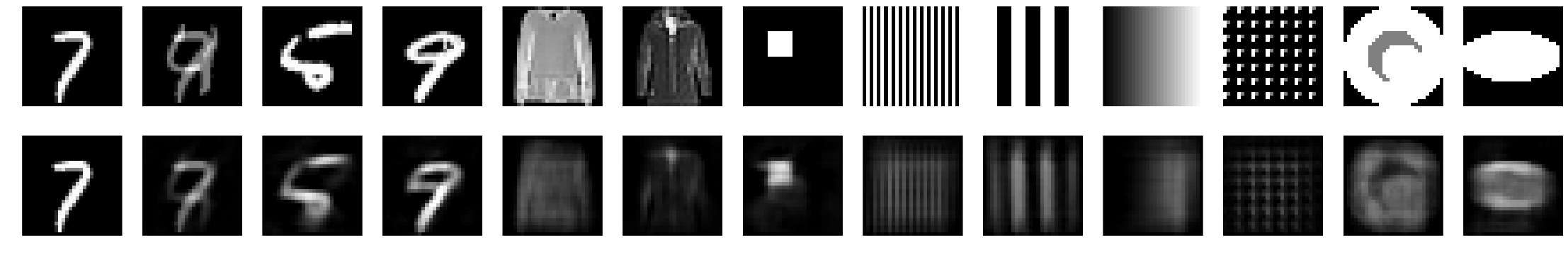}\put(-2,2){\scriptsize 13}\end{overpic}
\begin{overpic}[width=\linewidth,clip,trim={0 0.8cm 0 5.6cm}]{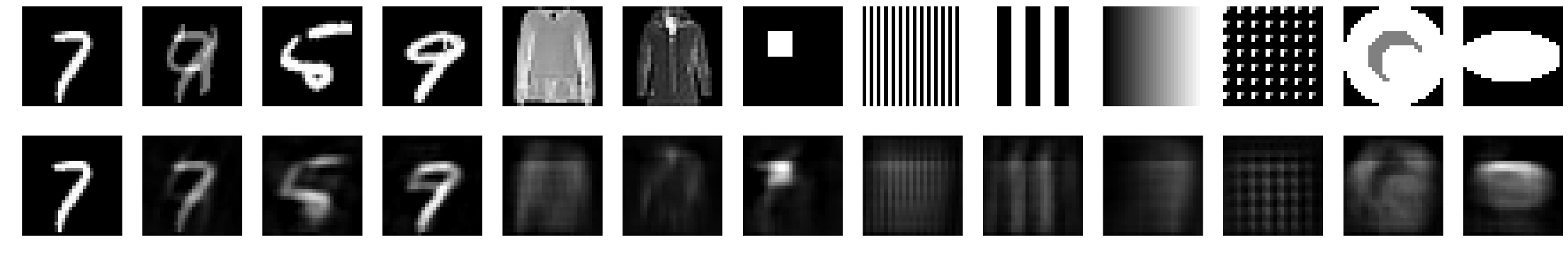}\put(-2,2){\scriptsize 15}\end{overpic}
\begin{overpic}[width=\linewidth,clip,trim={0 0.8cm 0 5.6cm}]{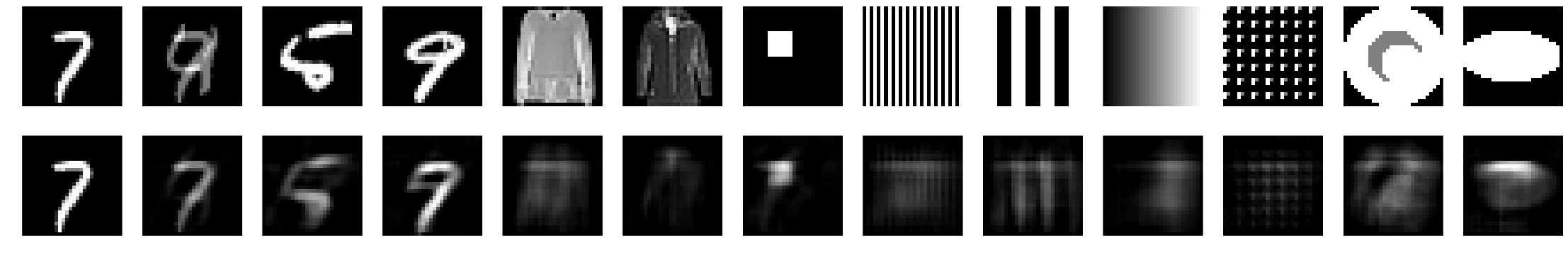}\put(-2,2){\scriptsize 17}\end{overpic}
\begin{overpic}[width=\linewidth,clip,trim={0 0.8cm 0 5.6cm}]{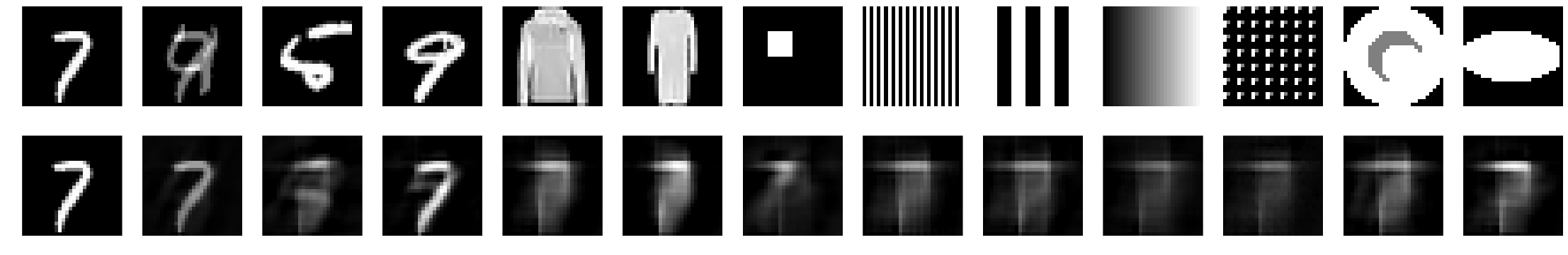}\put(-2,2){\scriptsize 25}\end{overpic}
\begin{overpic}[width=\linewidth,clip,trim={0 0.8cm 0 5.6cm}]{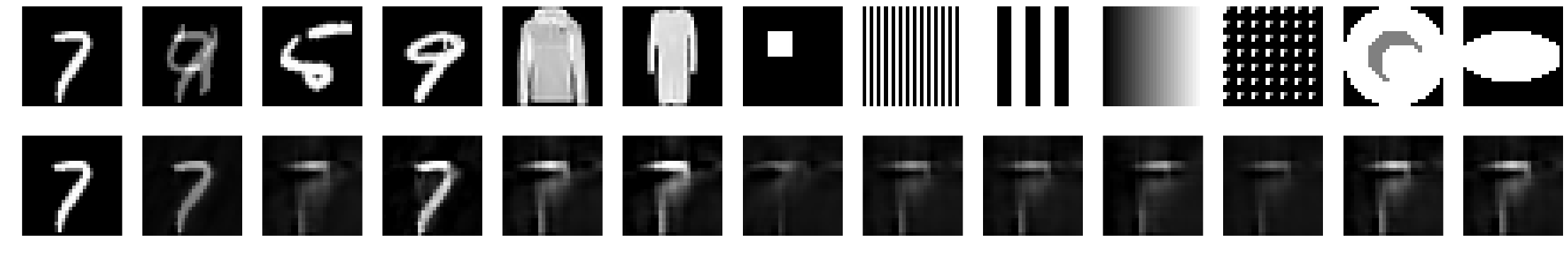}\put(-2,2){\scriptsize 29}\end{overpic}
\begin{overpic}[width=\linewidth,clip,trim={0 0.8cm 0 5.6cm}]{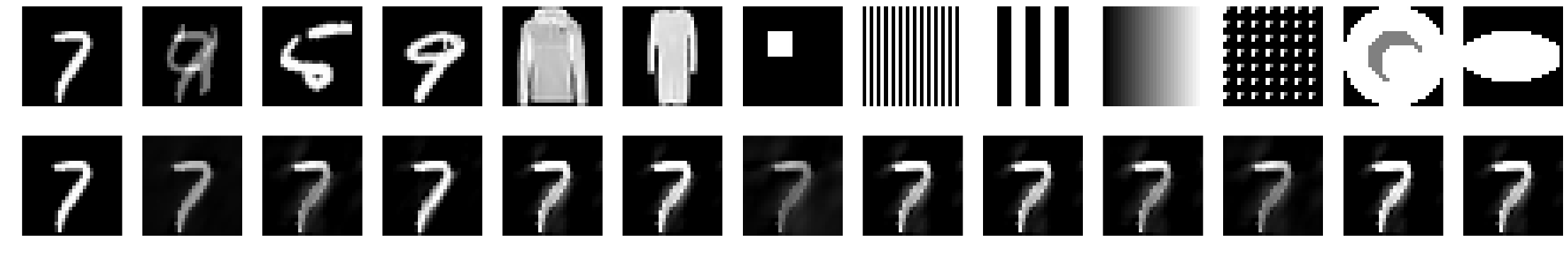}\put(-2,2){\scriptsize 57}\end{overpic}
  \end{minipage}
  \caption{\textbf{Visualizing the predictions from CNNs with various filter sizes.} The first row is the inputs, including the single training image ``7''. The remaining rows are predictions, with the numbers on the left showing the corresponding filter sizes.}
  \label{fig:app-conv-kernel-size}
\end{figure}

\paragraph{Convolution filter sizes}
Figure~\ref{fig:app-conv-kernel-size-heatmap} and Figure~\ref{fig:app-conv-kernel-size} illustrate the inductive bias with varying convolution filter size from $5\times 5$ to $57\times 57$.
The visualization shows that the predictions become more and more blurry as the filter sizes grow. The heatmaps, especially the correlation to the identity function, are not as helpful in this case as the correlation metric is not very good at distinguishing images with different levels of blurry. With extremely large filter sizes that cover the whole inputs, the CNNs start to bias towards the constant function. Note our training inputs are of size $28\times 28$, so $29\times 29$ filter size allows all the neurons to see no less than half of the spatial domain from the previous layer. $57\times 57$ filters centered at any location within the image will be able to see the whole previous layer. On the other hand, the repeated application of \emph{the same} convolution filter through out the spatial domain is still used (with very large boundary paddings in the inputs). So the CNNs are \emph{not} trivially doing the same computation as FCNs.

\begin{figure}\centering
  \includegraphics[width=.8\linewidth]{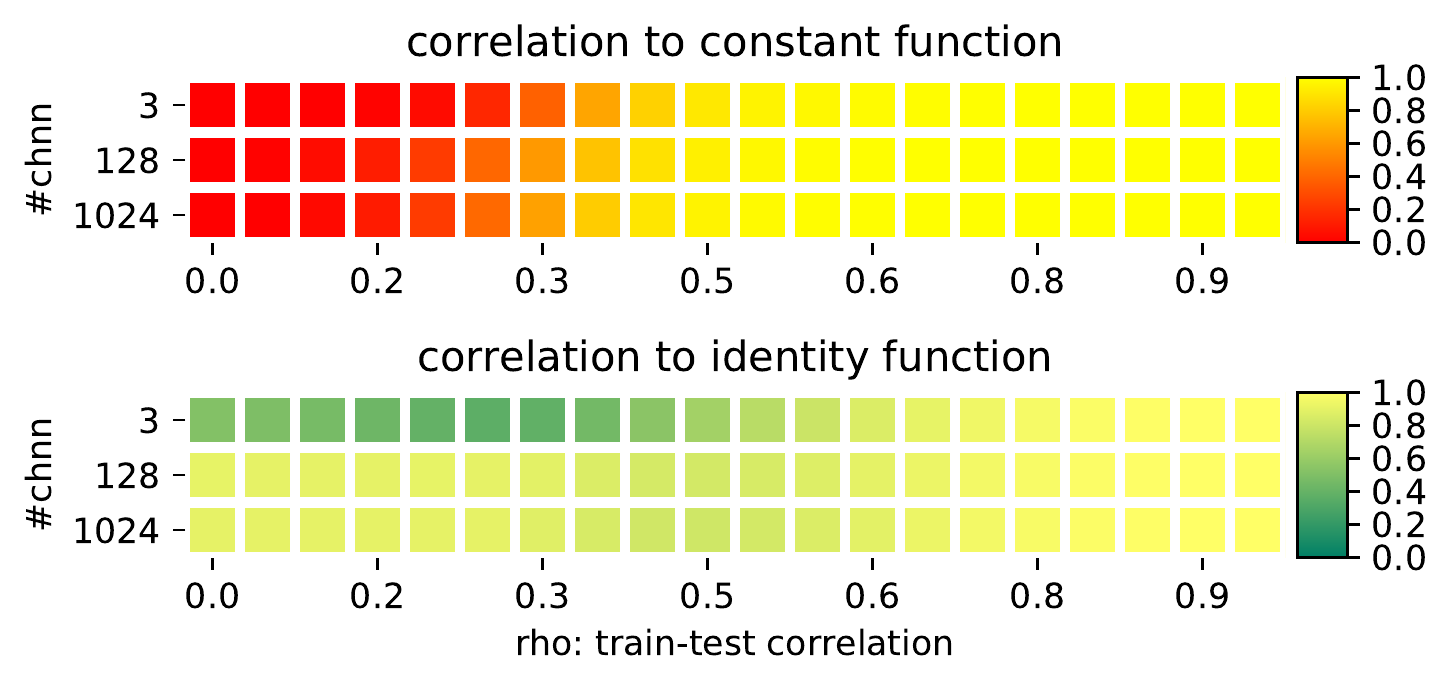}
  \caption{\textbf{Correlation to the constant and the identity function for different convolution channels in a 5-layer CNN.}}
  \label{fig:app-eooi-heatmap-conv-chnn}
\end{figure}

\begin{figure}\centering
  \begin{minipage}[t]{.7\linewidth}
\begin{overpic}[width=\linewidth,clip,trim={0 6.6cm 0 0}]{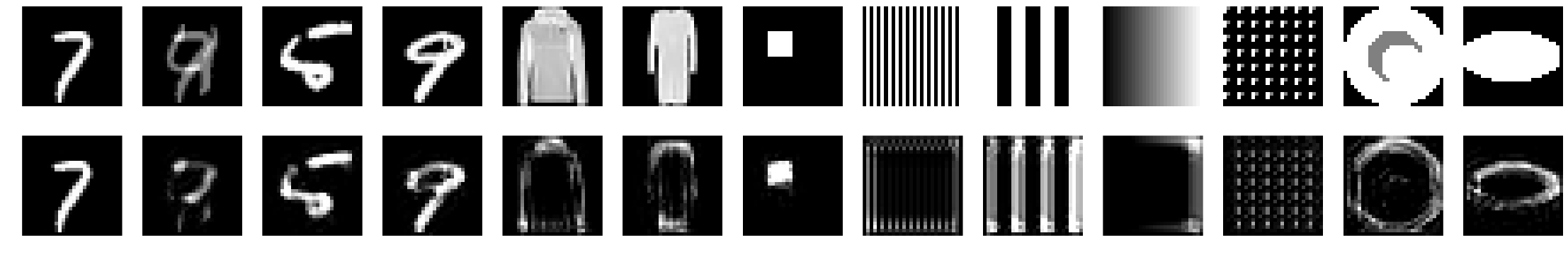}\end{overpic}
\begin{overpic}[width=\linewidth,clip,trim={0 0.8cm 0 5.6cm}]{figs/conv-chnn/chnn3}\put(-3,2){\scriptsize L5}\end{overpic}
\begin{overpic}[width=\linewidth,clip,trim={0 0.8cm 0 5.6cm}]{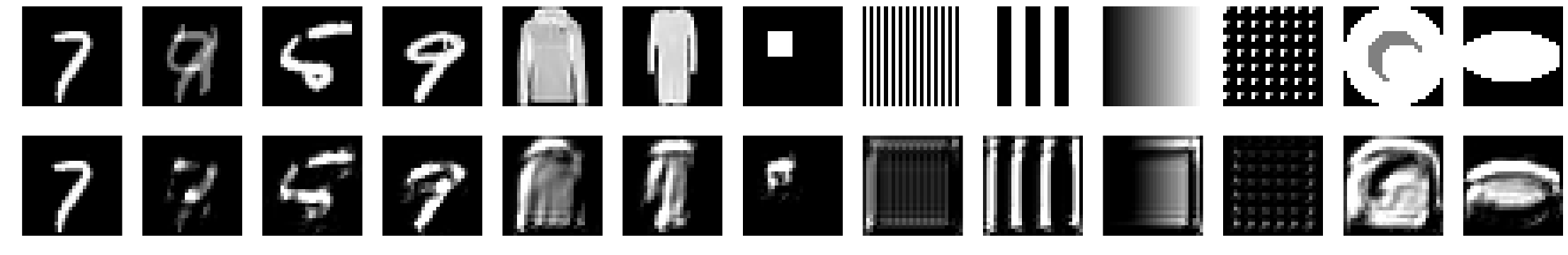}\put(-3,2){\scriptsize L6}\end{overpic}
\begin{overpic}[width=\linewidth,clip,trim={0 0.8cm 0 5.6cm}]{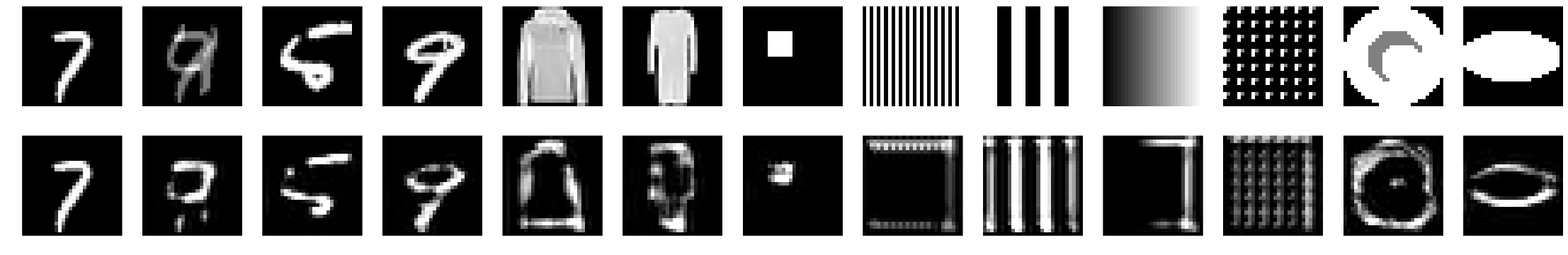}\put(-3,2){\scriptsize L7}\end{overpic}
\begin{overpic}[width=\linewidth,clip,trim={0 0.8cm 0 5.6cm}]{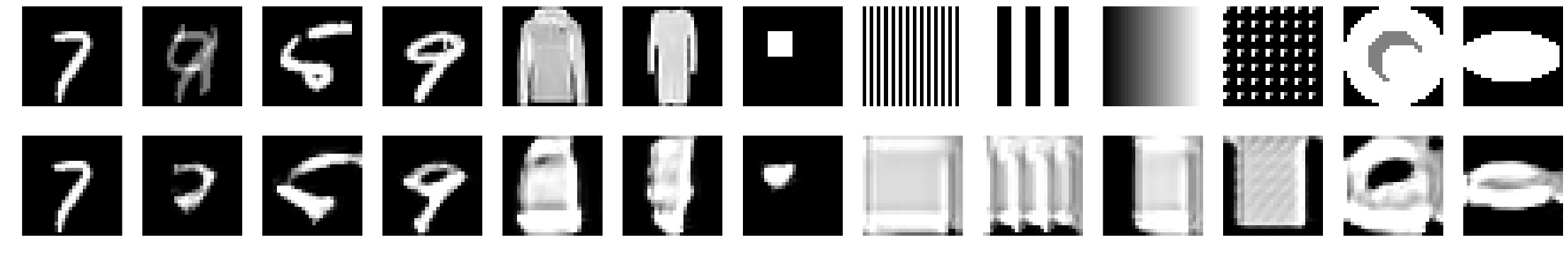}\put(-3,2){\scriptsize L8}\end{overpic}
  \end{minipage}
  \caption{\textbf{Visualization predictions from CNNs with 3 convolution channels and with various number of layers  (numbers on the left).} The first row is the inputs, and the remaining rows illustrate the network predictions.}
  \label{fig:app-conv-channel}
\end{figure}

\begin{figure}
  \begin{subfigure}{.49\linewidth}
  \includegraphics[width=\linewidth]{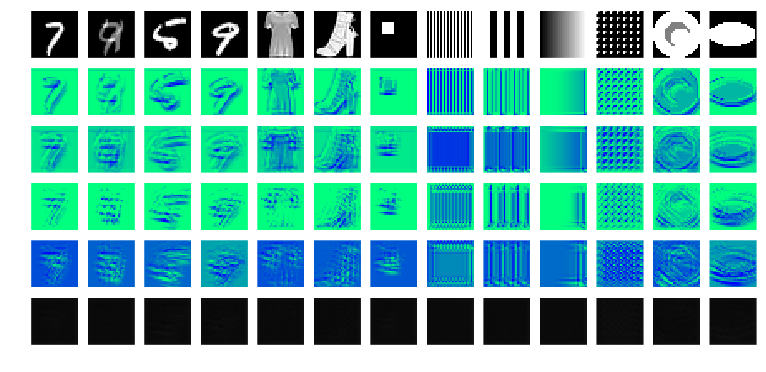}
  \caption{3 channels, random init}
  \end{subfigure}
  \begin{subfigure}{.49\linewidth}
  \includegraphics[width=\linewidth]{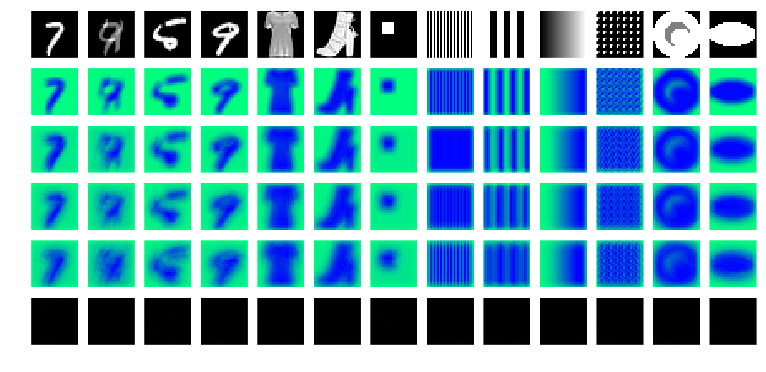}
  \caption{128 channels, random init}
  \end{subfigure}
  \caption{\textbf{Visualizing the randomly initialized models to compare two 5-layer CNNs with 3 convolution channels per layer and 128 convolution channels per layer, respectively.} The subfigures visualize the predictions of intermediate layers of the two network at random initialization. The multi-channel intermediate layers are visualized as the top singular vectors.}
  \label{fig:app-conv-channel-init}
\end{figure}

\paragraph{Convolution channel depths}
Figure~\ref{fig:conv-channel} in Section~\ref{sec:conv-other-factors} shows that the learned identity function is severely corrupted when only 3 channels are used in the 5-layer CNNs.
Figure~\ref{fig:app-eooi-heatmap-conv-chnn} presents the correlation to the constant and the identity function when different numbers of convolution channels are used. The heatmap is consistent with the visualizations, showing that the 5-layer CNN fails to approximate the identity function when only three channels are used in each convolution layer.

Furthermore, Figure~\ref{fig:app-conv-channel} visualize the predictions of trained 3-channel CNNs with various depths. The 3-channel CNNs beyond 8 layers fail to converge during training. The 5-layer and the 7-layer CNNs implement functions biased towards edge-detecting or countour-finding. But the 6-layer and the 8-layer CNNs demonstrate very different biases. The potential reason is that with only a few channels, the random initialization does not have enough randomness to smooth out ``unlucky'' bad cases. Therefore, the networks have higher chance to converge to various corner cases. Figure~\ref{fig:app-conv-channel-init} and Figure~\ref{fig:app-conv-channel-trained} compare the random initialization with the converged network for a 3-channel CNN and a 128-channel CNN. From the visualizations of the intermediate layers, the 128-channel CNN already behave more smoothly than the 3-channel CNN at initialization.

\begin{figure}
  \begin{subfigure}{.49\linewidth}
  \includegraphics[width=\linewidth]{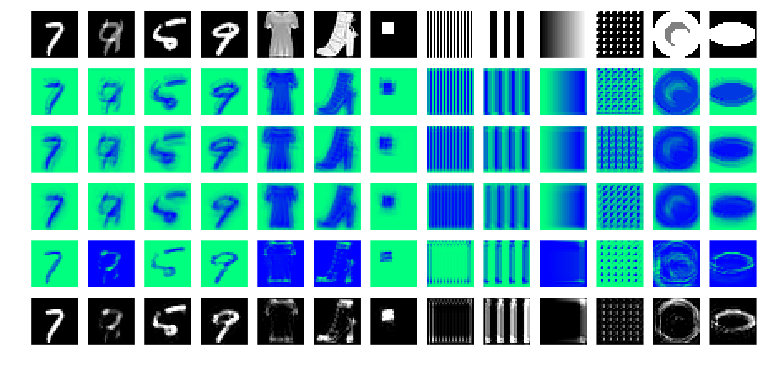}
  \caption{3 channels, after training}
  \end{subfigure}
  \begin{subfigure}{.49\linewidth}
  \includegraphics[width=\linewidth]{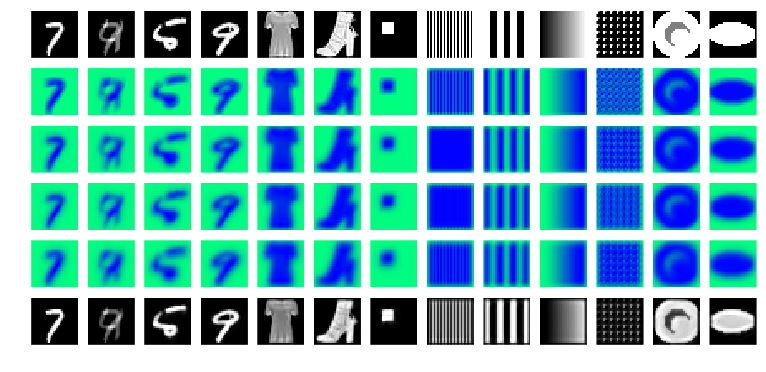}
  \caption{128 channels, after training}
  \end{subfigure}
  \caption{\textbf{Comparing two 5-layer CNNs with 3 convolution channels per layer and 128 convolution channels per layer, respectively.} Layout is similar to Figure~\ref{fig:app-conv-channel-init}.}
  \label{fig:app-conv-channel-trained}
\end{figure}

\paragraph{Initialization schemes}
In transfer learning, it is well known that initializing with pre-trained network weights could affect the inductive bias of trained models. On the other hand, different \emph{random} initialization schemes are mainly proposed to help with optimization by maintaining information flow or norms of representations at different depths. It turns out that they also strongly affect the inductive bias of the trained models. Figure~\ref{fig:app-conv-inits} visualizes the different inductive biases. Let $f_i$, $f_o$ be the \emph{fan in} and \emph{fan out} of the layer being initialized. We tested the following commonly used initialization schemes: 1) default: $\mathcal{N}(0,\sigma^2=1\big/(f_if_o))$; 2) Xavier (a.k.a. Glorot) init \citep{glorot2010understanding}: $\mathcal{N}(0,\sigma^2=2\big/(f_i+f_o))$; 3) Kaiming init \citep{he2015delving}: $\mathcal{N}(0,\sigma^2=2\big/f_i)$; 4) Orthogonal init \citep{saxe2013exact}. Variations with uniform distributions instead of Gaussian distributions are also evaluated for Xavier and Kaiming inits. All initialization schemes bias toward the identity function for shallow networks. But Kaiming init produces heavy artifacts on test predictions. For the bias towards the constant function in deep networks, Xavier init behaves similarly to the default init scheme, though more layers are needed to learn a visually good identity function. On the other hand, the corresponding results from the Kaiming init is less interpretable.

\begin{figure}\centering
  \begin{minipage}[t]{.49\linewidth}
  \begin{overpic}[width=\linewidth,clip,trim={0 6.6cm 0 0}]{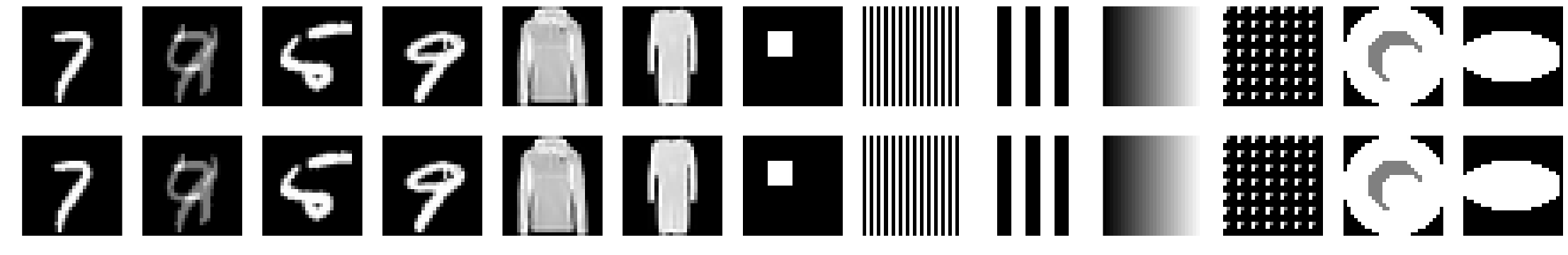}\end{overpic}
  \begin{overpic}[width=\linewidth,clip,trim={0 0.8cm 0 5.6cm}]{figs/other-inits/conv-L1-Ch128-init_default}\put(-3,2){\scriptsize D}\end{overpic}
  \begin{overpic}[width=\linewidth,clip,trim={0 0.8cm 0 5.6cm}]{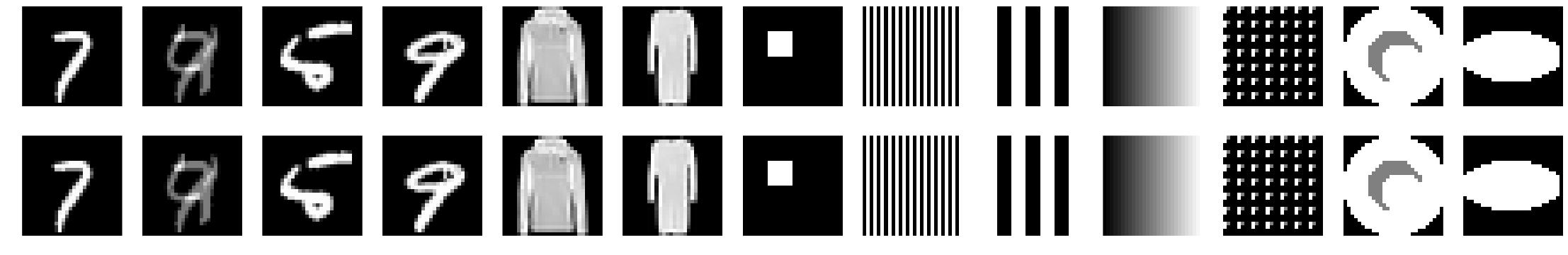}\put(-3,2){\scriptsize Xn}\end{overpic}
  \begin{overpic}[width=\linewidth,clip,trim={0 0.8cm 0 5.6cm}]{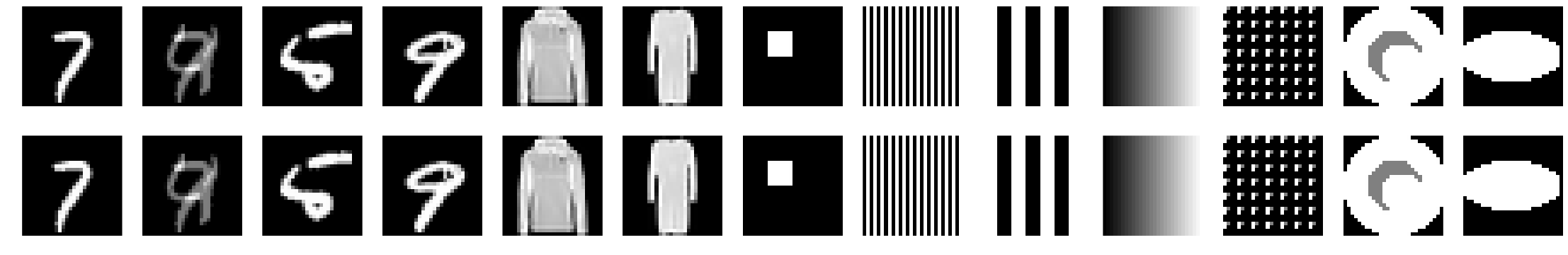}\put(-3,2){\scriptsize Xu}\end{overpic}
  \begin{overpic}[width=\linewidth,clip,trim={0 0.8cm 0 5.6cm}]{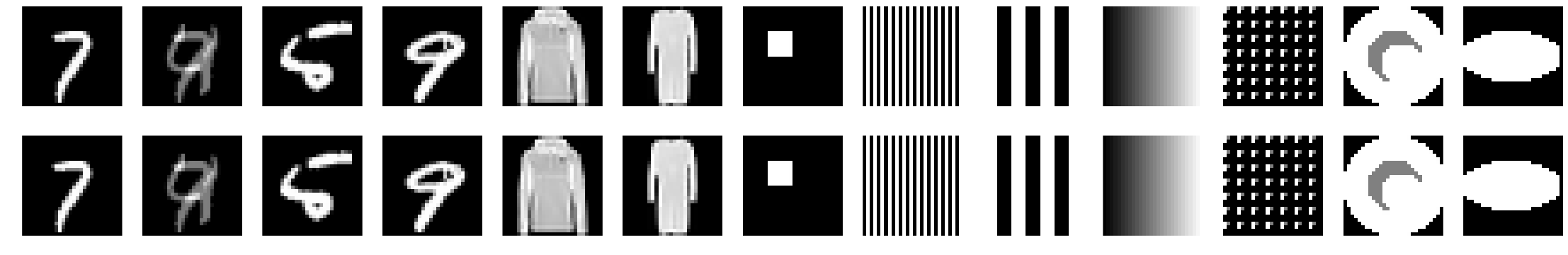}\put(-3,2){\scriptsize Kn}\end{overpic}
  \begin{overpic}[width=\linewidth,clip,trim={0 0.8cm 0 5.6cm}]{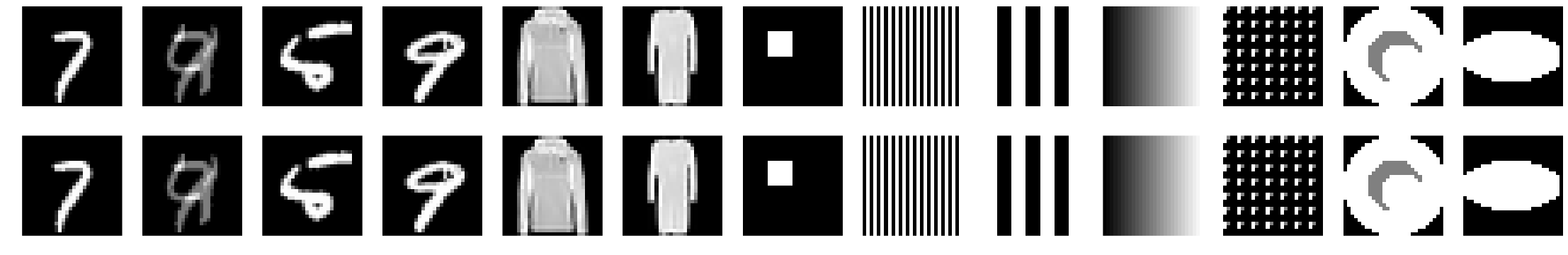}\put(-3,2){\scriptsize Ku}\end{overpic}
  \begin{overpic}[width=\linewidth,clip,trim={0 0.8cm 0 5.6cm}]{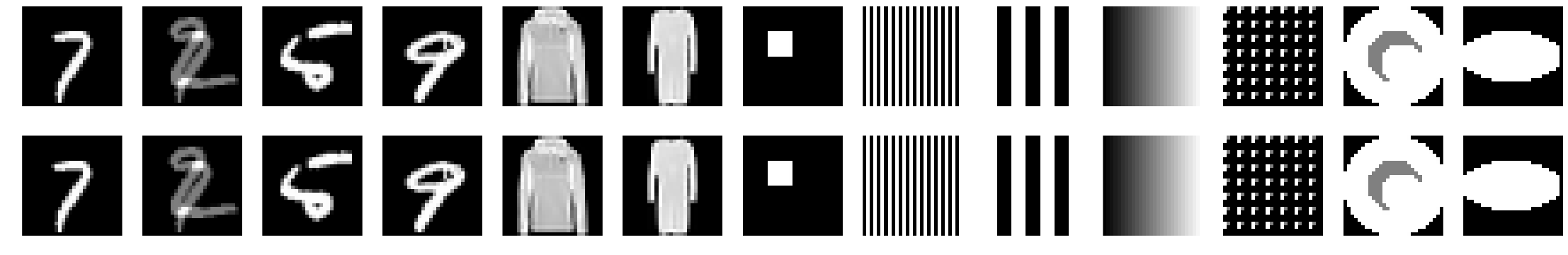}\put(-3,2){\scriptsize Or}\end{overpic}
  \centering 1-layer CNNs
  \end{minipage}
  \begin{minipage}[t]{.49\linewidth}
  \begin{overpic}[width=\linewidth,clip,trim={0 6.6cm 0 0}]{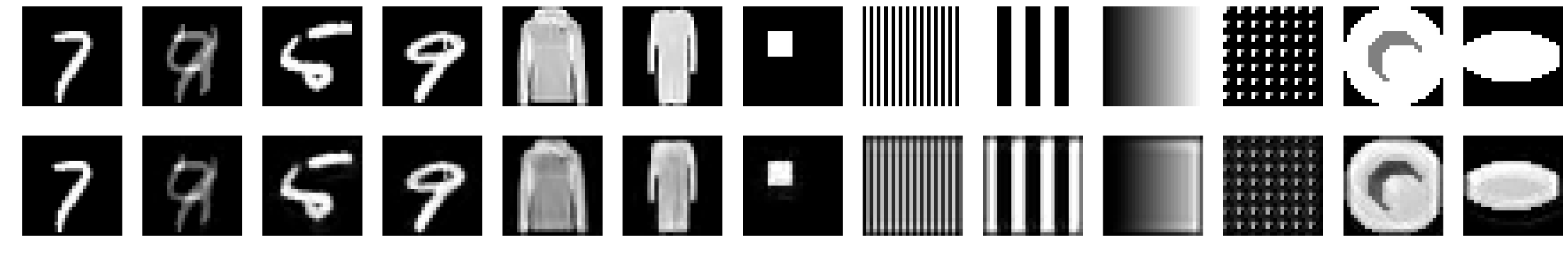}\end{overpic}
  \begin{overpic}[width=\linewidth,clip,trim={0 0.8cm 0 5.6cm}]{figs/other-inits/conv-L3-Ch128-init_default}\end{overpic}
  \begin{overpic}[width=\linewidth,clip,trim={0 0.8cm 0 5.6cm}]{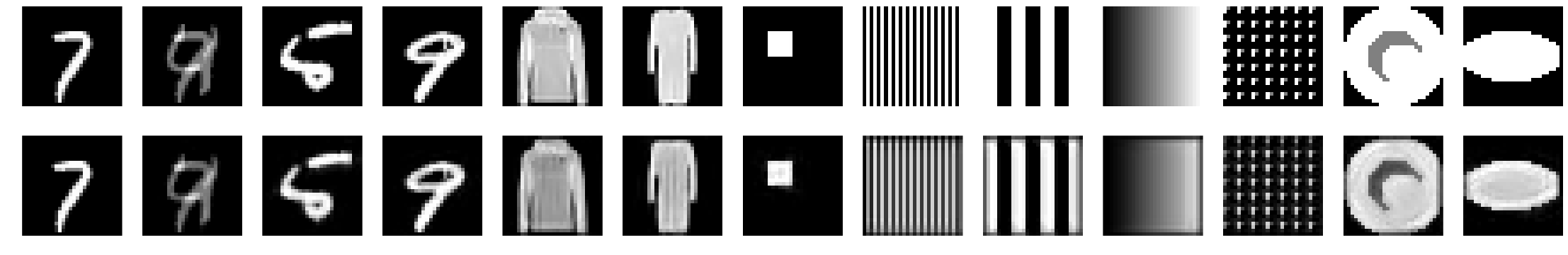}\end{overpic}
  \begin{overpic}[width=\linewidth,clip,trim={0 0.8cm 0 5.6cm}]{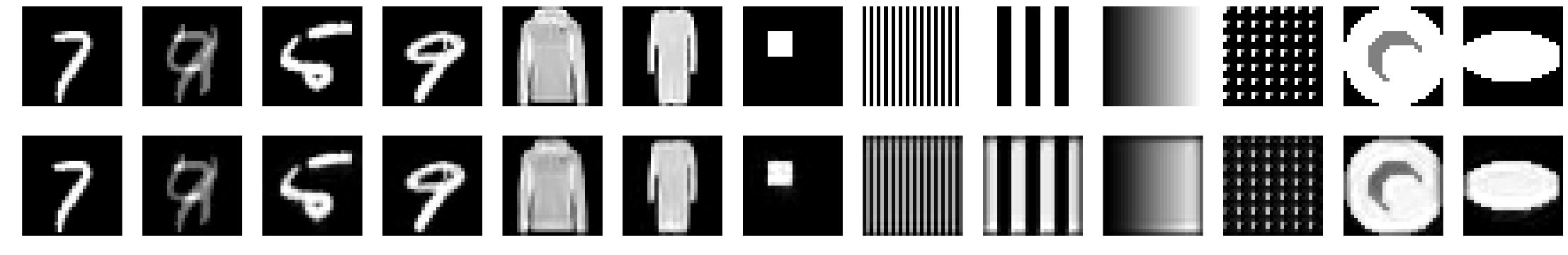}\end{overpic}
  \begin{overpic}[width=\linewidth,clip,trim={0 0.8cm 0 5.6cm}]{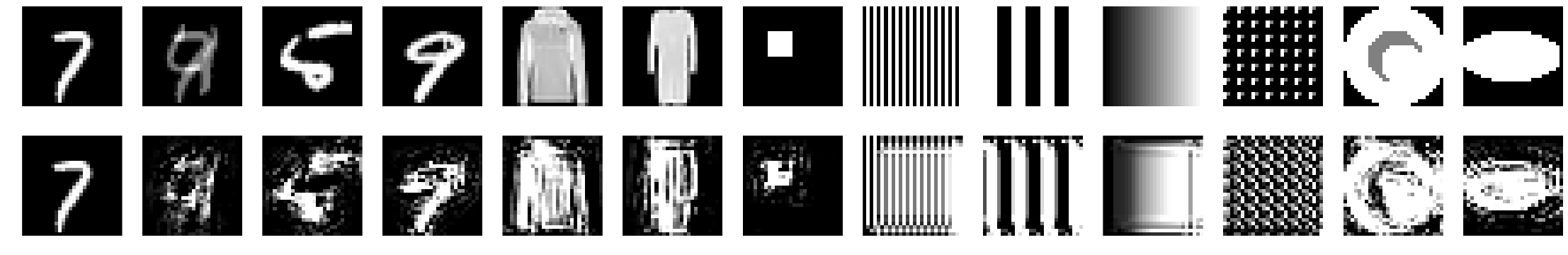}\end{overpic}
  \begin{overpic}[width=\linewidth,clip,trim={0 0.8cm 0 5.6cm}]{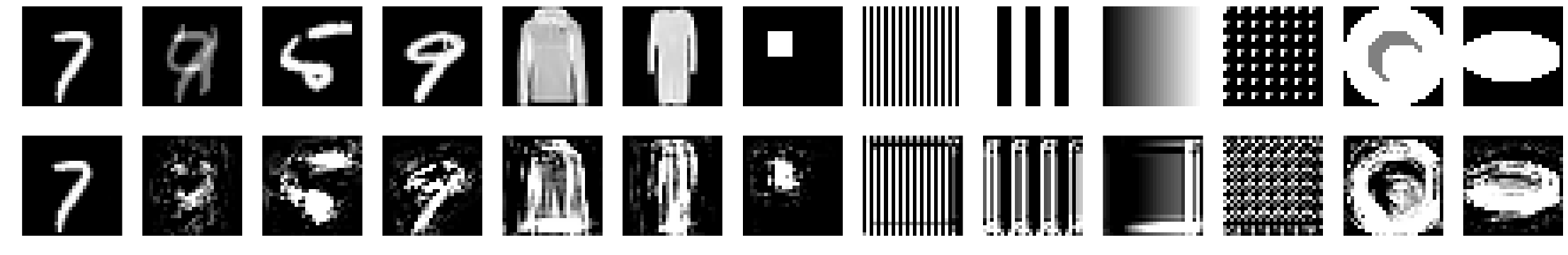}\end{overpic}
  \begin{overpic}[width=\linewidth,clip,trim={0 0.8cm 0 5.6cm}]{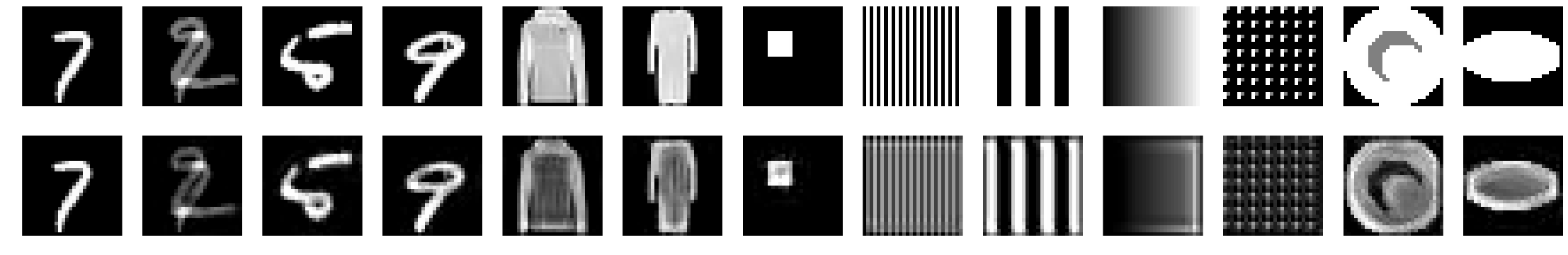}\end{overpic}
  \centering 3-layer CNNs
  \end{minipage}\vspace{9pt}
  \begin{minipage}[t]{.49\linewidth}
  \begin{overpic}[width=\linewidth,clip,trim={0 6.6cm 0 0}]{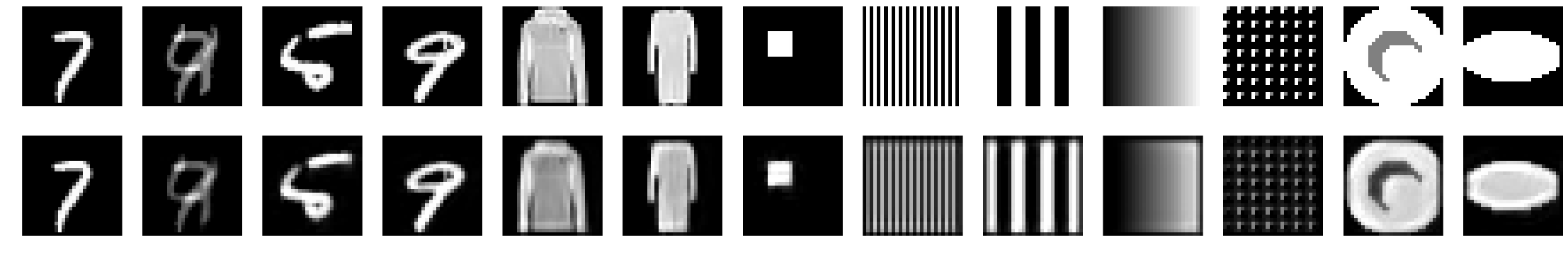}\end{overpic}
  \begin{overpic}[width=\linewidth,clip,trim={0 0.8cm 0 5.6cm}]{figs/other-inits/conv-L5-Ch128-init_default}\put(-3,2){\scriptsize D}\end{overpic}
  \begin{overpic}[width=\linewidth,clip,trim={0 0.8cm 0 5.6cm}]{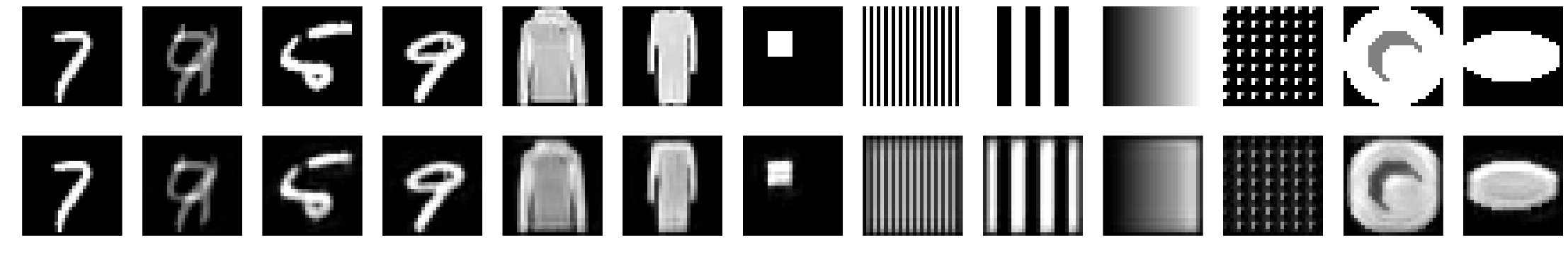}\put(-3,2){\scriptsize Xn}\end{overpic}
  \begin{overpic}[width=\linewidth,clip,trim={0 0.8cm 0 5.6cm}]{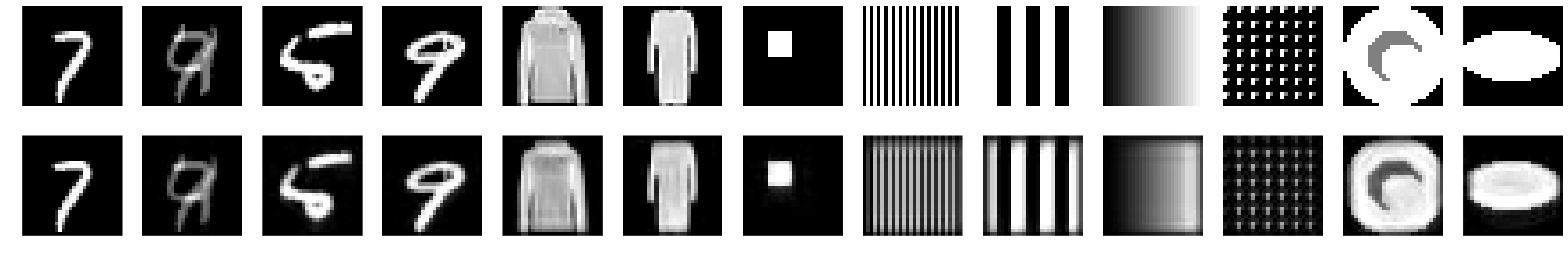}\put(-3,2){\scriptsize Xu}\end{overpic}
  \begin{overpic}[width=\linewidth,clip,trim={0 0.8cm 0 5.6cm}]{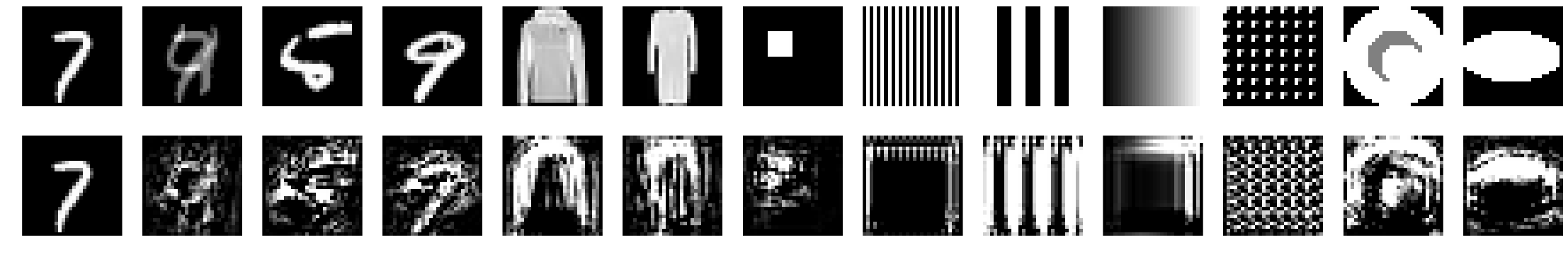}\put(-3,2){\scriptsize Kn}\end{overpic}
  \begin{overpic}[width=\linewidth,clip,trim={0 0.8cm 0 5.6cm}]{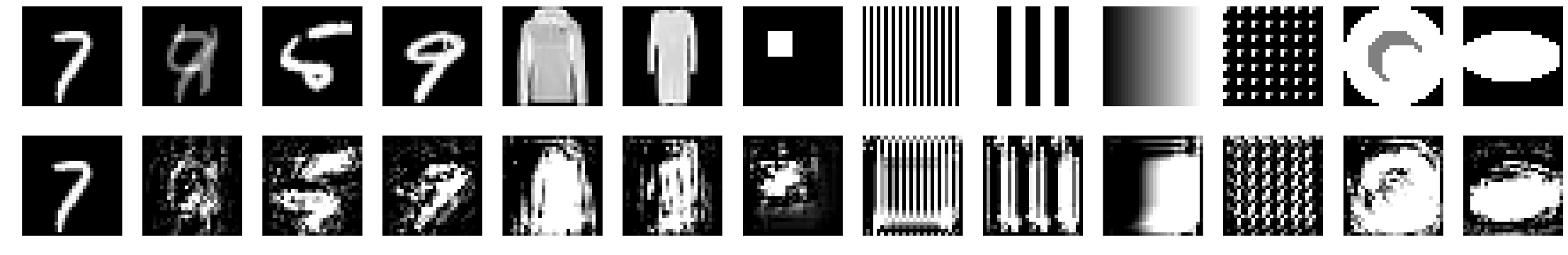}\put(-3,2){\scriptsize Ku}\end{overpic}
  \begin{overpic}[width=\linewidth,clip,trim={0 0.8cm 0 5.6cm}]{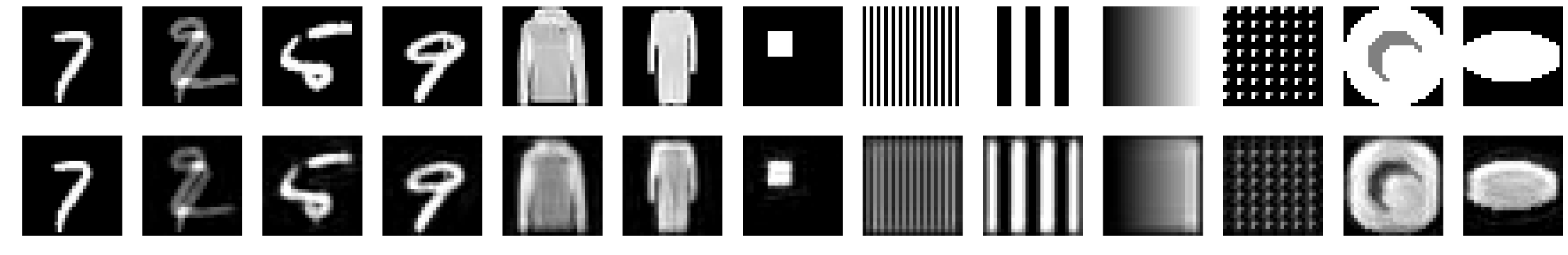}\put(-3,2){\scriptsize Or}\end{overpic}
  \centering 5-layer CNNs
  \end{minipage}
  \begin{minipage}[t]{.49\linewidth}
  \begin{overpic}[width=\linewidth,clip,trim={0 6.6cm 0 0}]{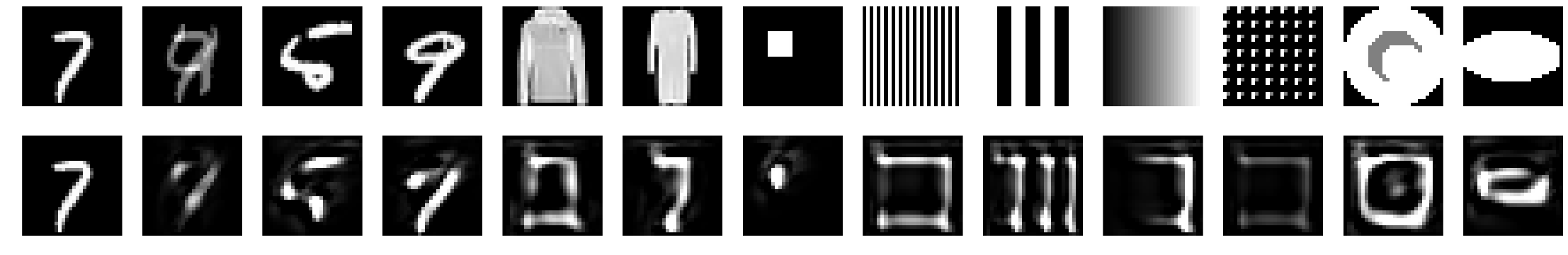}\end{overpic}
  \begin{overpic}[width=\linewidth,clip,trim={0 0.8cm 0 5.6cm}]{figs/other-inits/conv-L14-Ch128-init_default}\end{overpic}
  \begin{overpic}[width=\linewidth,clip,trim={0 0.8cm 0 5.6cm}]{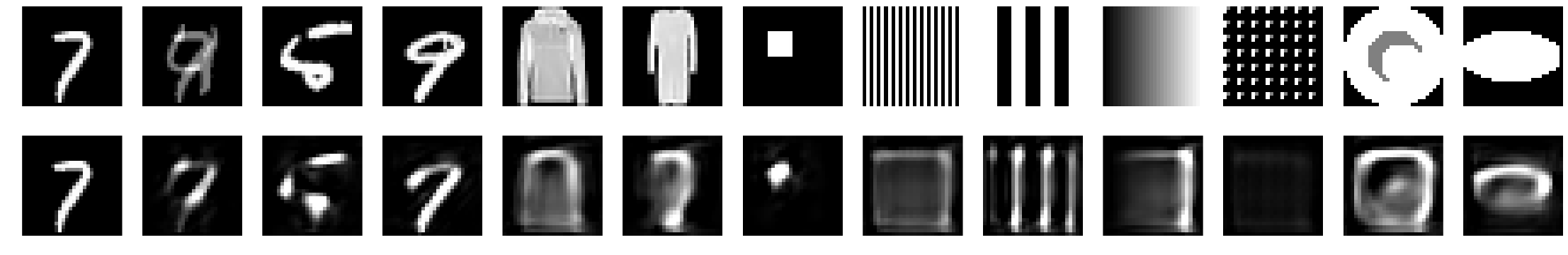}\end{overpic}
  \begin{overpic}[width=\linewidth,clip,trim={0 0.8cm 0 5.6cm}]{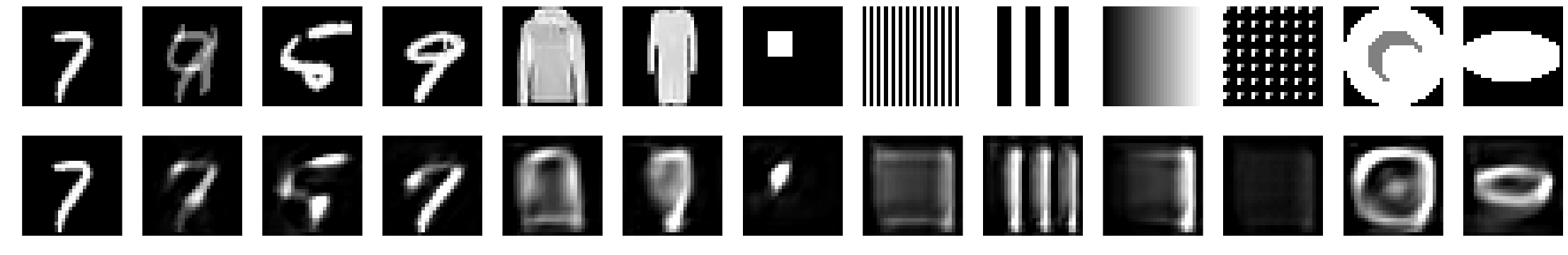}\end{overpic}
  \begin{overpic}[width=\linewidth,clip,trim={0 0.8cm 0 5.6cm}]{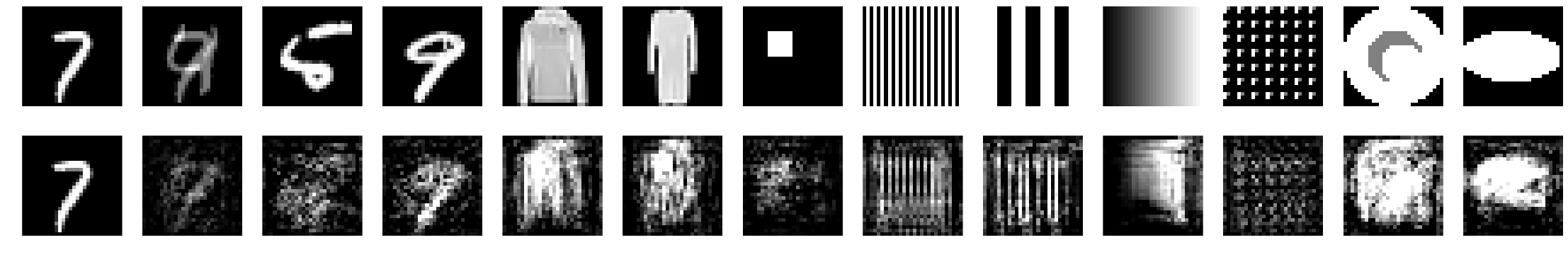}\end{overpic}
  \begin{overpic}[width=\linewidth,clip,trim={0 0.8cm 0 5.6cm}]{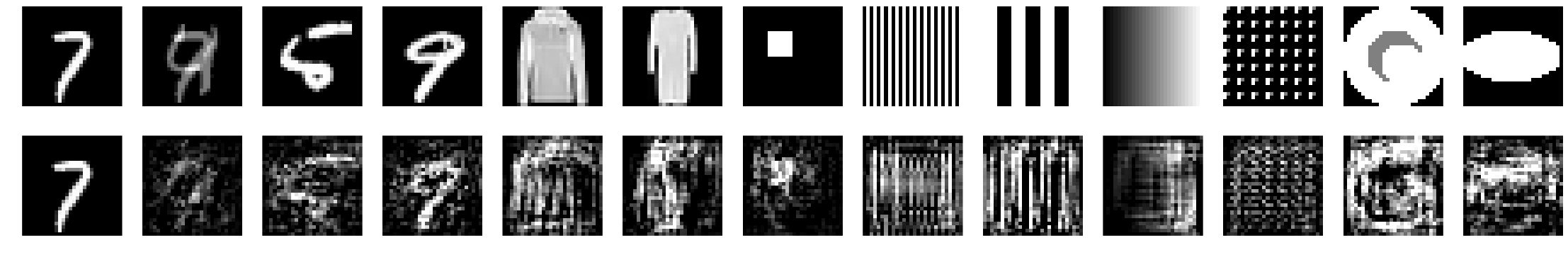}\end{overpic}
  \begin{overpic}[width=\linewidth,clip,trim={0 0.8cm 0 5.6cm}]{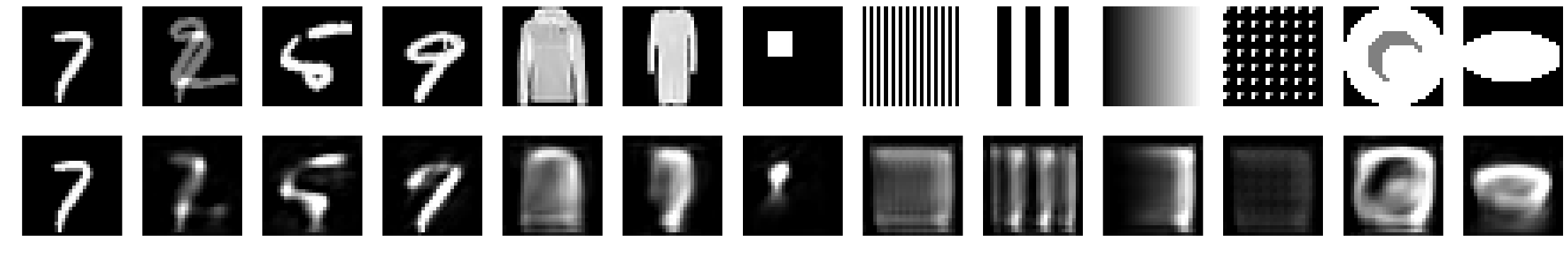}\end{overpic}
  \centering 14-layer CNNs
  \end{minipage}\vspace{9pt}
  \begin{minipage}[t]{.49\linewidth}
  \begin{overpic}[width=\linewidth,clip,trim={0 6.6cm 0 0}]{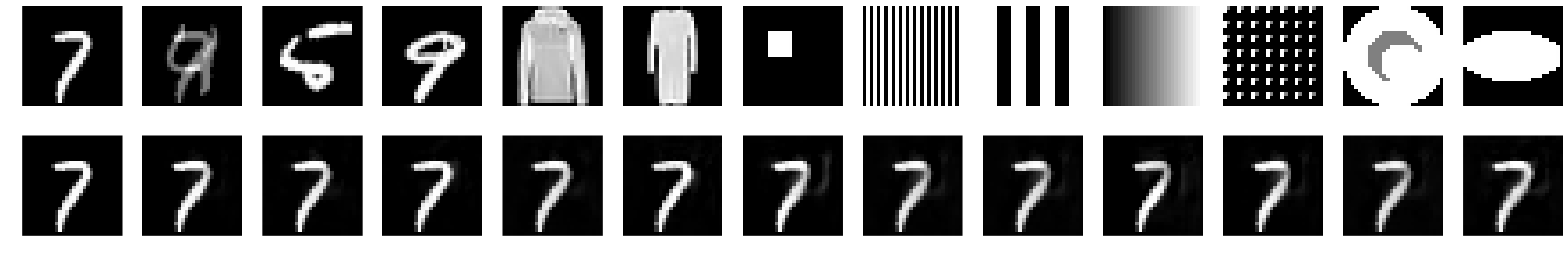}\end{overpic}
  \begin{overpic}[width=\linewidth,clip,trim={0 0.8cm 0 5.6cm}]{figs/other-inits/conv-L20-Ch128-init_default}\put(-3,2){\scriptsize D}\end{overpic}
  \begin{overpic}[width=\linewidth,clip,trim={0 0.8cm 0 5.6cm}]{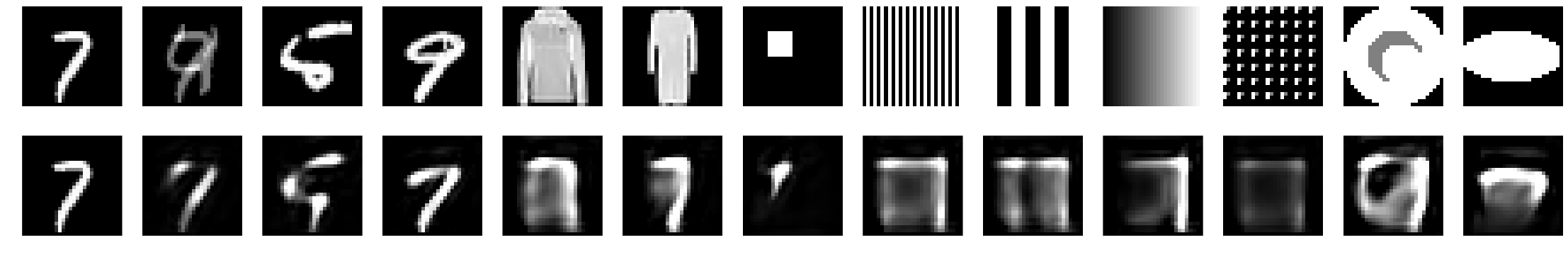}\put(-3,2){\scriptsize Xn}\end{overpic}
  \begin{overpic}[width=\linewidth,clip,trim={0 0.8cm 0 5.6cm}]{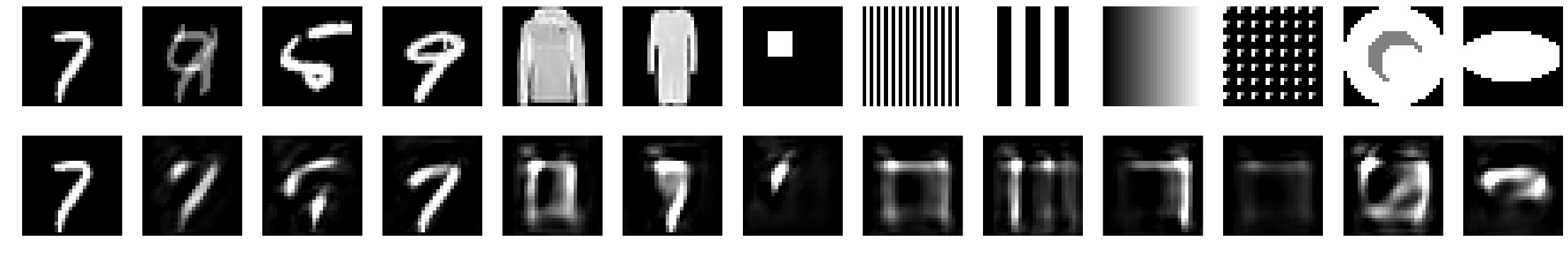}\put(-3,2){\scriptsize Xu}\end{overpic}
  \begin{overpic}[width=\linewidth,clip,trim={0 0.8cm 0 5.6cm}]{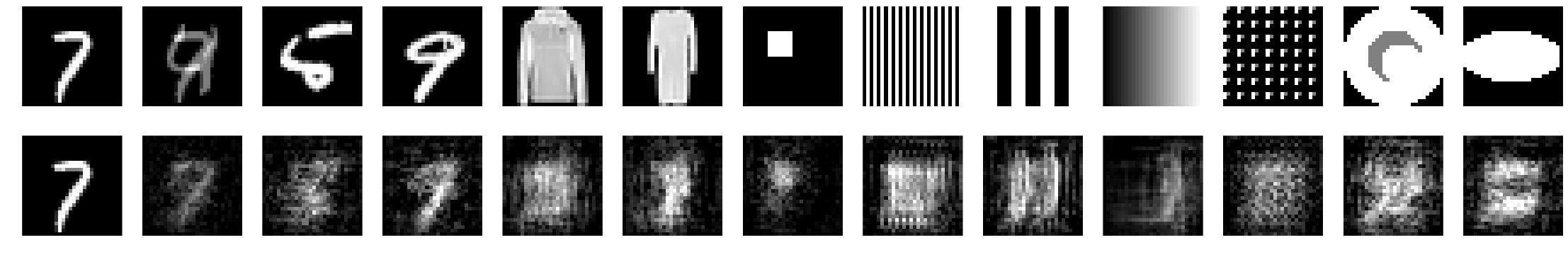}\put(-3,2){\scriptsize Kn}\end{overpic}
  \begin{overpic}[width=\linewidth,clip,trim={0 0.8cm 0 5.6cm}]{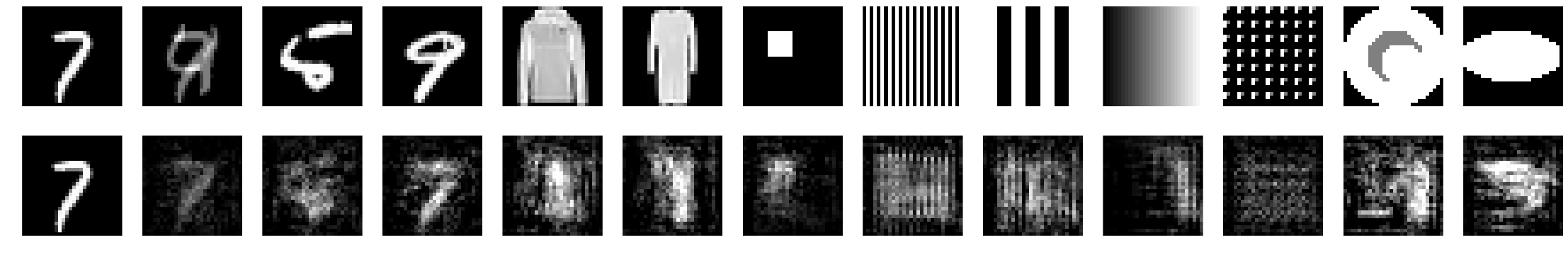}\put(-3,2){\scriptsize Ku}\end{overpic}
  \begin{overpic}[width=\linewidth,clip,trim={0 0.8cm 0 5.6cm}]{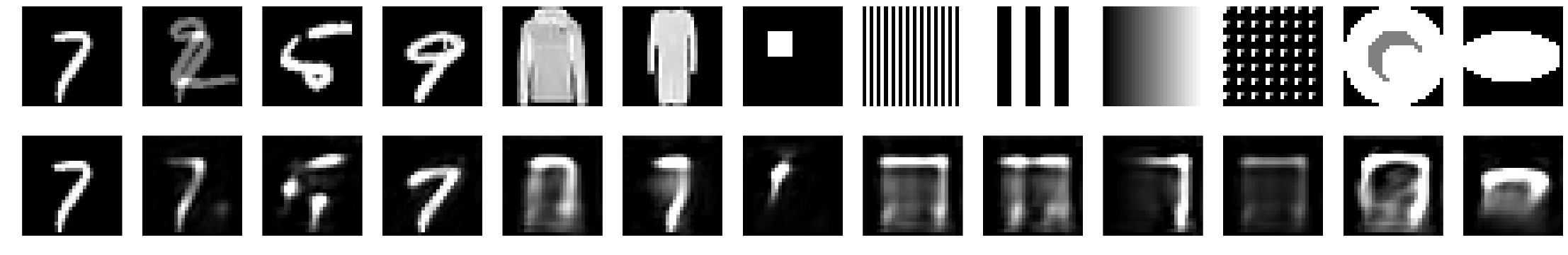}\put(-3,2){\scriptsize Or}\end{overpic}
  \centering 20-layer CNNs
  \end{minipage}
  \begin{minipage}[t]{.49\linewidth}
  \begin{overpic}[width=\linewidth,clip,trim={0 6.6cm 0 0}]{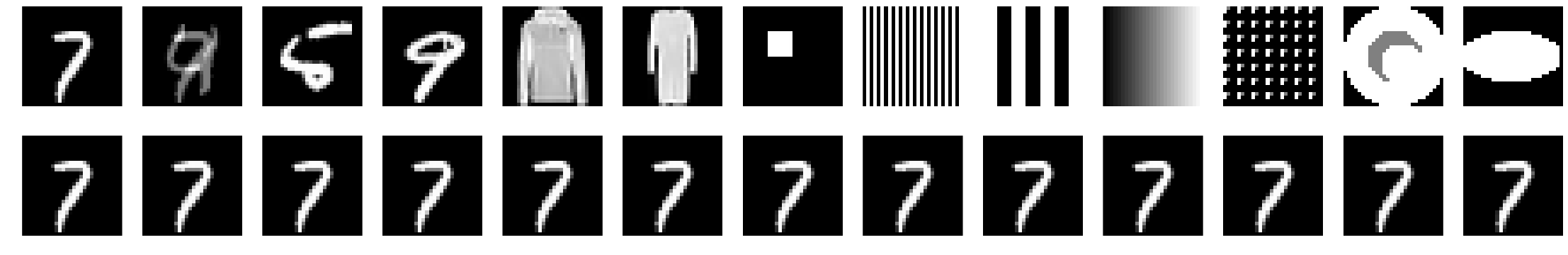}\end{overpic}
  \begin{overpic}[width=\linewidth,clip,trim={0 0.8cm 0 5.6cm}]{figs/other-inits/conv-L24-Ch128-init_default}\end{overpic}
  \begin{overpic}[width=\linewidth,clip,trim={0 0.8cm 0 5.6cm}]{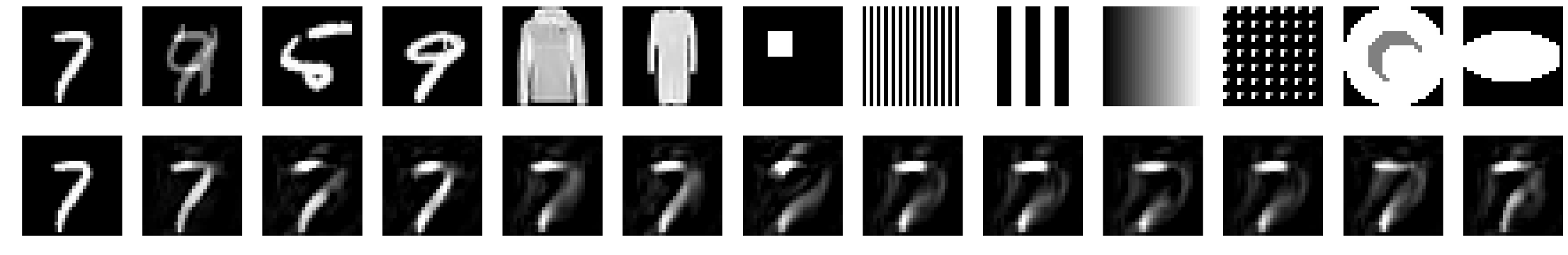}\end{overpic}
  \begin{overpic}[width=\linewidth,clip,trim={0 0.8cm 0 5.6cm}]{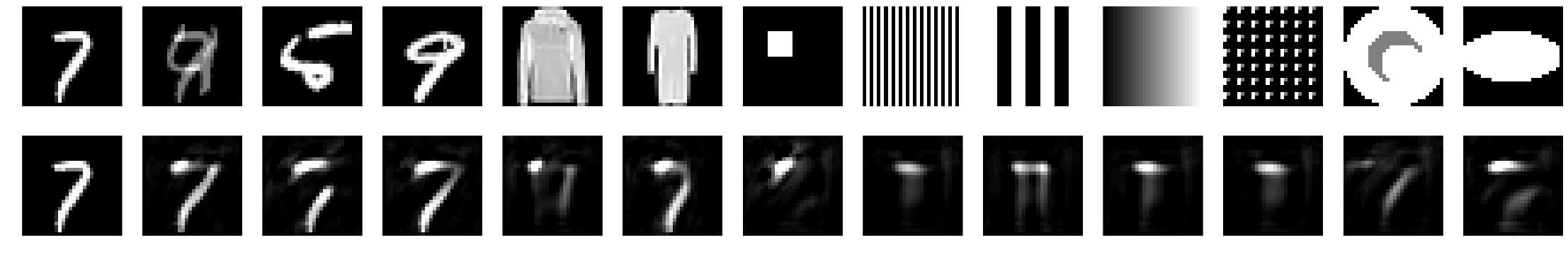}\end{overpic}
  \begin{overpic}[width=\linewidth,clip,trim={0 0.8cm 0 5.6cm}]{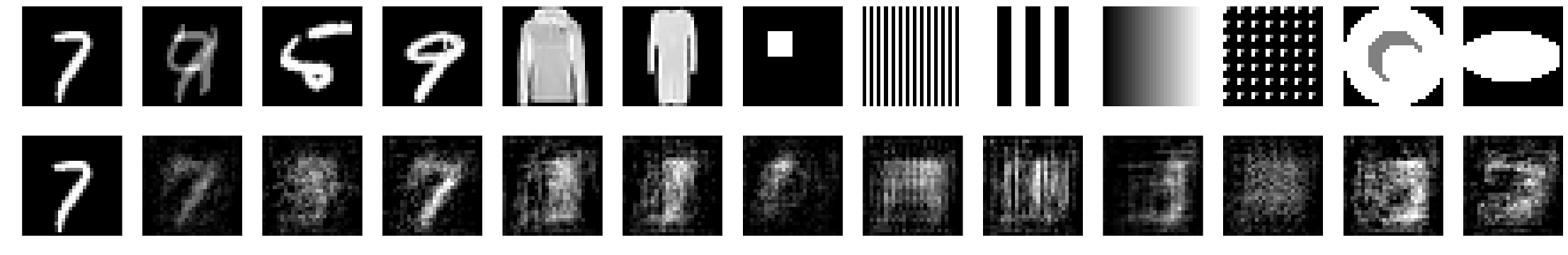}\end{overpic}
  \begin{overpic}[width=\linewidth,clip,trim={0 0.8cm 0 5.6cm}]{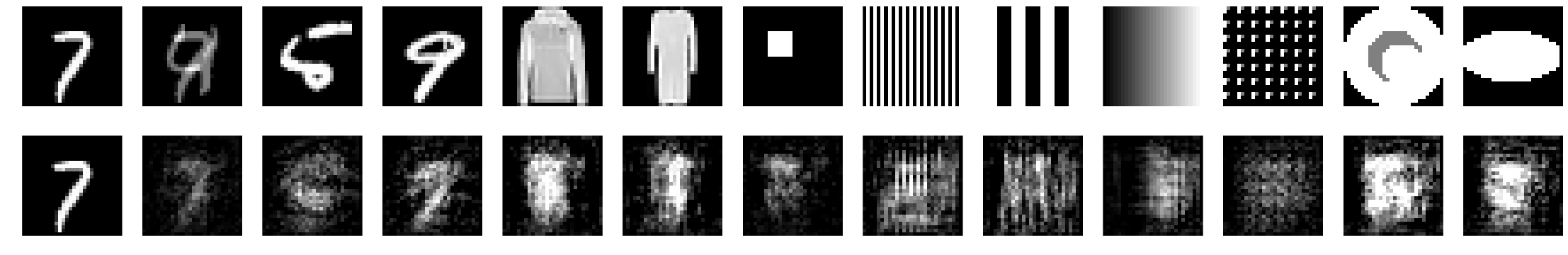}\end{overpic}
  \begin{overpic}[width=\linewidth,clip,trim={0 0.8cm 0 5.6cm}]{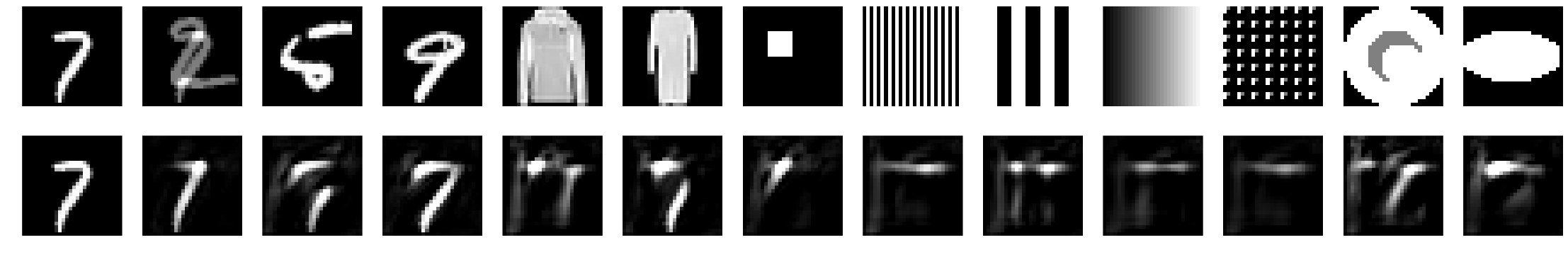}\end{overpic}
  \centering 24-layer CNNs
  \end{minipage}
  \caption{\textbf{Visualization of the predictions from CNNs trained with the D) default, Xn) Xavier normal, Xu) Xavier uniform, Kn) Kaiming normal, Ku) Kaiming uniform, and Or) orthogonal initialization schemes.} The first row shows the inputs, and the remaining rows shows the predictions from each trained networks.}
  \label{fig:app-conv-inits}
\end{figure}

\paragraph{Optimizers}
First order stochastic optimizers are dorminately used in deep learning due to the huge model sizes and dataset sizes. To improve convergence speed, various adaptive methods are introduced \citep{duchi2011adaptive,kingma2014adam,graves2013generating}. It is known that those methods lead to worse generalization performances in some applications (e.g. \citet{wu2016google}). But they are extremely popular in practice due to the superior convergence speed and easier hyper-parameter tuning than the vanilla SGD. We compare several popular optimizers in our framework, and confirm that different optimizers find different global minimizers, and those minimizers show drastically different inductive biases on test examples, as shown in Figure~\ref{fig:app-conv-optims}. Some optimizer requires a smaller base learning rate to avoid parameter exploding during training. In particular, we use base learning rate 0.001 for Adagrad and Adamax, 0.0001 for Adam and RMSprop. For comparison, we also include results from SGD with those corresponding base learning rates.

\begin{figure}\centering
  \begin{minipage}[t]{.49\linewidth}
  \begin{overpic}[width=\linewidth,clip,trim={0 6.6cm 0 0}]{figs/other-inits/conv-L3-Ch128-init_default}\end{overpic}
  \begin{overpic}[width=\linewidth,clip,trim={0 0.8cm 0 5.6cm}]{figs/other-inits/conv-L3-Ch128-init_default}\put(-2,1){\tiny\rotatebox{90}{$10^{-2}$}}\end{overpic}
  \begin{overpic}[width=\linewidth,clip,trim={0 0.8cm 0 5.6cm}]{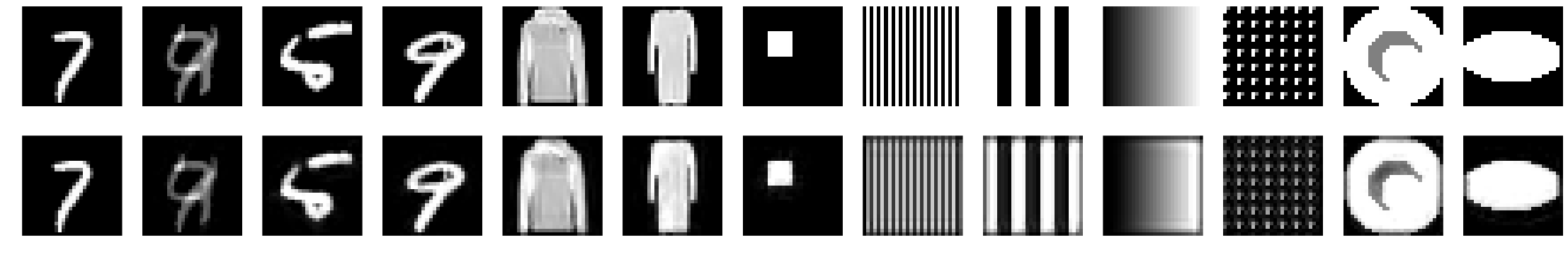}\put(-2,1){\tiny\rotatebox{90}{$10^{-3}$}}\end{overpic}
  \begin{overpic}[width=\linewidth,clip,trim={0 0.8cm 0 5.6cm}]{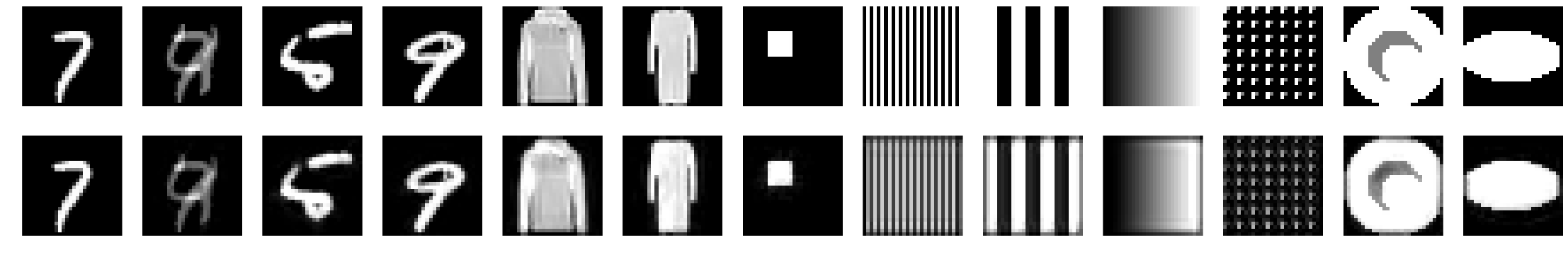}\put(-2,1){\tiny\rotatebox{90}{$10^{-4}$}}\end{overpic}
  \begin{overpic}[width=\linewidth,clip,trim={0 0.8cm 0 5.6cm}]{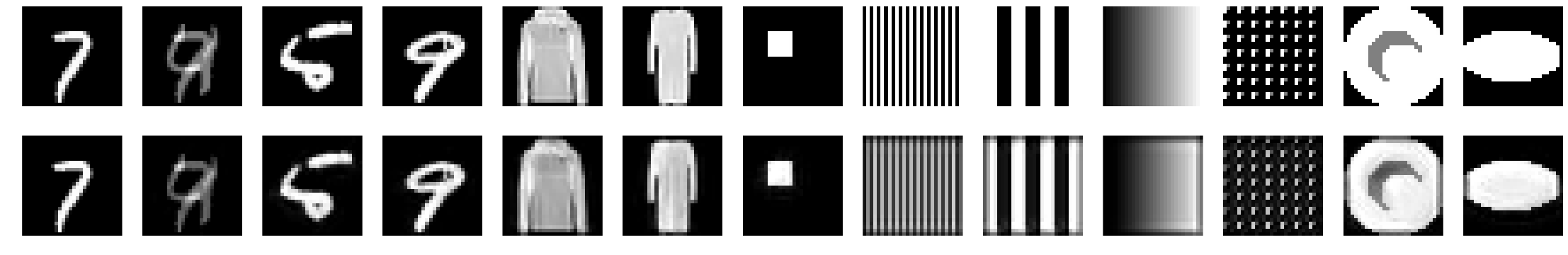}\put(-2,0){\tiny\rotatebox{90}{adag}}\end{overpic}
  \begin{overpic}[width=\linewidth,clip,trim={0 0.8cm 0 5.6cm}]{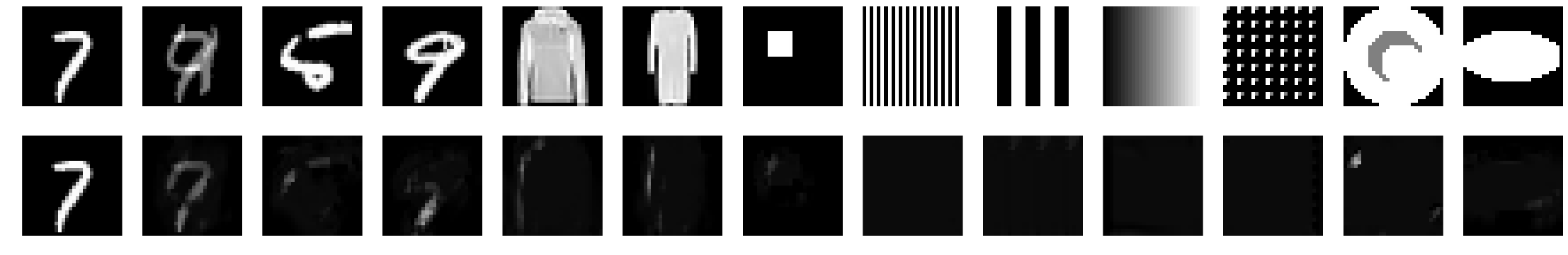}\put(-2,0){\tiny\rotatebox{90}{rmsp}}\end{overpic}
  \begin{overpic}[width=\linewidth,clip,trim={0 0.8cm 0 5.6cm}]{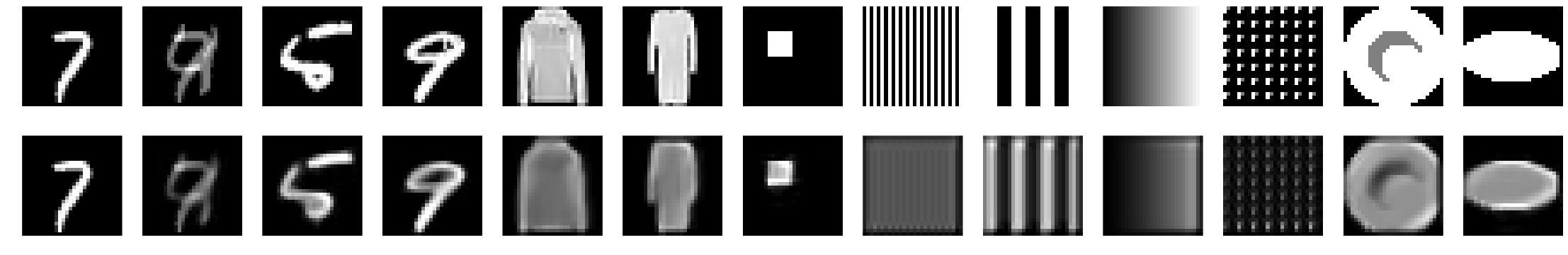}\put(-2,0){\tiny\rotatebox{90}{adam}}\end{overpic}
  \begin{overpic}[width=\linewidth,clip,trim={0 0.8cm 0 5.6cm}]{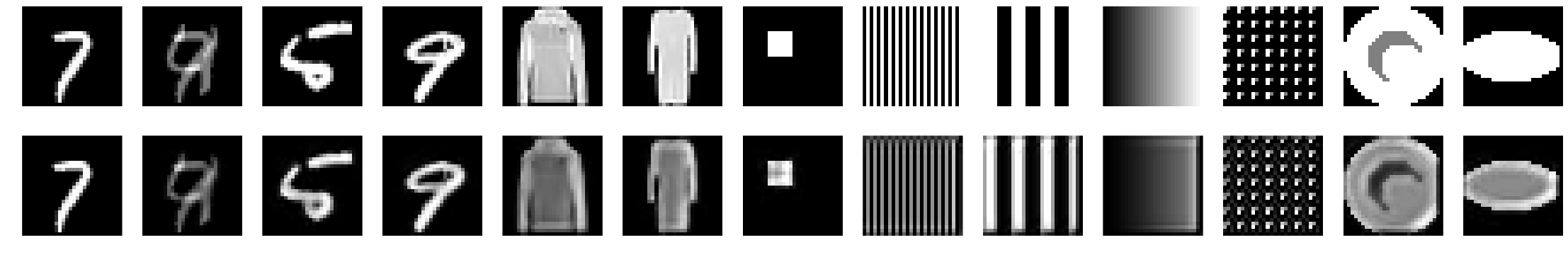}\put(-2,-1){\tiny\rotatebox{90}{adamx}}\end{overpic}
  \centering 3-layer CNNs
  \end{minipage}
  \begin{minipage}[t]{.49\linewidth}
  \begin{overpic}[width=\linewidth,clip,trim={0 6.6cm 0 0}]{figs/other-inits/conv-L5-Ch128-init_default}\end{overpic}
  \begin{overpic}[width=\linewidth,clip,trim={0 0.8cm 0 5.6cm}]{figs/other-inits/conv-L5-Ch128-init_default}\end{overpic}
  \begin{overpic}[width=\linewidth,clip,trim={0 0.8cm 0 5.6cm}]{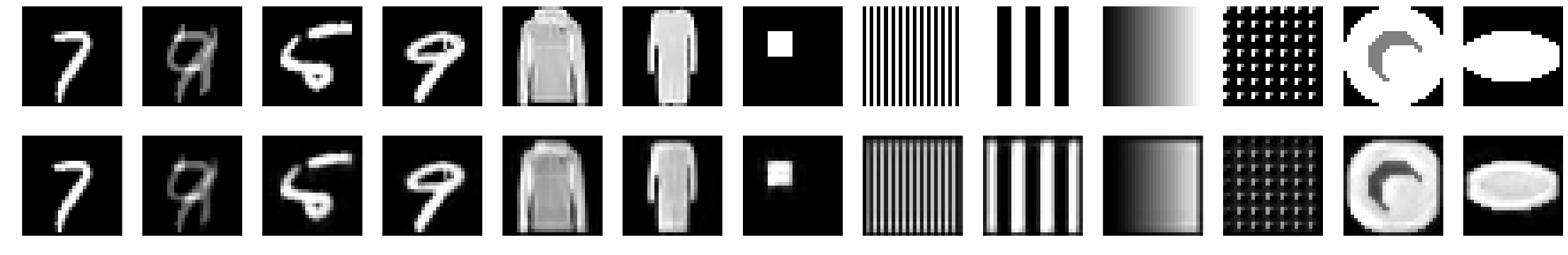}\end{overpic}
  \begin{overpic}[width=\linewidth,clip,trim={0 0.8cm 0 5.6cm}]{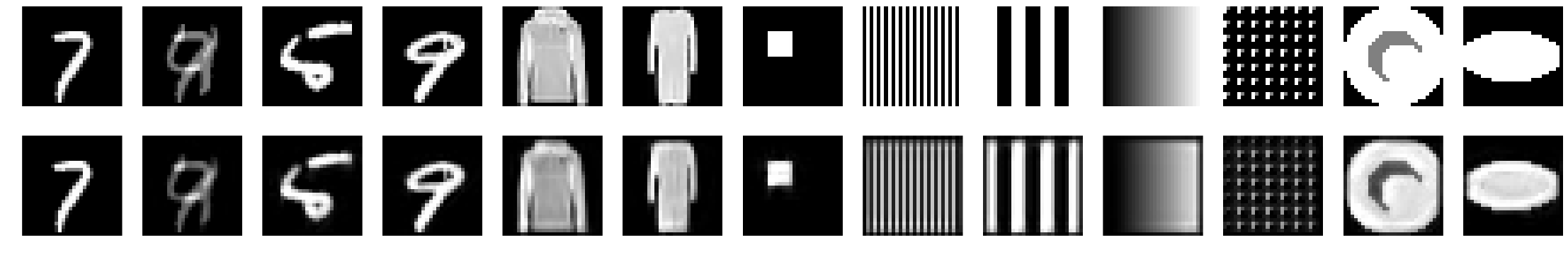}\end{overpic}
  \begin{overpic}[width=\linewidth,clip,trim={0 0.8cm 0 5.6cm}]{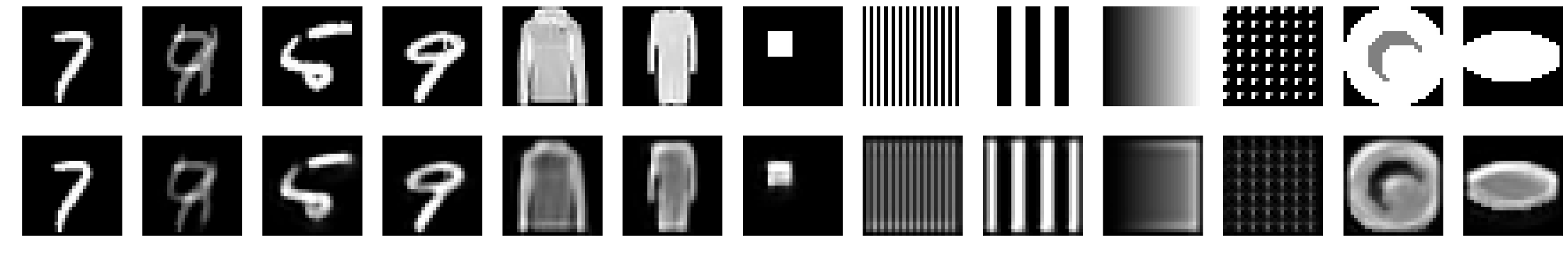}\end{overpic}
  \begin{overpic}[width=\linewidth,clip,trim={0 0.8cm 0 5.6cm}]{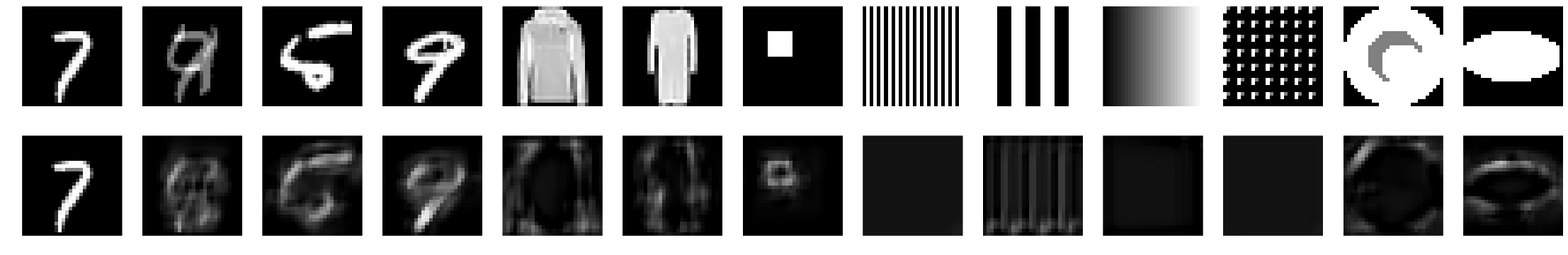}\end{overpic}
  \begin{overpic}[width=\linewidth,clip,trim={0 0.8cm 0 5.6cm}]{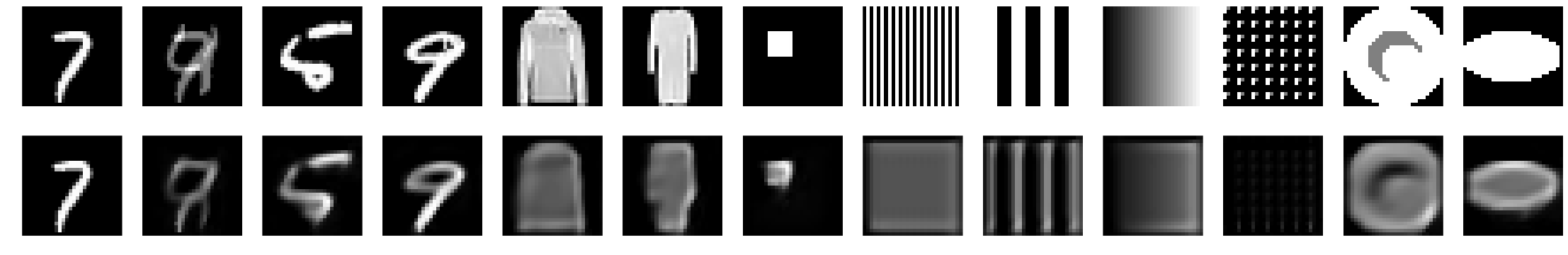}\end{overpic}
  \begin{overpic}[width=\linewidth,clip,trim={0 0.8cm 0 5.6cm}]{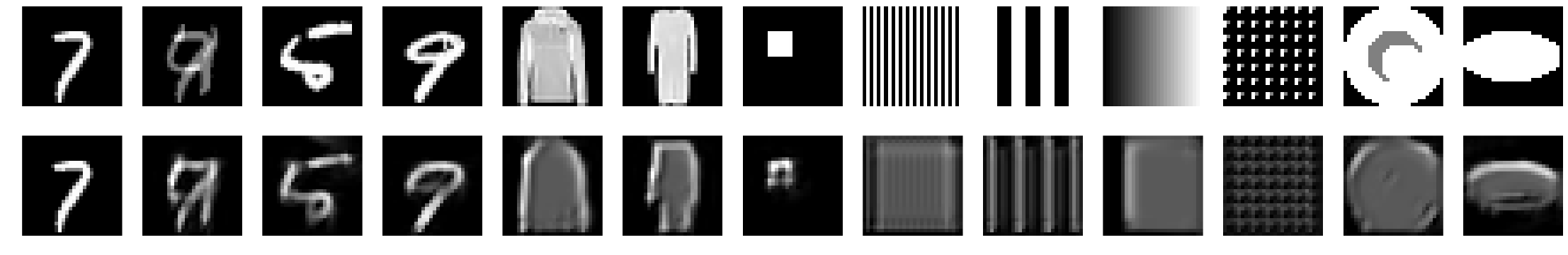}\end{overpic}
  \centering 5-layer CNNs
  \end{minipage} \vskip9pt
  \begin{minipage}[t]{.49\linewidth}
  \begin{overpic}[width=\linewidth,clip,trim={0 6.6cm 0 0}]{figs/other-inits/conv-L14-Ch128-init_default}\end{overpic}
  \begin{overpic}[width=\linewidth,clip,trim={0 0.8cm 0 5.6cm}]{figs/other-inits/conv-L14-Ch128-init_default}\put(-2,1){\tiny\rotatebox{90}{$10^{-2}$}}\end{overpic}
  \begin{overpic}[width=\linewidth,clip,trim={0 0.8cm 0 5.6cm}]{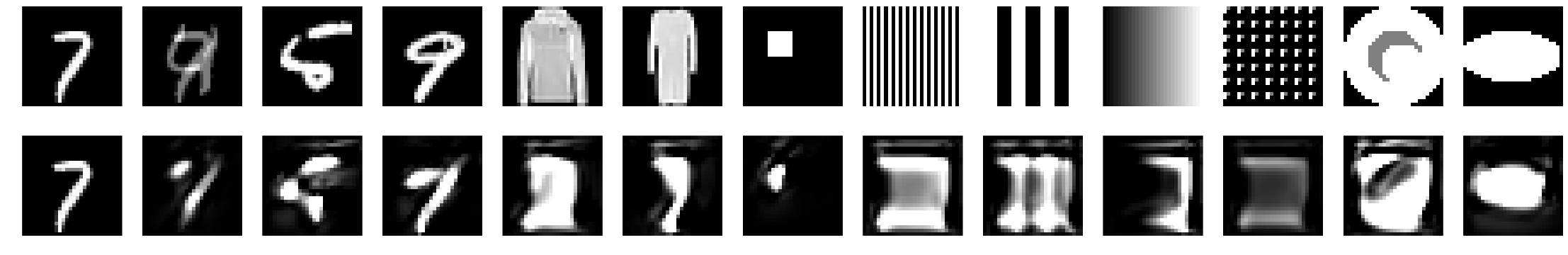}\put(-2,1){\tiny\rotatebox{90}{$10^{-3}$}}\end{overpic}
  \begin{overpic}[width=\linewidth,clip,trim={0 0.8cm 0 5.6cm}]{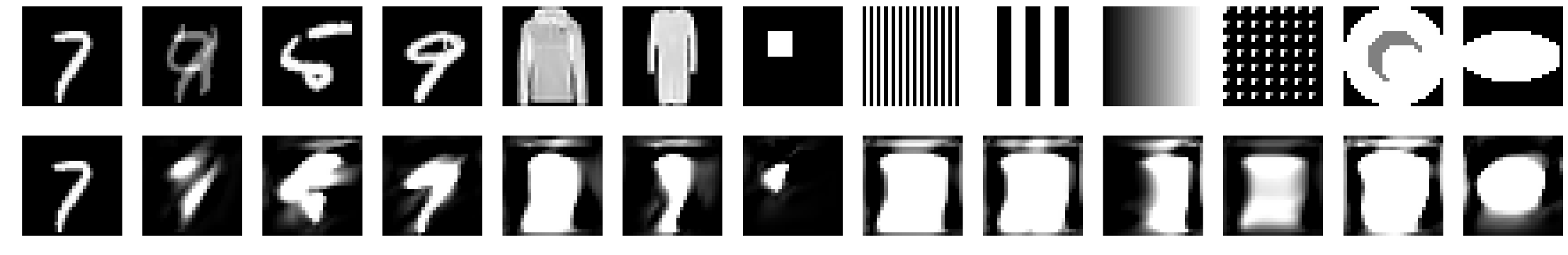}\put(-2,1){\tiny\rotatebox{90}{$10^{-4}$}}\end{overpic}
  \begin{overpic}[width=\linewidth,clip,trim={0 0.8cm 0 5.6cm}]{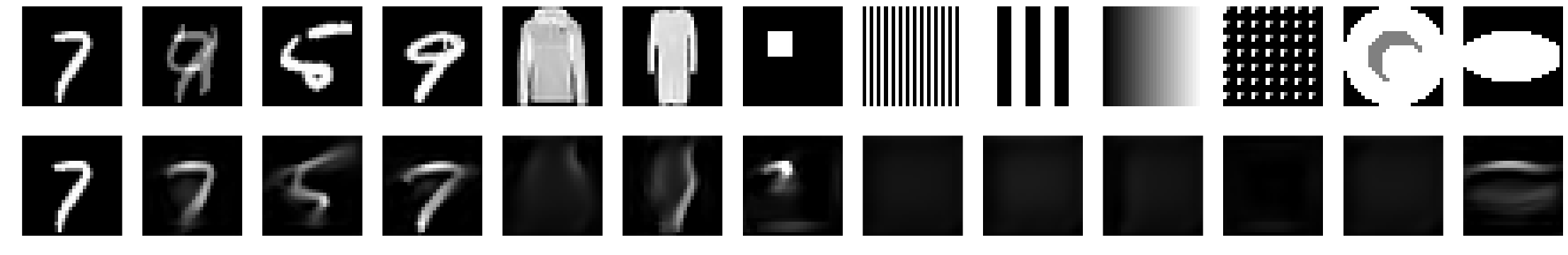}\put(-2,0){\tiny\rotatebox{90}{adag}}\end{overpic}
  \begin{overpic}[width=\linewidth,clip,trim={0 0.8cm 0 5.6cm}]{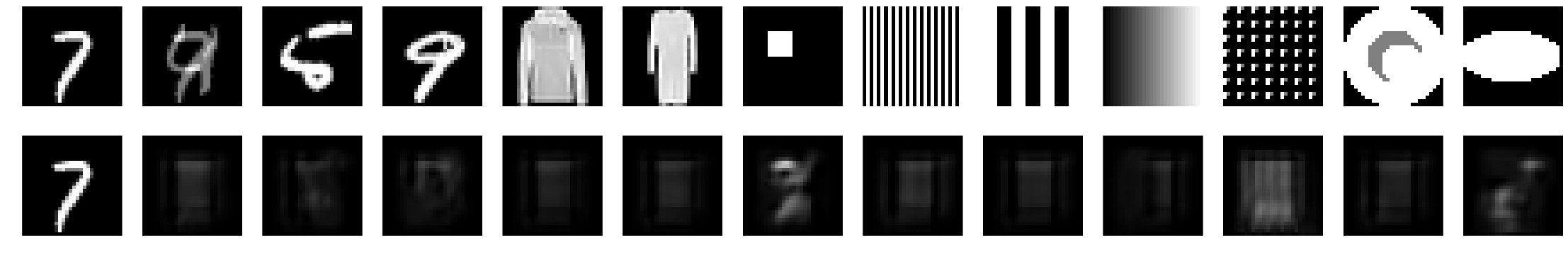}\put(-2,0){\tiny\rotatebox{90}{rmsp}}\end{overpic}
  \begin{overpic}[width=\linewidth,clip,trim={0 0.8cm 0 5.6cm}]{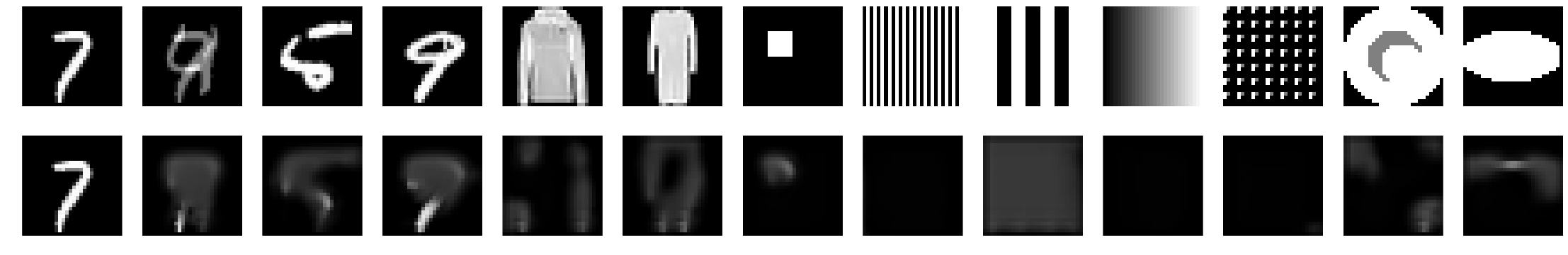}\put(-2,0){\tiny\rotatebox{90}{adam}}\end{overpic}
  \begin{overpic}[width=\linewidth,clip,trim={0 0.8cm 0 5.6cm}]{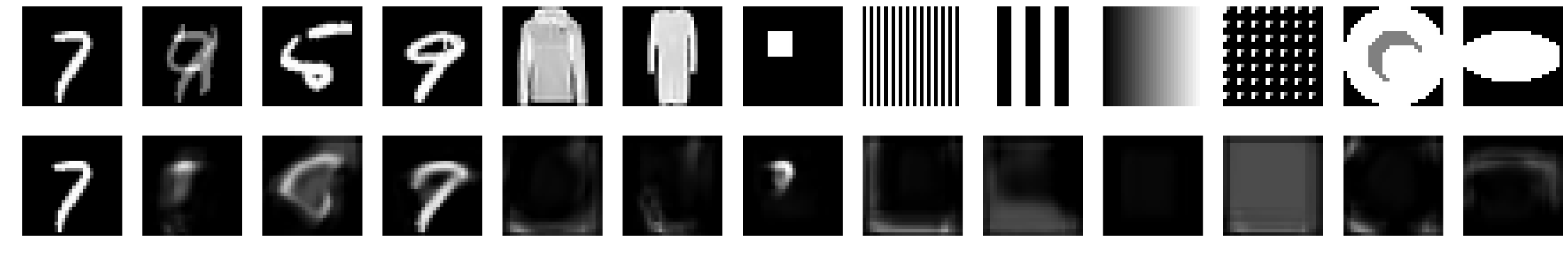}\put(-2,-1){\tiny\rotatebox{90}{adamx}}\end{overpic}
  \centering 14-layer CNNs
  \end{minipage}
  \begin{minipage}[t]{.49\linewidth}
  \begin{overpic}[width=\linewidth,clip,trim={0 6.6cm 0 0}]{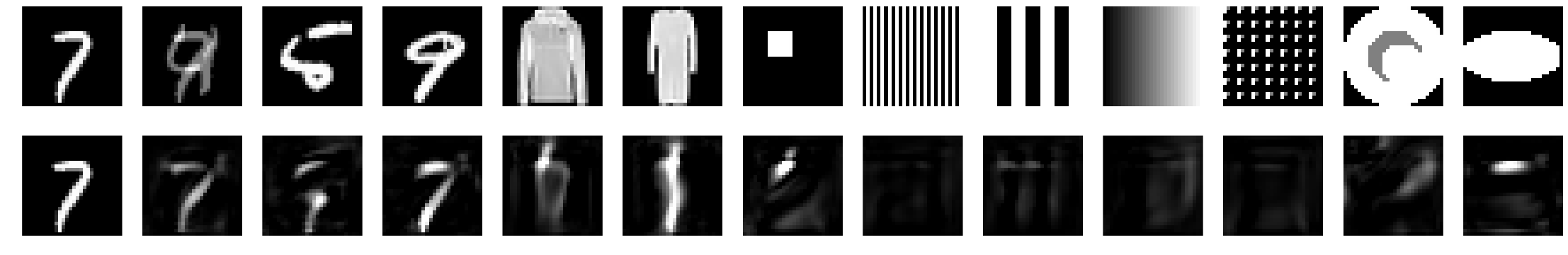}\end{overpic}
  \begin{overpic}[width=\linewidth,clip,trim={0 0.8cm 0 5.6cm}]{figs/other-inits/conv-L17-Ch128-init_default}\end{overpic}
  \begin{overpic}[width=\linewidth,clip,trim={0 0.8cm 0 5.6cm}]{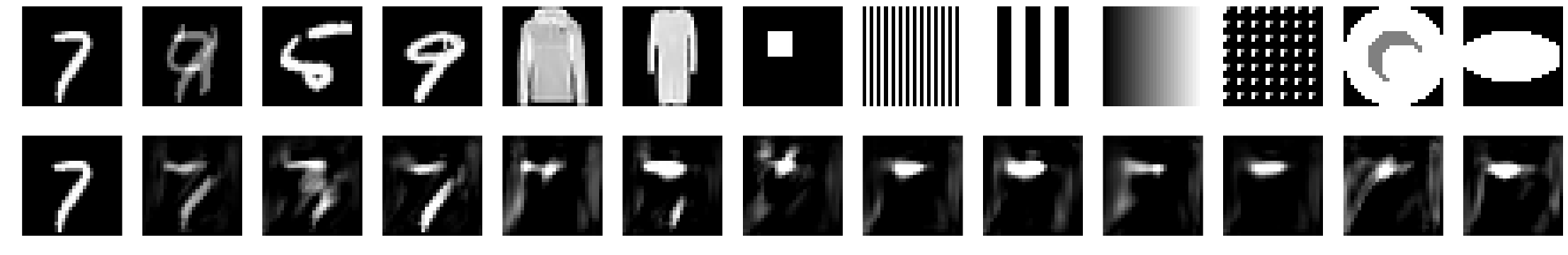}\end{overpic}
  \begin{overpic}[width=\linewidth,clip,trim={0 0.8cm 0 5.6cm}]{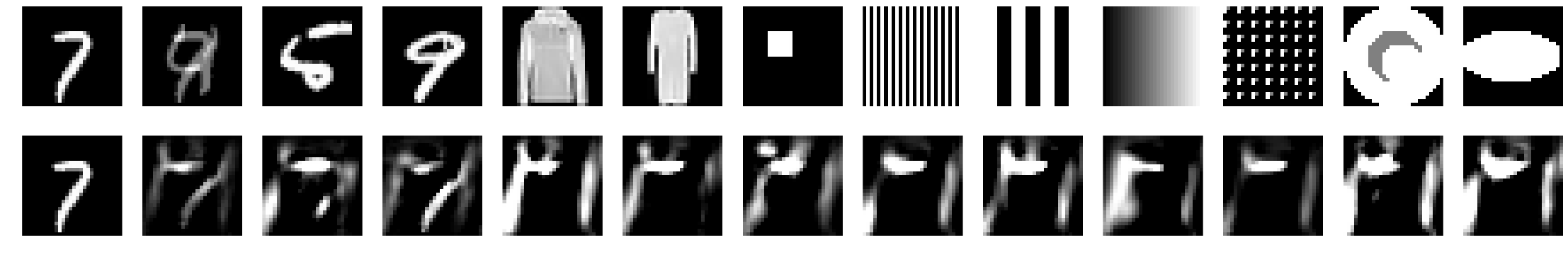}\end{overpic}
  \begin{overpic}[width=\linewidth,clip,trim={0 0.8cm 0 5.6cm}]{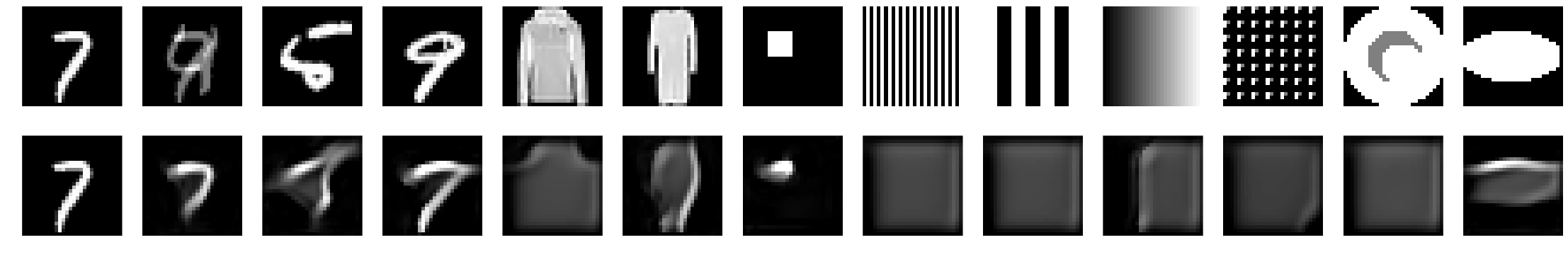}\end{overpic}
  \begin{overpic}[width=\linewidth,clip,trim={0 0.8cm 0 5.6cm}]{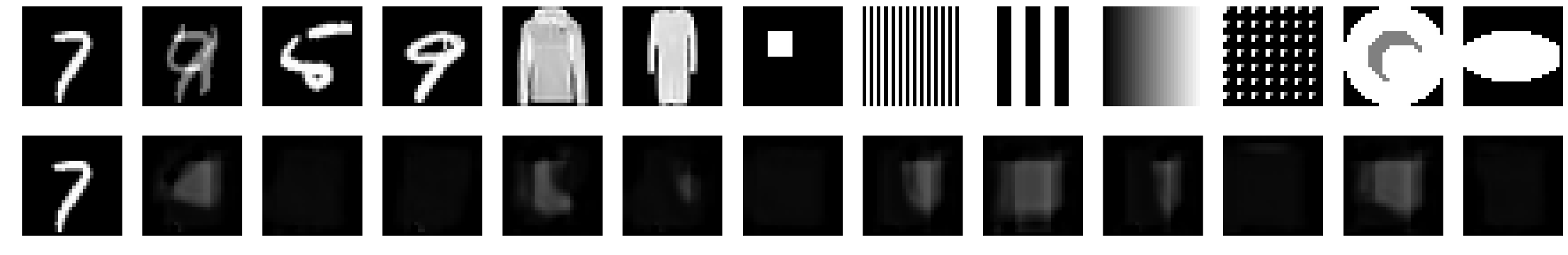}\end{overpic}
  \begin{overpic}[width=\linewidth,clip,trim={0 0.8cm 0 5.6cm}]{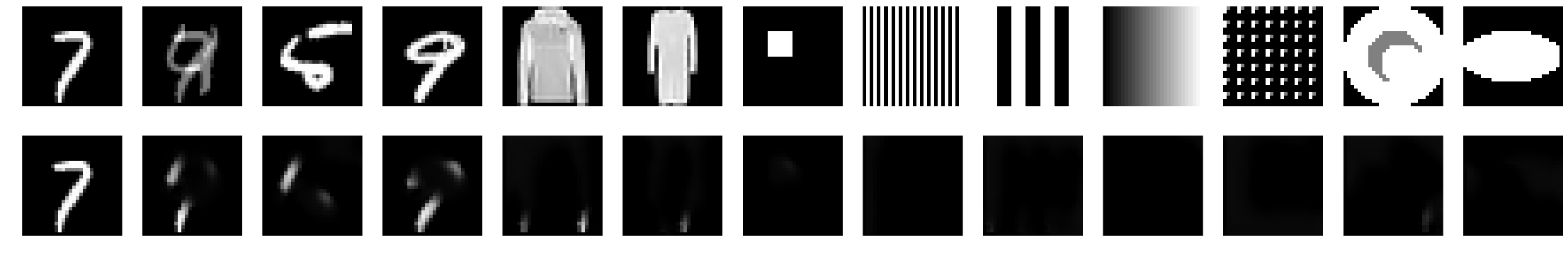}\end{overpic}
  \begin{overpic}[width=\linewidth,clip,trim={0 0.8cm 0 5.6cm}]{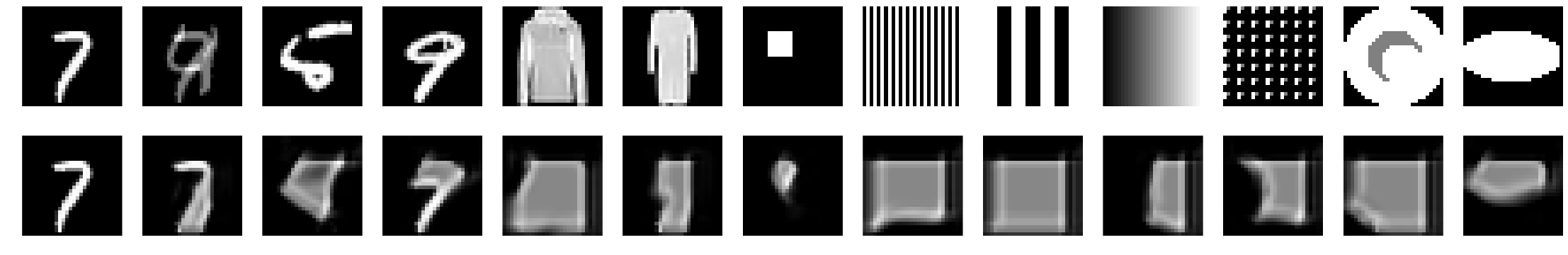}\end{overpic}
  \centering 17-layer CNNs
  \end{minipage} \vskip9pt
  \begin{minipage}[t]{.49\linewidth}
  \begin{overpic}[width=\linewidth,clip,trim={0 6.6cm 0 0}]{figs/other-inits/conv-L20-Ch128-init_default}\end{overpic}
  \begin{overpic}[width=\linewidth,clip,trim={0 0.8cm 0 5.6cm}]{figs/other-inits/conv-L20-Ch128-init_default}\put(-2,1){\tiny\rotatebox{90}{$10^{-2}$}}\end{overpic}
  \begin{overpic}[width=\linewidth,clip,trim={0 0.8cm 0 5.6cm}]{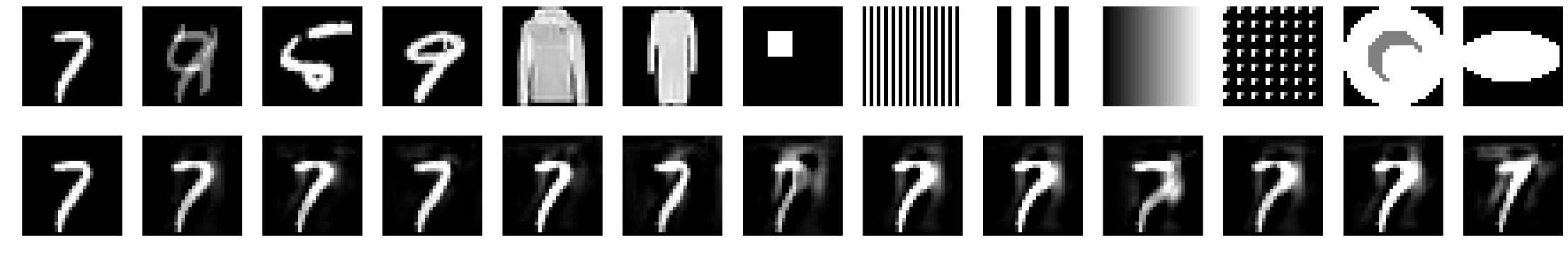}\put(-2,1){\tiny\rotatebox{90}{$10^{-3}$}}\end{overpic}
  \begin{overpic}[width=\linewidth,clip,trim={0 0.8cm 0 5.6cm}]{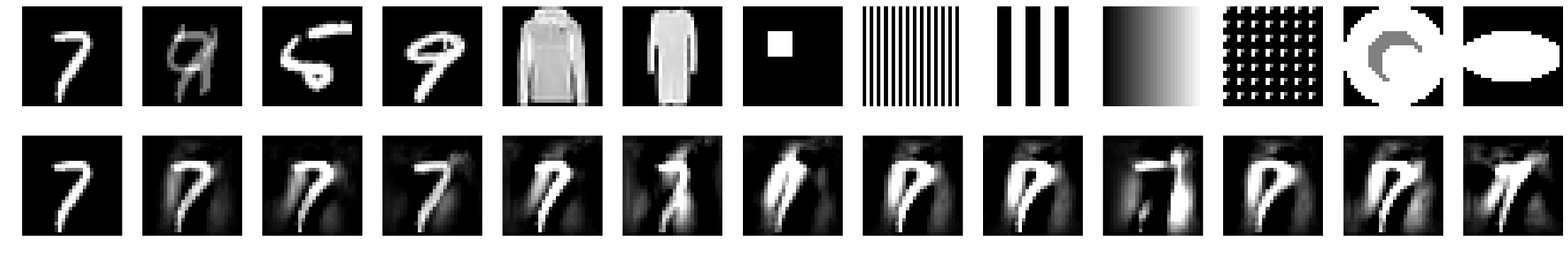}\put(-2,1){\tiny\rotatebox{90}{$10^{-4}$}}\end{overpic}
  \begin{overpic}[width=\linewidth,clip,trim={0 0.8cm 0 5.6cm}]{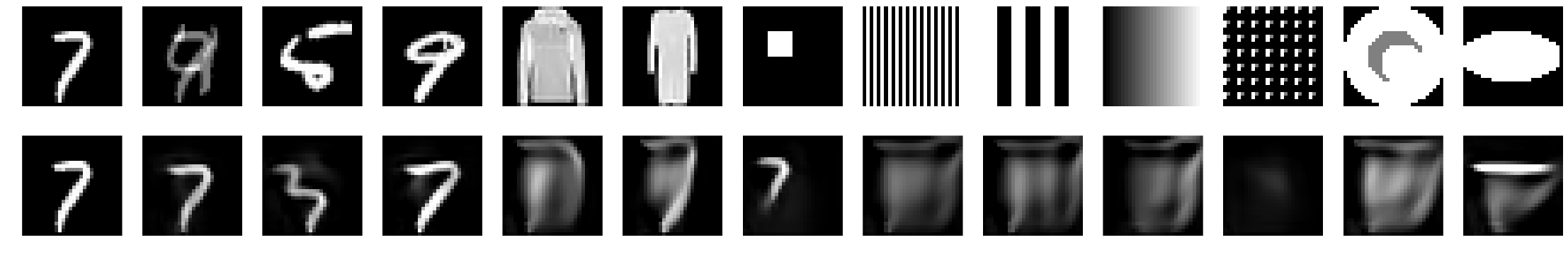}\put(-2,0){\tiny\rotatebox{90}{adag}}\end{overpic}
  \begin{overpic}[width=\linewidth,clip,trim={0 0.8cm 0 5.6cm}]{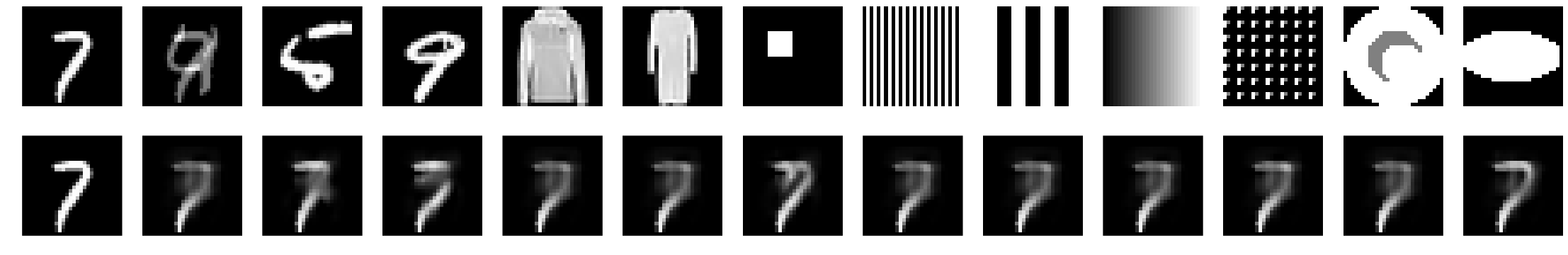}\put(-2,0){\tiny\rotatebox{90}{rmsp}}\end{overpic}
  \begin{overpic}[width=\linewidth,clip,trim={0 0.8cm 0 5.6cm}]{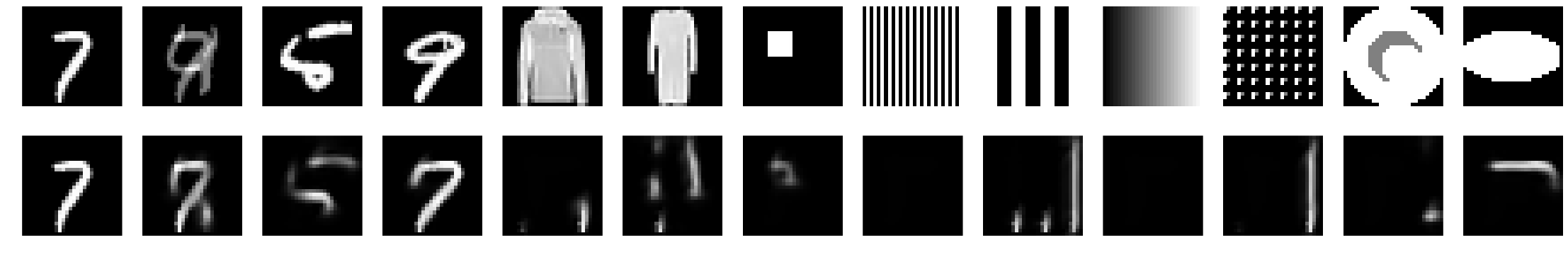}\put(-2,0){\tiny\rotatebox{90}{adam}}\end{overpic}
  \begin{overpic}[width=\linewidth,clip,trim={0 0.8cm 0 5.6cm}]{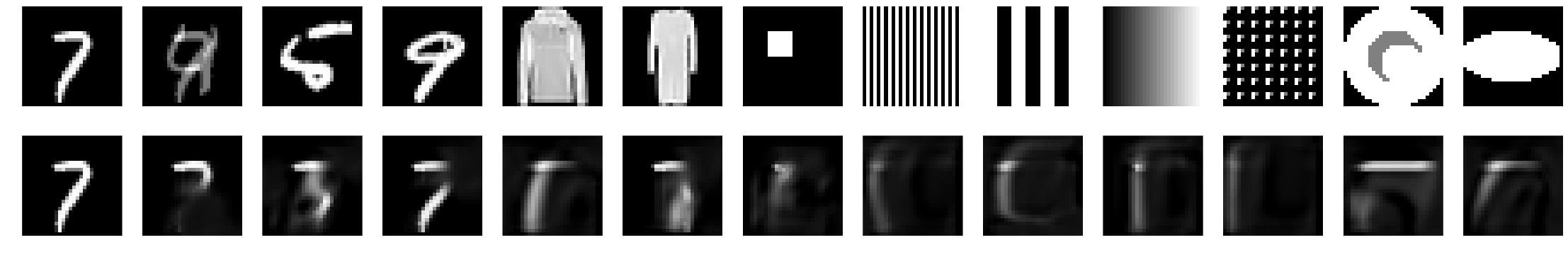}\put(-2,-1){\tiny\rotatebox{90}{adamx}}\end{overpic}
  \centering 20-layer CNNs
  \end{minipage}
  \begin{minipage}[t]{.49\linewidth}
  \begin{overpic}[width=\linewidth,clip,trim={0 6.6cm 0 0}]{figs/other-inits/conv-L24-Ch128-init_default}\end{overpic}
  \begin{overpic}[width=\linewidth,clip,trim={0 0.8cm 0 5.6cm}]{figs/other-inits/conv-L24-Ch128-init_default}\end{overpic}
  \begin{overpic}[width=\linewidth,clip,trim={0 0.8cm 0 5.6cm}]{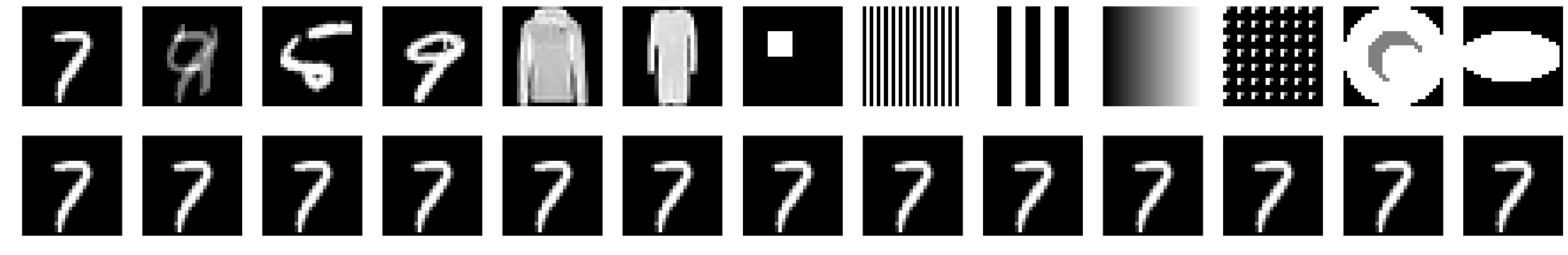}\end{overpic}
  \begin{overpic}[width=\linewidth,clip,trim={0 0.8cm 0 5.6cm}]{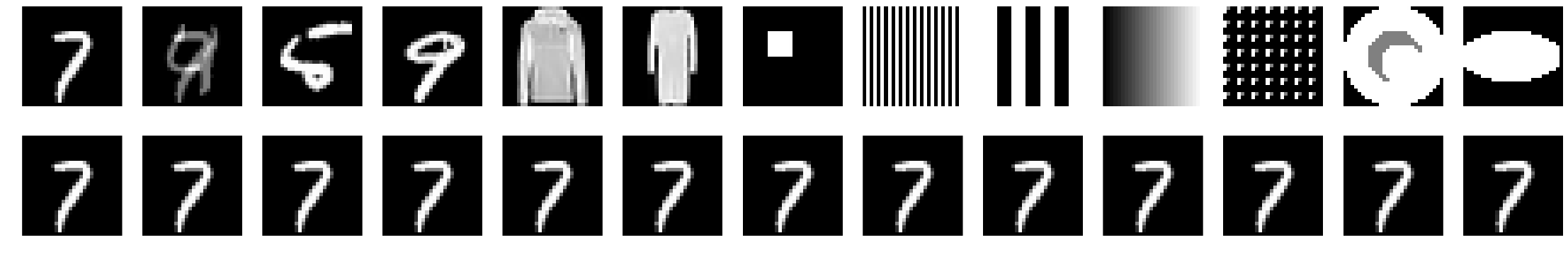}\end{overpic}
  \begin{overpic}[width=\linewidth,clip,trim={0 0.8cm 0 5.6cm}]{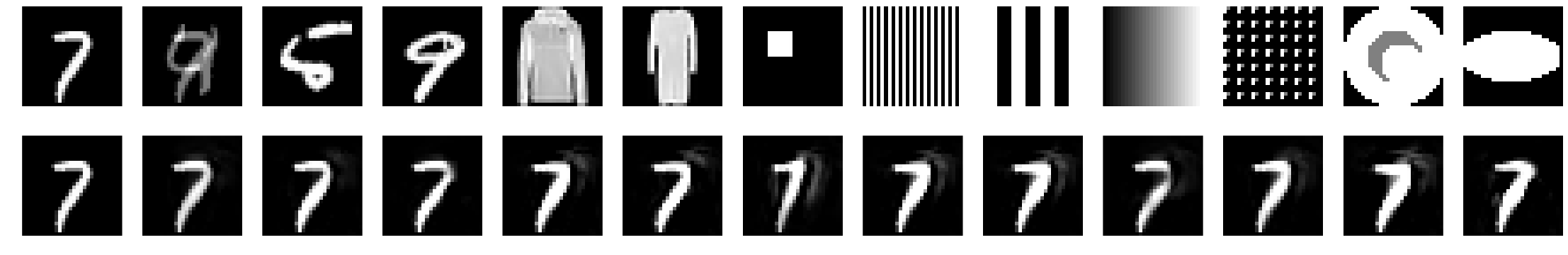}\end{overpic}
  \begin{overpic}[width=\linewidth,clip,trim={0 0.8cm 0 5.6cm}]{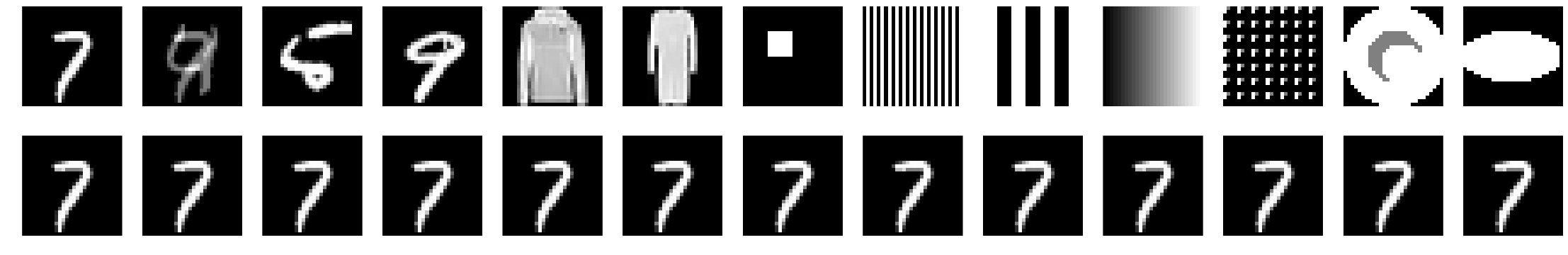}\end{overpic}
  \begin{overpic}[width=\linewidth,clip,trim={0 0.8cm 0 5.6cm}]{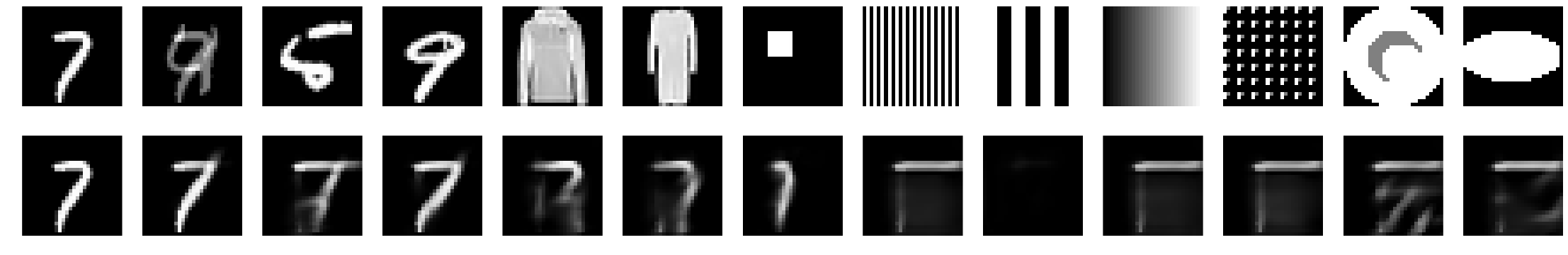}\end{overpic}
  \begin{overpic}[width=\linewidth,clip,trim={0 0.8cm 0 5.6cm}]{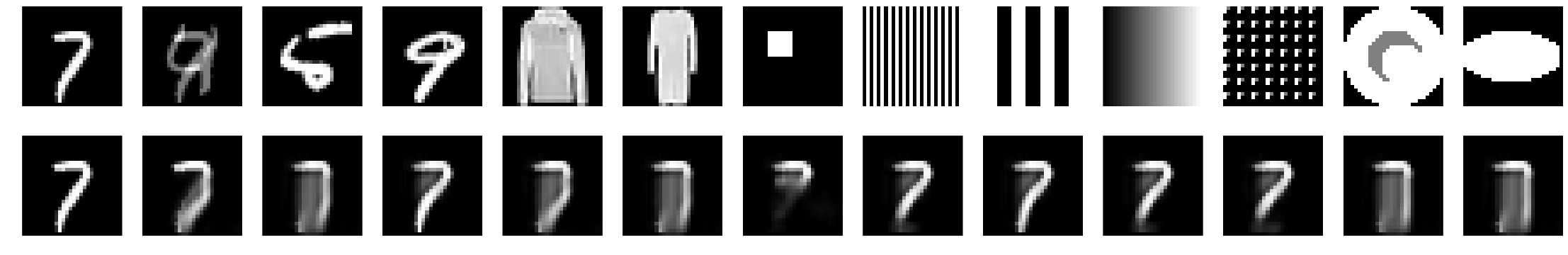}\end{overpic}
  \centering 24-layer CNNs
  \end{minipage}
  \caption{\textbf{Visualization of the predictions from CNNs trained with different optimizers.} The first row shows the inputs, and the remaining rows shows the predictions from SGD (lr 0.01), SGD (lr 0.001), SGD (lr 0.0001), Adagrad, RMSprop, Aam, and Adamax respectively.}
  \label{fig:app-conv-optims}
\end{figure}

\section{Correlation vs MSE}
\label{app:correlation-vs-mse}

Figure~\ref{fig:app-linear-mlp-eooi-heatmap}, Figure~\ref{fig:app-mlp-eooi-heatmap} and Figure~\ref{fig:app-conv-eooi-heatmap} can be compared to their corresponding figures in the main text. The figures here are plotted with the MSE metric between the prediction and the groundtruth, while the figures in the main text uses the correlation metric. Each corresponding pair of plots are overall consistent. But the correlation plots show the patterns more clearly and has a fixed value range of [0, 1] that is easier to interpret.

\begin{figure}\centering
  \includegraphics[width=\linewidth]{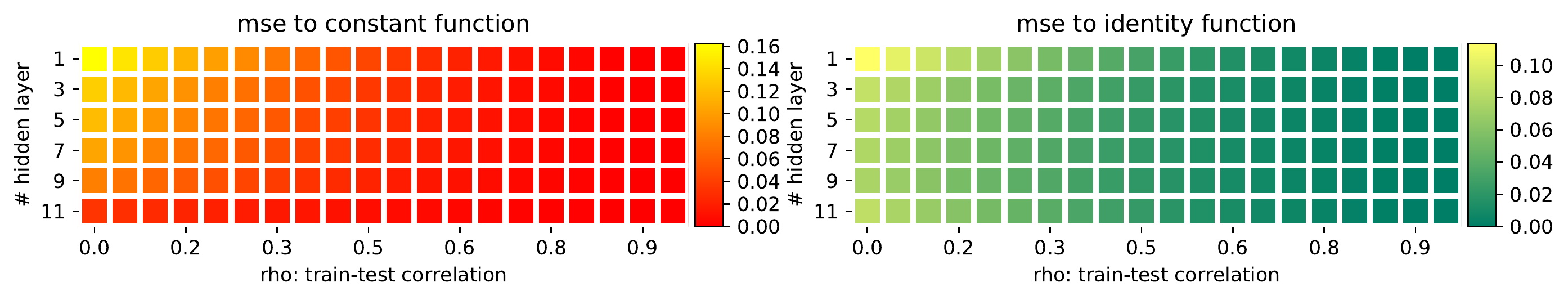}
  \caption{\textbf{Quantitative evaluation of linear FCNs.} The same as Figure~\ref{fig:mnist-mlp-eooi-heatmap}(a), except MSE is plotted here instead of correlation.}
  \label{fig:app-linear-mlp-eooi-heatmap}
\end{figure}

\begin{figure}\centering
  \includegraphics[width=\linewidth]{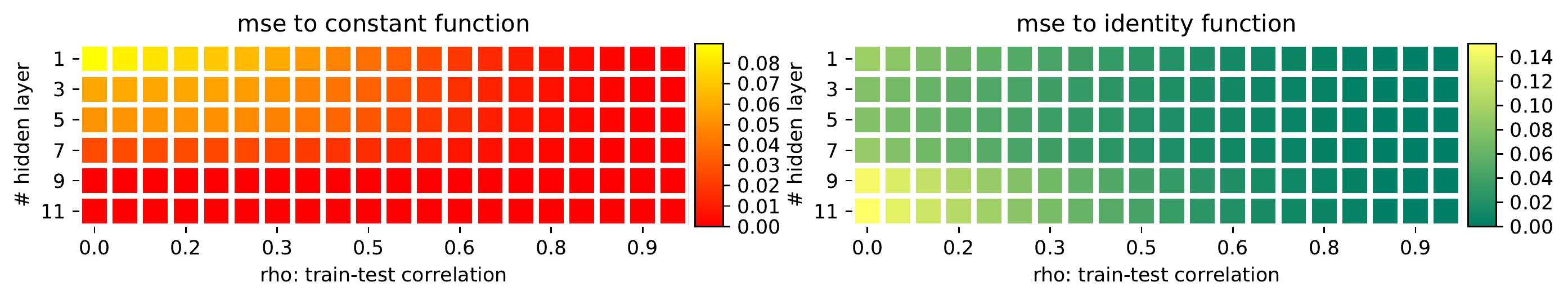}
  \caption{\textbf{Quantitative evaluation of ReLU FCNs.} The same as Figure~\ref{fig:mnist-mlp-eooi-heatmap}(b), except MSE is plotted here instead of correlation.}
  \label{fig:app-mlp-eooi-heatmap}
\end{figure}

\begin{figure}
  \includegraphics[width=\linewidth]{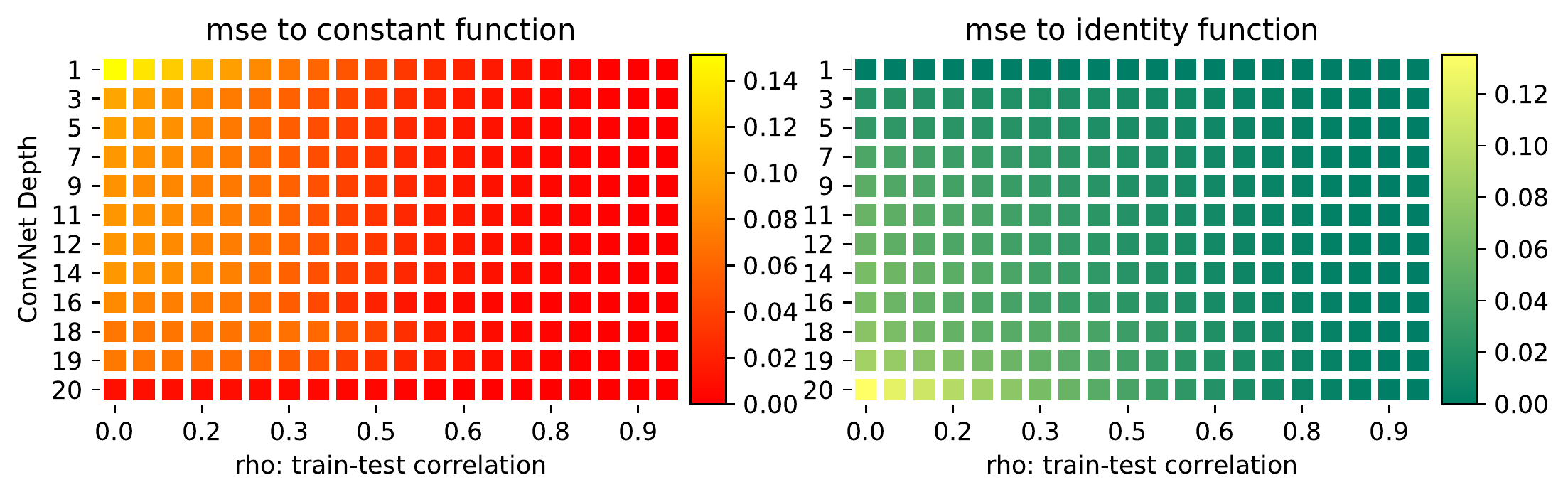}
  \caption{\textbf{Quantitative evaluation of CNNs.} The same as Figure~\ref{fig:mnist-conv-eooi-heatmap}, except MSE is plotted here instead of correlation.}
  \label{fig:app-conv-eooi-heatmap}
\end{figure}

\section{Robustness of observations to random seeds}

\begin{figure}\centering
\begin{subfigure}[b]{.49\linewidth}
                                \includegraphics[width=\linewidth]{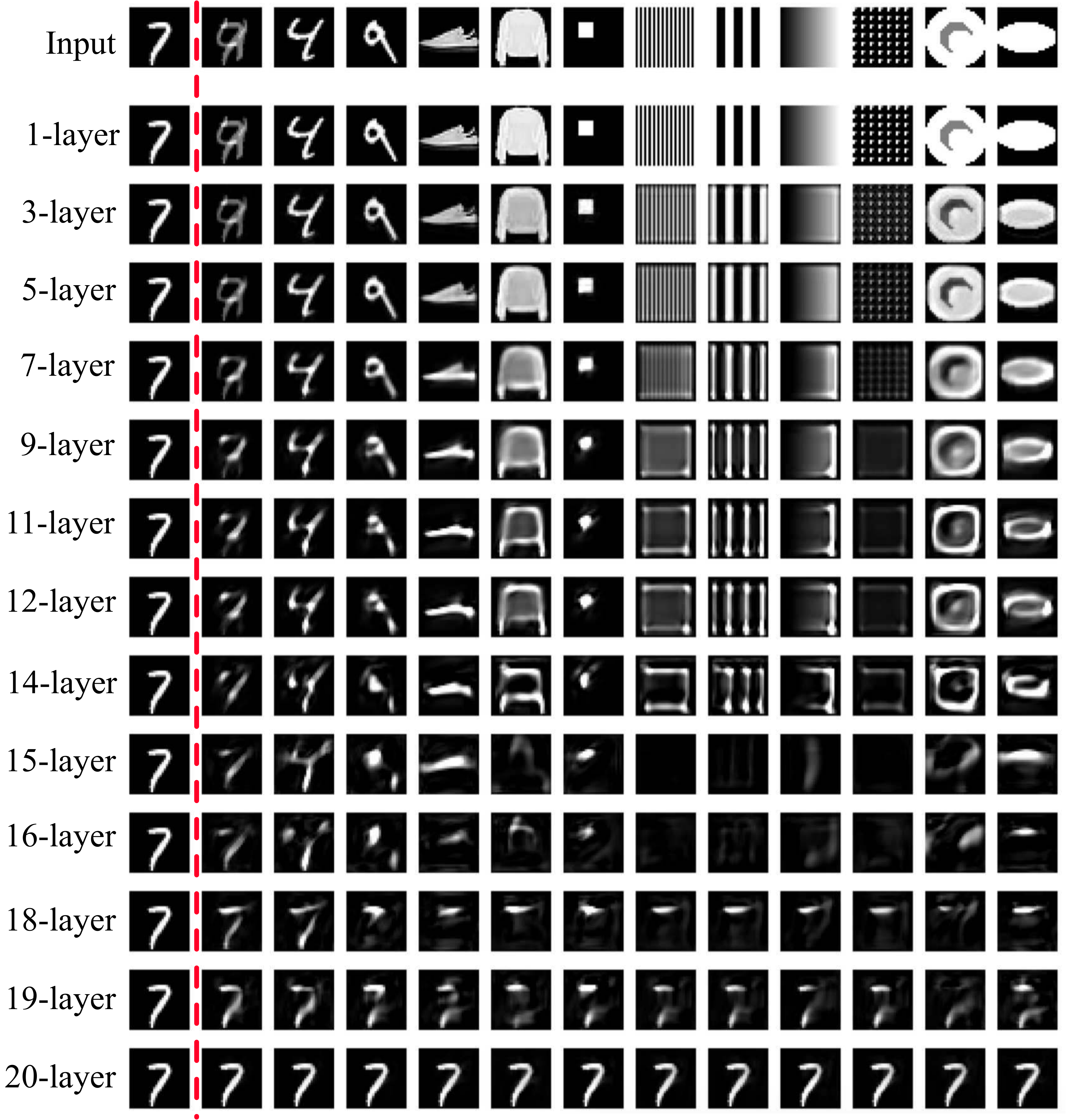}
  \caption{2nd run}
\end{subfigure}\hfill
\begin{subfigure}[b]{.49\linewidth}
                                \includegraphics[width=\linewidth]{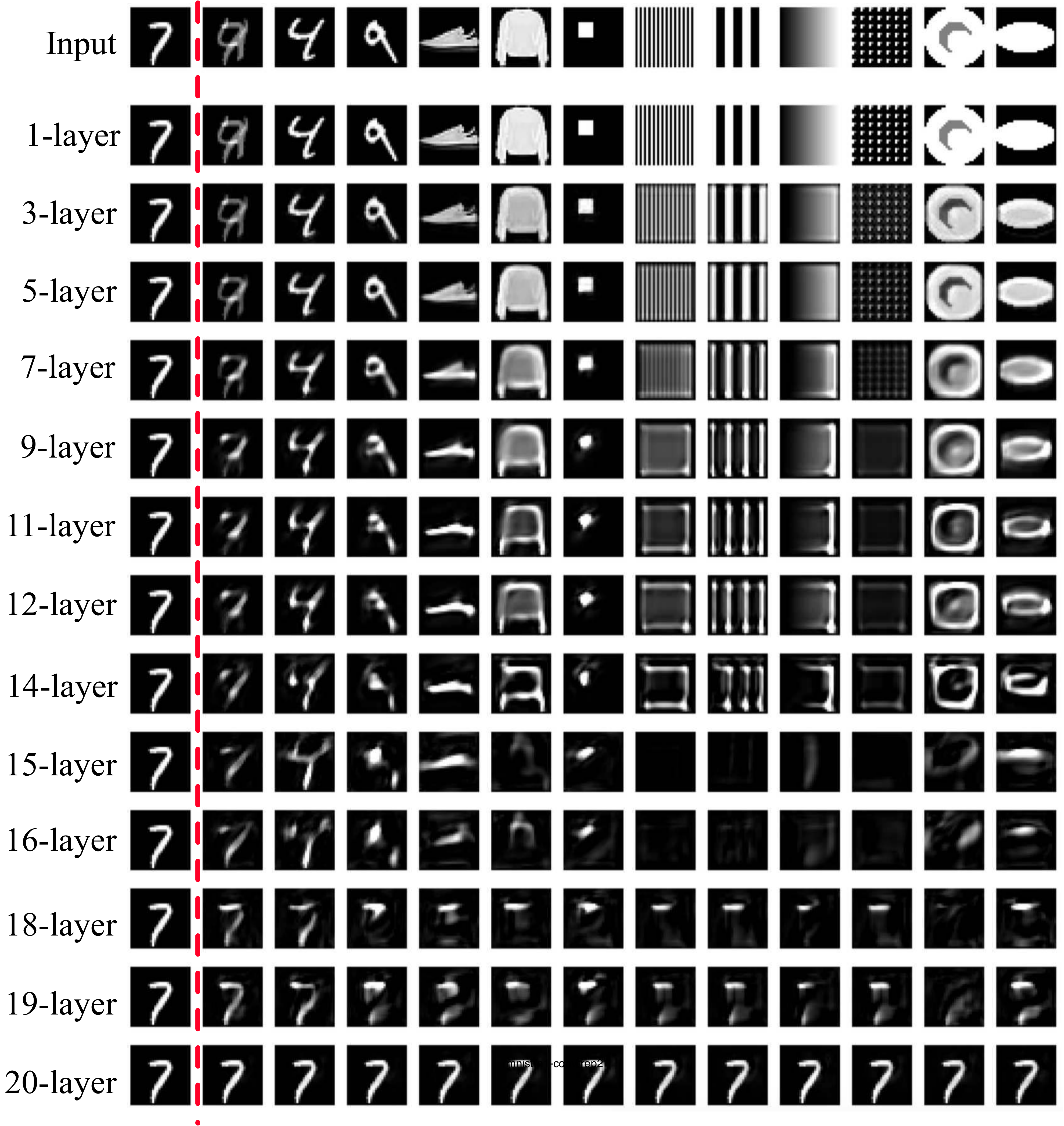}
  \caption{3rd run}
\end{subfigure}
\caption{\textbf{Visualization of predictions from CNNs trained on a single example.} The same training example
is used as in the main text, but two extra runs of training and evaluation are listed to show the robustness of
our main observations to the randomness in the experiments.}
\label{fig:random-seed-2-runs}
\end{figure}

\begin{figure}\centering
\begin{subfigure}[b]{.49\linewidth}
                                \includegraphics[width=\linewidth]{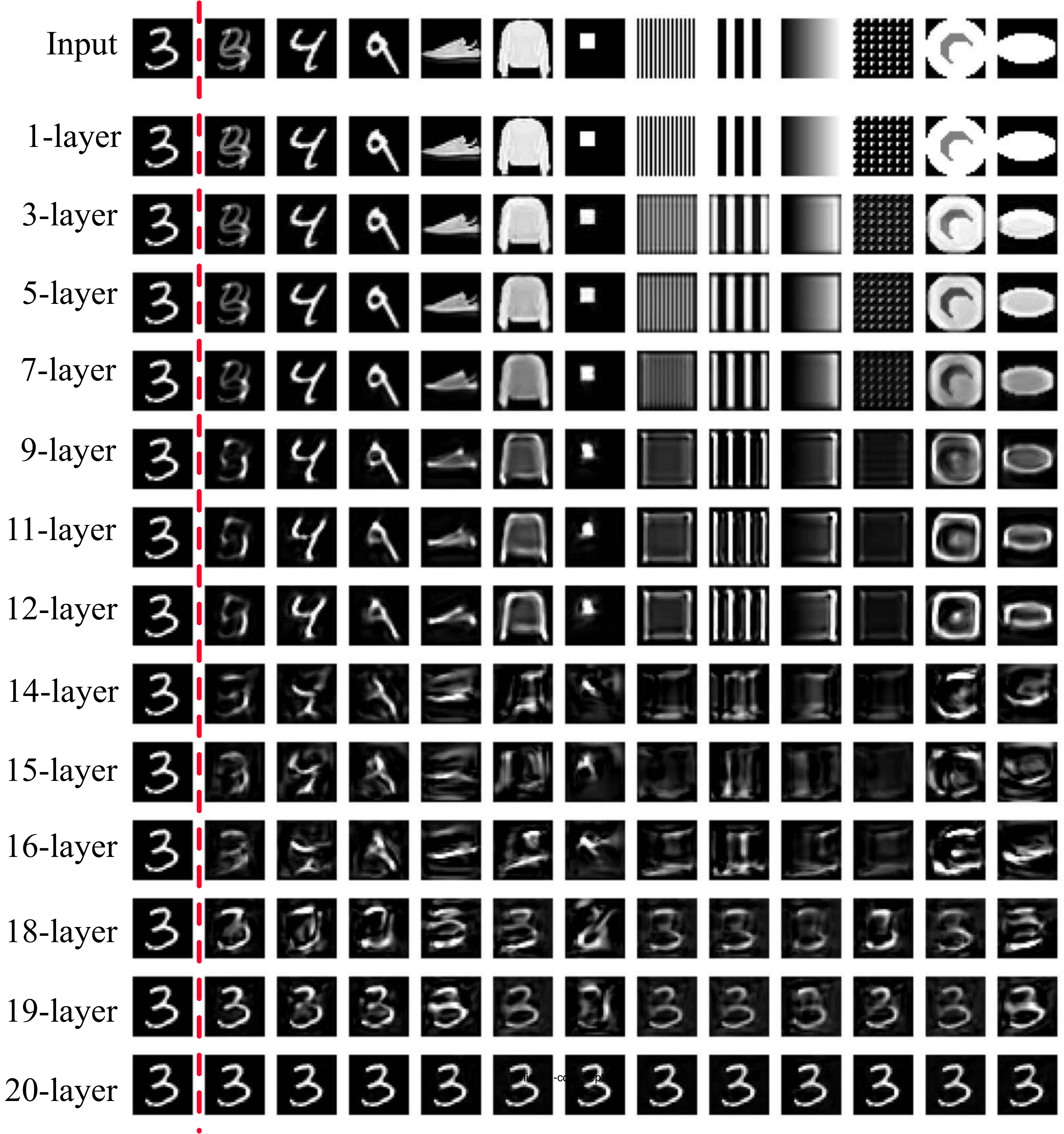}
  \caption{training with digit 3}
\end{subfigure}\hfill
\begin{subfigure}[b]{.49\linewidth}
                                \includegraphics[width=\linewidth]{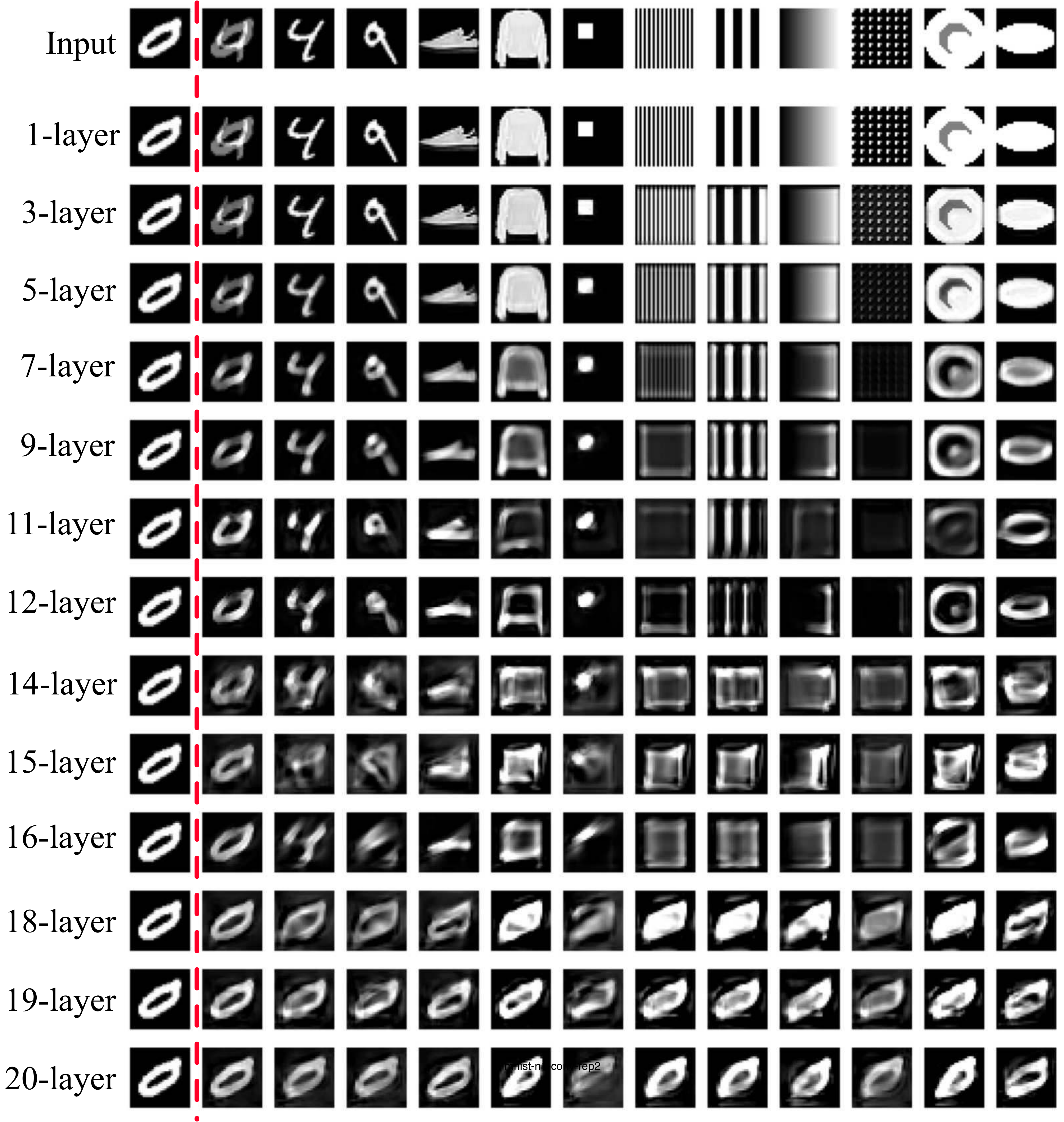}
  \caption{training with digit 0}
\end{subfigure}
\caption{\textbf{Visualization of predictions from CNNs trained on a single example.} Two different
(randomly chosen) training images (a) digit 3, (b) digit 0, are shown to compare robustness of our
main observations to different training images.}
\label{fig:random-seed-data-gen}
\end{figure}

We evaluate the robustness of our main observations to randomness from the experiments in this section.
In Figure~\ref{fig:random-seed-2-runs}, two different runs of training and evaluation are compared side by side,
and the results are very consistent. In Figure~\ref{fig:random-seed-data-gen} we further perturb the random
seeds for data loading, so that different (single) training images are loaded for training. We can see that
the main observations hold for both cases: CNNs learn the identity map, edge detector and the constant
map as the depth increases.

\section{Learning with multiple examples}
\label{app:multi-examples}

\begin{figure}\centering
  \begin{minipage}[t]{.7\linewidth}
  \begin{overpic}[width=\linewidth,clip,trim={0 6.6cm 0 0}]{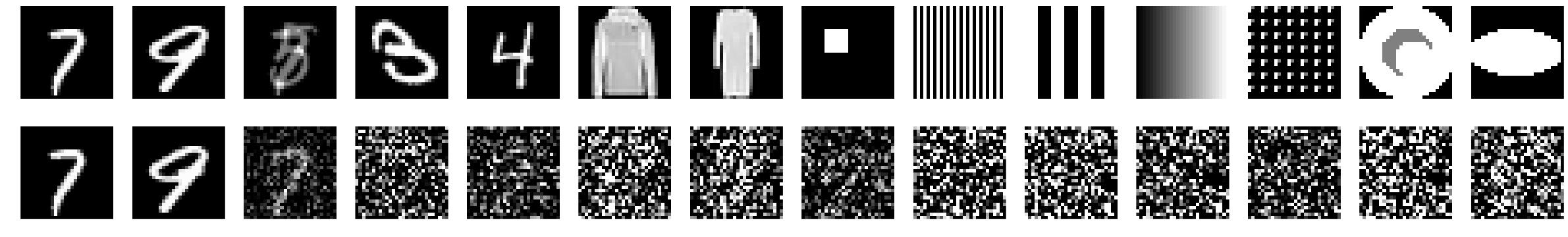}\put(-5,2){\scriptsize{input}}\end{overpic}
  \vskip5pt
  \begin{overpic}[width=\linewidth,clip,trim={0 0.8cm 0 5.6cm}]{figs/mnist-n2/mlp-Hl0}\put(-2.2,2){\scriptsize H0}\end{overpic}
  \begin{overpic}[width=\linewidth,clip,trim={0 0.8cm 0 5.6cm}]{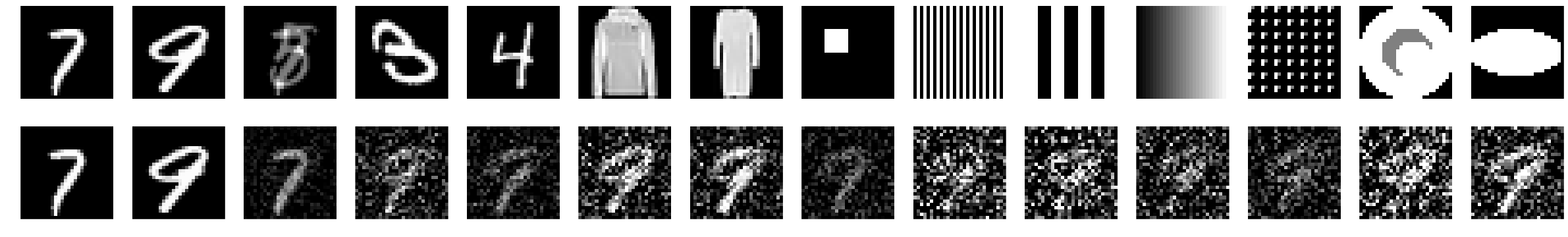}\put(-2.2,2){\scriptsize H1}\end{overpic}
  \begin{overpic}[width=\linewidth,clip,trim={0 0.8cm 0 5.6cm}]{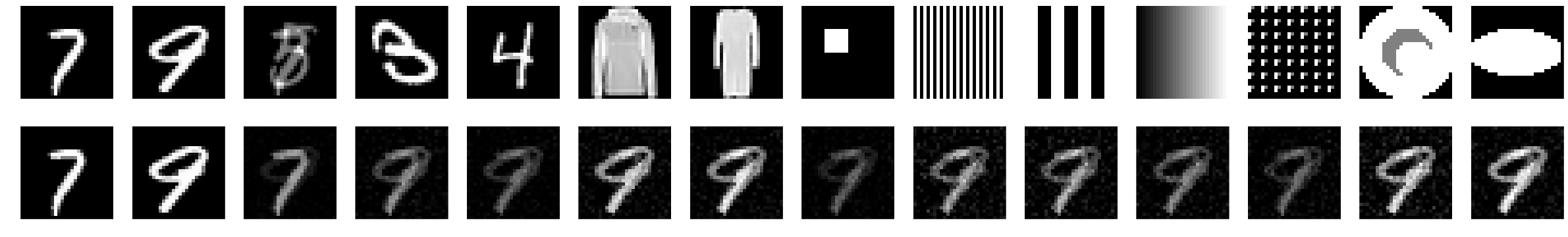}\put(-2.2,2){\scriptsize H3}\end{overpic}
  \begin{overpic}[width=\linewidth,clip,trim={0 0.8cm 0 5.6cm}]{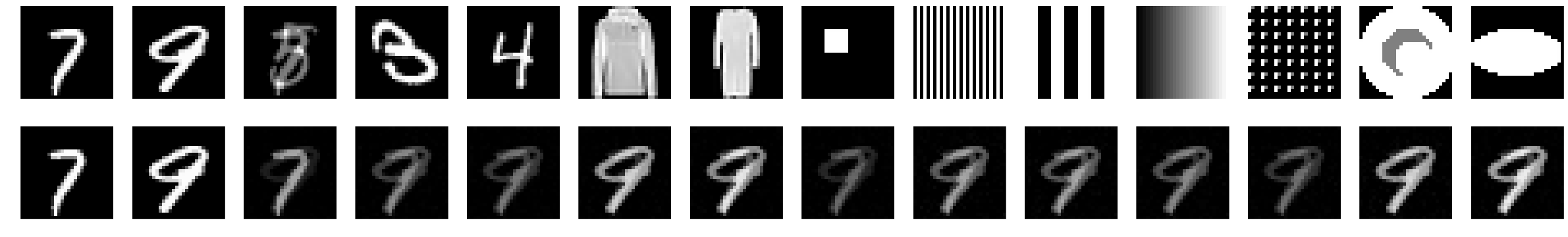}\put(-2.2,2){\scriptsize H5}\end{overpic}
  \begin{overpic}[width=\linewidth,clip,trim={0 0.8cm 0 5.6cm}]{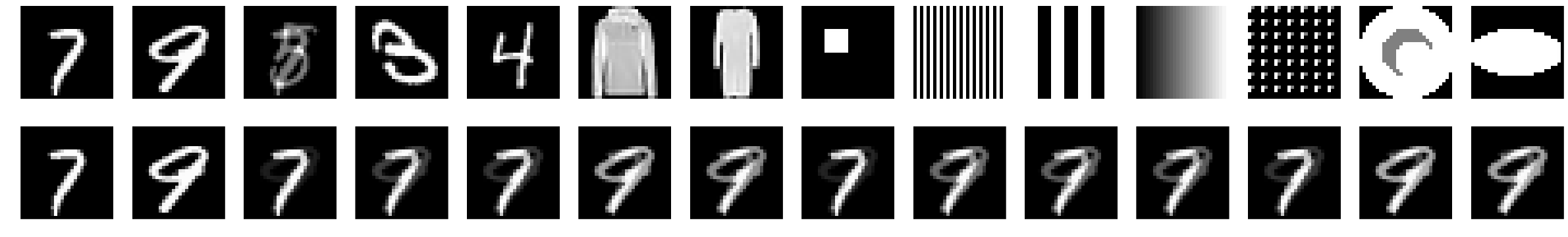}\put(-2.2,2){\scriptsize H9}\end{overpic}
  \vskip5pt
  \begin{overpic}[width=\linewidth,clip,trim={0 0.8cm 0 5.6cm}]{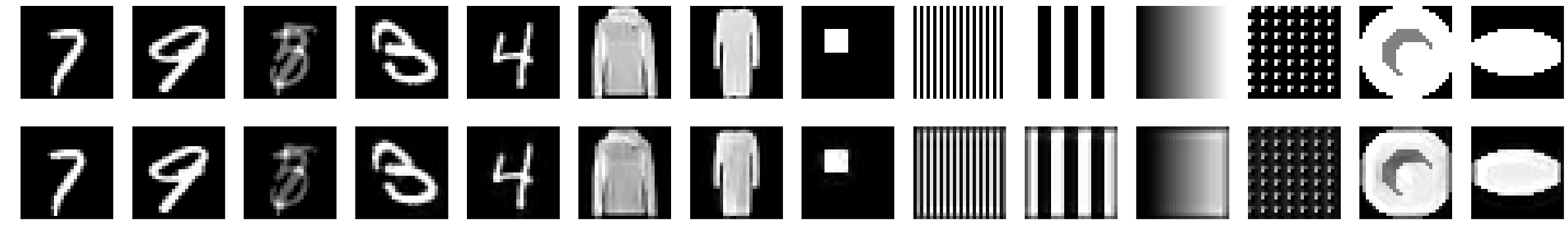}\put(-2.2,2){\scriptsize C3}\end{overpic}
  \begin{overpic}[width=\linewidth,clip,trim={0 0.8cm 0 5.6cm}]{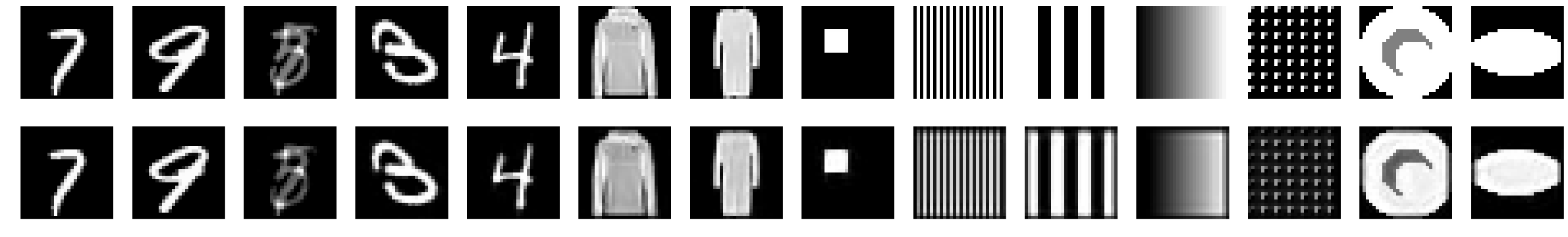}\put(-2.2,2){\scriptsize C5}\end{overpic}
  \begin{overpic}[width=\linewidth,clip,trim={0 0.8cm 0 5.6cm}]{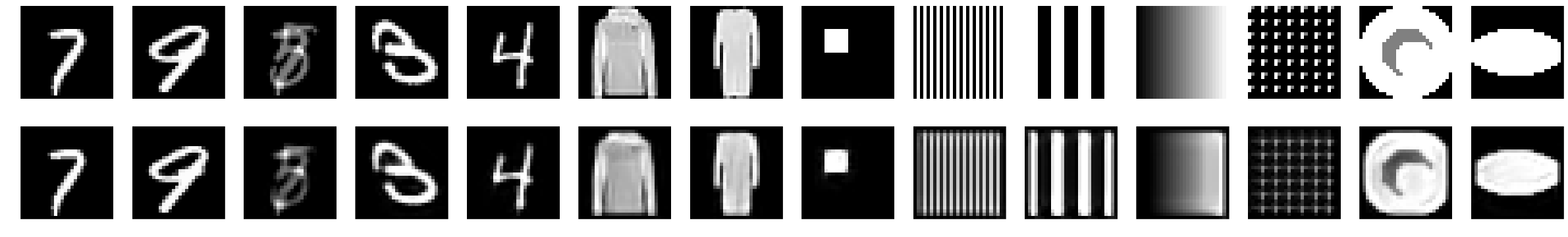}\put(-2.2,2){\scriptsize C7}\end{overpic}
  \begin{overpic}[width=\linewidth,clip,trim={0 0.8cm 0 5.6cm}]{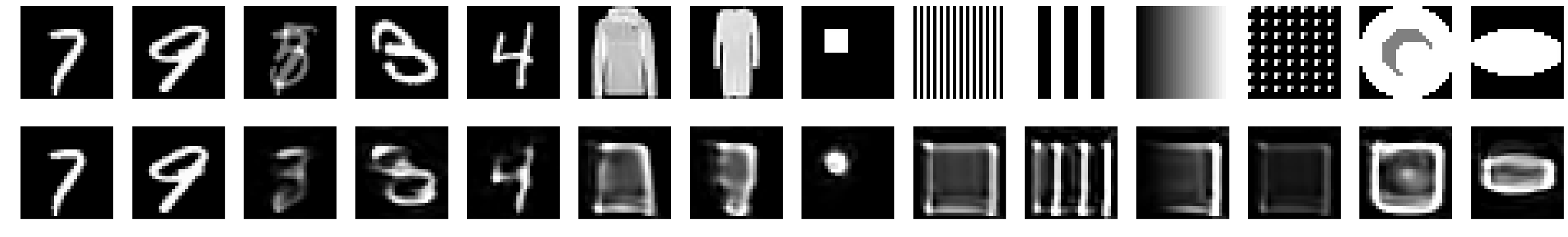}\put(-3.1,2){\scriptsize C14}\end{overpic}
  \begin{overpic}[width=\linewidth,clip,trim={0 0.8cm 0 5.6cm}]{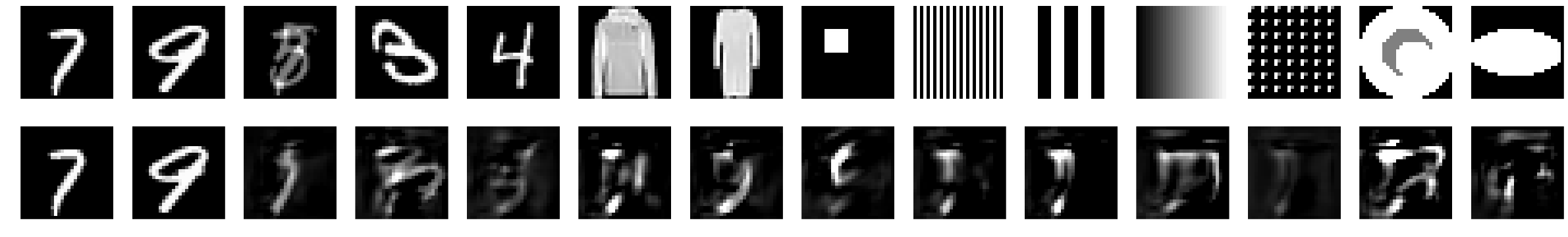}\put(-3.1,2){\scriptsize C17}\end{overpic}
  \begin{overpic}[width=\linewidth,clip,trim={0 0.8cm 0 5.6cm}]{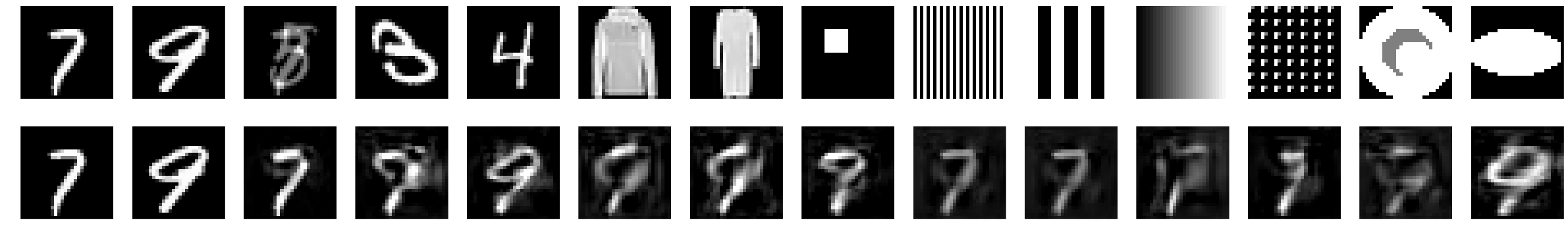}\put(-3.1,2){\scriptsize C20}\end{overpic}
  \begin{overpic}[width=\linewidth,clip,trim={0 0.8cm 0 5.6cm}]{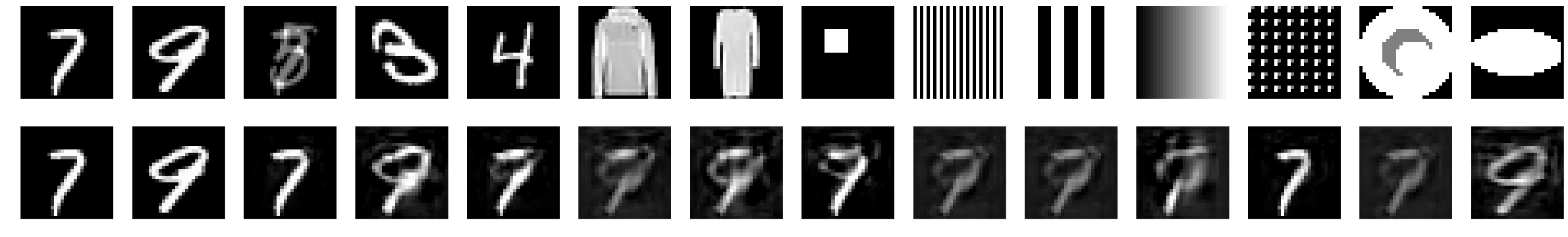}\put(-3.1,2){\scriptsize C24}\end{overpic}
  \end{minipage}
  \caption{\textbf{Visualization of the predictions from networks trained with 2 examples}. The first two columns are training examples, and the remaining columns are unseen test cases. H0---H9 shows fully connected networks with the corresponding number of hidden layers. C3---C24 shows CNNs with the corresponding number of layers.}
  \label{fig:app-2examples}
\end{figure}

\begin{figure}\centering
  \begin{minipage}[t]{.7\linewidth}
  \begin{overpic}[width=\linewidth,clip,trim={0 6.6cm 0 0}]{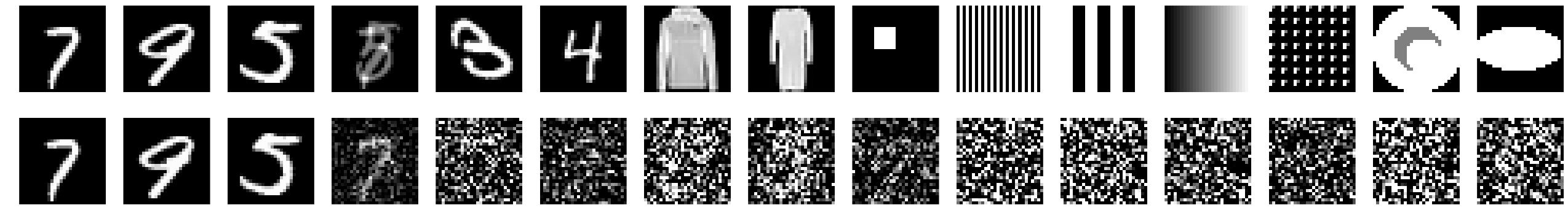}\put(-5,2){\scriptsize{input}}\end{overpic}
  \vskip5pt
  \begin{overpic}[width=\linewidth,clip,trim={0 0.8cm 0 5.6cm}]{figs/mnist-n3/mlp-Hl0}\put(-2.2,2){\scriptsize H0}\end{overpic}
  \begin{overpic}[width=\linewidth,clip,trim={0 0.8cm 0 5.6cm}]{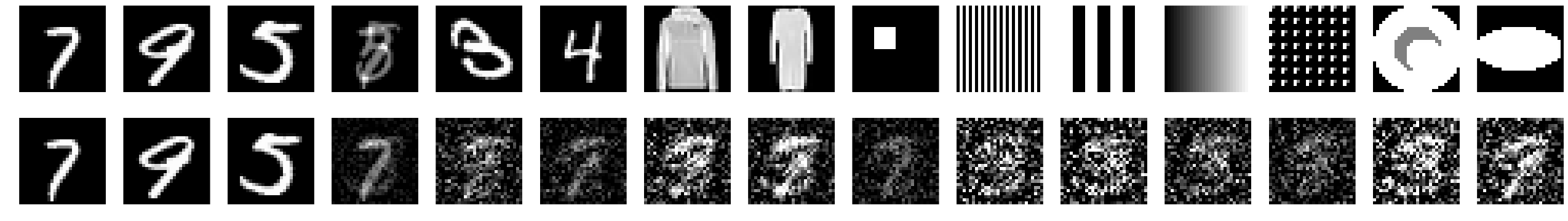}\put(-2.2,2){\scriptsize H1}\end{overpic}
  \begin{overpic}[width=\linewidth,clip,trim={0 0.8cm 0 5.6cm}]{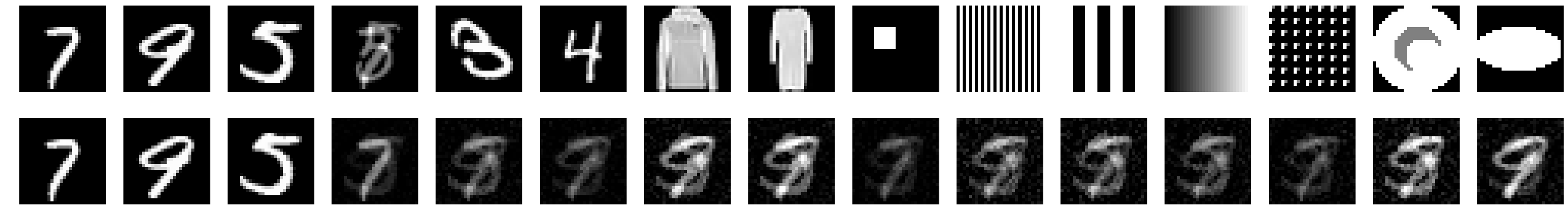}\put(-2.2,2){\scriptsize H3}\end{overpic}
  \begin{overpic}[width=\linewidth,clip,trim={0 0.8cm 0 5.6cm}]{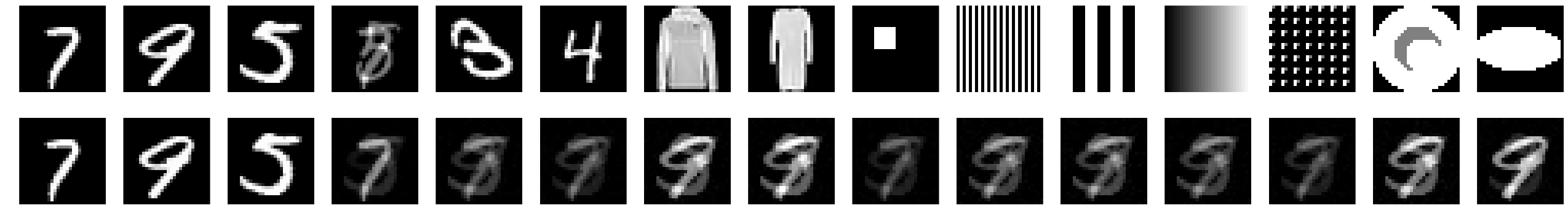}\put(-2.2,2){\scriptsize H5}\end{overpic}
  \begin{overpic}[width=\linewidth,clip,trim={0 0.8cm 0 5.6cm}]{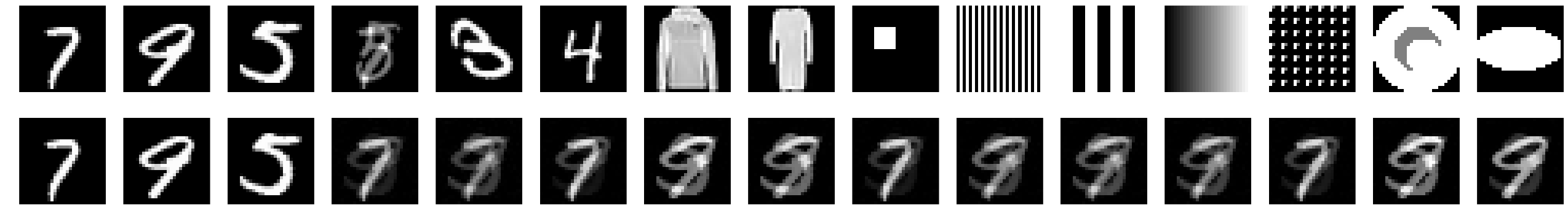}\put(-2.2,2){\scriptsize H9}\end{overpic}
  \vskip5pt
  \begin{overpic}[width=\linewidth,clip,trim={0 0.8cm 0 5.6cm}]{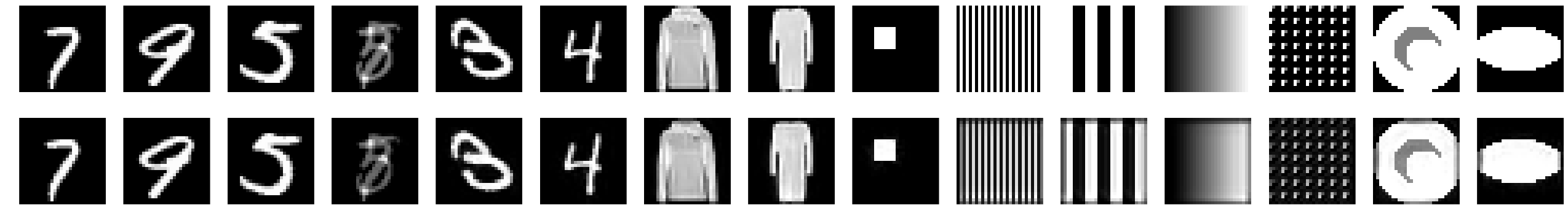}\put(-2.2,2){\scriptsize C3}\end{overpic}
  \begin{overpic}[width=\linewidth,clip,trim={0 0.8cm 0 5.6cm}]{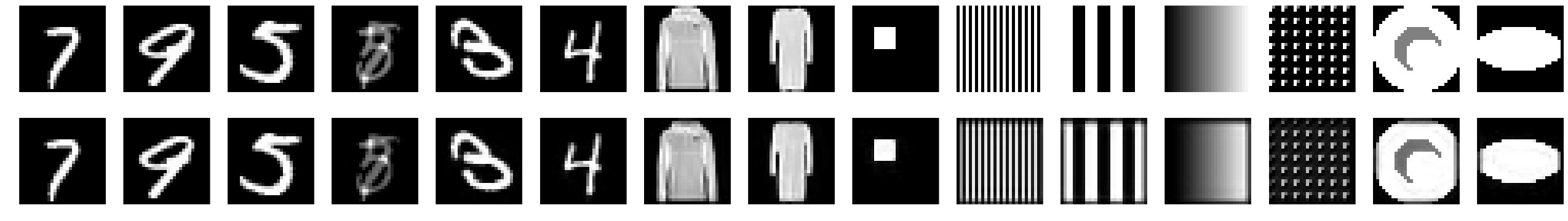}\put(-2.2,2){\scriptsize C5}\end{overpic}
  \begin{overpic}[width=\linewidth,clip,trim={0 0.8cm 0 5.6cm}]{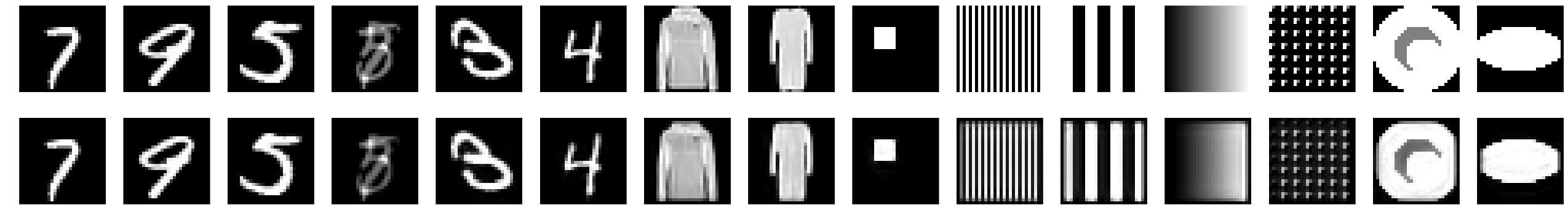}\put(-2.2,2){\scriptsize C7}\end{overpic}
  \begin{overpic}[width=\linewidth,clip,trim={0 0.8cm 0 5.6cm}]{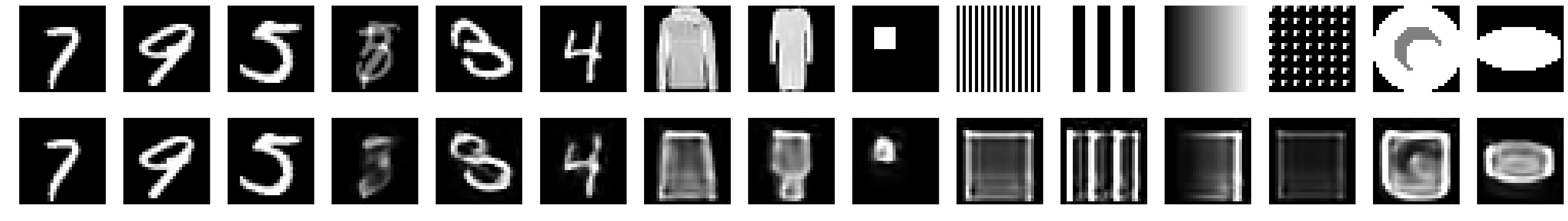}\put(-3.1,2){\scriptsize C14}\end{overpic}
  \begin{overpic}[width=\linewidth,clip,trim={0 0.8cm 0 5.6cm}]{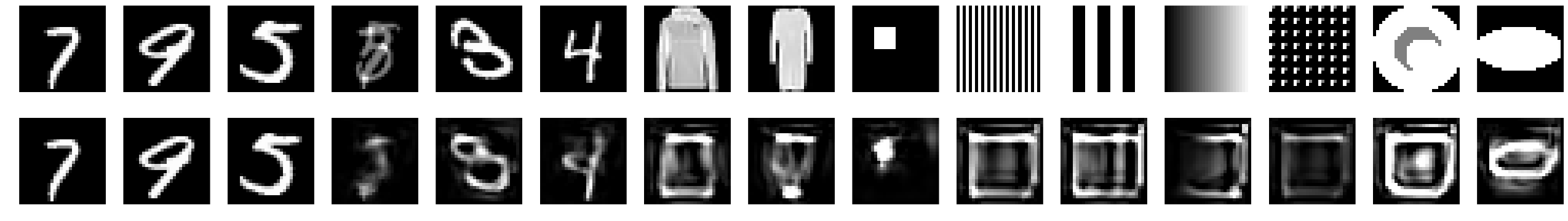}\put(-3.1,2){\scriptsize C17}\end{overpic}
  \begin{overpic}[width=\linewidth,clip,trim={0 0.8cm 0 5.6cm}]{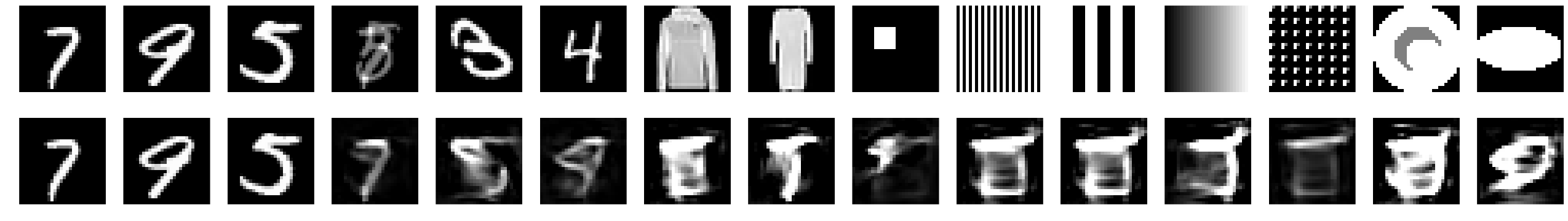}\put(-3.1,2){\scriptsize C20}\end{overpic}
  \begin{overpic}[width=\linewidth,clip,trim={0 0.8cm 0 5.6cm}]{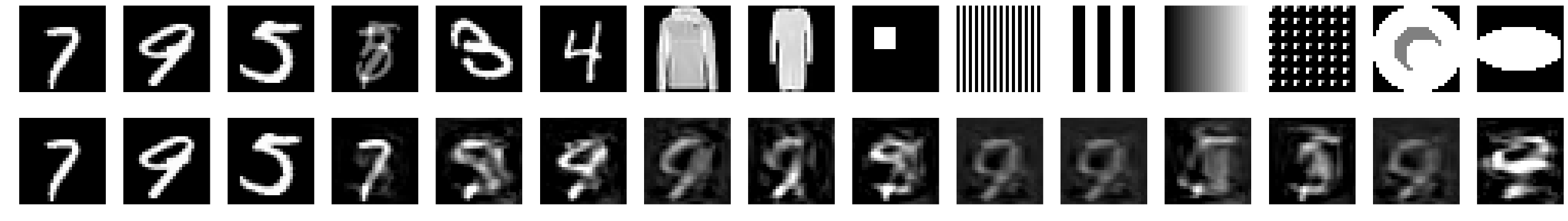}\put(-3.1,2){\scriptsize C24}\end{overpic}
  \end{minipage}
  \caption{\textbf{Visualization of the predictions from networks trained with 3 examples}. The first three columns are training examples, and the remaining columns are unseen test cases. H0---H9 shows fully connected networks with the corresponding number of hidden layers. C3---C24 shows CNNs with the corresponding number of layers.}
  \label{fig:app-3examples}
\end{figure}

We focus on learning from a single example in this paper because the two extreme inductive biases of the \emph{identity map} and the \emph{constant map} can be precisely defined and tested against. When training with multiple examples, the constant map is no longer a viable solution to the optimization problem. Nevertheless, we can still qualitatively evaluate the inductive bias via visualization of predictions. Figure~\ref{fig:app-2examples} shows the results on various networks trained with two examples. In particular, for networks that are known to overfit to the constant map when trained with one example (e.g. fully connected networks with 9 hidden layers, and 20-layer convolutional networks), similar overfitting behaviors are observed. In this case, a proper definition of a \emph{constant map} no longer holds, as the network perfectly reconstruct each individual digit from the training set. On the test set, a notion of memorization can be recognized as always predicting a mixture of the training samples. Note sometimes the prediction is biased more towards one of the training samples depending on the input patterns.

Figure~\ref{fig:app-3examples} shows the results with three examples, with similar observations. Figure~\ref{fig:app-60kexamples} shows the situation when training with the full MNIST training set (60k examples). In this case, even the deepest convolutional networks we tried successfully learn the identity function. Fully connected networks learn the identity function on the manifold of digit inputs, but still cannot reconstruct meaningful results on other test patterns.

\begin{figure}\centering
  \begin{minipage}[t]{.7\linewidth}
  \begin{overpic}[width=\linewidth,clip,trim={0 6.6cm 0 0}]{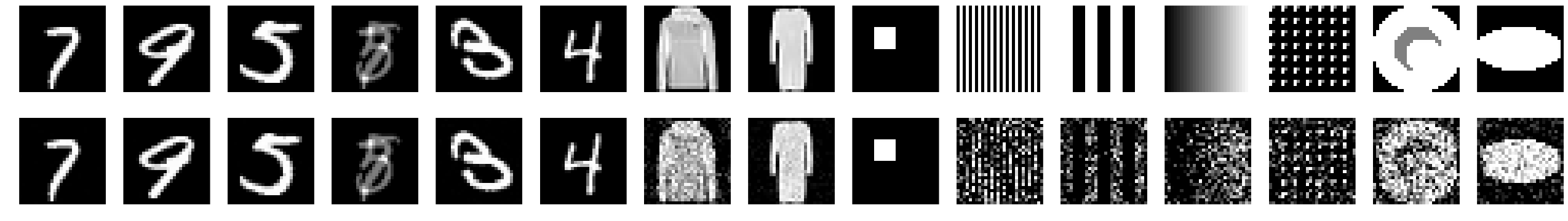}\put(-5,2){\scriptsize{input}}\end{overpic}
  \vskip5pt
  \begin{overpic}[width=\linewidth,clip,trim={0 0.8cm 0 5.6cm}]{figs/mnist-n60k/mlp-Hl0}\put(-2.2,2){\scriptsize H0}\end{overpic}
  \begin{overpic}[width=\linewidth,clip,trim={0 0.8cm 0 5.6cm}]{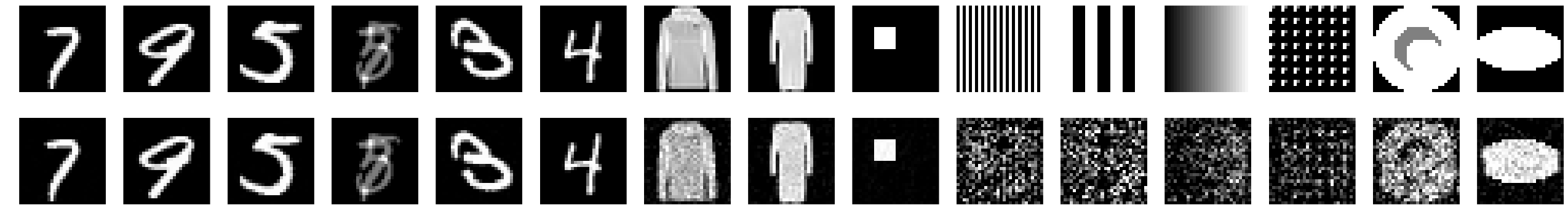}\put(-2.2,2){\scriptsize H1}\end{overpic}
  \begin{overpic}[width=\linewidth,clip,trim={0 0.8cm 0 5.6cm}]{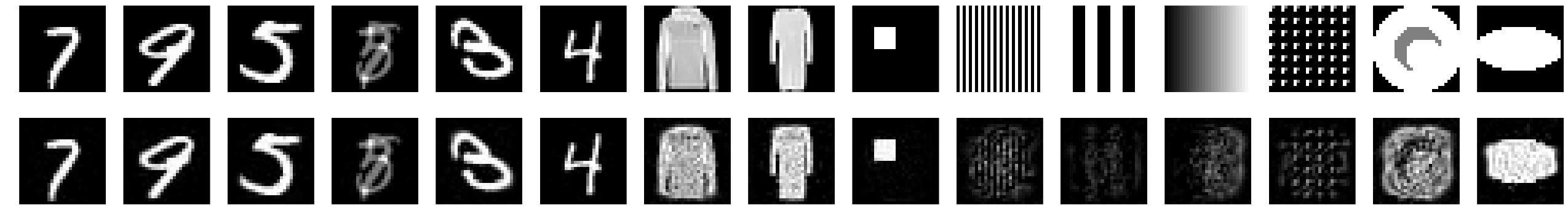}\put(-2.2,2){\scriptsize H5}\end{overpic}
  \begin{overpic}[width=\linewidth,clip,trim={0 0.8cm 0 5.6cm}]{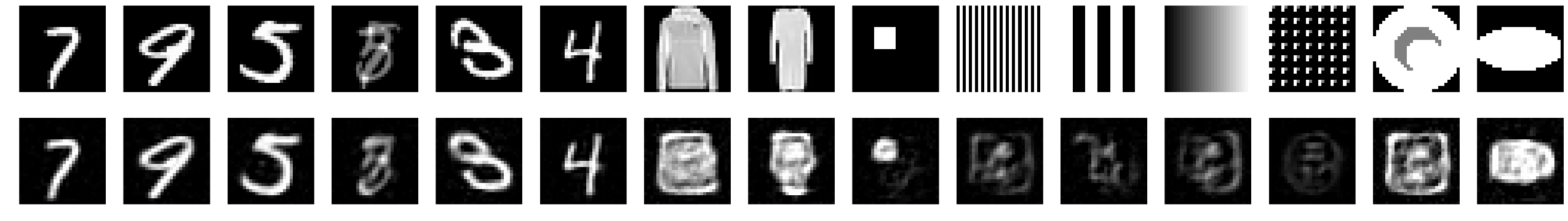}\put(-2.2,2){\scriptsize H9}\end{overpic}
  \vskip5pt
  \begin{overpic}[width=\linewidth,clip,trim={0 0.8cm 0 5.6cm}]{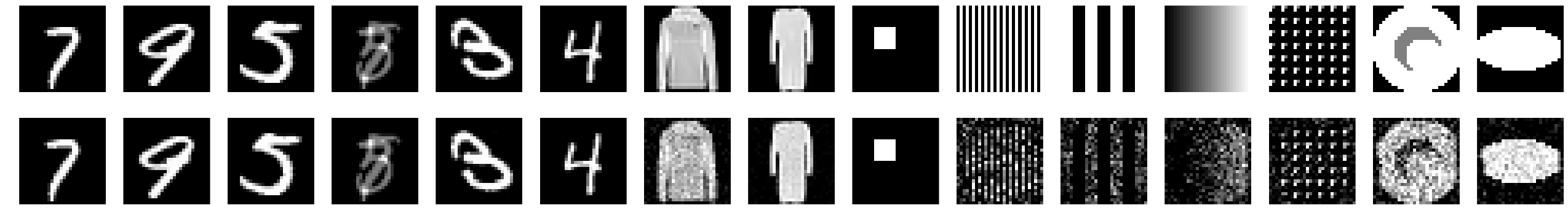}\put(-5.2,2){\scriptsize LinH1}\end{overpic}
  \begin{overpic}[width=\linewidth,clip,trim={0 0.8cm 0 5.6cm}]{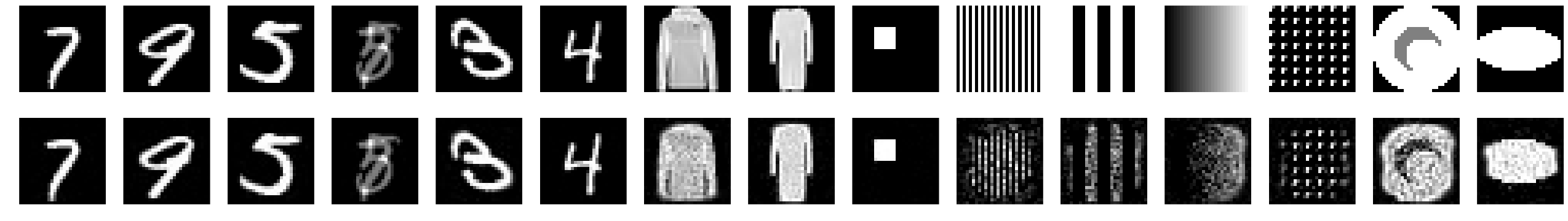}\put(-5.2,2){\scriptsize LinH5}\end{overpic}
  \begin{overpic}[width=\linewidth,clip,trim={0 0.8cm 0 5.6cm}]{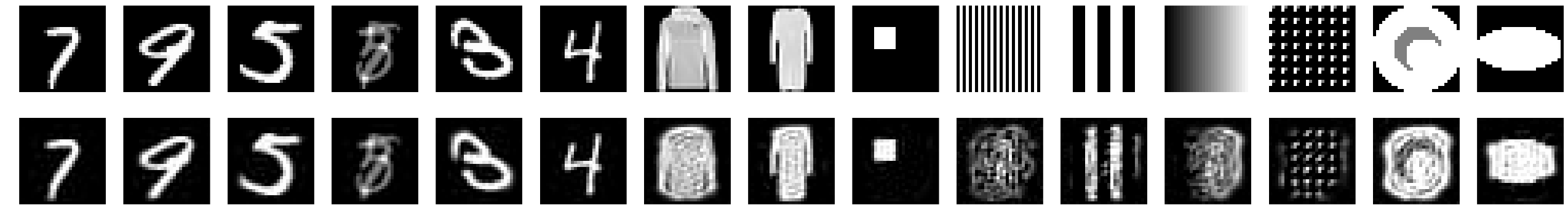}\put(-5.2,2){\scriptsize LinH9}\end{overpic}
  \vskip5pt
  \begin{overpic}[width=\linewidth,clip,trim={0 0.8cm 0 5.6cm}]{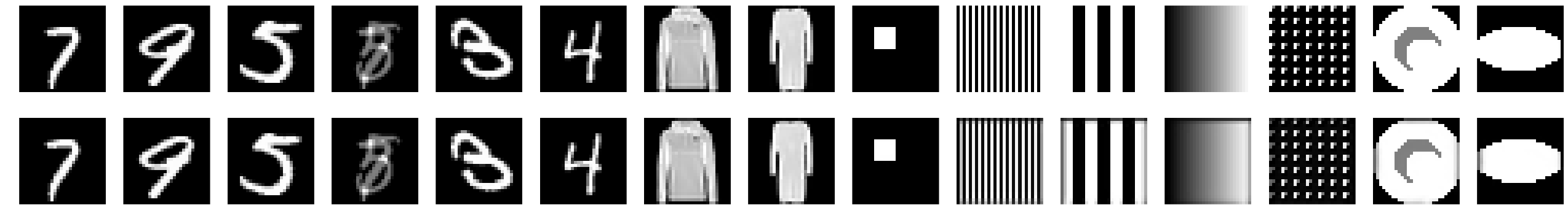}\put(-2.2,2){\scriptsize C3}\end{overpic}
  \begin{overpic}[width=\linewidth,clip,trim={0 0.8cm 0 5.6cm}]{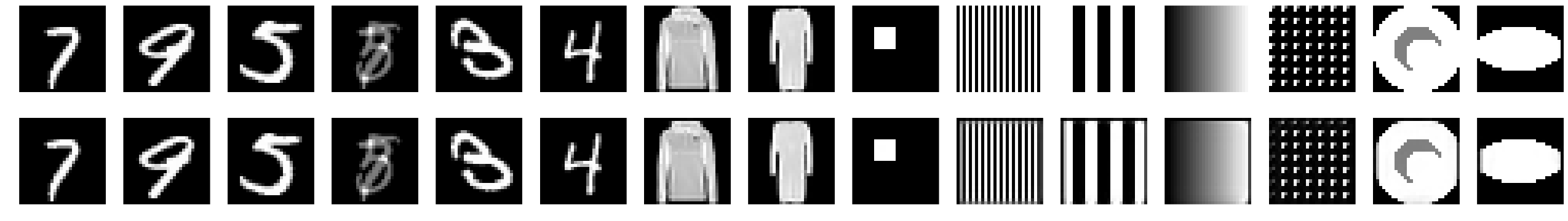}\put(-2.2,2){\scriptsize C7}\end{overpic}
  \begin{overpic}[width=\linewidth,clip,trim={0 0.8cm 0 5.6cm}]{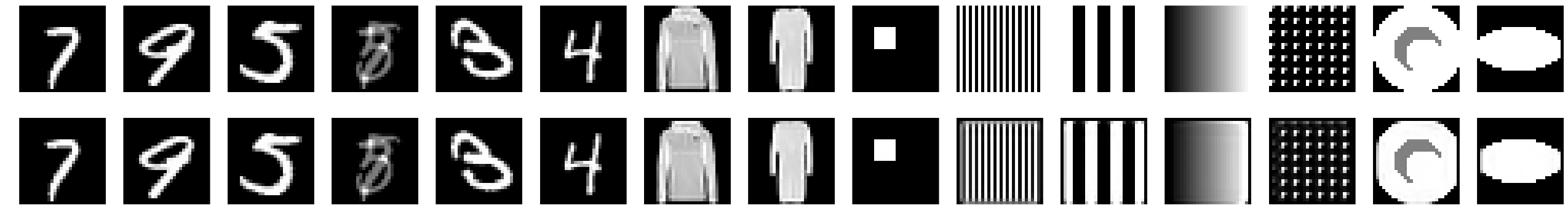}\put(-3.1,2){\scriptsize C14}\end{overpic}
  \begin{overpic}[width=\linewidth,clip,trim={0 0.8cm 0 5.6cm}]{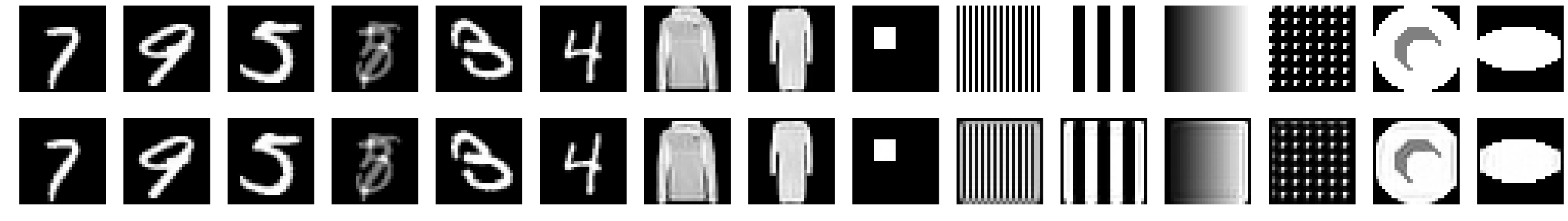}\put(-3.1,2){\scriptsize C20}\end{overpic}
  \begin{overpic}[width=\linewidth,clip,trim={0 0.8cm 0 5.6cm}]{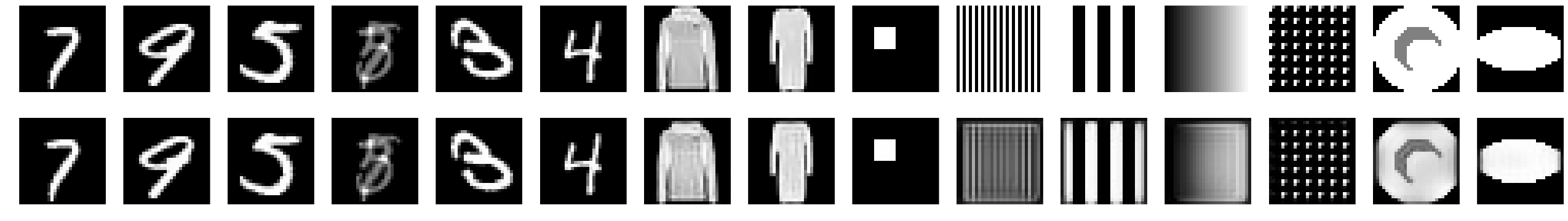}\put(-3.1,2){\scriptsize C24}\end{overpic}
  \end{minipage}
  \caption{\textbf{Visualization of the predictions from networks trained with 60k examples}. The first three columns are (3 out of 60k) training examples, and the remaining columns are unseen test cases. H0---H9 show fully connected networks (ReLU activation) with the corresponding number of hidden layers. LinH1---LinH9 show linear fully connected networks (without activation) with the corresponding number of hidden layers. And C3---C24 show CNNs with the corresponding number of layers.}
  \label{fig:app-60kexamples}
\end{figure}

\section{Parameter counts for neural networks used in this paper}

For convenience of cross comparing the results of different architectures with similar number of parameters, we list in Table~\ref{tab:param-count} the architectures used in this paper and their associated number of parameters.

\begin{table}\centering\scriptsize
  \begin{tabular}{lrrrrrr}
    \toprule
    \textbf{FCN num layers} & \textbf{1} & \textbf{2} & \textbf{4} & \textbf{6} & \textbf{8} & \textbf{10} \\
    \midrule
    hidden dim = 784 & \multirow{2}{*}{614,656} & 1,229,312 & 2,458,624 & 3,687,936 & 4,917,248 & 6,146,560 \\
    hidden dim = 2048 & & 3,211,264 & 11,599,872 & 19,988,480 & 28,377,088 & 36,765,696 \\
    \bottomrule\toprule
  \textbf{CNN num layers} & \textbf{1} & \textbf{3} & \textbf{5} & \textbf{6} & \textbf{7} & \textbf{8} \\
  \midrule
  $5\times 5$ filter, 3 channels & 25 & 375 & 825 & 1,050 & 1,275 & 1,500 \\
  $5\times 5$ filter, 128 channels & 25 & 416,000 & 1,235,200 & 1,644,800 & 2,054,400 & 2,464,000 \\
  $5\times 5$ filter, 1024 channels & & & 78,694,400 & & 131,123,200 & \\\midrule
  $7\times 7$ filter, 3 channels & 49 & 735 & 1,617 & 2,058 & 2,499 & 2,940 \\
  $7\times 7$ filter, 128 channels & 49 & 815,360 & 2,420,992 & 3,223,808 & 4,026,624 & 4,829,440 \\
  $7\times 7$ filter, 1024 channels & & & 154,241,024 & & 257,001,472 & \\
  \midrule\midrule
  \textbf{CNN num layers} & \textbf{9} & \textbf{10} & \textbf{11} & \textbf{12} & \textbf{13} & \textbf{14} \\
  \midrule
  $5\times 5$ filter, 3 channels & 1,725 & 1,950 & 2,175 & 2,400 & 2,625 & 2,850 \\
  $5\times 5$ filter, 128 channels & 2,873,600 & 3,283,200 & 3,692,800 &  4,102,400 & 4,512,000 & 4,921,600 \\
  $5\times 5$ filter, 1024 channels & & & & 262,195,200 & & 314,624,000 \\\midrule
  $7\times 7$ filter, 3 channels & 3,381 & 3,822 & 4,263 & 4,704 & 5,145 & 5,586 \\
  $7\times 7$ filter, 128 channels & 5,632,256 & 6,435,072 & 7,237,888 & 8,040,704 & 8,843,520 & 9,646,336 \\
  $7\times 7$ filter, 1024 channels & & & & 513,902,592 & & 616,663,040 \\\midrule\midrule
  \textbf{CNN num layers} & \textbf{15} & \textbf{16} & \textbf{17} & \textbf{19} & \textbf{20} & \textbf{24} \\\midrule
  $5\times 5$ filter, 3 channels & 3,075 & 3,300 & 3,525 & 3,975 & 4,200 & 5,100 \\
  $5\times 5$ filter, 128 channels & 5,331,200 & 5,740,800 & 6,150,400 & 6,969,600 & 7,379,200 & 9,017,600 \\
  $5\times 5$ filter, 1024 channels & & & & & 471,910,400 & \\\midrule
  $7\times 7$ filter, 3 channels & 6,027 & 6,468 & 6,909 & 7,791 & 8,232 & 9,996 \\
  $7\times 7$ filter, 128 channels & 10,449,152 & 11,251,968 & 12,054,784 & 13,660,416 & 14,463,232 &  17,674,496 \\
  $7\times 7$ filter, 1024 channels & & & & & 924,944,384 & \\\bottomrule\toprule
  \textbf{CNN filter size} & $9\times 9$ & $13\times 13$ & $17\times 17$ & $25\times 25$ & $29\times 29$ & $57\times 57$ \\\midrule
  5 layers, 128 channels & 4,002,048 & 8,349,952 & 14,278,912 & 30,880,000 & 41,552,128 & 160,526,592 \\\bottomrule
  \end{tabular}
  \caption{\textbf{Number of parameters of different neural network architectures used in this paper}. The number of parameters are calculated for $28\times 28$ grayscale input / output sizes.}
  \label{tab:param-count}
\end{table}

\section{Residual Networks}

\begin{figure}
  \begin{overpic}[width=\linewidth,clip,trim={0 6.5cm 0 0}]{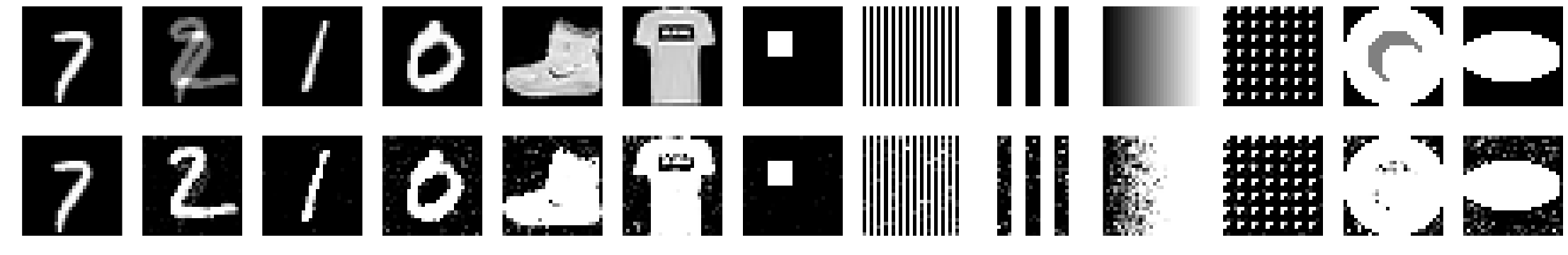}\end{overpic}
  \begin{overpic}[width=\linewidth,clip,trim={0 0 0 6.5cm}]{figs/residual-mlp/mnist-n1-id-mlpwob-wres--mlp-Hl1-Hd784}\put(-1,2){\small 1}\end{overpic}
  \begin{overpic}[width=\linewidth,clip,trim={0 0 0 6.5cm}]{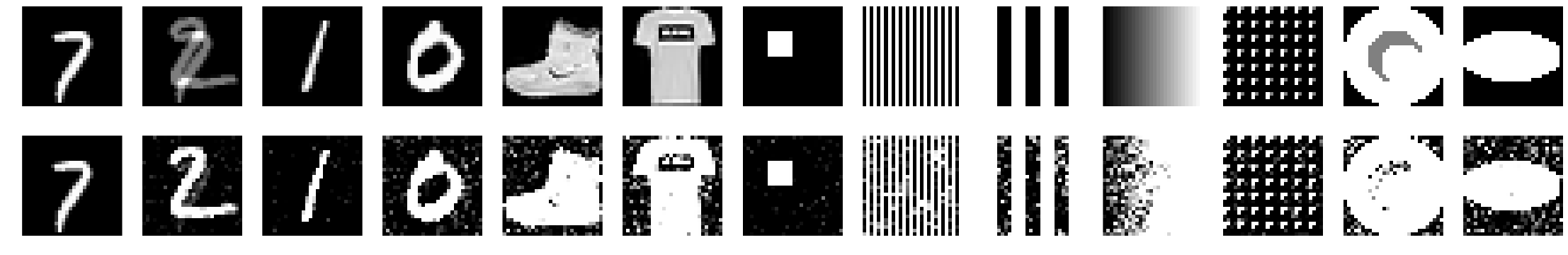}\put(-1,2){\small 3}\end{overpic}
  \begin{overpic}[width=\linewidth,clip,trim={0 0 0 6.5cm}]{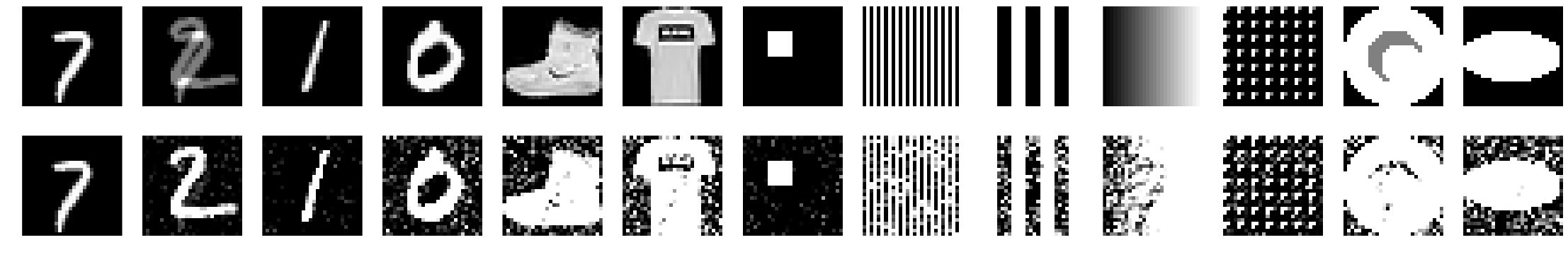}\put(-1,2){\small 5}\end{overpic}
  \begin{overpic}[width=\linewidth,clip,trim={0 0 0 6.5cm}]{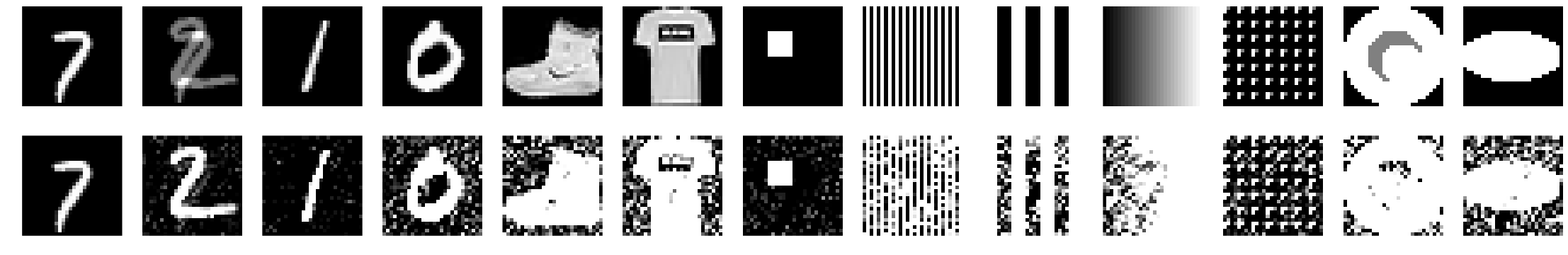}\put(-1,2){\small 7}\end{overpic}
  \begin{overpic}[width=\linewidth,clip,trim={0 0 0 6.5cm}]{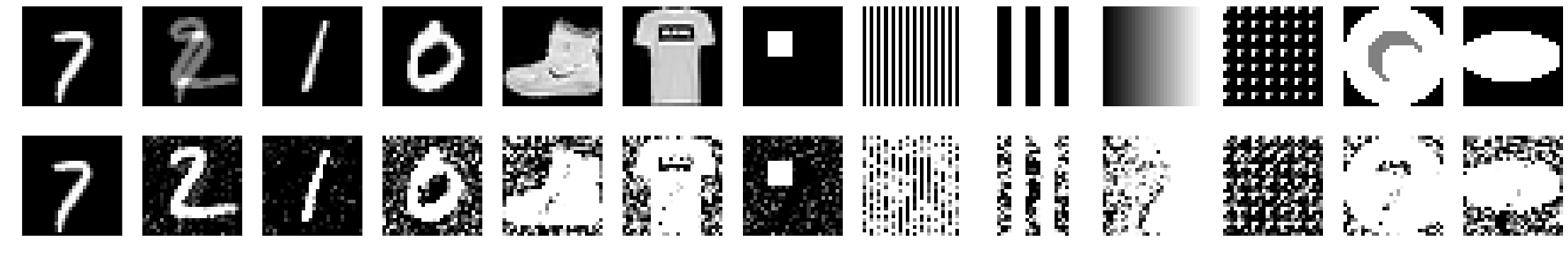}\put(-1,2){\small 9}\end{overpic}
  \caption{\textbf{Visualization of predictions from multi-layer ReLU networks with residual connections.} The first row shows the input images for evaluation, including the single training image ``7'' at the beginning of the row. The remaining rows show the predictions from trained multi-layer ReLU FCNs with residual connections, with the numbers on the left indicating the number of hidden layers.}
  \label{fig:mnist-n1-resmlp}
\end{figure}

In this section, we evaluate training FCNs with residual connections. In particular, an identity skip connection is added for every two fully connected layers. In other words, the networks are built with two-layer blocks that computes $x\mapsto x + \text{ReLU}(W_2\text{ReLU}(W_1x))$, except that no $\text{ReLU}$ is applied at the output layer. Adding skip connection \emph{after} $\text{ReLU}$ ensures the input gets passed directly to the output layer. Because the residual structure requires the same input-output shape for every block, we use 784 hidden dimensions.

Comparing Figure~\ref{fig:mnist-n1-resmlp} with the vanilla FCNs in Figure~\ref{fig:mnist-n1-fcn}, the identity skip connection strongly biased the FCNs towards learning the identity map. On the other hand, we can still observe that the prediction noises become stronger with larger number of hidden layers, suggesting that learning the identity function is non-trivial even with the explicit identity skip connections built into the architectures.

\section{Experiments on CIFAR-10 Images}

In this section, we show experiments on some colored images from the CIFAR-10 dataset ($32\times 32$ RGB images). Figure~\ref{fig:app-cifar10-eg1} and Figure~\ref{fig:app-cifar10-eg2} show the results from two different randomly sampled training images, respectively. The network architectures used here are nearly identical to the ones used to train on MNIST digits in the main text of the paper: the CNNs are with 128 channels and $5\times 5$ filters. The FCNs are slightly modified to have larger hidden dimensions $3072=32\times 32\times 3$, to avoid explicitly enforcing bottlenecks in the hidden representations. The training loss and optimization algorithms are the same as in the MNIST case.

The main observations are consistent the results on the MNIST digits. In particular, shallow FCNs produce random noisy predictions on unseen evaluation inputs, but deep FCNs bias towards memorization and hallucinate the training image on all test outputs. For CNNs, shallow networks are capable of learning the identity function, but deep networks learn the constant function instead.

\begin{figure}
  \centering
  \includegraphics[width=\linewidth]{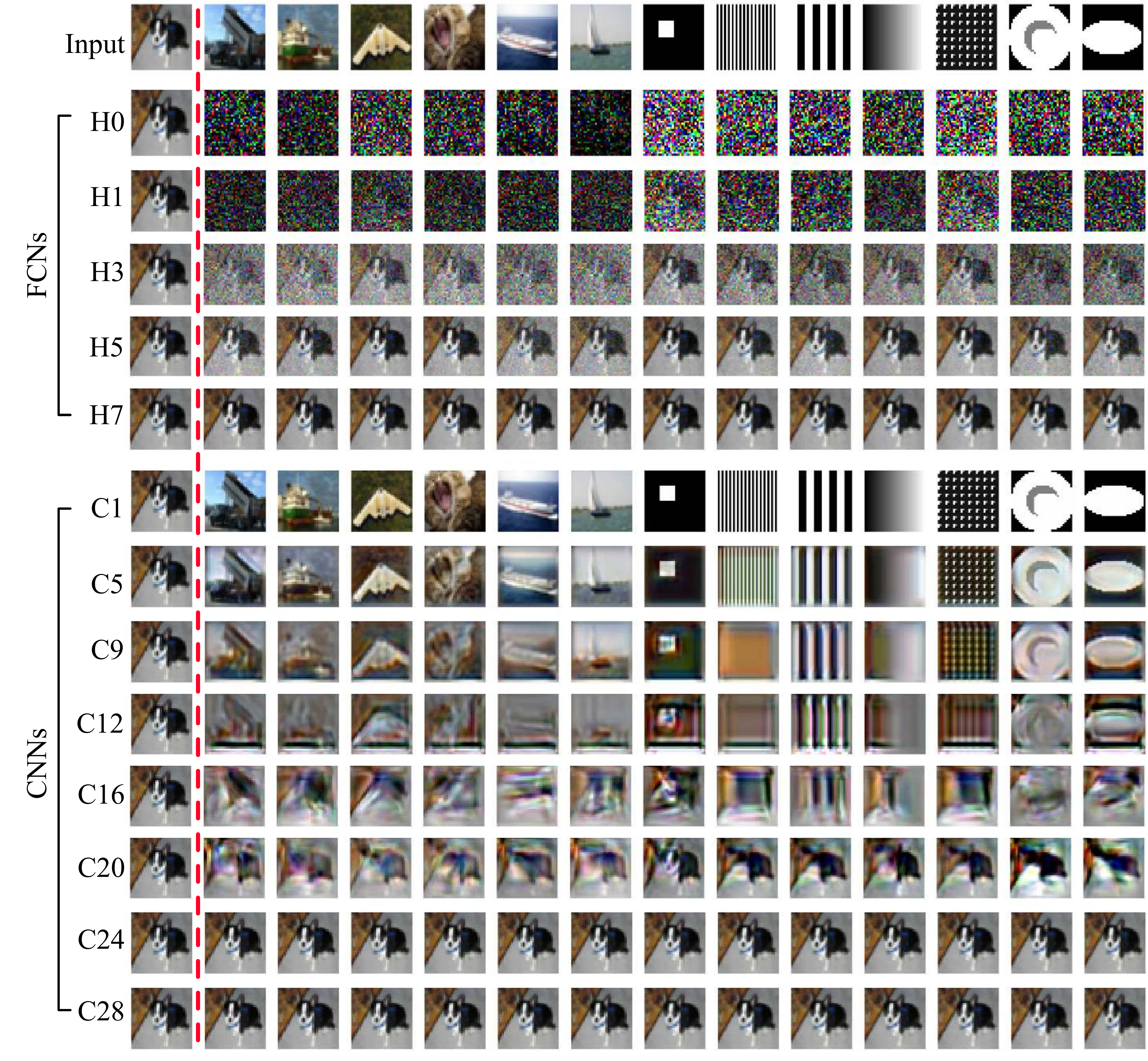}
  \caption{\textbf{Visualization of predictions from various networks trained on a single CIFAR-10 image (dog).} The first row shows the input images, where the first one is the training image and the rest are images for evaluation. The evaluation images consist of unseen images from the CIFAR-10 test set and artificially generated grayscale patterns. The grayscale patterns are duplicated into three channels before feeding into the networks. The remaining rows show the predictions from trained FCNs and CNNs. For FCNs, the numbers indicate the number of hidden layers. For example, \emph{H3} means a 3-hidden-layer FCN. For CNNs, the numbers indicate the number of (convolutional) layers. For example, C16 means a 16-layer CNN.}
  \label{fig:app-cifar10-eg1}
\end{figure}

\begin{figure}
  \centering
  \includegraphics[width=\linewidth]{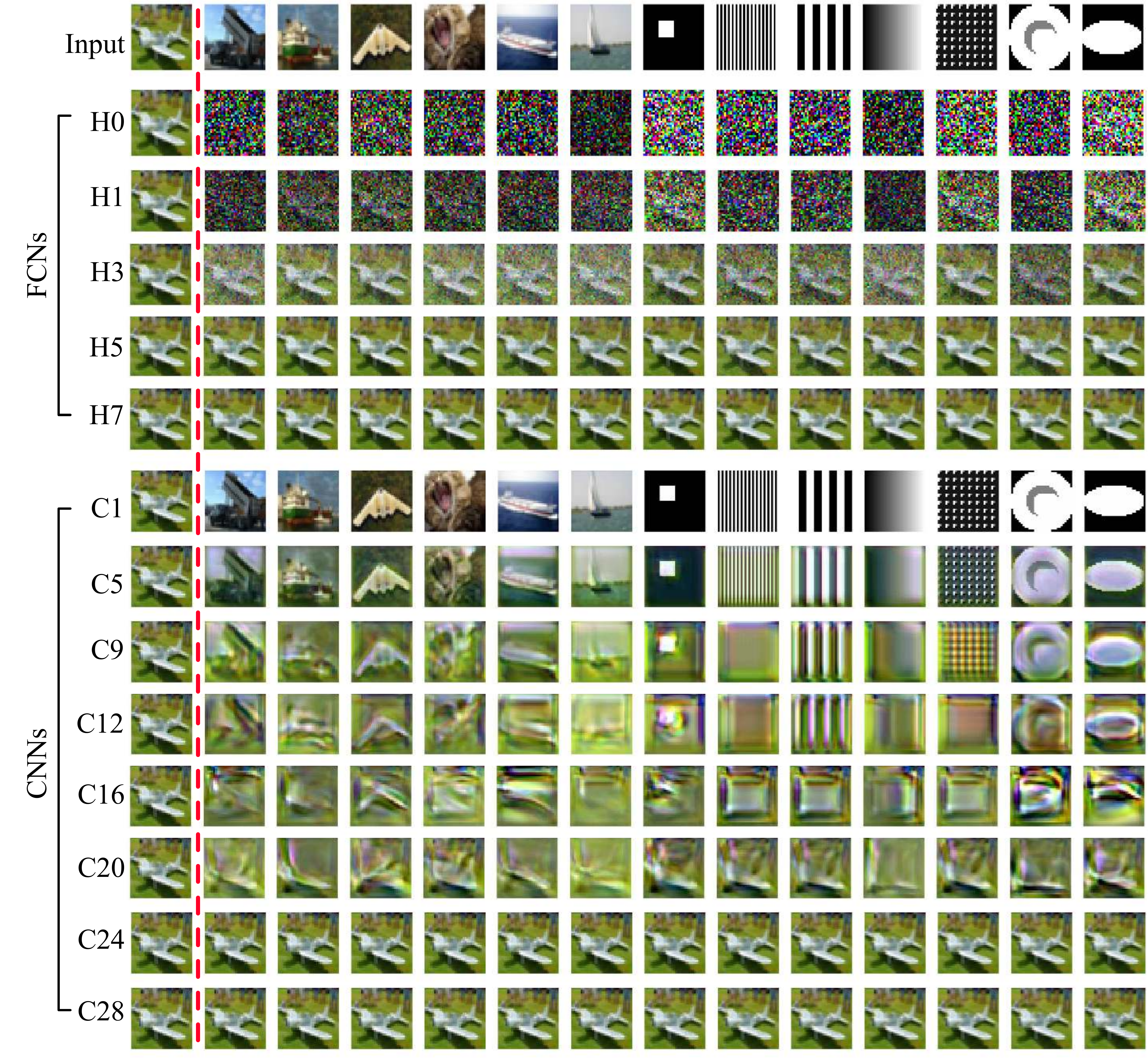}
  \caption{\textbf{Visualization of predictions from various networks trained on a single CIFAR-10 image (air plane).} The first row shows the input images, where the first one is the training image and the rest are images for evaluation. The evaluation images consist of unseen images from the CIFAR-10 test set and artificially generated grayscale patterns. The grayscale patterns are duplicated into three channels before feeding into the networks. The remaining rows show the predictions from trained FCNs and CNNs. For FCNs, the numbers indicate the number of hidden layers. For example, \emph{H3} means a 3-hidden-layer FCN. For CNNs, the numbers indicate the number of (convolutional) layers. For example, C16 means a 16-layer CNN.}
  \label{fig:app-cifar10-eg2}
\end{figure}

\fi 
\end{document}